\newcommand\CircArrowRight[1]{\stackengine{-.2ex}{\scalebox{.5}{#1}}{\CAR}{O}{c}{F}{F}{L}}
\newcommand\CAR{\scaleto{\circlearrowright}{1.8ex}}
\newtheorem{theorem}{Theorem}[section]
\newtheorem{claim}[theorem]{Claim}
\newtheorem{proposition}[theorem]{Proposition}
\newtheorem{lemma}[theorem]{Lemma}
\theoremstyle{definition}
\newtheorem{definition}[theorem]{Definition}
\newtheorem{assumption}[theorem]{Assumption}
\numberwithin{equation}{section}
\newcommand{\AAA}{\mathcal{A}}
\newcommand{\FFF}{\mathcal{F}}
\newcommand{\SSS}{\mathcal{S}}
\newcommand{\primal}{\mathsf{P}}
\newcommand{\dual}{\mathsf{D}}
\newcommand{\linearprimal}{\mathsf{lin}\text{-}\mathsf{P}}
\newcommand{\lineardual}{\mathsf{lin}\text{-}\mathsf{D}}
\newcommand{\rad}{\mathsf{rad}}
\newcommand{\FW}{\mathsf{FW}}
\newcommand{\TGD}{\mathsf{TGD}}
\newcommand{\TMD}{\mathsf{TMD}}
\newcommand{\refe}{\mathsf{ref}}
\title{Exploration-Exploitation Trade-off in Reinforcement Learning on Online Markov Decision Processes with Global Concave Rewards}
\author{Wang Chi Cheung\thanks{Department of Industrial Systems Engineering and Management, National University of Singapore, isecwc@nus.edu.sg}}
\begin{document}
\vspace{-6.5em}
\date{}
\maketitle
\vspace{-3em}

\begin{abstract}
We consider an agent who is involved in a Markov decision process and receives a vector of outcomes every round. Her objective is to maximize a global concave reward function on the average vectorial outcome. The problem models applications such as multi-objective optimization, maximum entropy exploration, and constrained optimization in Markovian environments. In our general setting where a stationary policy could have multiple recurrent classes, the agent faces a subtle yet consequential trade-off in alternating among different actions for balancing the vectorial outcomes. In particular, stationary policies are in general sub-optimal. We propose a no-regret algorithm based on online convex optimization (OCO) tools (Agrawal and Devanur 2014) and UCRL2 (Jaksch et al. 2010). Importantly, we introduce a novel gradient threshold procedure, which carefully controls the switches among actions to handle the subtle trade-off. By delaying the gradient updates, our procedure produces a non-stationary policy that diversifies the outcomes for optimizing the objective. The procedure is compatible with a variety of OCO tools. 
\end{abstract}

\vspace{-0.5cm}

\section{Introduction}
Markov Decision Processes (MDPs) model sequential optimization problems in Markovian environments. At each time, an agent performs an action, contingent upon the state of the environment. Her action influences the environment through the resulting state transition.  
In many situations, an action at a state leads a vector of different and correlated outcomes, and the agent desires to optimize a complex and global objective that involves all these outcomes across time. Motivated by these situations, we consider online \underline{MDP}s \underline{w}ith \underline{G}lobal concave \underline{R}eward functions (MDPwGR). In the MDPwGR problem, an agent seeks to optimize a  concave reward function, which is generally non-linear, over the average vectorial outcome generated by a latent MDP. 

For online optimization with global concave rewards and MDPwGR in particular, an agent is required to alternate among different actions in order to balance the vectorial outcomes.
The setting of MDPwGR presents the following subtle challenges.
To alternate between two actions, the agent has to travel from one state to another, which could require visiting sub-optimal states and compromises her objective. Thus, the alternations have to be carefully controlled, in order to balance the outcomes while maintaining near-optimality, on top of her simultaneous exploration and exploitation on the latent model.

We shed light on the mentioned trade-off by proposing {\sc Toc-UCRL2}, a near-optimal online algorithm for MDPwGR. The algorithm is built upon a dual based approach using gradient updates, which facilitate the balancing of outcomes, as well as UCRL2, which solves MDPs with certain scalar rewards. In order to handle the mentioned trade-off in action alternations, we introduce a novel gradient threshold procedure that delays the gradient updates. The delay is finely tuned so that the balancing mechanism is still intact while the objective is not severely compromised, leading to a no-regret and non-stationary policy.

\textbf{Related Literature.} MDPwGR is a common generalization of the \underline{B}andits \underline{w}ith \underline{G}lobal concave \underline{R}ewards (BwGR) 
and online \underline{MDP}s \underline{w}ith \underline{S}calar \underline{R}ewards (MDPwSR). BwGR is first studied by \citep{AgrawalD14}, who establish important connections between online convex optimization and upper-confidence-bound (UCB) based algorithms for BwGR and its generalization. The work of \citep{AgrawalD14} focus on stochastic $K$-armed bandits.
Subsequently, BwGR is studied 
contextual $K$-armed bandits \citep{AgrawalDL16}. \citep{Busa-feketeSWM17} consider $K$-armed bandits with generalized fairness objectives, which require special cares different from BwGR.  \citep{BerthetP17} consider the combination of Frank-Wolfe algorithm and UCB algorithms (which is also considered in \citep{AgrawalD14}), and \citep{BerthetP17} demonstrate fast rate convergence in cases when the concave reward functions are not known but satisfy certain smoothness property. 
 
 BwGR is closely related to \underline{B}andits \underline{w}ith \underline{K}napsacks (BwK), which precedes BwGR. BwK is first studied under $K$-armed bandits by \citep{BadanidiyuruKS13}. Subsequently, BwK is studied under $K$-armed bandits with concave rewards and convex constraints \citep{AgrawalD14}, contextual $K$-armed bandits \citep{BadanidiyuruLS14, AgrawalDL16} and linear bandits \citep{AgrawalD16}.  
The works on BwGR and BwK focus on stationary stochastic environments, and provide online algorithms with global rewards converging to the offline optimum when the number of time steps $T$ grows.  
 
 Recently, \citep{ImmorlicaSSS18} study the adversarial BwK, and show that it is impossible to achieve a competitive ratio of less than $T / B^2$ compared to the offline optimum, where $B$ is the budget. They propose algorithms with competitve ratios of $O(\log T)$ compared to a certain offline benchmark. Our positive results, which are on MDPwGR and MDPwK (see Appendix \ref{app:MDPwK}), walk a fine line between the negative results for adversarial BwK and positive results for stochastic BwGR and BwK. 
Finally, online optimization with global rewards are also studied in adversarial settings with full feedback \citep{Even-DarKMM09,AzarFFT14}.

MDPwSR on communicating MDPs is studied by \citep{AuerO06,JakschOA10}. Subsequently, \citep{AgrawalJ17} provide improved regret bounds by optimistic posterior sampling.  
\citep{Ortner18} derive regret bounds in terms of a mixing time parameter.
\citep{BartlettT09} consider a more general case of weakly communicating MDPs, and regret bounds are derived in \citep{FruitPLO18}. \citep{FruitPL18} study an even more general case of non-communicating MDPs. These works focus on optimizing under scalar rewards, but do not consider optimizing under vectorial outcomes. For a review on MDPs, please consult \citep{Puterman94}.

Reinforcement learning on multi-objective MDPs and MDPs with resource constraints are studied in discounted reward settings \citep{GaborKS98,BarrettS08,VanMoffaertN14}. \citep{NatarajanT05,LizotteBM12} design algorithms for average reward settings. \citep{MannorS04,MannorTY09} consider optimizing the average rewards in asymptotic settings, and demonstrate convergence of their algorithms. We study MDPwGR in an non-asymptotic average reward setting. 
Recently, \citep{HazanKSS18} study exploration problems on MDPs in offline settings, which can be modeled as MDPs with global rewards. Another recent work by \citep{TarbouriechL19} consider active exploration in Markov decision processes, which involves maximizing a certain concave function on the frequency of visiting each state-action pair. \citep{TarbouriechL19} assumes that the underlying transition kernel is known to the agent, and they also make certain mixing time assumptions that hold for every stationary policy. Different from \citep{TarbouriechL19}, our model allows the underlying transition kernel to be not known to the agent. Moreover, we only assume the underlying MDP to be \emph{communicating} (see Assumption \ref{ass:communicating} in Section \ref{sec:def}), which is less restrictive than the mixing time assumption \citep{TarbouriechL19}. \citep{JakschOA10} also provide a discussion on the relationship between mixing time assumptions and the communicating assumption. Constrained MDPs are reviewed in \citep{Altman99}, and multi-objective reinforcement learning is surveyed in \citep{RoijersVWD13,LiuXH15}. 

\textbf{Organization of the Paper. }In Section \ref{sec:def}, we provide the problem definition of MDPwGR, and define the offline benchmark for the regret analysis oif MDPwGR. In Section \ref{sec:alg}, we discuss the challenges in MDPwGR, and explain why existing works on BwGR and MDPwSR fail to solve MDPwGR to near-optimality. Then we introduce our algorithm {\sc Toc-UCRL2} which solves MDPwGR to near-optimality. In Section \ref{sec:ana}, we analyze  {\sc Toc-UCRL2} in the case of Frank-Wolfe oracle, assuming that the reward function is $\beta$-smooth. In Section \ref{sec:applications_model}, we discuss in details the applications of the problem model of MDPwGR, and demonstrate the near-optimality of {\sc Toc-UCRL2} in all these applications. Finally, we conclude in Section \ref{sec:conclusion}. Supplementary details to the discussions and more proofs are provided in the Appendix.

\textbf{Notation.} For $1\leq p\leq \infty$, $\|\cdot\|_p$ is the $\ell_p$ norm on $\mathbb{R}^K$, defined as $\|w\|_p := (\sum^K_{k=1}|w_k|^p)^{1/p}$. The vectors $\mathbf{1}_K, \mathbf{0}_K \in \mathbb{R}^K$ are the all-one and all-zero vectors, and $\mathbf{e}_k\in \mathbb{R}^K $ is the $k$th standard basis vector in $\mathbb{R}^K$. All vectors are column vectors by default. The inner product between $\theta, w \in \mathbb{R}^K$ is $\theta^\top w$. Denote $B(L,\|\cdot \|) := \{v\in \mathbb{R}^K : \|v\| \leq L\}$. For a norm $\| \cdot \|$ on $\mathbb{R}^K$, denote its dual norm as $\| \cdot \|_*$, where $\|\theta\|_* = \max_{w\in B(1, \|\cdot\|)} \theta^\top w$. For a finite set ${\cal U}$, denote $\Delta^{\cal U} := \{x\in \mathbb{R}^{\cal U} : p \geq 0, \sum_{u\in {\cal U}}p(u) = 1\}$ as the set of probability distributions over ${\cal U}$. For an event $E$, $\mathsf{1}(E) = 1$ if $E$ holds, and $\mathsf{1}(E) =0$ otherwise. Finally, ``w.r.t.'' stands for ``with respect to''.

\section{Problem Definition of MDPwGR}\label{sec:def}
An instance of MDPwGR is specified by the tuple $(\SSS, s_1, \AAA, p, V, g)$. 
The set $\SSS$ is a finite state space, and $s_1\in \SSS$ is the starting state. 
The collection $\AAA = \{\AAA_s\}_{s\in \SSS}$ contains a finite set of actions $\AAA_s$ for each state $s$. We say that $s, a$ is a state-action pair if $a\in \AAA_s$. 
The quantity $p$ is the transition kernel. For each $s, a$, we denote $p(\cdot | s, a)\in \Delta^\SSS$ as the probability distribution on the subsequent state when the agent takes action $a$ at state $s$.

For each $s, a$, the $K$-dimensional random variable $V(s, a) = (V_k(s, a))^K_{k=1} \in [0, 1]^K$ represents the stochastic outcomes. The mean is denoted as $\mathbb{E}[V(s,a)] = v(s, a) = (v(s, a))^K_{k=1}$. 
The reward function $g :  [0, 1]^K \rightarrow [0, 1]$ is concave, and is to be maximized. The function $g$ is $L$-Lipschitz continuous on $ [0, 1]^K$ w.r.t. a norm $\|\cdot\|$, i.e. $| g(u) - g(w) | \leq L\|u - w\|$ for all $u, w\in  [0, 1]^K$.\footnote{We also assume $g$ to be closed, i.e. $\{(w, u) \in  [0, 1]^K\times [0, 1]: u \geq g(w)\}$ is closed. This ensures  $g(w) = \min_{\theta \in B(L, \|\cdot\|_*)} \{g^*(\theta) - \theta^\top w \}$, where $g^*(\theta) := \max_{w\in  [0, 1]^K}\{ g(w) + \theta^\top w \}$ is the Fenchel dual of $g$. } The function $g$ needs not be monotonic in any of the $K$ dimensions. 

\textbf{Dynamics. }An agent, who faces an MDPwGR instance ${\cal M} = (\SSS, s_1, \AAA, p, V, g)$, starts at state $s_1\in \SSS$. At time $t$, three events happen. First, the agent observes his current state $s_t$. Second, she takes an action $a_t\in \AAA_{s_t}$. Third, she stochastically transits to another state $s_{t+1}$, and observes a vector of stochastic outcomes $V_t(s_t, a_t)$. In the second event, the choice of $a_t$ is based on a \emph{non-anticipatory} policy. That is, the choice only depends on the current state $s_t$ and the previous observations $H_{t-1} := \{s_q, a_q, V_q(s_q, a_q)\}^{t-1}_{q =1}$. When $a_t$ only depends on $s_t$, but not on $H_{t-1}$, we say that the corresponding non-anticipatory policy is \emph{stationary}.

At each time step $t$, the subsequent state $s_{t+1}$ and the outcomes $V_t(s_t, a_t)$ are generated in a Markovian manner. Conditional on $s_t, a_t$, we suppose four properties on $s_{t+1}, V_t(s_t, a_t)$. First, $s_{t+1}, V_t(s_t, a_t)$ are independent of $H_{t-1}$. Second, the subsequent state $s_{t+1}$ is distributed according to $p(\cdot | s_t, a_t)$, or in short $s_{t+1}\sim p(\cdot | s_t, a_t)$. Third, the outcome $V_t(s_t, a_t)$ is identically distributed as $V(s_t, a_t)$.  Fourth, $s_{t+1}, V_t(s_t, a_t)$ can be arbitrarily correlated. 

\textbf{Objective. }The MDPwGR instance ${\cal M}$ is latent. While the agent knows $\SSS, s_1, \AAA, g$, she does not know $v, p$. To state the objective, define $\bar{V}_{1:t} := \frac{1}{t}\sum^t_{q = 1} V_q(s_q, a_q)$. For any horizon $T$ not known \emph{a priori}, the agent aims to maximize $g (\bar{V}_{1:T} )$, by selecting actions $a_1, \ldots, a_T$ with a non-anticipatory policy. The agent faces a dilemma between exploration and exploitation. She needs to learn $v, p$ while optimizing in a Markovian environment. 

MDPwGR models a variety of online learning problems in Markovian environments, such as multi-objective optimization (MOO), maximum entropy exploration (MaxEnt), and MDPwSR with knapsack constraints in the large volume regime (MDPwK). We elaborate on these applications in Section \ref{sec:applications_model}. Finally, if $g$ is a linear function, we recover MDPwSR \citep{JakschOA10}; if we specialize $\SSS = \{s\}$, we recover BwGR \citep{AgrawalD14}.

\textbf{Reachability of ${\cal M}$. }
To ensure learnability, we suppose in Assumption \ref{ass:communicating} that the instance ${\cal M}$ is \emph{communicating}. 
For any $s, s'\in \SSS$ and any stationary policy $\pi$, the travel time from $s$ to $s'$ under $\pi$ is equal to the random variable $\Lambda(s' | \pi, s) := \min\left\{ t : s_{t+1} = s' , s_1 = s, s_{\tau + 1} \sim p(\cdot | s_\tau, \pi(s_\tau)) \text{ $\forall\tau$}\right\}$. 
\begin{assumption}\label{ass:communicating}
The latent MDPwGR instance ${\cal M}$ is communicating, that is, the quantity $
D := \max_{s, s'\in \SSS}\min_{\text{stationary }\pi} \mathbb{E}\left[\Lambda(s' | \pi, s) \right] 
$
is finite. We call $D$ the diameter of ${\cal M}$.
\end{assumption}
The same reachability assumption is made in \citep{JakschOA10}. Since the instance ${\cal M}$ is latent, the corresponding diameter $D$ is also not known to the agent. Assumption \ref{ass:communicating} is weaker than the \emph{unichain} assumption, where every stationary policy induces a single recurrent
class on $\SSS$. 

\textbf{Offline Benchmark and Regret. }To measure the effectiveness of a policy, we rephrase the agent's objective as the minimization of regret: $
\text{Reg}(T) := \text{opt}(\primal_{\cal M}) - g(\bar{V}_{1:T})$. The offline benchmark $\text{opt}(\primal_{\cal M})$ is the optimum of the convex optimization problem $(\primal_{\cal M})$, which serves as a fluid relaxation \citep{Puterman94,Altman99} to the MDPwGR problem. 
\begin{subequations}
\begin{alignat}{2}
(\primal_{\cal M})\text{: \;} & \max_x ~ g\left(\sum_{s\in \SSS, a\in \AAA_s} v(s, a) x(s, a)\right) & \nonumber\\
\text{s.t. } &\sum_{a \in \AAA_s} x(s, a) = \sum_{s'\in\SSS, a'\in \AAA_{s'}} p(s | s',a') x(s', a')&\quad &\forall s \in \SSS \label{eq:P.1}\\
               &\sum_{s\in \SSS, a\in \AAA_s}x(s, a) = 1       &\quad & \label{eq:P.2}\\
               &x(s, a)\geq 0      &\quad &\forall s\in \SSS, a\in \AAA_s \label{eq:P.3}
\end{alignat}
\end{subequations}
In $(\primal_{\cal M})$, the variables $\{x (s, a)\}_{s, a}$ form a probability distribution over the state-action pairs. The set of constraints (\ref{eq:P.1}) requires the rates of transiting into and out of each state $s$ to be equal. 

To achieve near-optimality, we aim to design a non-anticipatory policy with an \emph{anytime regret bound} $\text{Reg}(T)=O(1/T^\alpha)$ for some $\alpha >0$. That is, for all $\delta > 0$, there exist constants $c, C$ (which only depend on $K, S, A, g, \delta$), so that the policy satisfies $\text{Reg}(T) \leq c T^{-\alpha}$ for all $T \geq C$ with probability at least $1-\delta$. Our offline benchmark $\text{opt}(\primal_{\cal M})$ is justified as follows:
\begin{restatable}{theorem}{thmbenchmark}\label{thm:benchmark}
Consider an MDPwGR instance ${\cal M}$ that satisfies Assumption \ref{ass:communicating} with diameter $D$. For any non-anticipatory policy, it holds that
$$
\mathbb{E} [g (\bar{V}_{1:T} )] \leq \text{opt}(\primal_{\cal M}) + 2 L\|\mathbf{1}_K\| D / T.
$$
\end{restatable}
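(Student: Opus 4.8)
The plan is to relate the expected average outcome to the feasible region of $(\primal_{\cal M})$ through the expected state-action occupation measure, and then to repair the small flow imbalance that the finite horizon introduces, at a cost governed by the diameter $D$.

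First I would apply Jensen's inequality: since $g$ is concave, $\mathbb{E}[g(\bar{V}_{1:T})] \leq g(\mathbb{E}[\bar{V}_{1:T}])$. Using that $\mathbb{E}[V_q(s_q,a_q)\mid s_q,a_q] = v(s_q,a_q)$ together with the tower property, I would then write $\mathbb{E}[\bar{V}_{1:T}] = \sum_{s\in\SSS, a\in\AAA_s} v(s,a)\, x_T(s,a)$, where $x_T(s,a) := \frac{1}{T}\sum_{q=1}^T \mathbb{P}(s_q = s,\, a_q = a)$ is the expected occupation frequency induced by the (arbitrary non-anticipatory) policy. By construction $x_T$ satisfies the normalization (\ref{eq:P.2}) and nonnegativity (\ref{eq:P.3}) exactly.

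The only obstruction to feasibility is the flow-balance constraint (\ref{eq:P.1}), and here I would compute the imbalance explicitly. Because the inflow term telescopes to the occupation of times $2,\dots,T+1$, the outflow minus inflow at each $s$ equals $\delta(s) = \frac{1}{T}\big(\mathsf{1}(s=s_1) - \mathbb{P}(s_{T+1}=s)\big)$. Thus $x_T$ behaves like a unit ``path flow'' with a single source at $s_1$ and sinks distributed as $\mathbb{P}(s_{T+1}=\cdot)$, with total imbalance $\sum_{s}\delta(s)^+ \leq 1/T$. The heart of the argument, and the step I expect to be the main obstacle, is converting this near-feasible $x_T$ into a genuinely feasible $\tilde{x}$ while controlling $\|x_T - \tilde{x}\|_1$. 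For this I would add a ``return flow'' $r$: for each terminal state $s$, route mass $\mathbb{P}(s_{T+1}=s)/T$ back to $s_1$ along the expected occupation measure of the optimal traveling policy $\pi_{s\to s_1}$, whose expected travel time is at most $D$ by Assumption \ref{ass:communicating}. Routing mass $m$ over expected length $\leq D$ contributes total occupation $\leq mD$, so $\|r\|_1 \leq D\sum_s \mathbb{P}(s_{T+1}=s)/T = D/T$ and $r$ cancels the imbalance exactly, making $x_T + r$ an (unnormalized) circulation. Since (\ref{eq:P.1}) is homogeneous, renormalizing $\tilde{x} := (x_T+r)/(1+\|r\|_1)$ restores (\ref{eq:P.2}) while preserving (\ref{eq:P.1}) and (\ref{eq:P.3}); a short computation then gives $\|x_T - \tilde{x}\|_1 \leq 2\|r\|_1 \leq 2D/T$. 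The delicate points are verifying that the traveling-policy occupation measure is a valid flow with the claimed source/sink structure and total mass, and tracking the constant through the renormalization.

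Finally, I would close the loop via $L$-Lipschitzness. Writing $d := x_T - \tilde{x}$ and using $v(s,a)\in[0,1]^K$ (so that $\|v(s,a)\| \leq \|\mathbf{1}_K\|$ for the norm at hand), the triangle inequality gives $\big\|\sum_{s,a} v(s,a)\, d(s,a)\big\| \leq \|\mathbf{1}_K\|\,\|d\|_1 \leq 2\|\mathbf{1}_K\| D/T$. Hence $g(\mathbb{E}[\bar{V}_{1:T}]) \leq g\big(\sum_{s,a} v(s,a)\tilde{x}(s,a)\big) + 2L\|\mathbf{1}_K\| D/T \leq \text{opt}(\primal_{\cal M}) + 2L\|\mathbf{1}_K\| D/T$, where the last inequality uses feasibility of $\tilde{x}$ for $(\primal_{\cal M})$. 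Chaining this with the initial Jensen step yields the claim.
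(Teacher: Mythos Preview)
Your primal argument is correct and genuinely different from the paper's proof. The paper works on the dual side: it writes $\text{opt}(\primal_{\cal M})$ via Fenchel duality and the LP dual $(\dual_{\cal M})$, picks an optimal dual solution $(\theta^*,\phi^*,\gamma^*)$, and then bounds $\mathbb{E}[g(\bar V_{1:T})]\le g^*(\theta^*)+(-\theta^*)^\top \mathbb{E}[\bar V_{1:T}]$ by substituting the dual feasibility inequalities and telescoping the bias terms $\gamma^*(s_t)$; the diameter enters through Lemma~\ref{lemma:mc}, which bounds the span of $\gamma^*$ by $(L\|\mathbf 1_K\|+\phi^*)D$ via essentially the same hitting-time argument you use for the return flow. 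The paper even remarks that ``it appears hard to analyze $(\primal_{\cal M})$ directly,'' so your construction is a nice counterpoint.

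What each route buys: your approach is more elementary (no Fenchel or LP duality) and yields a feasible $\tilde x$ explicitly, which makes the role of $D$ transparent as the $\ell_1$-cost of repairing the $O(1/T)$ flow imbalance. The paper's dual route, on the other hand, introduces exactly the objects $(\phi,\gamma)$ that recur throughout the algorithmic analysis (the EVI oracle outputs $(\tilde\phi_m,\tilde\gamma_m)$, and Theorem~\ref{thm:JOA} is a span bound in the same spirit as Lemma~\ref{lemma:mc}), so it pays dividends later even if it is heavier machinery for this particular statement. Two small points worth making precise when you write it up: (i) verify that the stopped occupation measure $N_{\pi_{s\to s_1}}(\cdot,\cdot)$ really is a unit flow with source $s$ and sink $s_1$ (it is, by the one-step accounting ``departures $=$ arrivals $+\,\mathsf 1(\cdot=s)-\mathsf 1(\cdot=s_1)$''), and (ii) the step $\|v(s,a)\|\le\|\mathbf 1_K\|$ uses that the norm is monotone on $[0,1]^K$, which is the same standing convention the paper relies on elsewhere.
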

Theorem \ref{thm:benchmark} is proved in Appendix \ref{app:pfthmbenchmark}. Interestingly, the proof requires inspecting a dual formulation of $(\primal_{\cal M})$, and it appears hard to analyze $(\primal_{\cal M})$ directly. We could have $
\mathbb{E} [g (\bar{V}_{1:T}) ] > \text{opt}(\primal_{\cal M})$ when $T$ is small (see Appendix \ref{app:offlineremark}), thus an additive term in the upper bound is necessary.


\section{Challenges of MDPwGR, and Algorithm {\sc Toc-UCRL2}}\label{sec:alg}

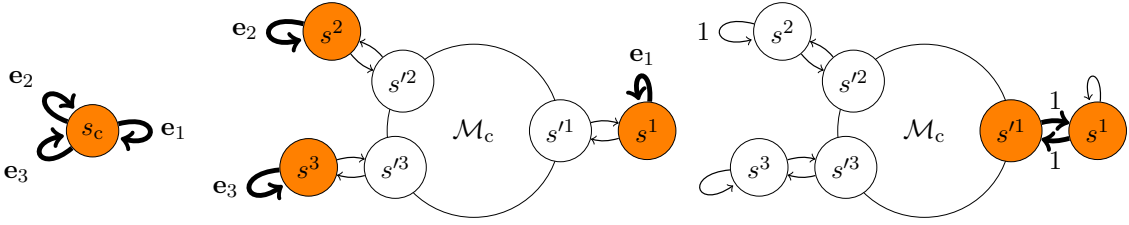
\begin{figure}
\centering
\resizebox{0.9\textwidth}{!}{%
\begin{tikzpicture}
\node [circle,draw,fill=orange] (sc) at (0,0) {$s_\text{c}$};
\path[->] (sc) edge [in=-15,out=15,loop, line width=2pt] node [right] {$\textbf{e}_1$} ();
\path[->] (sc) edge [in=130,out=160,loop, line width=2pt] node [above left] {$\textbf{e}_2$} ();
\path[->] (sc) edge [in=190,out=220,loop, line width=2pt] node [below left] {$\textbf{e}_3$} ();
\node [circle,draw,minimum size=2.5cm] (mcgr) at (5.5cm, 0cm) {$\mathcal{M}_\text{c}$};
\node [circle,fill=white,draw=black,overlay] (sprime1) at (6.75,0) {$s'^1$};
\node [circle,fill=orange,draw=black,overlay] (s1) at (8,0) {$s^1$};
\path[->] (s1) edge [in=105,out=85,loop, line width=2pt] node [above] {$\textbf{e}_1$} ();
\path[->] (sprime1) edge [in=165, out=15, line width=0.5pt] (s1);
\path[->] (s1) edge [in=-15, out=195, line width=0.5pt] (sprime1);
\node [circle,fill=white,draw=black,overlay] (sprime2) at (4.47606,0.71697) {$s'^2$};
\node [circle,fill=orange,draw=black,overlay] (s2) at (3.45211,1.43394) {$s^2$};
\path[->] (s2) edge [in=195,out=165,loop, line width=2pt] node [left] {$\textbf{e}_2$} ();
\path[->] (sprime2) edge [in=-20, out=130, line width=0.5pt] (s2);
\path[->] (s2) edge [in=160, out=-50, line width=0.5pt] (sprime2);
\node [circle,fill=white,draw=black,overlay] (sprime3) at (4.36712,-0.52827) {$s'^3$};
\node [circle,fill=orange,draw=black,overlay] (s3) at (3.11712,-0.52827) {$s^3$};
\path[->] (s3) edge [in=215,out=185,loop, line width=2pt] node [left] {$\textbf{e}_3$} ();
\path[->] (sprime3) edge [in=-15, out=195, line width=0.5pt] (s3);
\path[->, draw] (s3) edge [in=165, out=15, line width=0.5pt] (sprime3);
\node [circle,draw,minimum size=2.5cm] (mcgrs) at (12cm, 0cm) {$\mathcal{M}_\text{c}$};
\node [circle,fill=orange,draw=black,overlay] (sprime1s) at (13.25,0) {$s'^1$};
\node [circle,fill=orange,draw=black,overlay] (s1s) at (14.5,0) {$s^1$};
\path[->] (s1s) edge [in=105,out=85,loop, line width=0.5pt] node [above] {} ();
\path[->] (sprime1s) edge [in=165, out=15, line width=2pt] node [above] {$1$} (s1s);
\path[->] (s1s) edge [in=-15, out=195, line width=2pt] node [below] {$1$}(sprime1s);
\node [circle,fill=white,draw=black,overlay] (sprime2s) at (10.97606,0.71697) {$s'^2$};
\node [circle,fill=white,draw=black,overlay] (s2s) at (9.95211,1.43394) {$s^2$};
\path[->] (s2s) edge [in=195,out=165,loop, line width=0.5pt] node [left] {1} ();
\path[->] (sprime2s) edge [in=-20, out=130, line width=0.5pt] (s2s);
\path[->] (s2s) edge [in=160, out=-50, line width=0.5pt] (sprime2s);
\node [circle,fill=white,draw=black,overlay] (sprime3s) at (10.86712,-0.52827) {$s'^3$};
\node [circle,fill=white,draw=black,overlay] (s3s) at (9.61712,-0.52827) {$s^3$};
\path[->] (s3s) edge [in=215,out=185,loop, line width=0.5pt] node [left] {} ();
\path[->] (sprime3s) edge [in=-15, out=195, line width=0.5pt] (s3s);
\path[->] (s3s) edge [in=165, out=15, line width=0.5pt] (sprime3s);
\end{tikzpicture}
}%
\caption{Instances ${\cal M}_{\text{BG}}, {\cal M}_{\text{MG}}, {\cal M}_{\text{MS}}$ with $K=3$, from left to right. Optimal actions are bolded.}\label{fig:trouble_inst}
\end{figure}

\textbf{Challenges.} While MDPwGR is a common generalization of BwGR and MDPwGR, we identify unique challenges in MDPwGR for alternating among different actions, which is crucial for balancing the outcomes and achieving near-optimality.


We showcase these challenges in Fig. \ref{fig:trouble_inst}. An arc from state $s$ to state $s'$ represents an action $a$, with $p(s' | s, a) = 1$. Instance ${\cal M}_{\text{BG}}$, which can be seen as a BwGR instance, consists of a single state $s_\text{c}$ and $K$ actions $\{\CircArrowRight{}^k\}^K_{k=1}$.  Instances ${\cal M}_{\text{MG}}$, ${\cal M}_{\text{MS}}$ are respective instances for MDPwGR, MDPwSR. These instances share the same $\SSS, \AAA, p$. The center node ${\cal M}_\text{c}$ is a communicating MDP. Each peripheral node $s^1, \ldots, s^K$ is a distinct state, disjoint from ${\cal M}_\text{c}$. Each $s^k$ has a self-loop $\CircArrowRight{}^k$; there is an arc ${}^k\hspace{-1mm}\rightarrow_\text{c}$ from $s^k$ to $s'^k\in {\cal M}_\text{c}$, as well as an arc ${}_\text{c}\hspace{-1mm}\rightarrow^k$ back. Thus, ${\cal M}_{\text{MG}}, {\cal M}_{\text{MS}}$ are communicating. 

Let's focus on ${\cal M}_{\text{BG}}, {\cal M}_{\text{MG}}$, both with $ g(w) = 1  -\sum^K_{k=1} |w_k  -1/K | / 2$. 
For ${\cal M}_{\text{BG}}$, we set $V(s_\text{c}, \CircArrowRight{}^k) = \textbf{e}_k$ for each $1\leq k\leq K$. For ${\cal M}_{\text{MG}}$, we set $V(s^k, \CircArrowRight{}^k) = \textbf{e}_k$ for each $1\leq k\leq K$, and set $V(s, a) = \textbf{0}_K$ for all other $(s, a)$. Now, $\text{opt}(\mathsf{P}_{{\cal M}_\text{BG}}) = \text{opt}(\mathsf{P}_{{\cal M}_\text{MG}}) = 1$. In the case of ${\cal M}_\text{BG}$, the agent achieves a $O(K/T)$ anytime regret by choosing $\CircArrowRight{}^{k_t}$ at time $t$, where $k_t \equiv t(\text{mod } K)$. 

In ${\cal M}_\text{MG}$, an optimal policy has $K$ recurrent classes $\{\{s^k\}\}^K_{k=1}$, and each action $\CircArrowRight{}^k$ should be chosen with frequency $1/K$ for optimality. To alternate from $\CircArrowRight{}^k$ to $\CircArrowRight{}^{k'}\neq \CircArrowRight{}^k$, the agent has to travel from state $s^k$ to $s^{k'}$, which forces her to visit ${\cal M}_\text{c}$ and compromises her objective. This presents a more difficult case than ${\cal M}_\text{BG}$, where she can freely alternate among $\{\CircArrowRight{}^k\}_{k=1}^K$. 

The agent has to explore ${\cal M_\text{c}}$ and seek shortest paths among $s'^k$s to alternate among $\{ \CircArrowRight{}^k\}_{k=1}^K$. Importantly, her frequency of alternations has to be finely controlled. To elaborate, define $N_\text{alt}(T) = \sum^T_{s=1}\mathsf{1}(a_t \in\{  {}^k\hspace{-1mm}\rightarrow_\text{c} \}^K_{k=1})$, which is the number of alternations among $\{ \CircArrowRight{}_k\}_{k=1}^K$ in the first $T$ time steps (An alternation from $\CircArrowRight{}^k$ requires visiting  ${}^k\hspace{-1mm}\rightarrow_\text{c}$ once). The agent's anytime regret depends on $N_\text{alt}(T)$ delicately:
\begin{restatable}{claim}{claimsmalleg}\label{claim:small_eg}
Let $\alpha \in (0, 1)$ be arbitrary, and $K \geq 2$. There is an ${\cal M}_\text{c}$ such that: If $N_\text{alt}(T) \geq  T^\alpha$, then $\text{Reg}(T) = \Omega(1 / T^{1 - \alpha})$.  If $N_\text{alt}(T) < T^\alpha$, then either $\text{Reg}(T) = \Omega(1 / T^{1 - \alpha})$, or $\text{Reg}(\tau) = \Omega(D / \tau^{\alpha})$ for some $T / 2 \leq \tau\leq T$, where $D\geq 2$ is the diameter of ${\cal M}_\text{MG}$.
\end{restatable}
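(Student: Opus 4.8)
My plan is to argue directly from the explicit form of the regret on these instances, after first choosing $\mathcal{M}_\text{c}$ so that \emph{every} traversal of $\mathcal{M}_\text{c}$ is expensive. Write $n_k$ for the number of self-loop steps at $s^k$ among the first $T$ steps and $m := T - \sum_k n_k$ for the number of zero-outcome (``travel'') steps; then $\bar V_{1:T} = (n_1/T,\dots,n_K/T)$, so $\text{Reg}(T) = \tfrac12\sum_{k=1}^K |n_k/T - 1/K|$. Two elementary facts drive everything: $\text{Reg}(\tau) \ge \tfrac12\big(1 - \sum_k n_k(\tau)/\tau\big) = m(\tau)/(2\tau)$ (regret from wasted time), and $|n_k(\tau)/\tau - 1/K| \le 2\,\text{Reg}(\tau)$ for every coordinate $k$. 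For the construction I take $\mathcal{M}_\text{c}$ to be a directed path $u_1 \to u_2 \to \cdots \to u_d$ with $d = \Theta(D)$, routing every entry arc into $u_1$ and every exit arc out of $u_d$. Then $\mathcal{M}_\text{MG}$ is communicating with diameter $\Theta(D)$ and, crucially, every \emph{excursion} (a maximal time-block spent off the peripheral self-loops, which by definition contains exactly one entry action into $\mathcal{M}_\text{c}$) is forced to cost at least $D$ steps, \emph{even one that returns to the same peripheral state}. Hence the excursions partition into exactly $N_\text{alt}(T)$ blocks and $m \ge D\,(N_\text{alt}(T)-1)$.

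The case $N_\text{alt}(T) \ge T^\alpha$ is then immediate: $m \ge D(T^\alpha-1)$ gives $\text{Reg}(T) \ge m/(2T) = \Omega(D/T^{1-\alpha})$, which is in particular $\Omega(1/T^{1-\alpha})$.

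The substantial case is $N_\text{alt}(T) < T^\alpha$ together with the failure of the first alternative, i.e. $\text{Reg}(T) =: \eta < c_1/T^{1-\alpha}$ for a fixed small $c_1$; I must then exhibit $\tau\in[T/2,T]$ with $\text{Reg}(\tau)=\Omega(D/\tau^\alpha)$. First I bound the second half combinatorially: $\eta$ small gives $m \le 2T\eta$, and since each excursion costs $\ge D$ there are at most $2T\eta/D$ of them, so the self-loop time inside $[T/2,T]$ is at least $T/2 - m \ge T/4$ and is split into at most $2T\eta/D+1$ runs. Pigeonholing yields a single run, a block $[a,b]\subseteq[T/2,T]$ on which only one coordinate $k_0$ increments, of length $\ell = b-a \ge \tfrac{T/4}{2T\eta/D+1} = \Omega(D/\eta) = \Omega(D\,T^{1-\alpha})$ (using $\eta < c_1/T^{1-\alpha}$). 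I then convert this long run into regret through the identity $\tfrac{n_{k_0}(b)}{b} - \tfrac{n_{k_0}(a)}{a} = \tfrac{\ell\,(1-n_{k_0}(a)/a)}{b}$: if $\text{Reg}(a)\ge\rho:=c_0 D/T^\alpha$ we are done with $\tau=a$; otherwise $n_{k_0}(a)/a \le 1/K + 2\rho \le 0.6$, so the fraction of $k_0$ jumps by at least $0.4\,\ell/T = \Omega(D/T^\alpha)$ across the run, which for $c_0$ chosen small relative to $c_1$ forces $|n_{k_0}(b)/b - 1/K| > 2\rho$ and hence $\text{Reg}(b) > \rho$. Either way some $\tau\in[T/2,T]$ has $\text{Reg}(\tau)\ge\rho$, and since $\tau\ge T/2$ we get $\rho = c_0 D/T^\alpha \ge (c_0/2)\,D/\tau^\alpha = \Omega(D/\tau^\alpha)$.

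The step I expect to require the most care is precisely this second-half pigeonhole: I need simultaneously that few excursions occur (so runs are long) and that a sufficiently long run lands in $[T/2,T]$ with both endpoints there, and this hinges on the construction forcing \emph{every} excursion — including wasteful ones returning to the same state — to cost $\Theta(D)$. Without this, cheap alternations would decouple $N_\text{alt}(T)$ from $m$ and the counting would collapse. A minor loose end is the degenerate regime $\eta < D/(2T)$, where the excursion bound gives $N_\text{alt}(T)=0$; there some coordinate is never visited (as $K\ge 2$), so $\text{Reg}(T)=\Omega(1)$, contradicting $\eta$ small, and the regime cannot occur for large $T$. The fraction-shift identity and the bookkeeping for runs straddling $T/2$ are routine once the path construction is fixed.
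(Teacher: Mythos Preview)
Your proof is correct and follows the same skeleton as the paper's: bound regret below by the wasted-time fraction $m/(2T)$, then in the low-regret branch pigeonhole a long self-loop run inside $[T/2,T]$ and use the fraction-shift identity to force large regret at one endpoint. The paper takes $\mathcal{M}_\text{c}$ to be a star with branches of length $D/2$, splits the second case on $N_{\not \CircArrowRight{}}\lessgtr T^\alpha$ rather than directly on $\text{Reg}(T)$, and obtains the same run length $\Theta(D\,T^{1-\alpha})$ and the same shift estimate $V_{1:\tau',k}-V_{1:\tau,k}\ge D/(4\tau^\alpha)$.

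Your one substantive twist is the choice of $\mathcal{M}_\text{c}$ as a directed path with entry at one end and exit at the other, so that \emph{every} excursion---even a wasteful one returning to the same $s^k$---costs $\Theta(D)$ steps. This buys you the clean excursion count $\le m/D+O(1)$ and makes the pigeonhole airtight. In the paper's star construction the two-step trip $s^k\to s'^k\to s^k$ is available, so the paper's assertion ``$N_\text{alt}<T^\alpha/D$'' tacitly assumes no such wasteful trips; your construction sidesteps that subtlety. It does bend the template slightly (entry and exit are through different nodes rather than the single $s'^k$), but since the claim only asserts existence of \emph{some} center this is harmless.
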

Claim \ref{claim:small_eg}, proved in Appendix \ref{app:pfclaimsmalleg}, holds even when the agent knows $v, p$. In the claim, the first ``if'' case is when the agent alternates too often, and compromises the objective by visiting the sub-optimal ${\cal M}_\text{c}$ too many times. The second ``if'' case is essentially when the agent alternates too infrequently by staying at a loop $\CircArrowRight{}^k$ for too long, leading to an imbalance in the outcomes. The frequency of alternation becomes an even subtler issue when the agent has to maintain simultaneous exploration and exploitation on ${\cal M}$.

The trade-off in Claim \ref{claim:small_eg} is absent in MDPwSR, where the agent follows a single stationary policy and alternates within a recurrent class of optimal states. Since the rewards are scalar, the agent does not need to balance the outcomes, unlike in MDPwGR. For example, in instance ${\cal M}_\text{MS}$, state-action pairs $(s^1, {}^1\hspace{-1mm}\rightarrow_\text{c})$, $(s'^1, {}_\text{c}\hspace{-1mm}\rightarrow^1), (s^2, \CircArrowRight{}^2)$ have scalar reward 1, while other state-action pairs have scalar reward 0. The agent achieves a $\text{Reg}(T) = O(D/T)$ anytime regret by traveling from a starting state to $s'^1$, and then alternating solely between $s'^1, s^1$ by actions ${}_\text{c}\hspace{-1mm}\rightarrow^1, {}^1\hspace{-1mm}\rightarrow_\text{c} $ indefinitely.


Altogether, the trade-off in Claim \ref{claim:small_eg} occurs when a policy can have multiple recurrent classes, which is possible in communicating MDPs
, but not unichain MDPs. In fact, stationary policies are in general sub-optimal for MDP-wGR:
\begin{restatable}{claim}{claimcontsmalleg}\label{claim:cont_small_eg}
There exists ${\cal M}_\text{MG}$ under which any stationary policy incurs an $\Omega(1)$ anytime regret.
\end{restatable}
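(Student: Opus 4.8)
The plan is to instantiate ${\cal M}_\text{MG}$ exactly as in the running example, with the balancing reward $g(w)=1-\frac{1}{2}\sum_{k=1}^K|w_k-1/K|$, outcomes $V(s^k,\text{self-loop})=\mathbf{e}_k$ and $\mathbf{0}_K$ on every other state--action pair, and with ${\cal M}_\text{c}$ taken to be any fixed communicating gadget whose outcomes are all $\mathbf{0}_K$. For this instance $\text{opt}(\primal_{\cal M})=1$, attained by the fluid solution placing frequency $1/K$ on each self-loop, and the unique reward-maximizing average outcome is $(1/K,\dots,1/K)$. The single structural fact I would exploit is that the only way to accumulate a nonzero coordinate in the outcome is to play the self-loop action at some $s^k$, and that committing to the self-loop at $s^k$ turns $s^k$ into an absorbing state.

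First I would reduce the anytime statement to one about the long-run average outcome. Under any fixed stationary policy $\pi$ the process is a time-homogeneous finite Markov chain, so starting from $s_1$ the trajectory almost surely enters a single recurrent class and, by the ergodic theorem for finite chains, $\bar{V}_{1:T}$ converges almost surely to the stationary average $\bar{V}^\infty$ of the coordinates inside that class. Hence $\text{Reg}(T)\to 1-g(\bar{V}^\infty)$, and it suffices to exhibit a constant $c>0$ with $g(\bar{V}^\infty)\le 1-c$; this rules out any vanishing anytime bound and gives the claimed $\Omega(1)$ regret.

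The heart of the argument is a frequency inequality. Writing $f_k$ for the long-run frequency with which $\pi$ plays the self-loop at $s^k$ and $\rho=\sum_{k=1}^K f_k$ for the total self-loop frequency, we have $\bar{V}^\infty=(f_1,\dots,f_K)$, and the triangle inequality gives $\frac{1}{2}\sum_{k}|f_k-1/K|\ge \frac{1}{2}|\rho-1|=\frac{1-\rho}{2}$, so that $g(\bar{V}^\infty)\le \frac{1+\rho}{2}$. For a deterministic $\pi$ this closes immediately: a self-loop action makes its state absorbing, so the reached recurrent class is either a singleton $\{s^k\}$, giving $\bar{V}^\infty=\mathbf{e}_k$ and $g=1/K\le 1/2$, or it contains no self-loop action at all, giving $\bar{V}^\infty=\mathbf{0}_K$ and $g=1/2$; either way $g(\bar{V}^\infty)\le 1/2$ and the regret tends to at least $1/2$, a universal constant.

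The hard part will be the randomized stationary case, where no single state need be absorbing and $\rho$ can be pushed close to $1$. Here I would argue that maintaining balance forces positive-frequency travel through ${\cal M}_\text{c}$: if the recurrent class spans two or more self-loops (necessary for every $f_k>0$), then the only route between distinct $s^k$'s passes through ${\cal M}_\text{c}$, so within each regeneration cycle the chain must traverse at least one exit arc, and each such traversal contributes at least one $\mathbf{0}_K$ step; by the renewal-reward identity this pins the long-run frequency of non-self-loop steps, $1-\rho$, to a strictly positive value determined by $\pi$. Consequently $\rho<1$ strictly and $g(\bar{V}^\infty)\le\frac{1+\rho}{2}<1$, so $\text{Reg}(T)$ converges to a strictly positive, policy-dependent constant. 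The main obstacle is making this last step rigorous --- quantifying that alternation among the self-loops provably costs a positive frequency of zero-outcome steps for every fixed randomized policy. The deterministic case already yields the universal gap $1/2$, and the randomized case yields a positive gap per policy, which is exactly what the $\Omega(1)$ claim asserts.
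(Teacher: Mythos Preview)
Your proposal is correct and follows essentially the same approach as the paper: both reduce via the ergodic theorem for finite chains to showing that the long-run average outcome under any stationary policy cannot equal $(1/K,\ldots,1/K)$. The paper organizes the case split differently---it conditions on which arc probabilities vanish, and in the case where all arcs have positive probability it invokes Perron--Frobenius to get $\lambda_{s_\text{c}}>0$, which is precisely your $\rho<1$.

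Two remarks. First, the obstacle you flag at the end is not serious and is exactly what the paper's Perron--Frobenius step handles: if the recurrent class contains two distinct peripheral states $s^k,s^{k'}$, then since every path between them runs through ${\cal M}_\text{c}$, irreducibility on the class forces positive stationary mass on at least one non-self-loop state, so $\rho<1$ strictly. Second, your deterministic/randomized split is unnecessary once you observe that $\rho=1$ forces the recurrent class to be an absorbing singleton $\{s^k\}$ (giving $g=1/K$) regardless of whether the policy is randomized; the clean dichotomy is simply $\rho=1$ versus $\rho<1$. Your inequality $g(\bar V^\infty)\le(1+\rho)/2$ is a nice packaging that the paper does not make explicit. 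One minor imprecision: under a randomized policy the recurrent class reached from $s_1$ may itself be random, but your bound applies conditionally on each class, so the conclusion is unaffected.
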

The Claim is proved in Appendix \ref{app:pfclaimsmalleg}. Claim \ref{claim:cont_small_eg} is in stark contrast to the optimality of stationary policies in the unichain case \citep{Altman99}, or the scalar reward case \citep{JakschOA10}, or the discounted case \citep{Altman99}. How should the agent manage her exploration-exploitation trade-off, in face of the trade-off in alternating among actions (cf. Claim \ref{claim:small_eg}), while avoiding converging to a stationary policy? 




\begin{algorithm}[t]
\caption{{\sc Toc-UCRL2}}\label{alg:oco-ucrl2}
\begin{algorithmic}[1]
\State Inputs: Parameter $\delta\in (0, 1)$, gradient $\theta_1 \in B(L, \|\cdot\|_*)$, grad. threshold $Q>0$, initial state $s_1$.
\State Input oracles: OCO oracle $\textsf{OCO}$, EVI oracle $\textsf{EVI}$.
\State Initialize $t = 1$. \;

\State \textbf{for} episode $m = 1, 2, \ldots$ \textbf{do}
	\State \hspace{0.5cm} Set $\tau(m) = t$, and initialize $N_m(s, a)$ according to Eq. (\ref{eq:Nm}) for all $s, a$.\; \label{alg:oco-ucrl2-tau}
 	\State \hspace{0.5cm} Compute the confidence regions $H^v_m$, $H^p_m$ respectively for $v, p$, according to Eqns (\ref{eq:Hvm}, \ref{eq:Hpm}).\;
	\State \hspace{0.5cm} Compute the optimistic reward $\tilde{r}_m = \{\tilde{r}_m(s, a)\}_{s\in \SSS, a\in \AAA_s}$:\;
	\begin{equation}\label{eq:opt_reward}
		 \tilde{r}_m(s, a) = \max_{\bar{v}(s, a)\in H^v_m(s, a) } (- \theta_{\tau(m)})^\top \bar{v}(s, a).
	\end{equation}\label{alg:opt_reward}
	\State \hspace{0.5cm} Compute a $(1/\sqrt{\tau(m)})$-optimal optimistic policy $\tilde{\pi}_m$: \;
	\begin{equation}\label{eq:evi_m}
		 \tilde{\pi}_m, (\tilde{\phi}_m, \tilde{\gamma}_m)\leftarrow \textsf{EVI}(\tilde{r}_m, H^p_m ; 1/\sqrt{\tau(m)}).
	\end{equation}
	\State \hspace{0.5cm} Initialize $\nu_m(s, a) = 0$ for all $s, a$.\;
	\State \hspace{0.5cm} Initialize reference gradient $\theta^\refe = \theta_{\tau(m)}$, and $\Psi = 0$\; \label{alg:oco-ucrl2-initial}
	\State \hspace{0.5cm} \textbf{while} $\Psi \leq Q$ and $\nu_m(s_t, \tilde{\pi}_m(s_t)) < N^+_m(s_t, \tilde{\pi}_m(s_t))$ \text{do} \label{alg:oco-ucrl2-while}
	\State	\hspace{0.5cm} \hspace{0.5cm} Choose action $a_t = \tilde{\pi}_m(s_t)$. \;\label{alg:oco-ucrl2-action}
	\State	\hspace{0.5cm} \hspace{0.5cm} Observe the outcomes $V_t(s_t, a_t)$ and the next state $s_{t+1}$.\; \label{alg:oco-ucrl2-receive}
	\State	\hspace{0.5cm} \hspace{0.5cm} Compute gradient $\theta_{t + 1}$ based on  $\textsf{OCO}$ and the observation history.\; \label{alg:update_grad}
	\State	\hspace{0.5cm} \hspace{0.5cm} Update $\Psi \leftarrow \Psi + \| \theta_{t+1} - \theta^\refe\|_*$.\;
	\State	\hspace{0.5cm} \hspace{0.5cm} Update $\nu_m(s_t, a_t) \leftarrow \nu_m(s_t, a_t) + 1$.
	\State	 \hspace{0.5cm} \hspace{0.5cm} Update $t \leftarrow t+1$.\; \label{alg:oco-ucrl2-end}
\end{algorithmic}
\end{algorithm}

\textbf{Algorithm. }We propose Algorithm {\sc Toc-UCRL2}, displayed in Algorithm \ref{alg:oco-ucrl2}, for MDPwGR. The algorithm runs in episodes, and it overcomes the discussed challenges by a novel gradient threshold procedure. During episode $m$, which starts at time $\tau(m)$, it runs a certain stationary policy $\tilde{\pi}_m$, until the end of the episode at time $\tau(m+1) -1$. The start times $\{\tau(m)\}^\infty_{m=1}$ and policies $\{\tilde{\pi}_m\}^\infty_{m=1}$ are decided adaptively, as discussed later. We maintain confidence regions $H^v_m = \{H^v_m(s, a)\}_{s, a}$, $H^p_m = \{H^p_m(s, a)\}_{s, a}$ on the latent $v$, $p$ across episodes, by first defining
\begin{equation}\label{eq:Nm}
N_m(s, a) = \sum^{\tau(m) - 1}_{t = 1} \mathsf{1}(s_t = s, a_t = a),\quad  N^+_m(s, a) = \max\{1, N_m(s, a)\}.
\end{equation}
Define $(\text{log-$v$})_m := \log (12 K SA \tau^2(m) / \delta )$. The estimates and confidence regions for $v$ are:
\begin{align}
\hat{v}_m(s, a) &:= \frac{1}{N^+_m(s, a)} \sum^{\tau(m) - 1}_{t = 1} V_\tau(s_t , a_t )\mathsf{1}(s_t = s, a_t = a),\nonumber\\
\rad^{v}_{m, k}(s, a) &:= \sqrt{\frac{2\hat{v}_{m, k}( s, a) \cdot (\text{log-$v$})_m }{N^+_m(s, a)}} + \frac{3\cdot (\text{log-$v$})_m }{N^+_m(s, a)}, \nonumber\\
H^v_m(s,  a) &:= \left\{\bar{v} \in  [0, 1]^K : \left| \bar{v}_k - \hat{v}_{m, k}(s, a)\right| \leq \rad^v_{m, k}( s, a) \; \forall k\in [K]\right\} \label{eq:Hvm}.
\end{align}
Define $(\text{log-$p$})_m := \log ( 12 S^2 A \tau^2(m) / \delta )$. The estimates and confidence regions for $p$ are:
\begin{align}
\hat{p}_m(s' | s, a) &:= \frac{1}{N^+_m(s, a)} \sum^{\tau(m) - 1}_{t = 1} \mathsf{1}(s_t = s, a_t = a, s_{t + 1} = s'),\nonumber\\
\rad^p_{ m}(s' | s, a) &:= 
\sqrt{\frac{2\hat{p}_m(s' | s, a) \cdot (\text{log-$p$})_m }{N^+_m(s, a)}} 
+ 
\frac{3\cdot (\text{log-$p$})_m}{N^+_m(s, a)},\nonumber\\
H^p_m(s,  a) &:= \left\{ \bar{p}\in \Delta^\SSS :  \left| \bar{p}(s') - \hat{p}_m(s' | s, a)\right| \leq \rad^p_{ m}(s' | s, a) \;\forall s'\in \SSS\right\} \label{eq:Hpm}.
\end{align}

\textbf{OCO Oracle.} We balance the contributions from each of the $K$ outcomes by an Online Convex Optimization (OCO) oracle $\textsf{OCO}$. The applications of OCO tools with UCB algorithms are first studied in bandit settings by \citep{AgrawalD14}, and are subsequently studied in different settings by \citep{AgrawalDL16,Busa-feketeSWM17,BerthetP17}. An OCO oracle is typically based on a gradient descent algorithm. At the end of time $t$, the oracle $\textsf{OCO}$ computes a sub-gradient $\theta_{t+1}\in B(L, \|\cdot\|_*)$ that depends on $g, \{V_t(s_q, a_q), \theta_q\}^t_{q=1}$. For each $s, a$, the scalar reward $(-\theta_{t+1})^\top v(s, a)\in \mathbb{R}$ reflects how well $(s, a)$ balances the outcomes. To illustrate, we provide the definition of the Frank-Wolfe oracle based on \citep{FrankW56}, which is defined for $\beta$-smooth reward functions (see later in Defintion \ref{def:smooth}). The initial gradient is $\theta_1 = - \nabla g(\textbf{0}_K)$. 
To prepare for time $t+1$, at the end of time $t$ the oracle $\FW$ outputs gradient
$$
\theta_{t+1} = -\nabla g(\bar{V}_{1:t}  ).
$$
For an even more concrete example, consider instance ${\cal M_{\text{MG}}}$ with  $g(w) = 1 - \sum^K_{k=1}(w_k - 1 / K)^2 / 2$. 
The $\FW$ oracle outputs $\theta_{t+1} = \bar{V}_{1:t} - \mathbf{1}_K / K $. The resulting scalar reward for $(s_k, \CircArrowRight{}_k)$ is $1 / K - \bar{V}_{1:t, k}$, confirming the intuition that the agent should choose those $\CircArrowRight{}_k$s with $\bar{V}_{1:t, k} \leq 1/K$, but not those $\CircArrowRight{}_k$s with $\bar{V}_{1:t, k} > 1/K$. 
  
\textbf{EVI Oracle.} Despite the uncertainty on $v, p$, we aim for an optimal policy for $\text{MS}(r_m, p)$, the MDPwSR with scalar rewards $r_m := \{(-\theta_{\tau(m)})^\top v(s, a)\}_{s,a}$ and transition kernel $p$. Problem $\text{MS}(r_m, p)$ is easier than the original MDPwGR, since $\text{MS}(r_m, p)$ is optimized by a stationary policy. 
Denote the optimal average reward as $\text{ave-opt}(\text{MS}(r_m, p)) = \max_{x\in R(\mathsf{P}_{\cal M})} \sum_{s, a} r_m(s, a)x(s, a)$, where $R(\mathsf{P}_{\cal M})$ is the feasible region of $(\mathsf{P}_{\cal M})$ that is defined by $p$. 

To learn $v, p$ and while optimizing $\text{MS}(r_m, p)$, we follow the optimistic approach in UCRL2 \citep{JakschOA10}, and employ an Extended Value Iteration (EVI) oracle $\textsf{EVI}$ in (\ref{eq:evi_m}). An EVI oracle computes a near-optimal and stationary policy $\tilde{\pi}_m$ for $\text{MS}(\tilde{r}_m, \tilde{p}_m)$, where $\tilde{r}_m$, $\tilde{p}_m$ are optimistic estimates of $r_m$, $p$, i.e. $\text{ave-opt}(\text{MS}(\tilde{r}_m, \tilde{p}_m)) = \max_{\bar{v} \in H^v_m, \bar{p}\in H^p_m} \text{ave-opt}(\text{MS}((-\theta_{\tau(m)})^\top\bar{v}, \bar{p}))$. The oracle also outputs $\tilde{\phi}_m\in \mathbb{R}$, an optimistic estimate of $\text{ave-opt}(\text{MS}(\tilde{r}_m, \tilde{p}_m))$, as well as $\tilde{\gamma}_m\in \mathbb{R}^\SSS$, a certain bias associated with each state. These outputs are useful for the analysis. Finally, $(1/\sqrt{\tau(m)})$ is a certain  prescribed error parameter for $\tilde{\pi}_m$. We extract an EVI oracle from \citep{JakschOA10}, displayed in Appendix \ref{app:evi}.

\textbf{Gradient Threshold.} While the OCO and EVI oracles are vital for solving MDPwGR, they are yet to be sufficient for solving MDPwGR. Let's revisit instance ${\cal M}_\text{MG}$ and Claim \ref{claim:small_eg}. An OCO oracle could potentially recommend alternating among $ \{ \CircArrowRight{}_k\}_{k=1}^K$ for $\Omega(T)$ times in $T$ time steps, leading to the first ``if'' case of a large $N_\text{alt}$. UCRL2 recommends alternating among $ \{ \CircArrowRight{}_k\}_{k=1}^K$ for only $O(SA\log T)$ times in $T$ time steps, leading to the second ``if'' case of a small $N_\text{alt}$. 

We introduce a novel gradient threshold procedure (starting from Line \ref{alg:oco-ucrl2-while}) to overcome the discussed challenges. 
The procedure maintains a distance measure $\Psi$ on the sub-gradients generated during each episode, and starts the next episode if the measure $\Psi$ exceeds a threshold $Q$. A small $Q$ makes the agent alternate among different stationary policies frequently and balance the outcomes, while a large $Q$ facilitates learning and avoid visiting sub-optimal states. It is interesting to note that {\sc Toc-UCRL2} does not converge to a stationary policy, except when we force $Q = \infty$. 
A properly tuned $Q$ paths the way to obtain near optimality for the MDPwGR problem. In the context of ${\cal M}_{\text{MG}}$, the threshold $Q$ can be tuned to optimize $N_\text{alt}$ for the regret bound, and to ensure that the agent alternates among $\{\CircArrowRight{}^k\}_{k=1}^K$ sufficiently often.   
   
While the procedure overcomes the challenges, it dilutes the balancing effect of the underlying OCO oracle by delaying gradient updates, and interferes with the learning of $v, p$. This makes the analysis of {\sc Toc-UCRL2} challenging. Despite these apparent obstacles, we still show that {\sc Toc-UCRL2} achieves an anytime regret that diminishes with $T$.

\textbf{Main Results.} We first focus on $\beta$-smooth $g$, then consider general $g$ in Section \ref{sec:general_cr}. 
\begin{definition}[$\beta$-smooth]\label{def:smooth}
For $\beta > 0$, a concave function $f : [0, 1]^K\rightarrow [0, 1]$ is $\beta$-smooth w.r.t. norm $\|\cdot\|$, if $f$ is differentiable on $[0, 1]^K$, and it holds for all $u, w\in [0, 1]^K$ that
\begin{equation}\label{eq:def_smooth}
\left\|\nabla f(u) - \nabla f(w ) \right\|_* \leq \beta \left\| u - w \right\|.
\end{equation}
\end{definition}
We provide regret bounds for {\sc Toc-UCRL2} under $\FW$. Denote $S := |\SSS |$, $A := \frac{1}{S}\sum_{s\in \SSS} |\AAA_s| $, so   $SA$ is the number of state-action pairs. Denote $\Gamma := \max_{s\in \SSS ,a\in \AAA_s} \| p(\cdot | s, a) \|_0$, which is the maximum number of states from which a state-action pair can transit to. We employ the $\tilde{O}(\cdot )$ notation, which hides additive terms which scales with $\log (T/\delta) / T$ as well as multiplicative $\log (T/\delta)$ factors. The full $O(\cdot)$ bounds for the Theorems and the analyses are provided in the Appendix.
\begin{restatable}{theorem}{thmfw}\label{thm:fw_brief}
Consider {\sc Toc-UCRL2} with OCO oracle $\FW$ and gradient threshold $Q > 0$, applied on a communicating MDPwGR instance ${\cal M}$ with diameter $D$. Suppose $g$ is $L$-Lipschitz continuous and $\beta$-smooth w.r.t the norm $\|\cdot \|$. With probability $1 - O(\delta)$, we have anytime regret bound
\begin{equation}\label{eq:fw_bd_brief}
\text{Reg}(T) = \tilde{O}\left(   \sqrt{\beta}  \left[ \sqrt{Q} + LD / \sqrt{Q} \right] \|\mathbf{1}_K\|^{3/2} \middle/\sqrt{T}\right)  + \tilde{O}\left( L  \| \mathbf{1}_K\| D\sqrt{\Gamma S A}  \middle/ \sqrt{T}\right).
\end{equation}
In particular, setting $Q = L$ gives $\tilde{O} ( \sqrt{\beta L } \|\mathbf{1}_K\|^{3/2} D]/\sqrt{T})  +  \tilde{O}( L \| \mathbf{1}_K\| D\sqrt{\Gamma S A} ]/\sqrt{T}) $.
\end{restatable}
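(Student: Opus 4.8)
The plan is to combine a Frank--Wolfe convergence argument for the smooth concave objective with the UCRL2 regret analysis for the scalar-reward MDPs solved inside each episode, while separately accounting for the error introduced by the gradient-threshold delay. Write $z^\star := \sum_{s,a} v(s,a)x^\star(s,a)$ for an optimal occupancy measure $x^\star$ of $(\primal_{\cal M})$, so that $\text{opt}(\primal_{\cal M}) = g(z^\star)$, and set $h_t := g(z^\star) - g(\bar V_{1:t})$. Using $\beta$-smoothness in its ascent form together with concavity at $\bar V_{1:t-1}$, the fact that the $\FW$ oracle sets $-\theta_t = \nabla g(\bar V_{1:t-1})$, and the averaging identity $\bar V_{1:t} = (1-\tfrac1t)\bar V_{1:t-1} + \tfrac1t V_t$ with $\|\bar V_{1:t}-\bar V_{1:t-1}\|\le \|\mathbf 1_K\|/t$, I would derive the one-step inequality $t\,h_t \le (t-1)h_{t-1} + \epsilon_t + \tfrac{\beta\|\mathbf 1_K\|^2}{2t}$, where $\epsilon_t := (-\theta_t)^\top z^\star - (-\theta_t)^\top V_t$. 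Telescoping then gives
\[
\text{Reg}(T) = h_T \le \frac1T\sum_{t=1}^T \epsilon_t + \frac{\beta\|\mathbf 1_K\|^2(1+\log T)}{2T},
\]
so the whole task reduces to controlling the cumulative linearized regret $\sum_t \epsilon_t$; the second term is already of the claimed order.

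Next I would split $\epsilon_t$ along the episode structure. Writing $m=m(t)$ for the episode containing $t$ and $\theta^{\refe}=\theta_{\tau(m)}$ for its reference gradient, I decompose
\[
\epsilon_t = \big[(-\theta_t)^\top z^\star - (-\theta^{\refe})^\top z^\star\big] + \big[(-\theta^{\refe})^\top z^\star - (-\theta^{\refe})^\top V_t\big] + \big[(-\theta^{\refe})^\top V_t - (-\theta_t)^\top V_t\big].
\]
The first and third brackets are gradient-delay terms, each bounded by $\|\theta_t-\theta^{\refe}\|_*\,\|\mathbf 1_K\|$ since $z^\star, V_t\in[0,1]^K$. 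For the middle bracket I use the identity $\text{ave-opt}(\text{MS}(r_m,p)) = \max_{z}(-\theta^{\refe})^\top z \ge (-\theta^{\refe})^\top z^\star$, where the maximum is over achievable mean outcomes; thus the middle term is at most the per-episode scalar UCRL2 regret of running the EVI policy $\tilde\pi_m$ for reward direction $-\theta^{\refe}$, which is exactly what the optimistic estimates $H^v_m, H^p_m$ and the error parameter $1/\sqrt{\tau(m)}$ are designed to control.

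The two structural quantities I then need are the total delay budget and the number of episodes $M(T)$. For the delay, because each episode terminates as soon as $\Psi$ crosses $Q$ and each gradient lies in $B(L,\|\cdot\|_*)$, the accumulated drift per episode satisfies $\sum_{t\in \text{ep }m}\|\theta_{t+1}-\theta^{\refe}\|_* = \Psi_m \le Q + 2L$, whence $\sum_t\|\theta_t-\theta^{\refe}\|_* \le M(T)(Q+2L)$. To bound $M(T)$ I separate the threshold-triggered episodes from the UCRL2 doubling-triggered ones; the latter number $O(SA\log T)$ by the standard visit-doubling argument. For the former, the key observation is that $\beta$-smoothness forces the per-step gradient movement to be small, $\|\theta_{t+1}-\theta_t\|_* \le \beta\|\mathbf 1_K\|/(t+1)$, so a threshold episode beginning at time $\tau$ cannot end until its length is $\Omega\big(\sqrt{Q\tau/(\beta\|\mathbf 1_K\|)}\big)$; summing these growing lengths up to $T$ yields $M_{\mathrm{thr}}(T) = \tilde O\big(\sqrt{\beta\|\mathbf 1_K\| T/Q}\big)$. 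Feeding this into the delay bound produces cumulative delay regret $\tilde O(\|\mathbf 1_K\|^{3/2}\sqrt{\beta Q T})$, and feeding it into the per-episode travel cost $O(L\|\mathbf 1_K\| D)$ of the scalar analysis gives $\tilde O(\sqrt\beta\, L D\,\|\mathbf 1_K\|^{3/2}\sqrt{T/Q})$. Finally, the accumulated scalar UCRL2 regret across episodes, with rewards rescaled to magnitude $O(L\|\mathbf 1_K\|)$ and the sharper transition confidence giving the $\sqrt\Gamma$ factor, contributes the core term $\tilde O(L\|\mathbf 1_K\| D\sqrt{\Gamma SA\,T})$.

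Assembling these cumulative bounds, dividing by $T$, and intersecting the good events on which $H^v_m, H^p_m$ contain $v,p$ (which hold with probability $1-O(\delta)$ by the Bernstein-type radii in (\ref{eq:Hvm}, \ref{eq:Hpm})) yields (\ref{eq:fw_bd_brief}); the substitution $Q=L$ gives the stated corollary. I expect the main obstacle to be bounding the number of threshold episodes $M_{\mathrm{thr}}(T)$: this requires quantifying precisely how the delayed $\FW$ gradients accumulate within an episode and converting the resulting $\Omega(\sqrt{\tau})$ lower bound on episode length into the $\sqrt{T/Q}$ episode count, all while the reference reward $r_m$ changes from episode to episode. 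A secondary difficulty is aligning the comparator $\text{ave-opt}(\text{MS}(r_m,p))$ with the offline optimum $g(z^\star)$; as in Theorem~\ref{thm:benchmark}, this alignment is cleanest through the dual of $(\primal_{\cal M})$ rather than working with $(\primal_{\cal M})$ directly.
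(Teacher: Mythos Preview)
Your proposal is correct and follows essentially the same approach as the paper: the Frank--Wolfe telescoping to reduce to $\sum_t(-\theta_t)^\top[v^*-V_t]$, the three-way split into two delay terms (the paper's $(\clubsuit_t),(\heartsuit_t)$) plus a scalar-reward episode term (which the paper further decomposes as $(\diamondsuit_t)+(\spadesuit_t)+(\P_t)$ via the EVI dual output), the per-episode delay budget $O(Q)$ from the threshold rule, and the episode-count bound $M^{\FW}_\Psi(T)=\tilde O(\sqrt{\beta\|\mathbf 1_K\|T/Q})$ via the $\beta$-smoothness-driven lower bound on threshold-episode length. The only cosmetic differences are that the paper bounds $\|\theta_t-\theta_{\tau(m)}\|_*$ directly from the averaging identity (yielding $2\beta\|\mathbf 1_K\|(t-\tau(m))/\tau(m)$) rather than summing your per-step increments, and that the paper's indexing gives the slightly sharper per-episode delay bound $\le Q$ rather than your $Q+2L$; neither affects the final $\tilde O$ rates.
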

In the first regret term, the summand with $\sqrt{Q}$ represents the regret due to the delay in gradient updates, and the summand with $1 / \sqrt{Q}$ represents the regret due to the interference of the gradient threshold procedure with the learning of $v, p$, as well as the regret in switching stationary policies, which could require visiting sub-optimal states. The second regret term is the regret due to the simultaneous exploration-exploitation using an EVI oracle. The $L\|\mathbf{1}_K\|$ factor scales with the magnitude of contribution from the outcomes at each time to the global reward. The same factor appears in related bandit settings \citep{AgrawalD14,AgrawalDL16}. 


Applying Theorem \ref{thm:fw_brief} on an MDPwSR instance, we recover the regret bound by \citep{JakschOA10}. Indeed, we recover UCRL2 (up to the difference in $H^v_m, H^p_m$) when we specialize {\sc Toc-UCRL2} with OCO oracle $\FW$ to linear $g$. Nevertheless, when we specialize 
{\sc Toc-UCRL2} with $\FW$ to BwGR problems with smooth $g$, we do not recover the Frank-Wolfe based algorithm (Algorithm 4 in \citep{AgrawalD14}) for BwGR, due to the gradient threshold procedure. The resulting regret bound is also different from \citep{AgrawalD14}, see their Theorem 5.2. Nevertheless, the procedure is crucial for MDPwGR. 
A direct combination of Frank-Wolfe Algorithm and UCRL2, which is equivalent to using OCO oracle $\FW$ and setting $Q =0 $, is insufficient for solving MDPwGR, see Appendix \ref{app:pfclaimegQ0}. 


\section{Analysis of {\sc Toc-UCRL2}, with Focus on Oracle $\FW$}\label{sec:ana}
In this Section, we provide an analytic framework for analyzing {\sc Toc-UCRL2} under general OCO oracles. In particular, we prove Theorem \ref{thm:fw_brief} to demonstrate our framework. To start, we consider events ${\cal E}^v , {\cal E}^p$, which quantify the accuracy in estimating $v, p$:
\begin{equation}\label{eq:event_v}
{\cal E}^v  := \left\{ v(s, a)\in H^v_m( s, a) \text{ for all $m\in \mathbb{N}$, $ s\in \SSS$, $a\in \AAA_s$}\right\},
\end{equation}
\begin{equation}\label{eq:event_p}
{\cal E}^p  := \left\{ p(\cdot | s, a) \in H^p_m(s, a) \text{ for all $m\in \mathbb{N}$, $s\in \SSS$, $a\in \AAA_s$}\right\}.
\end{equation}
\begin{restatable}{lemma}{lemmaconc}\label{lemma:conc}
Consider an execution of {\sc Toc-UCRL2} with a general OCO oracle. It holds that
$
\mathbb{P}[{\cal E}^v] \geq 1 - \delta/2, \mathbb{P}[{\cal E}^p] \geq 1 - \delta/2.
$
\end{restatable}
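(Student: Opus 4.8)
The plan is to establish $\mathbb{P}[\mathcal{E}^v]\ge 1-\delta/2$; the bound on $\mathcal{E}^p$ is entirely analogous. First I would reduce to a single state--action--coordinate triple by a union bound: the complement of $\mathcal{E}^v$ is the union, over all $(s,a)$, all $k\in[K]$, and all episodes $m$, of the deviation events $\{\,|\hat v_{m,k}(s,a)-v_k(s,a)|>\rad^v_{m,k}(s,a)\,\}$. Since there are $KSA$ triples $(s,a,k)$, it suffices to bound, for each fixed triple, the probability that \emph{some} episode produces a deviation, by $\delta/(2KSA)$. The degenerate case $N_m(s,a)=0$ is immediate: then $\hat v_{m,k}(s,a)=0$ and $\rad^v_{m,k}(s,a)\ge 3\,(\text{log-$v$})_m\ge 3\log 12>1\ge v_k(s,a)$, so $v(s,a)\in H^v_m(s,a)$ automatically; thus only $N_m(s,a)\ge1$ matters.

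Next I would expose the i.i.d.\ structure. Fix $(s,a)$ and $k$, and let $X_1,X_2,\dots$ be the values of the $k$-th outcome observed on the successive visits to $(s,a)$. By the Markovian assumptions in Section \ref{sec:def} --- conditional on $(s_t,a_t)=(s,a)$ the outcome $V_t(s_t,a_t)$ is distributed as $V(s,a)$ and is independent of $H_{t-1}$ --- the sequence $(X_i)_{i\ge1}$ is i.i.d., each supported on $[0,1]$ with mean $v_k(s,a)$, \emph{regardless} of the data-dependent policy that governs when $(s,a)$ is visited. Writing $\hat v^{(n)}$ for the average of $X_1,\dots,X_n$, we have $\hat v_{m,k}(s,a)=\hat v^{(N_m(s,a))}$.

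The analytic core is a Bernstein-type (empirical Bernstein) concentration inequality. Since $X_i\in[0,1]$, its variance obeys $\mathrm{Var}(X_i)\le\mathbb{E}[X_i^2]\le\mathbb{E}[X_i]=v_k(s,a)$, so Bernstein controls $|\hat v^{(n)}-v_k(s,a)|$ by $\sqrt{2v_k(s,a)\,\ell/n}$ plus a lower-order $\ell/n$ term, with $\ell$ the log-confidence budget; replacing the true mean inside the root by $\hat v^{(n)}$ (solving the induced quadratic) yields exactly the empirical width $\sqrt{2\hat v^{(n)}\ell/n}+3\ell/n$ of $\rad^v_{m,k}(s,a)$ with $\ell=(\text{log-$v$})_m$. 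The difficulty is that $N_m(s,a)$ and the episode start $\tau(m)$ are random and data-dependent, so no fixed $n$ can be plugged in. Crucially, a flat union over all pairs $(t,n)$ with $n\le t$ would fail: a per-pair failure of order $\delta/(KSA\,t^2)$ summed over $n\le t$ gives $\delta/(KSA\,t)$, and $\sum_t t^{-1}$ diverges. I would therefore make the bound uniform over the sample count $n$ via a peeling argument over dyadic blocks of $n$ (equivalently, a time-uniform concentration inequality). Using $N_m(s,a)\le\tau(m)$, the peeling overhead is only $O(\log\tau(m))$, which the factor $\tau^2(m)$ inside $(\text{log-$v$})_m=\log(12KSA\tau^2(m)/\delta)$ comfortably absorbs, since $\sum_t (\log t)/t^2<\infty$. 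A single such uniform bound at level $\delta/(2KSA)$ then holds simultaneously for all $n$, hence for every episode $m$.

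Summing the per-triple failures over the $KSA$ choices of $(s,a,k)$ gives $\mathbb{P}[\overline{\mathcal{E}^v}]\le\delta/2$; the constant $12$ (beyond the bare $2$ for the $KSA$ union and the two-sided deviation) supplies the slack for the numerical overhead of the peeling/time-uniform step. The argument for $\mathcal{E}^p$ is identical with $X_i$ replaced by the transition indicator $\mathsf{1}(s_{t+1}=s')$ at visits to $(s,a)$, which is Bernoulli with mean $p(s'\mid s,a)$: the union now runs over the $S^2A$ triples $(s,a,s')$, matching the $12S^2A$ inside $(\text{log-$p$})_m$, and $\rad^p_m(s'\mid s,a)$ is again the empirical-Bernstein width. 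The main obstacle I anticipate is precisely this decoupling of concentration from adaptivity --- verifying rigorously that the random stopping induced by the gradient-threshold episode structure does not corrupt the i.i.d./martingale concentration --- which is exactly why the uniform-in-$n$ formulation, rather than a naive double union over time and sample count, is the right tool and why the $\tau^2(m)$ (as opposed to a higher power) suffices.
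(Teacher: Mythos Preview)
Your proposal is correct in spirit and would work, but it takes a more elaborate route than the paper does and misidentifies where the difficulty lies. You worry that a flat union over pairs $(\tau(m),n)$ would blow up and therefore reach for a peeling/time-uniform argument. The paper sidesteps this entirely: it unions only over the sample count $n=N^+_m(s,a)\in\{1,2,\dots\}$, assigning confidence $\delta^v(n)=\delta/(12KSA\,n^2)$ to each $n$ and invoking the empirical-Bernstein inequality of Audibert--Munos--Szepesv\'ari (Theorem~\ref{thm:emp_berstein}) directly, so that the total failure is $\sum_{n\ge 1}3\delta^v(n)\le \delta/(2KSA)$. The random $\tau(m)$ never enters the union bound; it is handled afterward by the monotonicity observation $N^+_m(s,a)\le \tau(m)$, which makes $\log(1/\delta^v(N^+_m(s,a)))\le (\text{log-}v)_m$ and hence the actual radius $\rad^v_{m,k}(s,a)$ at least as large as the one certified by the concentration step. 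This is exactly why the definition uses $\tau^2(m)$ rather than $N^+_m(s,a)$ inside the logarithm.

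A second minor difference: you derive the empirical-Bernstein width by starting from the classical Bernstein bound with the true variance and then solving a quadratic to replace $v_k(s,a)$ by $\hat v^{(n)}$. The paper instead cites Theorem~\ref{thm:emp_berstein}, which already delivers the sample-variance form, and then uses $\hat\sigma^2\le \hat v$ (valid because the samples lie in $[0,1]$) to obtain the stated radius. Both paths are valid; the paper's is shorter. Your treatment of $\mathcal{E}^p$ and of the degenerate $N_m(s,a)=0$ case matches the paper.
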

Lemma \ref{lemma:conc} is proved in Appendix \ref{app:pflemmaconc}. We analyze $\text{Reg}(T)$ by tracing the sequence of stochastic outcomes and quantifying their contributions to the global reward. The tracing bears similarity to the analysis of the Frank-Wolfe algorithm (see \cite{Bubeck15}).  Let's define the shorthand $v^* := \sum_{s,a} v(s, a)x^*(s, a)$, where $x^*$ is an optimal solution of $(\primal_{\cal M})$.
\begin{align}
g (\bar{V}_{1:t} ) &\geq g( \bar{V}_{1:t-1} ) + \nabla g(\bar{V}_{1:t-1} )^\top [ \bar{V}_{1:t} - \bar{V}_{1:t-1}] - \frac{\beta}{2} \|\bar{V}_{1:t} - \bar{V}_{1:t-1}\|^2 \label{eq:fw_intermediate_bound0}\\
& = g(\bar{V}_{1:t-1} ) + \frac{1}{t} \nabla g(\bar{V}_{1:t-1} )^\top [ V_t(s_t, a_t) - \bar{V}_{1:t-1} ] - \frac{\beta}{2 t^2} \|V_t(s_t, a_t) - \bar{V}_{1:t-1}\|^2\nonumber\\
&\geq g(\bar{V}_{1:t-1} ) + \frac{1}{t} \nabla g(\bar{V}_{1:t-1} )^\top [ v^*- \bar{V}_{1:t-1} ] + \frac{1}{t} \nabla g(\bar{V}_{1:t-1} )^\top [V_t(s_t, a_t) -v^* ] - \frac{\beta\|\mathbf{1}_K\|^2 }{2 t^2}  \nonumber\\
&\geq g( \bar{V}_{1:t-1} ) + \frac{1}{t}\left[\text{opt}(\primal_{\cal M}) - g( \bar{V}_{1:t-1})\right]+ \frac{1}{t} (-\theta_t)^\top  [ V_t(s_t, a_t) -v^* ] - \frac{\beta\|\mathbf{1}_K\|^2 }{2 t^2} \label{eq:fw_intermediate_bound2}.
\end{align}
Step (\ref{eq:fw_intermediate_bound0}) uses the $\beta$-smoothness of $g$. 
Rearranging (\ref{eq:fw_intermediate_bound2}) gives 
\begin{align}
t \cdot \text{Reg}(t) & \leq (t-1)\cdot \text{Reg}(t-1) + \frac{\beta \| \mathbf{1}_K \|^2}{2t}  + ( - \theta_t)^\top  [ v^* - V_t(s_t, a_t) ] \label{eq:fw_intermediate_bound3}.
\end{align}
Apply the inequality (\ref{eq:fw_intermediate_bound3}) recursively for $t = T, \ldots, 1$, we obtain the following regret bound :
\begin{align}
\text{Reg}(T) & \leq  \frac{\beta \left\|\mathbf{1}_K\right\|^2 \log T}{T} + \frac{1}{T}\sum^T_{t=1} ( - \theta_t)^\top  [ v^* - V_t(s_t, a_t) ] \label{eq:fw_intermediate_bound3.5} .
\end{align}
To proceed, we provide the following novel analysis that allows us to compare the online output and the offline benchmark, and help us analyze the effect of the gradient threshold procedure. For a time step $t$, we denote random variable $m(t)$ as the index of the episode that contains $t$. When the underlying OCO oracle is specified, we decorate $m(t)$ with the corresponding superscript, for example  $m^\FW(t)$ is the above-mentioned episode index under $\FW$. We provide the following Proposition that helps us analyze the second term in (\ref{eq:fw_intermediate_bound3.5}):
\begin{proposition}\label{prop:Err_bd}
Consider an execution of {\sc Toc-UCRL2} with a general OCO oracle, over a communicating MDPwGR instance {\cal M} with diameter $D$. For each $T \in \mathbb{N}$, suppose that there is a deterministic constant $M(T)$ s.t. $\Pr[m(T)\leq M(T)] = 1$. Conditioned on events ${\cal E}^v, {\cal E}^p$, with probability at least $1-O(\delta)$ we have
\begin{equation*}
\sum^T_{t=1} (-\theta_t)^\top  [ v^* - V_t(s_t, a_t) ] = \tilde{O}\left(  (L  D +  Q) \|\mathbf{1}_K\| M(T)\right) + \tilde{O}\left( L \|\mathbf{1}_K\| D\sqrt{\Gamma SAT } \right).
\end{equation*}
\end{proposition}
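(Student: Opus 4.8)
\emph{The plan} is to split each summand into a gradient-delay part and a UCRL2-optimism part and bound them separately. Writing $\theta^{\refe}_{m(t)} := \theta_{\tau(m(t))}$ for the reference gradient frozen at the start of the episode containing $t$, I decompose
\begin{equation*}
(-\theta_t)^\top[v^* - V_t(s_t,a_t)] = (-\theta^{\refe}_{m(t)})^\top[v^* - V_t(s_t,a_t)] + (\theta^{\refe}_{m(t)} - \theta_t)^\top[v^* - V_t(s_t,a_t)].
\end{equation*}
For the delay part, the dual-norm inequality together with $v^*, V_t(s_t,a_t)\in[0,1]^K$ bounds each term by $O(\|\mathbf{1}_K\|)\cdot\|\theta^{\refe}_{m(t)} - \theta_t\|_*$. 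Grouping by episode, note that the running measure $\Psi$ accumulates exactly $\sum_{t}\|\theta_{t+1} - \theta^{\refe}_m\|_*$ over episode $m$; since the while-loop guard keeps $\Psi\le Q$ before its last increment and each increment is at most $2L$, we get $\sum_{t\in m}\|\theta^{\refe}_m - \theta_t\|_*\le Q+2L$ for every episode. Summing over the at most $M(T)$ episodes bounds the delay part by $\tilde{O}((Q+L)\|\mathbf{1}_K\|M(T))$, which is absorbed into the $(LD+Q)\|\mathbf{1}_K\|M(T)$ term since $D\ge 1$.

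For the optimism part $\sum_t(-\theta^{\refe}_{m(t)})^\top[v^*-V_t(s_t,a_t)]$ I treat $\theta^{\refe}_{m(t)}$ as a per-episode scalar reward weighting and follow UCRL2. First replace $V_t(s_t,a_t)$ by its mean: since $(s_t,a_t)$ and $\theta^{\refe}_{m(t)}$ are measurable before $V_t$ is revealed, $\sum_t(-\theta^{\refe}_{m(t)})^\top[v(s_t,a_t)-V_t(s_t,a_t)]$ is a martingale-difference sum with increments $O(L\|\mathbf{1}_K\|)$, hence $\tilde{O}(L\|\mathbf{1}_K\|\sqrt{T})$ by Azuma--Hoeffding. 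Writing $r_m(s,a):=(-\theta^{\refe}_m)^\top v(s,a)$, it remains to bound $\sum_t[(-\theta^{\refe}_{m(t)})^\top v^* - r_{m(t)}(s_t,a_t)]$. On ${\cal E}^v\cap{\cal E}^p$, feasibility of $x^*$ and optimism give $(-\theta^{\refe}_m)^\top v^* = \sum_{s,a} r_m(s,a)x^*(s,a)\le \text{ave-opt}(\text{MS}(r_m,p))\le \text{ave-opt}(\text{MS}(\tilde r_m,\tilde p_m))\le \tilde\phi_m + 1/\sqrt{\tau(m)}$, because $(v,p)$ is an admissible candidate inside the confidence regions and $\textsf{EVI}$ is run to precision $1/\sqrt{\tau(m)}$; moreover $r_m(s,a)\le \tilde r_m(s,a)$ with slack bounded by $2\|\theta^{\refe}_m\|_*$ times the confidence width $\rad^v_m(s,a)$. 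Hence each summand is at most $[\tilde\phi_{m(t)}-\tilde r_{m(t)}(s_t,a_t)] + [\tilde r_{m(t)}(s_t,a_t)-r_{m(t)}(s_t,a_t)] + 1/\sqrt{\tau(m(t))}$.

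I close the loop with the extended optimality equations produced by $\textsf{EVI}$: $\tilde\phi_m+\tilde\gamma_m(s)=\tilde r_m(s,\tilde\pi_m(s))+\max_{\bar p\in H^p_m(s,\tilde\pi_m(s))}\bar p^\top\tilde\gamma_m+\epsilon_m(s)$ with $|\epsilon_m(s)|\le 1/\sqrt{\tau(m)}$, so that $\tilde\phi_m-\tilde r_m(s_t,a_t)=[\tilde p_m(\cdot|s_t,a_t)-p(\cdot|s_t,a_t)]^\top\tilde\gamma_m + [p(\cdot|s_t,a_t)^\top\tilde\gamma_m-\tilde\gamma_m(s_{t+1})] + [\tilde\gamma_m(s_{t+1})-\tilde\gamma_m(s_t)] + \epsilon_m(s_t)$. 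On ${\cal E}^p$ the optimistic bias obeys the UCRL2 span bound $\mathrm{sp}(\tilde\gamma_m)=O(LD\|\mathbf{1}_K\|)$ (the true kernel is optimistic-admissible and $\tilde r_m$ has range $O(L\|\mathbf{1}_K\|)$). The telescoping term sums per episode to at most $\mathrm{sp}(\tilde\gamma_m)$, giving $\tilde{O}(LD\|\mathbf{1}_K\|M(T))$; the transition-error term plus the $\rad^v_m$ slack, bounded via $\|\tilde p_m-p\|_1=O(\sqrt{\Gamma(\text{log-}p)_m/N^+_m})$ and the counting identity $\sum_t 1/\sqrt{N^+_{m(t)}(s_t,a_t)}=O(\sqrt{SAT})$ (which still holds because the while-loop's doubling guard $\nu_m<N^+_m$ is enforced even when the threshold stops an episode early), gives $\tilde{O}(L\|\mathbf{1}_K\|D\sqrt{\Gamma SAT})$; and the remaining martingale and $1/\sqrt{\tau(m)}$ terms are $\tilde{O}(\sqrt T)$-scale and dominated. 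Collecting all pieces yields the claim.

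\emph{The main obstacle} is coupling the novel delay accounting with the UCRL2 machinery: verifying that the per-episode gradient drift is cleanly capped by $Q$ despite episodes also being cut short by the doubling guard, confirming the optimistic bias span stays $O(LD\|\mathbf{1}_K\|)$ when the reward weighting $\theta^{\refe}_m$ is frozen only at the episode start, and keeping straight which objects are frozen per episode ($\theta^{\refe}_m,\tilde r_m,\tilde p_m,\tilde\gamma_m$) versus updated per step ($\theta_t,V_t,s_t$), where most of the bookkeeping risk lies.
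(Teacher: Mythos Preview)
Your proposal is correct and follows essentially the same approach as the paper. The paper packages the decomposition as a single five-term inequality $(-\theta_t)^\top[v^*-V_t]\le(\clubsuit_t)+(\diamondsuit_t)+(\heartsuit_t)+(\spadesuit_t)+(\P_t)$ (Lemma~\ref{lemma:decompose}), where $(\clubsuit_t)+(\heartsuit_t)$ is exactly your delay part, $(\diamondsuit_t)$ combines your reward-confidence slack and the $V_t\to v(s_t,a_t)$ martingale, and $(\spadesuit_t)+(\P_t)$ is your EVI/Bellman breakdown; the per-term bounds (Lemma~\ref{lemma:club_heart}, Claim~\ref{claim:P}, Lemmas~\ref{lemma:diamond}--\ref{lemma:spade}) match your estimates line by line, with the paper getting the slightly sharper $\sum_{t\in m}\|\theta_t-\theta_{\tau(m)}\|_*\le Q$ per episode (your $Q+2L$ is harmless).
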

The Proposition is proved in Section \ref{sec:pf_Err_bd}, and can be understood by what follows. 
Since the action $a_t$ is chosen based on policy $\tilde{\pi}_m$, it turns out that $( - \theta_{\tau(m(t))})^\top [ v^* - v(s_t, a_t) ]$ can be upper bounded. 
By the threshold procedure, we essentially know that $\|\theta_{\tau(m(t))} - \theta_t\|_*$ cannot be too large. Consequently, we can bound  $(-\theta_t)^\top  [ v^* - V_t(s_t, a_t) ]$ from above.

We next provide a deterministic upper bound $M^\FW(T)$ for  $m^\FW(T)$: 
\begin{restatable}{lemma}{lemmafwbdMbrief}\label{lemma:fw_bd_M_brief}
Consider an execution of {\sc Toc-UCRL2} with OCO oracle $\FW$ and gradient threshold $Q>0$. With certainty, for every $T\in \mathbb{N}$ we have 
$$m^\FW(T) \leq M^\FW(T) = \tilde{O}\left( \sqrt{ \beta \|\mathbf{1}_K\| T / Q} \right). $$
\end{restatable}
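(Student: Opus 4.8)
The plan is to bound $m^\FW(T)$ by splitting episodes according to which clause of the while-loop guard (Line~\ref{alg:oco-ucrl2-while}) ends them: the gradient-threshold clause $\Psi > Q$, or the UCRL2-style doubling clause $\nu_m(s_t,\tilde\pi_m(s_t)) \ge N^+_m(s_t,\tilde\pi_m(s_t))$. The doubling episodes are controlled exactly as in \citep{JakschOA10}: each such episode can be charged to a state-action pair whose within-episode visit count first reaches its pre-episode count, and since that count at least doubles, at most $O(SA\log T)$ doubling episodes occur through time $T$. This is a lower-order (logarithmic) term, so the crux is to upper bound the number of \emph{threshold} episodes.

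For a threshold episode $m$ of length $\ell_m := \tau(m+1)-\tau(m)$, I would first establish that averaging damps the movement of the gradient. Under $\FW$ we have $\theta_{t+1} = -\nabla g(\bar{V}_{1:t})$ and $\theta^\refe = \theta_{\tau(m)} = -\nabla g(\bar{V}_{1:\tau(m)-1})$, so $\beta$-smoothness gives $\|\theta_{t+1}-\theta^\refe\|_* \le \beta\,\|\bar{V}_{1:t}-\bar{V}_{1:\tau(m)-1}\|$. Writing $\bar{V}_{1:t}-\bar{V}_{1:\tau(m)-1} = \tfrac{t-\tau(m)+1}{t}\big(\tfrac{1}{t-\tau(m)+1}\sum_{q=\tau(m)}^{t}V_q(s_q,a_q) - \bar{V}_{1:\tau(m)-1}\big)$ and bounding the bracketed difference of two points of $[0,1]^K$ by $\|\mathbf{1}_K\|$, each summand of $\Psi$ is at most $\beta\|\mathbf{1}_K\|\,\ell_m/\tau(m)$. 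Summing over the $\ell_m$ steps of the episode yields $\Psi \le \beta\|\mathbf{1}_K\|\,\ell_m^2/\tau(m)$. Since the episode ended by the threshold clause, $\Psi > Q$, whence
\[
\ell_m \;>\; \sqrt{Q\,\tau(m)/(\beta\|\mathbf{1}_K\|)}.
\]
This is the heart of the argument: once $\tau(m)$ is large the average $\bar{V}_{1:\cdot}$ is so inert that accumulating gradient displacement $Q$ forces an episode of length $\Omega(\sqrt{\tau(m)})$.

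Finally I would turn this per-episode length bound into an episode count. Listing the threshold episodes by start time $a_1 < a_2 < \cdots \le T$ (with $a_i := \tau(m_i)$), the bound gives the recursion $a_{i+1} \ge a_i + \ell_{m_i} > a_i + c\sqrt{a_i}$, where $c := \sqrt{Q/(\beta\|\mathbf{1}_K\|)}$. The main obstacle is that telescoping this recursion naively is lossy and only gives $O(T/c)$; to recover the correct $\sqrt{T}$ rate I would pass to the potential $\phi(a)=\sqrt{a}$ and show $\phi(a_{i+1})-\phi(a_i) \ge c/3$ whenever $a_i \gtrsim c^2$ (an elementary estimate of $\sqrt{s^2+cs}-s$, mirroring the integral comparison with $\dot a = c\sqrt a$, whose solution grows quadratically in $i$). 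As $\phi(a_i)\le\sqrt T$, this caps the number of threshold episodes with $a_i \gtrsim c^2$ at $O(\sqrt T/c)=O(\sqrt{\beta\|\mathbf{1}_K\| T/Q})$, while the $O(c^2)$ early episodes with $a_i \lesssim c^2$ and the $O(SA\log T)$ doubling episodes are lower-order terms absorbed by $\tilde{O}(\cdot)$. Adding the two counts gives $m^\FW(T)\le M^\FW(T)=\tilde{O}(\sqrt{\beta\|\mathbf{1}_K\| T/Q})$. The entire argument is deterministic in the realized outcomes, delivering the claimed ``with certainty'' bound.
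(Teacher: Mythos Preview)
Your proposal is correct and follows essentially the same route as the paper: split episodes by which while-loop clause terminated them, bound the doubling episodes by $O(SA\log T)$ as in \citep{JakschOA10}, and for threshold episodes use $\beta$-smoothness to show $\|\theta_t-\theta_{\tau(m)}\|_* = O(\beta\|\mathbf{1}_K\|(t-\tau(m))/\tau(m))$, whence $\ell_m \gtrsim \sqrt{Q\tau(m)/(\beta\|\mathbf{1}_K\|)}$ and the start times satisfy $a_{i+1}\ge a_i + c\sqrt{a_i}$. The only cosmetic difference is that the paper packages the inversion of this recursion into a general auxiliary claim (their Claim~\ref{claim:aux_oco} with $\alpha=1/2$), whereas you do the equivalent $\sqrt{\cdot}$-potential estimate by hand; both yield the same additive $O(Q/(\beta\|\mathbf{1}_K\|)) + O(\sqrt{\beta\|\mathbf{1}_K\|T/Q})$ count.
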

Altoghether, Theorem \ref{thm:fw_brief} is proved by applying Proposition \ref{prop:Err_bd} and Lemma \ref{lemma:fw_bd_M_brief} to   (\ref{eq:fw_intermediate_bound3.5}). To complete the argument, we provide the proof of Lemma \ref{lemma:fw_bd_M_brief} in Section \ref{sec:pf_lemma_fw_bd_M_brief} and a high level view on the proof of Proposition \ref{prop:Err_bd} in Section \ref{sec:pf_Err_bd}.


\subsection{Proof of Lemma \ref{lemma:fw_bd_M_brief} } \label{sec:pf_lemma_fw_bd_M_brief}
Consider the following two sets of episode indexes:
\begin{align}
{\cal M}^\FW_\Psi(T) &:= \left\{m\in \mathbb{N}: \tau(m) \leq T\text{, episode $m+1$ is started due to } \Psi \geq Q\right\}, \nonumber\\
{\cal M}^\FW_\nu(T) & := \left\{m\in \mathbb{N}: \tau(m) \leq T\text{, episode $m+1$ is started due to } \right. \nonumber\\
&\qquad\qquad\quad\;\, \left. \nu_m(s_t, \tilde{\pi}_m(s_t)) \geq N^+_m(s_t, \tilde{\pi}_m(s_t))\text{ for some $t\geq \tau(m)$}\right\} \nonumber. 
\end{align}
The set ${\cal M}^\FW_\Psi(T)$ consists of indexes $m$'s for which the next episode is started 
because of $\Psi\geq Q$, indicating the overflow in the cumulative distance to the reference gradient. The set ${\cal M}^\FW_\nu(T)$ consists of indexes $m$'s for which the next episode is triggered because some state-action pair is observed for sufficiently many times. 

We have ${\cal M}^\FW_\Psi(T)\cup {\cal M}^\FW_\nu(T) = \{1, 2, \ldots, m^\FW(T)\}$, and the set $\{\tau(m): m \in {\cal M}^\FW_\Psi(T)\cup {\cal M}^\FW_\nu(T)\}$ consists of the starting time of each episode. The sets ${\cal M}^\FW_\Psi(T)$, ${\cal M}^\FW_\nu(T)$ need not be disjoint. 
To prove the Lemma, it suffices to show that
\begin{align}
n_\Psi := | {\cal M}^\FW_\Psi(T) | & \leq  M^\FW_\Psi(T) := 1 + \frac{Q}{2\beta \|\mathbf{1}_K\|} + \sqrt{\frac{  32\beta \|\mathbf{1}_K\|}{Q}\cdot T} , \label{eq:fw_bd_M_Psi}\\
n_\nu := | {\cal M}^\FW_\nu(T) | & \leq M^\FW_\nu(T) := SA (1 + \log_2 T) , \label{eq:fw_bd_M_nu}
\end{align} 
hold with certainty. Proving inequality (\ref{eq:fw_bd_M_Psi}) is the main part of the proof of the Lemma. While the proof of inequality (\ref{eq:fw_bd_M_nu}) follows \citep{JakschOA10}, we prove (\ref{eq:fw_bd_M_nu}) for completeness sake. 

\textbf{Demonstrating inequality (\ref{eq:fw_bd_M_Psi}). }Let's express ${\cal M}^\FW_\Psi(T) = \{m_1, m_2, \ldots, m_{n_\Psi}\}$, where $m_1 < m_2 < \ldots < m_{n_\Psi}$. We also define $m_0 = 0$. We focus on an arbitrary but fixed episode index $m_j$ with $j\geq 1$, and consider for each time step $t\in \{\tau(m_j)+1, \ldots, \tau(m_j+1)\}$ the difference:
\begin{align}
& \left\| \theta_t - \theta_{\tau(m_j)}\right\|_* \nonumber\\ 
= & \left\| \nabla g\left(\frac{1}{t-1}\sum^{t-1}_{q = 1}V_q (s_q , a_q )\right)- \nabla g\left(\frac{1}{\tau(m_j)-1}\sum^{\tau(m_j)-1}_{q = 1}V_q (s_q , a_q )\right)\right\|_* \nonumber\\
\leq & \beta \left\|\frac{1}{t-1}\sum^{t-1}_{q = 1}V_q(s_q, a_q )- \frac{1}{\tau(m_j)-1}\sum^{\tau(m_j)-1}_{q = 1}V_q(s_q, a_q )\right\| \nonumber\\
= & \beta \left\| \frac{1}{t-1} \sum^{t-1}_{q =\tau(m_j) }V_q (s_q , a_q) - \left[\frac{1}{\tau(m_j) - 1} - \frac{1}{t-1}\right]\sum^{\tau(m_j) - 1}_{q = 1}V_q (s_q, a_q) \right\|\nonumber\\
= & \beta \cdot \frac{t - \tau(m_j)}{t - 1} \cdot \left\|\frac{1}{t - \tau(m_j) } \sum^{t}_{q =\tau(m_j) }V_q (s_q, a_q) - \frac{1}{\tau(m_j) - 1}\sum^{\tau(m_j) - 1}_{q =1 }V_q (s_q, a_q) \right\| \nonumber\\
\leq &2\beta \left\| \mathbf{1}_K \right\|  \cdot \frac{t - \tau(m_j)}{t-1} \leq 2\beta \left\| \mathbf{1}_K \right\| \cdot \frac{t - \tau(m_j )}{\tau(m_j)} . \label{eq:fw_M_step_1}
\end{align}
By the fact that $m_j\in {\cal M}^\FW_\Psi(T)$, we know that $\sum^{\tau(m_j +1) }_{t = \tau(m_j )} \| \theta_t - \theta_{\tau(m_j )}  \|_*  > Q$. By applying the upper bound (\ref{eq:fw_M_step_1})\footnote{While the upper bound looks loose at the first sight, a simple inspection shows that $\tau(m_j + 1) - \tau(m_j)= O(\tau(m_j))$. Therefore, we can use the seemingly coarse upper bound (\ref{eq:fw_M_step_1}) without deteriorating the dependence on $T$ for our final bound in (\ref{eq:fw_bd_M_Psi}).}, we have
\begin{equation*}
2 \beta \left\| \mathbf{1}_K \right\|  \frac{(\tau(m_j +1) - \tau(m_j ))^2}{\tau(m_j )}\geq  2\beta \left\| \mathbf{1}_K \right\|  \sum^{\tau(m_j +1)}_{t = \tau(m_j )} \frac{t - \tau(m_j )}{\tau(m_j )} > Q,
\end{equation*}
where the first inequality is by the fact that $n^2 \geq n(n+1)/2$ for $n\geq 1$. Thus we arrive at
\begin{align}
\tau(m_j + 1) & \geq \tau(m_j) + \sqrt{\frac{Q}{2\beta \left\| \mathbf{1}_K \right\| } \cdot \tau(m_j)} \geq \tau(m_{j-1} + 1) + \sqrt{\frac{Q}{2 \beta \left\| \mathbf{1}_K \right\| } \cdot \tau(m_{j-1} + 1)} ,\label{eq:fw_M_step_2}
\end{align}
since $\tau(m) \geq \tau(m')$ for $m \geq m'$, and clearly $m_j \geq m_{j - 1} + 1$. 

Now, we apply Claim \ref{claim:aux_oco} with $C = \sqrt{Q / (2\beta\|\mathbf{1}_K\|)}$, $\alpha  = 1/2$, and $\rho_j =  \tau(m_{\lceil C^2\rceil +j} + 1)$ for $j = 1, 2, \ldots$. The application is valid, since $\rho_1 = \tau(m_{\lceil C^2\rceil +1} + 1) \geq C^2 = C^{1 / (1-\alpha) }$, and we are equipped with the recursive inequality (\ref{eq:fw_M_step_2}). Consequently, we arrive at
\begin{equation}\label{eq:fw_M_step_3}
\tau\left( m_{\lceil C^2 \rceil +j} + 1 \right)\geq \frac{Q}{32\beta \|\mathbf{1}_K\|}(j - 1)^2.
\end{equation}
Finally, if $n_\Psi \leq Q / (2\beta\|\mathbf{1}_K\|)$, then clearly (\ref{eq:fw_bd_M_Psi}) is established. Otherwise, we put $j = n_\Psi - \lceil Q / (2\beta\|\mathbf{1}_K\|) \rceil - 1$ in inequality (\ref{eq:fw_M_step_3}), which gives
\begin{equation}\label{eq:fw_M_step_4}
T \geq \tau(m_{n_\Psi })\geq\tau\left(m_{\lceil  C^2 \rceil + [n_\Psi -\lceil  C^2 \rceil - 1] } + 1\right) \geq \frac{Q}{32 \beta \|\mathbf{1}_K\|}\cdot \left(n_\Psi -   \frac{Q}{2\beta\|\mathbf{1}_K \| } -  1\right)^2.
\end{equation}
Finally, unraveling the bound (\ref{eq:fw_M_step_4}) gives the required upper bound (\ref{eq:fw_bd_M_Psi}).

\textbf{Demonstrating inequality (\ref{eq:fw_bd_M_nu}). }The demonstration follows the proof of Proposition 18 of (\citep{JakschOA10}). For each state-action pair, consider the set 
\begin{align}
{\cal M}^\FW_\nu(s, a ; T) & := \left\{m\in \mathbb{N}: \tau(m) \leq T\text{, episode $m+1$ is started due to } \right. \nonumber\\
&\qquad\qquad\quad\;\, \left. \nu_m(s, a) \geq N^+_m(s, a)\text{ for some $t\geq \tau(m)$}\right\} \nonumber. 
\end{align}
Clearly, we know that ${\cal M}^\FW_\nu(T) = \bigcup_{s\in \SSS}\bigcup_{a\in \AAA_s}{\cal M}^\FW_\nu(s, a ; T)$. To ease the notations, define shorthand $n_{\nu}(s, a) := |{\cal M}^\FW_\nu(s, a ; T)|$. To prove (\ref{eq:fw_bd_M_nu}), it suffices to show that $n_{\nu}(s, a) \leq 1+\log_2 T$.  Now, let's express $ {\cal M}^\FW_\nu(s, a ; T) = \{m_1, \ldots, m_{n_{\nu}(s, a)}\}$, where $m_1 < m_2 < \ldots < m_{n_{\nu}(s, a)}$. Observe that, by the way we define the set ${\cal M}^\FW_\nu(s, a ; T)$, the count sequence $N_{m_1 + 1}(s, a),  \ldots,$ $ N_{m_{n_{\nu}(s, a)} + 1}(s, a)$ is strictly increasing. In addition, by the exiting criteria of the \textbf{while} loop in Line \ref{alg:oco-ucrl2-while} in Algorithm {\sc Toc-UCRL2}, we see that $\{N_{m_1 + 1}(s, a), N_{m_2 + 1}(s, a), \ldots, N_{m_{n_{\nu}(s, a)} + 1}(s, a)\}$ is a subset of $\{1, 2, 2^2, \ldots, 2^{\lfloor \log_2 T\rfloor} \}$. Thus, we must have  $n_{\nu}(s, a)\leq 1+\log_2 T$, and the required inequality (\ref{eq:fw_bd_M_nu}) is shown. This concludes the proof of the Lemma. \hfill $\blacksquare$

\subsection{Proof of Proposition \ref{prop:Err_bd}}\label{sec:pf_Err_bd}
To prove the Proposition, we first propose a lemma that helps decompose each $(-\theta_t)^\top [v^* - V_t(s_t, a_t)]$:
\begin{restatable}{lemma}{lemmadecompose}\label{lemma:decompose}
Consider an execution of {\sc Toc-UCRL2} with a general OCO oracle over a communicating MDPwGR instance.  
Let $t$ be a time index, and let $m$ be the episode index such that $\tau(m) \leq t < \tau(m + 1)$. 
Conditional on events ${\cal E}^v, {\cal E}^p$, the following inequality holds:
\begin{equation*}
(-\theta_t)^\top [v^* - V_t(s_t, a_t)] \leq   (\clubsuit_t) + (\diamondsuit_t) + (\heartsuit_t) + (\spadesuit_t) + (\P_t),
\end{equation*}
where $v^* = \sum_{s\in \SSS, a\in \AAA_s} v(s, a)x^*(s, a)$ with $x^*$ optimal for $(\mathsf{P}_{\cal M})$, and 
\begin{align}
(\clubsuit_t) & := \left[\theta_t - \theta_{\tau(m)}\right]^\top V_t(s_t, a_t) ,   & (\diamondsuit_t) := \tilde{r}_m(s_t, a_t) -  [-\theta_{\tau(m)} ]^\top V_t(s_t, a_t), \label{eq:club}\\ 
(\heartsuit_t) & := \left[\theta_{\tau (m)} - \theta_{t} \right]^\top v^*, &(\spadesuit_t) := \max_{\bar{p}\in H^p_m(s_t, a_t)}\left\{\sum_{s'\in \SSS}\tilde{\gamma}_m(s') \bar{p}(s')\right\} - \tilde{\gamma}_m(s_t), \label{eq:heart}\\
(\P_t) &:= 1 /\sqrt{\tau(m)}.&\label{eq:P}
\end{align}
\end{restatable}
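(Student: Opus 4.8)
The plan is to prove the inequality by an exact algebraic telescoping that first reduces everything to the episode's reference gradient $\theta_{\tau(m)}$ (peeling off $(\clubsuit_t)$ and $(\heartsuit_t)$), then replaces the random outcome by the optimistic scalar reward $\tilde{r}_m(s_t,a_t)$ (peeling off $(\diamondsuit_t)$), leaving a single scalar-reward gap to be absorbed into $(\spadesuit_t)+(\P_t)$ via optimism and the EVI guarantee. Concretely, writing $-\theta_t = -\theta_{\tau(m)} + (\theta_{\tau(m)} - \theta_t)$ and distributing over $v^*$ and over $V_t(s_t,a_t)$ gives the exact identity
\begin{equation*}
(-\theta_t)^\top[v^* - V_t(s_t,a_t)] = (-\theta_{\tau(m)})^\top[v^* - V_t(s_t,a_t)] + (\clubsuit_t) + (\heartsuit_t),
\end{equation*}
which matches the definitions of $(\clubsuit_t)$ and $(\heartsuit_t)$ verbatim. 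Substituting $(\diamondsuit_t) = \tilde{r}_m(s_t,a_t) - (-\theta_{\tau(m)})^\top V_t(s_t,a_t)$ then yields the second exact identity
\begin{equation*}
(-\theta_{\tau(m)})^\top[v^* - V_t(s_t,a_t)] = (-\theta_{\tau(m)})^\top v^* - \tilde{r}_m(s_t,a_t) + (\diamondsuit_t).
\end{equation*}
After these two identities it suffices to establish the single scalar inequality $(-\theta_{\tau(m)})^\top v^* - \tilde{r}_m(s_t,a_t) \le (\spadesuit_t) + (\P_t)$.

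For that inequality I would first invoke optimism. Setting $r_m(s,a) = (-\theta_{\tau(m)})^\top v(s,a)$ and using that $x^*$, being optimal for $(\primal_{\cal M})$, is feasible and hence lies in $R(\primal_{\cal M})$, I get $(-\theta_{\tau(m)})^\top v^* = \sum_{s,a} r_m(s,a) x^*(s,a) \le \text{ave-opt}(\text{MS}(r_m, p))$. Conditioning on ${\cal E}^v$ and ${\cal E}^p$ guarantees $v \in H^v_m$ and $p \in H^p_m$, so the pair $(r_m, p)$ is one admissible candidate in the maximum defining $\text{ave-opt}(\text{MS}(\tilde{r}_m, \tilde{p}_m))$; therefore $(-\theta_{\tau(m)})^\top v^* \le \text{ave-opt}(\text{MS}(\tilde{r}_m, \tilde{p}_m))$. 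This is the only place where the two confidence events are used.

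The second stage invokes the guarantee of the $\textsf{EVI}$ oracle at accuracy $1/\sqrt{\tau(m)}$ from Appendix \ref{app:evi}, which supplies both the near-optimality of the gain estimate, $\text{ave-opt}(\text{MS}(\tilde{r}_m, \tilde{p}_m)) \le \tilde{\phi}_m$, and an approximate optimistic Bellman inequality linking $\tilde{\phi}_m$, the bias $\tilde{\gamma}_m$, $\tilde{r}_m$, and the optimistic one-step transition. Evaluating that Bellman inequality at the visited state $s_t$ with $a_t = \tilde{\pi}_m(s_t)$ (the action forced in Line \ref{alg:oco-ucrl2-action}) and chaining with the optimism bound gives $\text{ave-opt}(\text{MS}(\tilde{r}_m, \tilde{p}_m)) - \tilde{r}_m(s_t,a_t) \le \max_{\bar{p}\in H^p_m(s_t,a_t)} \sum_{s'} \tilde{\gamma}_m(s')\bar{p}(s') - \tilde{\gamma}_m(s_t) + 1/\sqrt{\tau(m)}$, whose right-hand side is exactly $(\spadesuit_t) + (\P_t)$, completing the proof. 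The algebraic telescoping is routine; the main obstacle is getting this EVI step precise — I must use the exact output guarantee of the oracle (both the optimism of $\tilde{\phi}_m$ and the additive-$1/\sqrt{\tau(m)}$ optimality equation for $\tilde{\gamma}_m$) and check that the two error contributions collapse into the single term $(\P_t)$ rather than doubling it, so that the stated bound holds with exactly $1/\sqrt{\tau(m)}$.
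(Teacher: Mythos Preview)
Your proposal is correct and matches the paper's proof: the same algebraic telescoping peels off $(\clubsuit_t), (\diamondsuit_t), (\heartsuit_t)$, and the remaining scalar bound $(-\theta_{\tau(m)})^\top v^* \le \tilde\phi_m \le \tilde r_m(s_t,a_t) + (\spadesuit_t) + (\P_t)$ is obtained from optimism plus the EVI stopping condition. The paper derives $\tilde\phi_m \ge (-\theta_{\tau(m)})^\top v^*$ by showing $(\tilde\phi_m,\tilde\gamma_m)$ is feasible for the dual LP $(\lineardual_m)$ with the \emph{true} kernel $p$ (under $\mathcal{E}^p$) rather than citing $\tilde\phi_m \ge \text{ave-opt}(\text{MS}(\tilde r_m,\tilde p_m))$ as a black-box EVI fact, but this is a cosmetic difference and neither step carries an error term --- the sole $1/\sqrt{\tau(m)}$ slack arises from the EVI termination criterion (Line~\ref{alg:evi_termination}), so your worry about doubling $(\P_t)$ is unfounded.
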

A proof of Lemma \ref{lemma:decompose} is provided in Appendix \ref{app:pflemmadecompose}. The proof is based on relating $(\tilde{\phi}_m, \tilde{\gamma}_m)$, which is output by an EVI Oracle, to the dual of $(\primal_{\cal M})$ with linearized reward $\tilde{r}_m$. 
The error terms (\ref{eq:club} -- \ref{eq:P}) account for the shortfall of the global reward collected by {\sc Toc-UCRL2}, compared to the offline reward. To prove the Proposition, it suffices to bound the sum of each error term over $1\leq t\leq T$. We first bound $(\clubsuit, \heartsuit)$, which account for the error by the delay of gradient updates: 
\begin{restatable}{lemma}{lemmaclubheart}\label{lemma:club_heart}
Suppose that gradient threshold $Q > 0$, and $\Pr\left[m(T) \leq M(T)\right] = 1$ for some deterministic constant $M(T)$. With probability 1, 
$$
\sum^T_{t=1} (\clubsuit_t) \leq Q  \left\|\mathbf{1}_K\right\| \cdot M(T), \qquad  \sum^T_{t=1} (\heartsuit_t) \leq Q   \left\|\mathbf{1}_K\right\| \cdot M(T).
$$
\end{restatable}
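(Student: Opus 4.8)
The plan is to reduce both sums to a single scalar quantity, the per-step deviation $\|\theta_t - \theta_{\tau(m(t))}\|_*$ of the current gradient from the reference gradient of its episode, and then to control that quantity one episode at a time using the gradient threshold. For the first sum, apply H\"older's inequality (the duality between $\|\cdot\|$ and $\|\cdot\|_*$): writing $m = m(t)$,
\begin{equation*}
(\clubsuit_t) = \left[\theta_t - \theta_{\tau(m)}\right]^\top V_t(s_t, a_t) \leq \left\|\theta_t - \theta_{\tau(m)}\right\|_* \left\|V_t(s_t, a_t)\right\| \leq \left\|\theta_t - \theta_{\tau(m)}\right\|_* \left\|\mathbf{1}_K\right\|,
\end{equation*}
where the last step uses $V_t(s_t,a_t)\in [0,1]^K$, so that $\|V_t(s_t,a_t)\| \le \|\mathbf{1}_K\|$ (the same bound already invoked in (\ref{eq:fw_M_step_1})). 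For the second sum I would first observe that $v^* = \sum_{s,a} v(s,a) x^*(s,a)$ is a convex combination of the vectors $v(s,a)\in [0,1]^K$, since $x^*$ is feasible for $(\primal_{\cal M})$ and hence a probability distribution over state-action pairs by (\ref{eq:P.2})--(\ref{eq:P.3}). Thus $v^*\in [0,1]^K$ and $\|v^*\|\le \|\mathbf{1}_K\|$, giving $(\heartsuit_t) \le \|\theta_{\tau(m)} - \theta_t\|_* \|\mathbf{1}_K\|$. Both sums are therefore bounded by $\|\mathbf{1}_K\| \sum_{t=1}^T \|\theta_t - \theta_{\tau(m(t))}\|_*$, and it remains only to control this last quantity.

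The core step is to show that, for every episode $m$, the within-episode sum $\sum_{t:\, m(t)=m,\, t\le T} \|\theta_t - \theta_{\tau(m)}\|_*$ is at most $Q$. This is precisely what the accumulator $\Psi$ and the loop guard $\Psi \le Q$ on Line \ref{alg:oco-ucrl2-while} enforce, but the argument requires aligning the indexing carefully. Within episode $m$ the reference gradient is $\theta^{\refe} = \theta_{\tau(m)}$ (Line \ref{alg:oco-ucrl2-initial}), and at the end of processing a step $t$ the algorithm performs $\Psi \leftarrow \Psi + \|\theta_{t+1} - \theta^{\refe}\|_*$. Consequently, at the instant the loop is about to process a step $t$ belonging to episode $m$, the current value of $\Psi$ equals $\sum_{t'=\tau(m)+1}^{t} \|\theta_{t'} - \theta_{\tau(m)}\|_*$, and the guard forces this partial sum to be $\le Q$. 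Taking $t$ to be the \emph{last} step of episode $m$ with $t\le T$, and noting that the $t=\tau(m)$ summand $\|\theta_{\tau(m)}-\theta_{\tau(m)}\|_*$ is zero, yields $\sum_{t:\, m(t)=m,\, t\le T} \|\theta_t - \theta_{\tau(m)}\|_* \le Q$ for each $m$ (whether or not that episode terminated by the $\Psi$ criterion, since $\Psi$ is nondecreasing and never exceeds $Q$ at a processed step).

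Finally, I would sum over episodes. There are at most $m(T)$ episodes intersecting $\{1,\dots,T\}$, so
\begin{equation*}
\sum_{t=1}^T \left\|\theta_t - \theta_{\tau(m(t))}\right\|_* = \sum_{m=1}^{m(T)} \sum_{t:\, m(t)=m,\, t\le T} \left\|\theta_t - \theta_{\tau(m)}\right\|_* \leq Q\, m(T) \leq Q\, M(T),
\end{equation*}
where the last inequality uses the hypothesis $\Pr[m(T)\le M(T)]=1$. Multiplying by $\|\mathbf{1}_K\|$ and combining with the first paragraph gives both $\sum_t (\clubsuit_t) \le Q\|\mathbf{1}_K\| M(T)$ and $\sum_t (\heartsuit_t) \le Q\|\mathbf{1}_K\| M(T)$, with probability $1$.

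I expect the main obstacle to be the index bookkeeping in the middle step: the accumulator $\Psi$ is fed the look-ahead gradients $\theta_{t+1}$, whereas the error terms are expressed through $\theta_t$, and one must confirm that the guard $\Psi \le Q$ still applies to the exact partial sum of $\|\theta_t-\theta_{\tau(m)}\|_*$ over the episode -- including the possibly incomplete final episode containing $T$, where the loop may have been interrupted by the horizon rather than by either stopping criterion. Everything else (H\"older, the $[0,1]^K$ bound, the episode count) is routine.
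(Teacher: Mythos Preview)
Your proposal is correct and matches the paper's own proof essentially step for step: apply the $\|\cdot\|/\|\cdot\|_*$ duality inequality, bound $\|V_t(s_t,a_t)\|$ and $\|v^*\|$ by $\|\mathbf{1}_K\|$, use the while-loop guard to cap each per-episode sum of $\|\theta_t-\theta_{\tau(m)}\|_*$ by $Q$, and sum over at most $M(T)$ episodes. Your index bookkeeping (that $\Psi$ accumulates $\|\theta_{t+1}-\theta_{\tau(m)}\|_*$, so the guard controls exactly $\sum_{t'=\tau(m)+1}^{t}\|\theta_{t'}-\theta_{\tau(m)}\|_*$, with the $t'=\tau(m)$ term vanishing) is in fact more explicit than the paper's, which simply invokes ``our terminating criteria, which require $\Psi\le Q$ for each episode.''
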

Lemma \ref{lemma:club_heart} is proved in Appendix \ref{app:pfclubheart}. We next bound $(\P, \heartsuit, \spadesuit)$, similarly to the styles in \citep{JakschOA10, FruitPL18}, but with important changes to adapt to our episode schedule. The error terms  $(\P, \heartsuit, \spadesuit)$ account for the error due to optimistic exploration, and the term $(\spadesuit)$ also penalizes for episode changes, which lead to sub-optimality due to the switches in stationary policies, and disrupt learning. 
\begin{restatable}{claim}{claimp}\label{claim:P}
With certainty, we have
$
\sum^T_{t=1} (\P_t) \leq \left(\sqrt{2} + 1\right)\sqrt{T}.
$
\end{restatable}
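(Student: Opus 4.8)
The plan is to exploit that $(\P_t)=1/\sqrt{\tau(m(t))}$ from \eqref{eq:P} is \emph{constant} across all time steps $t$ lying in a common episode, so the sum reorganizes episode by episode. Writing $m(t)$ for the episode containing $t$ and $\ell_m$ for the number of counted steps of episode $m$ inside $\{1,\dots,T\}$, I would first rewrite
$$\sum_{t=1}^{T}(\P_t)=\sum_{m=1}^{m(T)}\frac{\ell_m}{\sqrt{\tau(m)}},$$
where a full episode contributes $\ell_m=\tau(m+1)-\tau(m)$ and the final, possibly truncated, episode contributes $\ell_{m(T)}=T-\tau(m(T))+1$.

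The crux is a deterministic doubling property, $\tau(m+1)\le 2\tau(m)$ for every $m$. I would derive this from the stopping rule of the \textbf{while} loop (Line \ref{alg:oco-ucrl2-while}): the loop executes its body at a pair $(s_t,a_t)$ only while $\nu_m(s_t,a_t)<N^+_m(s_t,a_t)$, so each pair is visited at most $N^+_m(s,a)$ times during the episode. Hence the episode length is at most $\sum_{s,a}\nu_m(s,a)\le\sum_{s,a}N^+_m(s,a)$, and combining with $\sum_{s,a}N_m(s,a)=\tau(m)-1$ from \eqref{eq:Nm} gives $\tau(m+1)-\tau(m)\le\tau(m)$. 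This is the step I expect to require the most care, since $N^+_m=\max\{1,N_m\}$ inflates the count on not-yet-visited pairs; I would control the correction by noting that within an episode only the pairs $(s,\tilde\pi_m(s))$ chosen by the stationary policy $\tilde\pi_m$ are ever visited, and verify that the doubling bound survives this bookkeeping.

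Granted doubling, the rest is elementary. I would establish that for any $0<a\le b\le 2a$ one has $\tfrac{b-a}{\sqrt a}\le(1+\sqrt 2)(\sqrt b-\sqrt a)$: writing $\sqrt b-\sqrt a=(b-a)/(\sqrt b+\sqrt a)$, this reduces exactly to $\sqrt b\le\sqrt 2\,\sqrt a$, i.e. to $b\le 2a$. Applying it with $a=\tau(m),\,b=\tau(m+1)$ on each full episode, and with $a=\tau(m(T)),\,b=T+1$ on the last one — where $T+1\le\tau(m(T)+1)\le 2\tau(m(T))$ again uses doubling — bounds every summand by $(1+\sqrt2)\bigl(\sqrt{b}-\sqrt{\tau(m)}\bigr)$.

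Finally I would telescope: the right endpoints chain as $\tau(2),\tau(3),\dots,\tau(m(T)),T+1$ against the left endpoints $\tau(1),\dots,\tau(m(T))$, so the sum collapses to $(1+\sqrt2)\bigl(\sqrt{T+1}-\sqrt{\tau(1)}\bigr)$. Since $\tau(1)=1$ and $\sqrt{T+1}-1\le\sqrt T$, this yields $\sum_{t=1}^{T}(\P_t)\le(\sqrt2+1)\sqrt T$, proving Claim \ref{claim:P}. The only genuine obstacle is the doubling property; everything downstream is the one-line scalar inequality plus a telescoping sum, with the sole delicacy being the correct treatment of the truncated final episode to keep the constant exactly $\sqrt2+1$.
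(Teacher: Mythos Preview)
Your approach is essentially the paper's: both reorganize the sum by episodes and then telescope via a doubling-type bound on episode lengths, with the paper packaging your scalar inequality $(b-a)/\sqrt a\le(1+\sqrt2)(\sqrt b-\sqrt a)$ and the telescoping into Lemma~\ref{lemma:JOA2} (Lemma~19 of \citep{JakschOA10}), applied with $z_m=\tau(m{+}1)-\tau(m)$, rather than proving them by hand. The concern you flag about $N^+_m$ on not-yet-visited pairs is exactly the hypothesis $z_m\le Z_{m-1}$ that lemma requires, and the paper is just as terse on verifying it.
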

\begin{restatable}{lemma}{lemmadiamond}\label{lemma:diamond}
Conditional on event ${\cal E}^v$, with probability at least $1-\delta$ we have:
$$
\sum^T_{t=1} (\diamondsuit_t) = \tilde{O}\left(L \left\| \mathbf{1}_K\right\| \sqrt{ SAT }  + L \left\| \mathbf{1}_K\right\| SA  \right).
$$
\end{restatable}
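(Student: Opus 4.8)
The plan is to split $(\diamondsuit_t)$ into a deterministic confidence-width term and a zero-mean noise term by adding and subtracting the true mean reward $(-\theta_{\tau(m)})^\top v(s_t,a_t)$:
\[
(\diamondsuit_t) = \underbrace{\left[\tilde r_m(s_t,a_t) - (-\theta_{\tau(m)})^\top v(s_t,a_t)\right]}_{(\diamondsuit^{\mathrm a}_t)} \;+\; \underbrace{(-\theta_{\tau(m)})^\top\left[v(s_t,a_t) - V_t(s_t,a_t)\right]}_{(\diamondsuit^{\mathrm b}_t)}.
\]
The first piece I would control pointwise on $\mathcal E^v$, and the second I would treat as a martingale and handle by concentration.

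For $(\diamondsuit^{\mathrm a}_t)$, recall $\tilde r_m(s_t,a_t)=\max_{\bar v\in H^v_m(s_t,a_t)}(-\theta_{\tau(m)})^\top\bar v$. On $\mathcal E^v$ both the maximizer $\bar v$ and the true $v(s_t,a_t)$ lie in $H^v_m(s_t,a_t)$, so they differ coordinatewise by at most $2\rad^v_{m,k}(s_t,a_t)$; bounding $\hat v_{m,k}\le 1$ makes every radius at most $\rad^v_m(s_t,a_t):=\sqrt{2(\text{log-}v)_m/N^+_m(s_t,a_t)}+3(\text{log-}v)_m/N^+_m(s_t,a_t)$. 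Applying $\|\theta_{\tau(m)}\|_*\le L$, the dual-norm inequality, and the bound $\|w\|\le\|\mathbf 1_K\|$ for $w\in[-1,1]^K$ (already used elsewhere in the paper) yields the pointwise estimate $(\diamondsuit^{\mathrm a}_t)\le 2L\|\mathbf 1_K\|\,\rad^v_m(s_t,a_t)$. Summing over $t$ then reduces to standard UCRL2 bookkeeping: the exit criterion of the \textbf{while} loop guarantees $\nu_m(s,a)\le N^+_m(s,a)$, so the telescoping argument of \citep{JakschOA10} gives $\sum_m \nu_m(s,a)/\sqrt{N^+_m(s,a)}\le(\sqrt 2+1)\sqrt{N(s,a)}$ per pair, and Cauchy--Schwarz over the $SA$ pairs produces $\sum_t 1/\sqrt{N^+_{m(t)}(s_t,a_t)}=O(\sqrt{SAT})$, while the lower-order $1/N$ part sums to $O(SA\log T)$. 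Multiplying by $2L\|\mathbf 1_K\|$ and the factor $(\text{log-}v)_m=O(\log(KSAT/\delta))$ gives $\sum_t(\diamondsuit^{\mathrm a}_t)=\tilde O\!\left(L\|\mathbf 1_K\|\sqrt{SAT}+L\|\mathbf 1_K\|SA\right)$.

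For $(\diamondsuit^{\mathrm b}_t)$, the key point is that the reference gradient $\theta_{\tau(m)}$ is frozen at the start of episode $m$. Since the episode index $m(t)$ is decided from observations strictly before time $t$, the quantities $\theta_{\tau(m(t))}$, $s_t$, and $a_t$ are all $\mathcal F_{t-1}$-measurable; as $\EEE[V_t(s_t,a_t)\mid\mathcal F_{t-1}]=v(s_t,a_t)$, the sequence $\{(\diamondsuit^{\mathrm b}_t)\}_t$ is a martingale-difference sequence with increments bounded by $\|\theta_{\tau(m)}\|_*\|v-V_t\|\le L\|\mathbf 1_K\|$. Azuma--Hoeffding then gives $\sum_t(\diamondsuit^{\mathrm b}_t)=\tilde O(L\|\mathbf 1_K\|\sqrt T)$ with probability at least $1-\delta$, and since this bound needs no appeal to $\mathcal E^v$, the failure probability is exactly the stated $\delta$. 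Adding the two contributions proves the claim.

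I expect the main subtlety to be verifying that replacing $\theta_t$ by the frozen $\theta_{\tau(m)}$ genuinely preserves the martingale structure despite the random, data-dependent episode boundaries: one must confirm that $m(t)$, hence $\tau(m(t))$ and $\theta_{\tau(m(t))}$, is predictable given $\mathcal F_{t-1}$, which holds because the gradient-threshold and count-based stopping rules in the \textbf{while} loop are evaluated from past observations only. The remainder — propagating coordinatewise confidence widths through the arbitrary norm $\|\cdot\|$ and executing the doubling-trick summation — is routine.
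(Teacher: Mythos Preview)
Your proposal is correct and follows essentially the same route as the paper: the paper likewise splits $(\diamondsuit_t)$ into $(\dagger_v)=(-\theta_{\tau(m)})^\top[\tilde v_m - v]$ and $(\ddagger_v)=(-\theta_{\tau(m)})^\top[v - V_t]$, bounds the first via $\mathcal E^v$, Cauchy--Schwarz, and the UCRL2 counting sums (Lemma~\ref{lemma:JOA} and Claim~\ref{claim:inverse}), and bounds the second by Azuma--Hoeffding using the same predictability observation about $\theta_{\tau(m(t))}$ that you spell out.
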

\begin{restatable}{lemma}{lemmaspade}\label{lemma:spade}
Suppose that ${\cal M}$ is communicating with diameter $D$, and $\mathbb{P}\left[m(T) \leq M(T)\right] = 1$ for some deterministic constant $M(T)$. Conditional on event ${\cal E}^p$, with probability at least $1-\delta$ we have
$$
\sum^T_{t=1} (\spadesuit_t) = \tilde{O}\left( L \|\mathbf{1}_K\| D\cdot M(T) \right) +  \tilde{O}\left(L \|\mathbf{1}_K\| D\sqrt{\Gamma SAT } + L \|\mathbf{1}_K\|  DS^2 A  \right).
$$
\end{restatable}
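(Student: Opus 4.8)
The plan is to follow the UCRL2 bias-telescoping argument, but to break the telescoping at every episode boundary induced by our gradient-threshold schedule rather than only at the count-doubling boundaries. First I would record the uniform span bound $\mathrm{sp}(\tilde{\gamma}_m) := \max_{s}\tilde{\gamma}_m(s) - \min_{s}\tilde{\gamma}_m(s) = \tilde{O}(L\|\mathbf{1}_K\|D)$, valid on ${\cal E}^p$ for every episode $m$. This is the standard EVI span estimate of \citep{JakschOA10}: on ${\cal E}^p$ the true kernel $p$ lies in $H^p_m$, so the extended MDP solved by $\textsf{EVI}$ has diameter at most $D$, while its rewards satisfy $|\tilde{r}_m(s,a)| \leq L\|\mathbf{1}_K\|$ because $\|\theta_{\tau(m)}\|_* \leq L$ and $\bar{v}\in[0,1]^K$; the $1/\sqrt{\tau(m)}$ EVI precision only adds a lower-order term. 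Writing $\tilde{p}_m(\cdot\mid s_t, a_t) := \operatorname*{arg\,max}_{\bar{p}\in H^p_m(s_t,a_t)}\sum_{s'\in\SSS}\tilde{\gamma}_m(s')\bar{p}(s')$ for the optimistic kernel attaining the maximum in $(\spadesuit_t)$, I would split each summand, with $m = m(t)$, as
\begin{align*}
(\spadesuit_t) = \sum_{s'\in\SSS}\tilde{\gamma}_m(s')\left[\tilde{p}_m(s'\mid s_t,a_t) - p(s'\mid s_t,a_t)\right] + \left[\sum_{s'\in\SSS}\tilde{\gamma}_m(s')\,p(s'\mid s_t,a_t) - \tilde{\gamma}_m(s_{t+1})\right] + \left[\tilde{\gamma}_m(s_{t+1}) - \tilde{\gamma}_m(s_t)\right],
\end{align*}
and name the three pieces $(\spadesuit^{\mathrm{I}}_t)$, $(\spadesuit^{\mathrm{II}}_t)$, $(\spadesuit^{\mathrm{III}}_t)$ from left to right.

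Next I would bound the three pieces separately. The telescoping piece $(\spadesuit^{\mathrm{III}}_t)$ collapses within each episode to $\tilde{\gamma}_m(s_{\tau(m+1)}) - \tilde{\gamma}_m(s_{\tau(m)})$, which is at most $\mathrm{sp}(\tilde{\gamma}_m) = \tilde{O}(L\|\mathbf{1}_K\|D)$; summing over the at most $M(T)$ episodes yields $\tilde{O}(L\|\mathbf{1}_K\|D\cdot M(T))$, the first term in the claim. The piece $(\spadesuit^{\mathrm{II}}_t)$ is a martingale difference with respect to the filtration generated by the history, since $\tilde{\gamma}_{m(t)}$ is measurable at time $\tau(m(t))$ and $s_{t+1}\sim p(\cdot\mid s_t,a_t)$; each increment is bounded in magnitude by $\mathrm{sp}(\tilde{\gamma}_m)$, so Azuma--Hoeffding gives $\sum_t(\spadesuit^{\mathrm{II}}_t) = \tilde{O}(L\|\mathbf{1}_K\|D\sqrt{T})$ with probability at least $1-\delta$, which is absorbed into the $\sqrt{\Gamma SAT}$ term. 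For $(\spadesuit^{\mathrm{I}}_t)$, since $\tilde{p}_m - p$ sums to zero I may recenter $\tilde{\gamma}_m$ by its mid-range constant, giving $(\spadesuit^{\mathrm{I}}_t) \leq \tfrac{1}{2}\mathrm{sp}(\tilde{\gamma}_m)\,\|\tilde{p}_m(\cdot\mid s_t,a_t) - p(\cdot\mid s_t,a_t)\|_1 \leq \mathrm{sp}(\tilde{\gamma}_m)\sum_{s'\in\SSS}\rad^p_m(s'\mid s_t,a_t)$, where the last step uses that both $\tilde{p}_m$ and (on ${\cal E}^p$) $p$ lie within $\rad^p_m$ of $\hat{p}_m$.

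It then remains to sum $\sum_t \mathrm{sp}(\tilde{\gamma}_m)\sum_{s'}\rad^p_m(s'\mid s_t,a_t)$ by the UCRL2 counting argument, sharpened as in \citep{FruitPL18}: Cauchy--Schwarz over the at most $\Gamma$ supported states gives $\sum_{s'}\sqrt{\hat{p}_m(s'\mid s,a)}\leq\sqrt{\Gamma}$, so $\sum_{s'}\rad^p_m(s'\mid s,a) = \tilde{O}(\sqrt{\Gamma/N^+_m(s,a)} + \Gamma/N^+_m(s,a))$. Grouping by episodes and state-action pairs, using $\nu_m(s,a)\leq N^+_m(s,a)$ from the while-loop exit rule together with $\sum_m \nu_m(s,a)/\sqrt{N^+_m(s,a)} = O(\sqrt{N(s,a)})$ and a final Cauchy--Schwarz over $(s,a)$ with $\sum_{s,a}N(s,a)\leq T$, produces the leading $\tilde{O}(L\|\mathbf{1}_K\|D\sqrt{\Gamma SAT})$; the additive $3(\text{log-}p)_m/N^+_m$ part of each radius, summed analogously, yields the lower-order $\tilde{O}(L\|\mathbf{1}_K\|DS^2A)$ term. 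The main obstacle is the telescoping piece $(\spadesuit^{\mathrm{III}})$: in UCRL2 it is harmless because episodes arise only from count doubling, totalling $O(SA\log T)$, whereas our gradient threshold introduces far more episode switches. The crux is therefore to observe that each such switch costs only one span $\tilde{O}(L\|\mathbf{1}_K\|D)$ and to carry the deterministic episode bound $M(T)$ throughout in place of $SA\log T$; this is exactly what converts the hypothesis $\Pr[m(T)\leq M(T)]=1$ into the stated $M(T)$-dependent penalty.
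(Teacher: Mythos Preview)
Your proposal is correct and follows essentially the same route as the paper: the paper also shifts $\tilde{\gamma}_m$ to have nonnegative span at most $L\|\mathbf{1}_K\|D$ via the EVI span estimate, splits $(\spadesuit_t)$ into an ``optimistic-kernel minus true-kernel'' piece $(\dagger_p)$ (your $(\spadesuit^{\mathrm{I}})$) and a ``true-kernel minus current-state'' piece $(\ddagger_p)$ (your $(\spadesuit^{\mathrm{II}})+(\spadesuit^{\mathrm{III}})$), handles the first via the $\sqrt{\Gamma}$ Cauchy--Schwarz trick together with the $\sum_t 1/\sqrt{N^+_{m(t)}}$ and $\sum_t 1/N^+_{m(t)}$ counting lemmas, and handles the second by telescoping across episode boundaries (paying one span per switch, hence the $M(T)$ factor) plus Azuma--Hoeffding on the martingale remainder. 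The only cosmetic difference is that the paper makes the two-way split first and then peels the telescoping and martingale parts out of $(\ddagger_p)$, whereas you make the three-way split immediately.
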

The proofs of Claim \ref{claim:P}, Lemmas \ref{lemma:diamond}, \ref{lemma:spade} are provided in Appendices \ref{app:pfclaimP}, \ref{app:pflemmadiamond}, \ref{app:pflemmaspade} respectively. Altogether, Proposition \ref{prop:Err_bd} is proved by summing the bounds for $(\clubsuit, \diamondsuit, \heartsuit, \spadesuit, \P)$. \hfill $\blacksquare$

\section{Extensions to General Concave Rewards}\label{sec:general_cr}
The framework in Algorithm \ref{alg:oco-ucrl2} is versatile, as it can incorporate different OCO oracles for different $g$ and $\|\cdot\|$. In this Section, we provide OCO oracles for handling non-smooth $g$. 
First, we provide the Tuned Gradient Descent oracle $\TGD$ for the case when $g$ is Lipschitz continuous w.r.t. to the Euclidean norm $\| \cdot\|_2$. In this  case, we have $\|\cdot\| = \|\cdot \|_* = \| \cdot\|_2$. 
The oracle $\TGD$ is based on \citep{Zinkevich03}, and the oracle involves the Fenchel dual $
g^*(\theta) := \max_{w\in [0, 1]^K} \{ g(w) + \theta^\top w \}$. In addition, $\TGD$ involves projecting a point $\theta$ to $ B(L, \|\cdot\|_*)$, denoted as $\text{Proj}_L (\theta) \in \text{argmin}_{\vartheta\in B(L, \|\cdot\|_*) }\{\|\theta - \vartheta\|_* \}$.  
The oracle $\TGD$ begins with an arbitrary $\theta_1\in B(L, \|\cdot\|_*)$. To prepare for time $t+1$, at time $t$ the oracle $\TGD$ outputs
$$
\theta_{t+1} = \text{Proj}_L \left( \theta_t  - \eta^\TGD_t\left[\nabla g^*(\theta_t) - V_t(s_t, a_t) \right] \right), \text{ where } \eta^\TGD_t := L / (\|\mathbf{1}_K\| t^{2/3}).
$$
The learning rate scales as $\Theta(1/t^{2/3})$ instead of $\Theta(1/ t^{1/2})$ in \citep{Zinkevich03}. By bounding the number of episodes for $\TGD$ and harnessing the framework in Section \ref{sec:ana}, we derive a performance guarantee for $\TGD$ in the following Theorem, which is proved in Appendix \ref{app:tgd}.
\begin{restatable}{theorem}{thmtgd}\label{thm:TGD}
Consider {\sc Toc-UCRL2} with OCO oracle $\TGD$ and gradient threshold $Q > 0$, applied on a communicating instance ${\cal M}$ with diameter $D$. Suppose the concave reward function $g$ is $L$-Lipschitz continuous w.r.t $\|\cdot\|_2$. With probability $1 - O(\delta)$, we have anytime regret bound
\begin{equation*}
\text{Reg}(T) = \tilde{O}\left(  \left[\sqrt{LQ} + L^{3/2}D / \sqrt{Q} \right]  \|\mathbf{1}_K\|_2 \middle/ T^{1/3} \right) + \tilde{O}\left( L \|\mathbf{1}_K\|_2 D\sqrt{\Gamma S A} \middle/ \sqrt{T}\right).
\end{equation*}
Choosing $Q =L$ gives regret bound $\tilde{O} ( L \|\mathbf{1}_K\|_2 D /T^{1/3} )  +\tilde{O}(L \|\mathbf{1}_K\|_2 D \sqrt{\Gamma S A} /\sqrt{T} ) $.
\end{restatable}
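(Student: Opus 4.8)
The plan is to reuse the modular framework of Section~\ref{sec:ana}: the only oracle-specific ingredients I need are an intermediate regret inequality replacing~(\ref{eq:fw_intermediate_bound3.5}) and a deterministic episode-count bound $M^\TGD(T)$ playing the role of Lemma~\ref{lemma:fw_bd_M_brief}. Once both are available, Proposition~\ref{prop:Err_bd} controls $\sum_{t=1}^T(-\theta_t)^\top[v^*-V_t(s_t,a_t)]$ and the theorem follows by arithmetic. Since $g$ is now only Lipschitz (not $\beta$-smooth), the smoothness-based derivation of~(\ref{eq:fw_intermediate_bound3.5}) is unavailable and must be replaced by a Fenchel-duality argument paired with the online-gradient-descent regret of $\TGD$.

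For the intermediate bound, set $f_t(\theta):=g^*(\theta)-\theta^\top V_t(s_t,a_t)$, which is convex in $\theta$, and observe that $\TGD$ is exactly projected online gradient descent on the sequence $\{f_t\}$ over $B(L,\|\cdot\|_2)$, since $\nabla f_t(\theta)=\nabla g^*(\theta)-V_t(s_t,a_t)$. The closedness identity $g(w)=\min_{\theta\in B(L,\|\cdot\|_2)}\{g^*(\theta)-\theta^\top w\}$ from the footnote gives $\min_\theta\sum_t f_t(\theta)=T\,g(\bar V_{1:T})$, while $g^*(\theta_t)\geq g(v^*)+\theta_t^\top v^*=\text{opt}(\primal_{\cal M})+\theta_t^\top v^*$ gives $\sum_t f_t(\theta_t)\geq T\,\text{opt}(\primal_{\cal M})+\sum_t\theta_t^\top(v^*-V_t)$. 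Writing $R^{\OCO}_T:=\sum_t f_t(\theta_t)-\min_\theta\sum_t f_t(\theta)$ and subtracting yields $T\cdot\text{Reg}(T)\leq R^{\OCO}_T+\sum_{t=1}^T(-\theta_t)^\top[v^*-V_t(s_t,a_t)]$, the sought analog of~(\ref{eq:fw_intermediate_bound3.5}), which holds pathwise. I then bound $R^{\OCO}_T$ by the standard decreasing-step-size estimate $R^{\OCO}_T\leq\frac{(2L)^2}{2\eta^\TGD_T}+\frac{\|\mathbf{1}_K\|_2^2}{2}\sum_t\eta^\TGD_t$, using that the domain has diameter $2L$ and that $\|\nabla f_t(\theta_t)\|_2\leq\|\mathbf{1}_K\|_2$ (both $\nabla g^*(\theta_t)$ and $V_t$ lie in $[0,1]^K$); with $\eta^\TGD_t=L/(\|\mathbf{1}_K\|_2 t^{2/3})$ this is $\tilde O(L\|\mathbf{1}_K\|_2 T^{2/3})$, so $R^{\OCO}_T/T=\tilde O(L\|\mathbf{1}_K\|_2/T^{1/3})$.

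The main obstacle is the episode-count bound, which adapts Lemma~\ref{lemma:fw_bd_M_brief}. By non-expansiveness of the projection, $\|\theta_{q+1}-\theta_q\|_2\leq\eta^\TGD_q\|\mathbf{1}_K\|_2=L/q^{2/3}$, whence $\|\theta_t-\theta_{\tau(m)}\|_2\leq L(t-\tau(m))/\tau(m)^{2/3}$. Summing over an episode of length $\ell$ gives $\Psi\leq L\ell^2/\tau(m)^{2/3}$; a $\Psi$-triggered episode has $\Psi>Q$, so $\ell>\sqrt{Q/L}\,\tau(m)^{1/3}$, i.e. the recursion $\tau(m+1)\geq\tau(m)+\sqrt{Q/L}\,\tau(m)^{1/3}$. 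This is exactly the hypothesis of Claim~\ref{claim:aux_oco} with $C=\sqrt{Q/L}$ and exponent $\alpha=1/3$ (replacing $\alpha=1/2$ for $\FW$), which gives $\tau(m)\gtrsim(Cm)^{3/2}$ and hence at most $\tilde O(\sqrt{L/Q}\,T^{2/3})$ such episodes; adding the $SA(1+\log_2 T)$ count-triggered episodes (verbatim from \citep{JakschOA10}) yields, with certainty, $m^\TGD(T)\leq M^\TGD(T)=\tilde O(\sqrt{L/Q}\,T^{2/3})$. The delicate points are getting the per-step displacement right for the projected update and driving the $\alpha=1/3$ recursion while verifying, as in the $\FW$ footnote, that the coarse per-episode bound $\tau(m+1)-\tau(m)=O(\tau(m))$ costs no extra power of $T$.

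Finally I assemble the pieces, conditioning on $\EEE^v,\EEE^p$ (which hold with probability $1-O(\delta)$ by Lemma~\ref{lemma:conc}), the only probabilistic input. Substituting $M^\TGD(T)$ into Proposition~\ref{prop:Err_bd}, the episode-driven part contributes $\tilde O((LD+Q)\|\mathbf{1}_K\|_2 M^\TGD(T))/T=\tilde O([L^{3/2}D/\sqrt Q+\sqrt{LQ}]\|\mathbf{1}_K\|_2/T^{1/3})$ via $(LD+Q)\sqrt{L/Q}=L^{3/2}D/\sqrt Q+\sqrt{LQ}$, while the exploration part contributes $\tilde O(L\|\mathbf{1}_K\|_2 D\sqrt{\Gamma SA}/\sqrt T)$; dividing the intermediate bound by $T$ and adding $R^{\OCO}_T/T$, which is dominated by the first term since $\sqrt Q+LD/\sqrt Q\geq 2\sqrt{L}$, gives exactly the stated bound, and $Q=L$ specializes it. It is worth recording why $\eta^\TGD_t$ decays as $t^{-2/3}$ rather than the usual $t^{-1/2}$: a step size $\Theta(t^{-\gamma})$ makes $R^{\OCO}_T/T$ scale as $T^{\gamma-1}$ but produces a recursion of exponent $\gamma/2$, so $M^\TGD(T)/T$ scales as $T^{-\gamma/2}$; balancing $\gamma-1=-\gamma/2$ forces $\gamma=2/3$ and the common rate $T^{-1/3}$. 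This trade-off between the OCO regret (favoring small $\gamma$) and the delay/switching error driven by the episode count (favoring large $\gamma$) is precisely what the tuned learning rate resolves.
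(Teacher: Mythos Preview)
Your proposal is correct and follows essentially the same route as the paper: the Fenchel-duality intermediate bound via OGD regret on $f_t(\theta)=g^*(\theta)-\theta^\top V_t$ matches the paper's derivation (\ref{eq:fgd_by_zinkevich})--(\ref{eq:tgd_by_duality}), your per-step displacement and the $\alpha=1/3$ application of Claim~\ref{claim:aux_oco} match Lemma~\ref{lemma:fgd_bd_M}, and the final assembly via Proposition~\ref{prop:Err_bd} is identical. Two minor remarks: your recursion $\tau(m+1)\geq\tau(m)+\sqrt{Q/L}\,\tau(m)^{1/3}$ only holds on the $\Psi$-triggered subsequence $\{m_j\}$, and the paper routes it as $\tau(m_j+1)\geq\tau(m_{j-1}+1)+C\,\tau(m_{j-1}+1)^{1/3}$ before invoking Claim~\ref{claim:aux_oco} (together with an index shift to meet the base case $\rho_1\geq C^{3/2}$); and your gradient bound $\|\nabla f_t\|_2\leq\|\mathbf 1_K\|_2$ is in fact sharper than the paper's $2\|\mathbf 1_K\|_2$ from (\ref{eq:dual_lip}), which of course only changes constants.
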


Finally, for the general case when $g$ is $L$-Lipschitz continuous w.r.t. a non-Euclidean norm $\|\cdot\|$, we provide the Tuned Mirror Descent oracle $\TMD(F, T)$, which is based on the Mirror Descent Algorithm by \citep{NemirovskiY83}. The oracle $\TMD(F, T)$ assumes a \emph{mirror map} $F$ for $(g, \|\cdot\|)$, as well as the horizon $T$. The mirror map $F$ for $(g, \|\cdot\|)$ is an extended value function $F: B(L, \|\cdot\|_*) \rightarrow (-\infty, \infty]$ with two properties: (1) $F$ is 1-strongly convex w.r.t norm $\|\cdot\|_*$, (2) the domain\footnote{We define $\text{dom}(F) :=\{\theta \in B(L, \|\cdot\|_*) : F(\theta) < \infty\}$.} $\text{dom}(F)\subseteq B(L, \|\cdot\|_*)$ contains $\partial g(w)$ as a subset for every $w\in [0, 1]^K$. 

Clearly, if $F$ satisfies (1) and $\text{dom}(F) = B(L, \|\cdot\|_*)$, then $F$ is a mirror map for $(g, \|\cdot\|)$. Nevertheless, for certain $g$ the domain $\text{dom}(F)$ can be smaller, see Appendix \ref{app:tmd_background}.  By incorporating the doubling trick that guesses $T$, the implementation of {\sc Toc-UCRL2} under the oracle is still an anytime algorithm (see Algorithm \ref{alg:tmd_doubling} in Appendix \ref{app:pfthmTMD}). The oracle $\TMD(F, T)$ begins with $\theta_1\in \text{argmin}_{\theta \in \text{dom}(F)} F(\theta)$. To prepare for time $t+1$, at the end of time $t$ the oracle $\TMD(F, T)$ outputs
$$
\theta_{t+1} = \underset{\theta \in \text{dom}(F)  }{\text{argmax}}\left\{-\theta^\top \left[\eta^\TMD_T \left(\sum^{t}_{q = 1} \nabla [g^*(\theta_q)] - V_q(s_q, a_q)\right)\right] - F(\theta)\right\}; \ \eta^\TMD_T := \frac{L'}{\|\mathbf{1}_K \| T^{2/3}},
$$
where $L'^2 := \max_{\theta \in \text{dom}(F)} \{F(\theta)\} - \min_{\theta \in \text{dom}(F)} \{F(\theta)\}$. The $\TMD(F, T)$ oracle assumes a learning rate of $\Theta(1/T^{2/3})$, different from the classical rate $\Theta(1/T^{1/2})$. 

\begin{restatable}{theorem}{thmtmd}\label{thm:TMD}
Suppose that the underlying instance ${\cal M}$ is communicating with diameter $D$, and $g$ is $L$-Lipschitz continuous w.r.t $\|\cdot\|$.
Consider the application of {\sc Toc-UCRL2} under OCO oracle $\TMD(F, T)$ and gradient threshold $Q > 0$, with a mirror map $F$ for $(g, \|\cdot\|)$ and the doubling trick that guesses $T$ (see Algorithm \ref{alg:tmd_doubling}). With probability $1 - O(\delta)$, we have anytime regret bound
\begin{equation*}
\text{Reg}(T) = \tilde{O}\left( \left[ L' + \sqrt{L'Q} + \sqrt{L'} LD / \sqrt{Q}  \right] \|\mathbf{1}_K\|  \middle/ T^{1/3} \right) + \tilde{O} \left( L \|\mathbf{1}_K\| D\sqrt{\Gamma S A}\middle/ \sqrt{T}\right).
\end{equation*}
Choosing $Q =L$ gives regret bound $\tilde{O} ( [L' + \sqrt{L' L}D ] \cdot  \|\mathbf{1}_K\| /T^{1/3}) + \tilde{O}( L\|\mathbf{1}_K\| D \sqrt{\Gamma S A} /\sqrt{T} ) $.
\end{restatable}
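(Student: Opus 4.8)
The plan is to reuse the analytic framework of Section~\ref{sec:ana} almost wholesale, replacing the $\beta$-smoothness input that drove the telescoping bound~(\ref{eq:fw_intermediate_bound3.5}) by the regret guarantee of the mirror-descent oracle together with Fenchel duality. First I would establish the non-smooth analogue of~(\ref{eq:fw_intermediate_bound3.5}). Treat $\ell_t(\theta) := g^*(\theta) - \theta^\top V_t(s_t,a_t)$, a convex loss in $\theta$, as the online loss played against the sequence $\{\theta_t\}$ produced by $\TMD(F,T)$. Since $\TMD$ is dual averaging on these losses and $g(\bar V_{1:T}) = \min_{\theta\in B(L,\|\cdot\|_*)}\{g^*(\theta) - \theta^\top\bar V_{1:T}\}$ (using closedness of $g$), convexity gives $\sum_t \ell_t(\theta_t) \leq T\,g(\bar V_{1:T}) + \text{Reg}_{\OCO}(T)$, where $\text{Reg}_{\OCO}(T)$ is the dual-averaging regret against the Fenchel comparator; the domain property of the mirror map is exactly what guarantees this comparator lies in $\text{dom}(F)$. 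On the other side, $g^*(\theta_t) \geq g(v^*) + \theta_t^\top v^* = \text{opt}(\primal_{\cal M}) + \theta_t^\top v^*$, so $\ell_t(\theta_t) \geq \text{opt}(\primal_{\cal M}) - (-\theta_t)^\top[v^* - V_t(s_t,a_t)]$. Combining the two sides yields
\[
\text{Reg}(T) \leq \frac{\text{Reg}_{\OCO}(T)}{T} + \frac{1}{T}\sum_{t=1}^T (-\theta_t)^\top[v^* - V_t(s_t,a_t)],
\]
the exact counterpart of~(\ref{eq:fw_intermediate_bound3.5}) with $\text{Reg}_{\OCO}(T)$ in place of the smoothness term.

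The second summand is controlled verbatim by Proposition~\ref{prop:Err_bd} once a deterministic episode-count bound $M^\TMD(T)$ is supplied. For the first summand I would invoke the standard dual-averaging regret bound: because $F$ is $1$-strongly convex w.r.t. $\|\cdot\|_*$ and every loss gradient $\nabla g^*(\theta_q) - V_q$ lies in $[-1,1]^K$ (hence has $\|\cdot\|$-norm at most $\|\mathbf{1}_K\|$), the range term $L'^2 = \max_{\text{dom}(F)}F - \min_{\text{dom}(F)}F$ gives $\text{Reg}_{\OCO}(T) \leq L'^2/\eta^\TMD_T + (\eta^\TMD_T/2)\,T\|\mathbf{1}_K\|^2$. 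Plugging $\eta^\TMD_T = L'/(\|\mathbf{1}_K\|T^{2/3})$ yields $\text{Reg}_{\OCO}(T) = \tilde{O}(L'\|\mathbf{1}_K\|T^{2/3})$, a $\tilde{O}(L'\|\mathbf{1}_K\|/T^{1/3})$ contribution to the regret, matching the leading $L'$ term of the Theorem.

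The crux is the episode-count lemma, the $\TMD$ analogue of Lemma~\ref{lemma:fw_bd_M_brief}. I would split episodes into those terminated by $\Psi \geq Q$ and those terminated by the visitation criterion; the latter set still has size $SA(1+\log_2 T)$ exactly as in~(\ref{eq:fw_bd_M_nu}). For the former, the key estimate replacing~(\ref{eq:fw_M_step_1}) is the one-step drift of dual averaging: writing $\theta_{t+1} = \nabla F^*(-\eta^\TMD_T G_t)$ with cumulative gradient $G_t$, the $1$-strong convexity of $F$ makes $\nabla F^*$ a $1$-Lipschitz map from $\|\cdot\|$ to $\|\cdot\|_*$, so $\|\theta_{t+1}-\theta_t\|_* \leq \eta^\TMD_T\|\mathbf{1}_K\|$ and hence $\|\theta_{t+1}-\theta^\refe\|_* \leq (t-\tau(m))\,\eta^\TMD_T\|\mathbf{1}_K\|$ within an episode, so $\Psi$ accumulates quadratically. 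A $\Psi$-triggered episode must therefore run for at least $\Omega(\sqrt{Q/(\eta^\TMD_T\|\mathbf{1}_K\|)}) = \Omega(\sqrt{Q/L'}\,T^{1/3})$ steps, giving at most $\tilde{O}(\sqrt{L'/Q}\,T^{2/3})$ of them and thus $M^\TMD(T) = \tilde{O}(\sqrt{L'/Q}\,T^{2/3} + SA)$. Feeding this into Proposition~\ref{prop:Err_bd} produces the terms $(LD+Q)\|\mathbf{1}_K\|M^\TMD(T)/T = \tilde{O}((\sqrt{L'Q} + \sqrt{L'}LD/\sqrt{Q})\|\mathbf{1}_K\|/T^{1/3})$; adding the OCO contribution and the $\tilde{O}(L\|\mathbf{1}_K\|D\sqrt{\Gamma SA}/\sqrt{T})$ exploration term from the Proposition yields the stated bound, and $Q = L$ specializes it.

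Finally I would dispatch the doubling trick (Algorithm~\ref{alg:tmd_doubling}), which restarts $\TMD$ with a fresh horizon guess on geometrically growing phases so that the overall procedure is anytime; since both $\text{Reg}_{\OCO}$ and the episode count are sublinear powers of the phase length, summing over the $O(\log T)$ phases is dominated by the final phase up to constant and logarithmic factors absorbed into $\tilde{O}(\cdot)$. The main obstacle I anticipate is precisely the episode-count estimate: unlike the smooth case, where $\|\theta_{t+1}-\theta_{\tau(m)}\|_*$ is controlled directly through $\beta$-smoothness of $g$, here it must be routed through the geometry of the mirror map, and one must verify that the slower-than-classical rate $\Theta(T^{-2/3})$ is exactly what balances $\text{Reg}_{\OCO}$ against the $M(T)$-driven error --- a faster rate inflates the episode count, a slower one inflates the OCO regret.
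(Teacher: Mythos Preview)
Your proposal is correct and follows essentially the same route as the paper's proof: Fenchel duality plus the OMD regret bound to replace the smoothness-based telescoping, Proposition~\ref{prop:Err_bd} fed by an episode-count lemma whose $\Psi$-triggered part is controlled via the $1$-smoothness of $\nabla F^*$ (equivalently the $1$-strong convexity of $F$), and a geometric summation over the doubling phases. The only cosmetic difference is that the paper weaves the doubling trick into the main chain of inequalities from the outset (applying the OMD bound and Proposition~\ref{prop:Err_bd} per mega-episode and then summing), whereas you first analyze the fixed-horizon algorithm and then invoke doubling; both arrive at the same bound.
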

The Theorem is proved in Appendix \ref{app:tgd}. The analysis exploits the fact that the Fenchel dual $F^*$ is 1-smooth w.r.t. $\|\cdot\|_*$, which implies that the sub-gradients do not vary significantly across time. This means that the total number of episodes is $\tilde{O}(T^{2/3})$, leading to a $\tilde{O}(1/T^{1/3})$ regret. 

\section{Applications of MDPwGR, and how does {\sc Toc-UCRL2} fare in those?}\label{sec:applications_model}
In this Section, we demonstrate the versatility of the MDPwGR problem model in modeling different problems in online Markovian environments. We provide formulation for multi-objective optimization, maximum entropy exploration as well as MDPwSR with knapsack constraints in the ``large volume regime'', which is defined in the forthcoming discussions. We provide the corresponding regret bounds, and which are based on applying Theorems \ref{thm:fw_brief}, \ref{thm:TGD}, \ref{thm:TMD}. The supplementary details of the discussions are given in Appendix \ref{app:applications_model}.

\subsection{Multi-objective optimization }
MDPwGR can be used to model multi-objective optimization (MOO) in online Markovian environments. In an online multi-objective MDP problem, the outcome $V_{t, k}(s_t, a_t)$ represents the reward earned by the agent for the $k^\text{th}$ objective at time $t$. The agent wishes to simultaneously maximizes each of the $K$ average cumulative rewards:
\begin{equation}\label{eq:moo_rewards_1}
\max \left\{ \frac{1}{T}\sum^T_{t=1} V_{t, 1}(s_t, a_t), \frac{1}{T}\sum^T_{t=1} V_{t, 2}(s_t, a_t), \ldots,\frac{1}{T}\sum^T_{t=1}  V_{t, K}(s_t, a_t) \right\}.
\end{equation}
Since $V_1(s, a), \ldots, V_K(s, a)$ can be arbitrarily correlated, it is generally impossible to attain the optimum for each and every objective. Thus, problem (\ref{eq:moo_rewards_1}) is not well-defined in general. 

The desired objective (\ref{eq:moo_rewards_1}) can be modeled by a surrogate function $g$ that appropriately combines the $K$ objective together. We consider surrogate functions that model target set objectives and fairness objectives.

\textbf{Target Set Objectives. }We model MOO as minimizing a certain distance measure between $V_{1:T}$ and a certain target set $Z\in [0, 1]^K$. A prime example for $Z$ is $$Z := \{w\in [0, 1]^K : w_k \geq \zeta_k \text{ for all }k\},$$ where $\zeta = (\zeta_1, \ldots, \zeta_K) \in [0, 1]^K$ is a vector of key performance indexes (KPIs) for the agent. By staying in the target set $Z$, the agent achieves all $K$ KPIs, i.e.  $\sum^T_{t=1} V_{t, k}(s_t, a_t) / T \geq \zeta_k$ for all $k\in \{1, \ldots, K\} $. 

More concretely, let's consider minimizing the squared Euclidean distance between $V_{1:T}$ and $Z$. A way to express this objective is to set $g_{\text{SE}}(w) := 1- \frac{1}{K}\sum^K_{k=1} \max \{0, \zeta_k - w_k   \}^2 . $
Evidently, we have $g_{\text{SE}}(w) \in [0, 1]$ for all $w\in [0, 1]^K$.
Suppose the underlying norm is the Euclidean norm $\|\cdot\|_2$. Since $\nabla g_{\text{SE}}(w) = - (2/K) \cdot (\max\{0, \zeta_k - w_k \})^K_{k=1}$, the reward function $g_{\text{SE}}$ is $L = (2/\sqrt{K})$-Lipschitz and $\beta = (2 / K)$-smooth w.r.t.  $\|\cdot\|_2$. By Theorem \ref{thm:fw_brief}, Algorithm {\sc Toc-UCRL2} with OCO oracle $\FW$ and $Q = 2/\sqrt{K} $ satisfies:
$$\text{Reg$(T)$ for MOO with $g_{\text{SE}}$} = \tilde{O}\left(D\sqrt{\Gamma S A}\middle/\sqrt{T}\right).$$ The regret bound is of the same order (up to multiplicative constants and logarithmic terms) as the regret bound of the MDPwSR by \citep{JakschOA10}, if we replace $\Gamma$ by its upper bound $S$. Somewhat intriguingly, the dependence on $\beta, L, \|\mathbf{1}_K\|_2 = \sqrt{K}$ in the regret bound disappears, as these quantities nullifies each other in the fractional expressions in the Theorem.

Definitely, the distance measure can be defined differently. For a fixed norm $\|\cdot\|$ on $\mathbb{R}^K$, we can define $g_{\|\cdot\|}(w) := 1 - \min_{z\in Z}\| w - z \|$. Clearly, $g_{\|\cdot\|}$ is 1-Lipschitz w.r.t. $\|\cdot\|$, but $g_{\|\cdot\|}$ is not necessarily smooth w.r.t. $\|\cdot\|$. Thus, the OCO oracle $\TMD$ (or $\TGD$ if $\|\cdot\| = \|\cdot\|_2$) has to be employed, resulting in the regret bounds in Theorem \ref{thm:TMD}. 

\textbf{Fairness Objectives.} Another way to model MOO is to 
to ensure fairness among objectives. In the following, we consider a certain special case of \citep{Busa-feketeSWM17} on the surrogate reward function for fairness. Let $\kappa \in \{1, \ldots, K\}$, and set $g_{(\kappa)}(w) = \sum^\kappa_{k=1}w_{\pi_w(k)}$, where $\pi_w$ is a permutation of $\{1, \ldots, K\}$ such that $w_{\pi_w(1)}\leq w_{\pi_w (2)}\leq \ldots \leq w_{\pi_w (K)}$. Clearly, function $g_{(\kappa)}$ is concave for any $\kappa$. 
In addition, function $g_{(\kappa)}$ is $\kappa$-Lipschitz w.r.t. $\|\cdot\| = \|\cdot\|_\infty$, see Appendix \ref{app:fairness} for a proof. Interestingly, for all $w\in [0, 1]^K$ and all $\theta\in \partial g_{(\kappa)}(w)$, we have $\|\theta\|_* = \|\theta\|_1 = \kappa$. Unfortunately, $g_{(\kappa)}$ is in general not smooth w.r.t. to $\|\cdot\|_\infty$. By Theorem \ref{thm:TMD}, the application of the OCO-oracle $\TMD$ mirror map $F_\infty$ defined in (\ref{eq:entropy}) in Appendix \ref{app:tmd_background} gives $$\text{Reg$(T)$ for MOO with fairness} = \tilde{O} \left( \kappa D  \middle/T^{1/3}\right) + \tilde{O}\left( \kappa D \sqrt{\Gamma S A} \middle/\sqrt{T}\right) .$$ 
It is interesting to extend the subtle procedure for inducing exploration in \citep{Busa-feketeSWM17} to the current MDP setting, in order to optimize their more general surrogate functions for fairness.


\subsection{Maximum entropy exploration }\label{app:MaxEnt}
In certain applications, it is desirable to explore the latent MDP uniformly. Motivated by \citep{HazanKSS18}, we phrase the objective of enforcing uniform exploration as the maximum entropy exploration (MaxEnt) problem, which requires maximizing the entropy function on the \emph{empirical visit distribution}. The empirical visit distribution is the empirical distribution on the frequency of visiting each state. To avoid triviality, we assume $S > 1$.

We use MDPwGR to model MaxEnt in an online setting. For each $s, a$, the outcome vector $V(s, a)\in [0, 1]^\SSS$ is indexed by the state space $\SSS$, and we define $V(s, a) := \mathbf{e}_s$, which is deterministic. The average outcomes $\bar{V}_{1:T} = \sum^T_{t=1}V(s_t, a_t) / T$ represents the empirical visit distribution over $T$ time steps. Next, we define the objective. For a probability distribution $P = \{P_s\}_{s\in \SSS}\in \Delta^\SSS$, its entropy is $H(P) := - \sum_{s\in \SSS} P_s \log P_s$. By convention, we define $0\log 0 =0$. The online MaxEnt problem can then be modeled as MDPwGR, where we aim to maximize $H(\bar{V}_{1:T})$.  

The entropy function has an unbounded gradient, which hinders online optimization. Thus, we first consider a smoothed version $H_\mu$ of $H$ proposed by \citep{HazanKSS18}, where $0 < \mu \leq 1$:
\begin{equation*}
H_\mu(P) := \frac{1}{\log S}\sum_{s\in \SSS} P_s \cdot \log \frac{1}{P_s + \mu}.
\end{equation*}
By inspecting the gradient and the Hessian of $H_\mu$ (see Lemma 4.3 in \citep{HazanKSS18}), we find that $H_\mu$ is $L_\mu = (\log(1/\mu) / \log S)$-Lipchitz w.r.t. $\|\cdot\|_1$ and  is $\beta_\mu = (1 / (\mu \log S))$-smooth w.r.t. $\|\cdot\|_1$. Theorem \ref{thm:fw_brief} shows that Algorithm {\sc Toc-UCRL2} with OCO oracle $\FW$ and $Q = L_\mu$ satisfies regret bound:
\begin{align}
\text{Reg$(T)$ for MaxEnt on $H_\mu$} &= \tilde{O} \left( \sqrt{\beta_\mu L_\mu } \|\mathbf{1}_S\|^{3/2}_1 D \middle/\sqrt{T}\right)  +  \tilde{O} \left( L_\mu \| \mathbf{1}_S\|_1 D\sqrt{\Gamma S A} \middle/\sqrt{T} \right)\label{eq:reg_maxent_raw}\\
& = \tilde{O} \left( \sqrt{\log(1/\mu) / \mu }S^{3/2} D \middle/\sqrt{T}\right)  +  \tilde{O}\left( \log(1/\mu) D\sqrt{\Gamma S^3 A}  \middle/\sqrt{T} \right) \nonumber.
\end{align}
Nevertheless, in MaxEnt, there is no uncertainty about $v$. In addition, for certain we have $V(s, a) \in B(1, \|\cdot\|_1)$. Therefore, we can in fact implement a refined {\sc Toc-UCRL2}, where $H^v_m$ is replaced by a collection of singletons $ H^v_m(s, a) = \{\mathbf{e}_s\}$ for each episode $m$. By tracing the analysis, the $\|\mathbf{1}_S\|_1$ terms in (\ref{eq:reg_maxent_raw}) can be replaced by 1, leading to 
\begin{equation*}
\text{Refined Reg$(T)$ for MaxEnt on $H_\mu$} =  \tilde{O} \left( \sqrt{\log(1/\mu) / \mu } D \middle/\sqrt{T}\right)  +  \tilde{O}\left( \log(1 /\mu ) D\sqrt{\Gamma S A} ]\middle/\sqrt{T} \right).
\end{equation*}
If $\mu = \Omega(1/S)$, then the refined regret bound is $\tilde{O}(D\sqrt{\Gamma S A}/\sqrt{T})$, which matches the regret bound in \citep{JakschOA10} when we replace $\Gamma$ by its upper bound $S$. 

Let's go back to the objective of maximizing $H(\bar{V}_{1:T})$. Now, Lemma 4.3 in \citep{HazanKSS18} states that $| H_\mu(P) - H(P) | \leq S\mu $ for all $P\in \Delta^\SSS$. Fix $\mu(T) = 1 / (S^{2/3} T^{1/3})$. Assuming the knowledge of $T$, the execution of the refined {\sc Toc-UCRL2}, with OCO oracle $\FW$ and $Q = L_{\mu(T)}$,
on objective function $H_{\mu(T)}$ yields the following regret bound on the entropy $H$:
\begin{equation}\label{eq:reg_entropy_refined}
\text{Refined Reg$(T)$ for MaxEnt on $H$} =  \tilde{O} \left(  D S^{1/3} \middle/ T^{1/3}\right)  +  \tilde{O}\left( D\sqrt{\Gamma S A} \middle/\sqrt{T}\right).
\end{equation}
The assumption of knowing $T$ can be removed by a doubling trick that guesses $T$. In fact, we envision that an execution where we dynamically set $Q= L_{\mu(t)}$ and perform gradient update using $H_{\mu(t)}$ at time $t$ would give an anytime implementation with the same regret bound (\ref{eq:reg_entropy_refined}).

We compare the results above with \citep{HazanKSS18}, and spell out some subtle differences. \citep{HazanKSS18} focus on an offline and $\gamma$-discounted setting (where $0<\gamma<1$), which is different from our online and un-discounted setting. In addition, \citep{HazanKSS18} seek to maximize the entropy of the discounted state-visit distribution, which is not an empirical distribution, different from our online setting. 
In \citep{HazanKSS18}, they show that $\tilde{O}( S^2 A / [\epsilon^3 (1- \gamma)^2] + S^3 / \epsilon^6  )$ time steps are required to compute a policy with additive error of $\epsilon$ on their discounted objective. 

Finally, we remark that similar results for other entropy-based objectives, such as the minimization of KL-divergence or cross-entropy measure w.r.t. a fixed distribution on $\SSS$ (see Section 3.1 in \citep{HazanKSS18}), hold in our online setting. Nevertheless, the approach seems hard to be generalized to the setting by \citep{TarbouriechL19}, which makes their objective function smooth by inserting certain parameter that concerns the mixing time of any stationary policy.

\subsection{MDPwSR with knapsack constraints, a.k.a. MDPwK}\label{app:MDPwK}
We consider the following MDPwSR problem with knapsack constraints, dubbed the MDPwK problem. The MDPwK problem is a common generalization of MDPwSR \citep{JakschOA10} and the stochastic Bandits with Knapsacks (BwK), studied in  \citep{BadanidiyuruKS13,BadanidiyuruLS14,AgrawalD14,AgrawalD16, AgrawalDL16}. 

\textbf{Problem Model. }Essentially, MDPwK is the MDPwSR problem with $(K-1)$ resource constraints, where $K \geq 2$. MDPwK is defined same as MDPwGR with $K\geq 2$, apart from the following five differences:
\begin{itemize}
\item \textbf{Specialized definition of $V(s,a)$. }In MDPwK, the vectorial outcomes have a specific meaning. We have $V(s, a) = (R(s, a), C(s, a) = (C_k(s, a))^{K-1}_{k=1})\in [0, 1]^K$. The scalar random variable $R(s, a)\in [0, 1]$ is the reward for performing action $a$ at state $s$. For each $k\in \{1, \ldots, K-1\}$, the scalar random variable $C_k(s, a)\in \{0, 1\}$ is the amount of resource $k$ consumed in performing action $a$ at state $s$.
\item \textbf{Null actions $a_0$. }For each $s\in \SSS$, we assume that there is a null action $a_0\in \AAA_s$, where $R(s, a_0) = 0$ and $C(s, a_0) = \mathbf{0}_{K-1}$ with certainty. We do not pose any restriction on the associated state transition, and $p(\cdot | s, a_0)$ is a general distribution on $\SSS$. The null action ensures the feasibility of MDPwK. The agent knows $a_0$ and also knows that $V(s, a_0) = \textbf{0}_K$ for each $s\in \SSS$, but she does not know $p(\cdot | s, a_0)$ for any $s\in \SSS$. In addition, to avoid free actions that trivialize the problem, we assume that, if $r(s, a) > 0$, then $\Pr[C(s, a) = \textbf{0}_K ] =0  $.
\item \textbf{Dynamics. }The dynamics of MDPwK is the same as the dynamics of MDPwGR, with the additional notions of resource consumption and stopping time $\tau$. The agent is provided with $b T $ units of resource $k$, for each $k\in \{1\ldots, K -1 \}$, and $b\in (0, 1)$ is a constant independent of $T$.  These resources cannot be replenished during the subsequent $T$ rounds. At each time $t$, after $V_t(s_t, a_t) = (R_t(s_t, a_t), C_t(s_t, a_t))$ is revealed, the agent consumes $C_{t, k}(s_t, a_t)$ units of resource $k$.  The agent stops the process when any of the $K-1$ resource is depleted, or when time $T$ is reached. We denote this stopping time as $\tau$.
\item \textbf{Knowledge of $T$. } The agent knows the horizon $T$.
\item \textbf{Objective. } The agent's objective is to  maximize $\sum^\tau_{t=1} R_t(s_t, a_t)$, while satisfying all $K-1$ resource constraints. That is, $\sum^\tau_{t=1} C_{t, k}(s_t, a_t)\leq bT$ for all $k\in\{1, \ldots, K-1\}$.
\end{itemize}
We supplement with some remarks on the amount of initial inventory. Similar to \citep{BadanidiyuruKS18}, we normalize the resource levels to be equal across resources. 
By assuming $b\in (0, 1)$ and $b = \Theta(1)$, we are assuming the ``large volume regime''. This is a stronger assumption than the $m$-arm bandit with knapsack setting \citep{BadanidiyuruKS18, AgrawalD14}, who only require $b\in (0, 1)$ to satisfy $ b\geq  c' m \log (KT)  / T$ for some absolute constant $c'$, or \citep{AgrawalD16}, who only require $b\in (0, 1)$ to satisfy $ b\geq  c'' m  / T^{1/4}$ for some absolute constant $c''$, or \citep{ImmorlicaSSS18}
, who only require $b\in (0, 1)$ to satisfy $ b\geq  c''' m \log T / T^{1/4}$ for some absolute constant $c'''$.

By a modification of {\sc Toc-UCRL2} (see Appendix \ref{app:supp_MDPwK}), it is possible to achieve near optimality on MDPwK.
\begin{proposition}\label{prop:MDPwK}
Consider an instance of MDPwK, where the underlying MDP is communicating with diameter $D$. A modification of {\sc Toc-UCRL2}, displayed in Algorithm \ref{alg:MDPwK}, incurs a regret $\tilde{O}\left(K D \cdot T^{2/3}\right) +  \tilde{O}\left(K D \sqrt{\Gamma S A T} \right)$ with probability at least $1 - O(\delta)$.
\end{proposition}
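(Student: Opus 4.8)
The plan is to reduce MDPwK to an instance of MDPwGR with a non-smooth, penalized concave reward, and then invoke the anytime guarantee of Theorem~\ref{thm:TGD} for the oracle $\TGD$. Concretely, I would keep the outcome vector $V(s,a)=(R(s,a),C_1(s,a),\ldots,C_{K-1}(s,a))\in[0,1]^K$ and introduce the surrogate reward
\[
g(w)\;:=\;w_0\;-\;\lambda\sum_{k=1}^{K-1}\max\{0,\,w_k-b\},
\]
with penalty coefficient $\lambda=1/b=\Theta(1)$ in the large-volume regime (up to an affine normalization rescaling $g$ into $[0,1]$). This $g$ is concave and $\Theta(\sqrt{K})$-Lipschitz w.r.t.\ $\|\cdot\|_2$ (its gradient has at most $K-1$ coordinates equal to $-\lambda$), but it is non-smooth because of the ReLU kinks; hence the modification runs {\sc Toc-UCRL2} with oracle $\TGD$ together with the hard stopping rule at $\tau$. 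Since $L=\Theta(\sqrt K)$ and $\|\mathbf 1_K\|_2=\sqrt K$, the product $L\|\mathbf 1_K\|_2=\Theta(K)$ is exactly what produces the $K$ factor in the claimed bound.

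Next I would pin down the offline benchmark. Let $\bar R^\ast$ denote the optimal per-round reward of the fluid LP obtained by augmenting $(\primal_{\cal M})$ with the budget constraints $\sum_{s,a}C_k(s,a)x(s,a)\le b$ for all $k$; the MDPwK benchmark is $\text{OPT}:=T\bar R^\ast$, whose validity as an upper bound follows by an argument parallel to Theorem~\ref{thm:benchmark} (feasibility being guaranteed by the null actions $a_0$). The constrained LP optimum is feasible for $(\primal_{\cal M})$ and incurs zero penalty, so $\text{opt}(\primal_{\cal M})\ge\bar R^\ast$ for the surrogate $g$, while $\text{opt}(\primal_{\cal M})\le 1$ since $g\le w_0\le1$. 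Applying Theorem~\ref{thm:TGD} with $Q=L$ then gives, with probability $1-O(\delta)$ and simultaneously for every $t$, $g(\bar V_{1:t})\ge \text{opt}(\primal_{\cal M})-\epsilon_t$, where $t\,\epsilon_t=\tilde O(KD\,t^{2/3})+\tilde O(KD\sqrt{\Gamma SA\,t})$.

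The final step is a case analysis on the stopping time $\tau$. If $\tau=T$, no resource was exhausted, so all budget constraints hold, and using $\bar R_{1:T}\ge g(\bar V_{1:T})$ (the penalty is nonnegative) I get collected reward $T\bar R_{1:T}\ge T\,\text{opt}(\primal_{\cal M})-T\epsilon_T\ge \text{OPT}-T\epsilon_T$. If $\tau<T$, some resource $k^\ast$ reached its budget, so $\bar C_{k^\ast,1:\tau}\ge bT/\tau> b$; keeping only this (nonnegative) term of the penalty in the surrogate guarantee yields $\bar R_{1:\tau}\ge \text{opt}(\primal_{\cal M})-\epsilon_\tau+\lambda b\,(T-\tau)/\tau$. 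Multiplying by $\tau$ and comparing against $\text{OPT}\le T\,\text{opt}(\primal_{\cal M})$ bounds the regret by $(T-\tau)\big(\text{opt}(\primal_{\cal M})-\lambda b\big)+\tau\epsilon_\tau$; since $\lambda b\ge 1\ge \text{opt}(\primal_{\cal M})$, the first term is non-positive and the regret is at most $\tau\epsilon_\tau\le T\epsilon_T=\tilde O(KD\,T^{2/3})+\tilde O(KD\sqrt{\Gamma SAT})$. Both cases match the claim.

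The main obstacle I expect is precisely the reconciliation in the last paragraph: the algorithm only optimizes the \emph{soft} penalized objective, whereas MDPwK imposes a \emph{hard} budget and a forced stopping time. The tuning $\lambda\ge 1/b$ is what makes the ``lost horizon'' $T-\tau$ after early depletion be compensated by the penalty surplus accumulated on the exhausted resource, so that the $(T-\tau)$ terms cancel rather than accumulate regret. Secondary technicalities are the affine renormalization keeping $g$ valued in $[0,1]$ without spoiling the $L\|\mathbf 1_K\|_2=\Theta(K)$ scaling, and verifying that the anytime guarantee of Theorem~\ref{thm:TGD} may legitimately be instantiated at the data-dependent random time $\tau$.
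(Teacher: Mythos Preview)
Your argument is correct and yields the stated bound, but it takes a genuinely different route from the paper's proof. The paper instantiates Algorithm~\ref{alg:MDPwK} with the oracle $\TMD(F_\infty,T)$ under the $\ell_\infty$ norm and the \emph{max}-penalty surrogate $g_{\text C}(\bar r,\bar c)=\bar r-\tfrac{2}{b}\max_k(\bar c_k-b)^+$, which is $(1+2/b)$-Lipschitz w.r.t.\ $\|\cdot\|_\infty$; since $\|\mathbf 1_K\|_\infty=1$, Theorem~\ref{thm:TMD} then gives a surrogate regret with \emph{no} $K$ factor. The factor $K$ is introduced afterwards, when the paper translates the bounded resource overshoot $T\eta^{\text{free}}$ (controlled via an LP scaling inequality, Claim~\ref{claim:MDPwSR_constr}(ii), plus a Hoeffding step) into lost reward. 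By contrast you bake the $K$ factor in up front: your \emph{sum}-penalty surrogate is $\Theta(\sqrt K)$-Lipschitz in $\|\cdot\|_2$ so that $L\|\mathbf 1_K\|_2=\Theta(K)$, and Theorem~\ref{thm:TGD} directly delivers $\tilde O(KD\,T^{2/3})$. Your case split on $\tau$ with $\lambda b\ge 1\ge \text{opt}(\primal_{\cal M})$ then avoids Claim~\ref{claim:MDPwSR_constr}(ii) and the extra concentration argument entirely, which makes the reduction shorter and more transparent. The price is that your route gives no obvious handle for shaving the $K$ factor later (the paper explicitly notes this may be possible by refining the violation-to-reward conversion along the lines of \citep{AgrawalD16}), whereas the paper's decoupling of ``surrogate regret'' and ``$K$-factor conversion'' keeps that door open. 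The two technicalities you flag (affine renormalization and evaluating the anytime bound at the random $\tau$) are indeed routine: the first is handled by tracking how the rescaling multiplies both $L$ and $\text{Reg}$, and the second by a union bound over $t$, absorbed in $\tilde O(\cdot)$.
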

We envision that the factor of $K$ can be saved by following \citep{AgrawalD16}, at the expense of more technical works and making the $\TMD$ oracle more transparent in the analysis. We provide the Algorithm \ref{alg:MDPwK} and its analysis in Appendix \ref{app:supp_MDPwK}.

\section{Some Concluding Thoughts}\label{sec:conclusion}
In the manuscript, we study online Markov decision processes with global concave reward functions (MDPwGR). We reveal that the reachability assumption on the underlying MDP has a profound impact on the optimization of the global objective. In addition, we flesh out a delicate trade-off in alternating among different actions, in order to balance the vectorial outcomes while maintaining near-optimality. We propose Algorithm {\sc Toc-UCRL2}, a no-regret algorithmic framework that is able to incorporate a wide variety of gradient descent algorithms. The framework hinges on a novel gradient threshold procedure, which handles the stated trade-off by delaying gradient updates appropriately. 

We highlight some interesting research topics for further study: 

\textbf{What about Dynamic $Q$s? }A natural generalization to {\sc Toc-UCRL2} is to set the gradient threshold $Q $ dynamically, possibly depending on $\tau(m)$, the starting time index of an episode $m$. Under $\FW$, as well as $\TGD, \TMD$ with generally learning rates, we have tried setting $Q(m) = \Theta(\tau(m)^a)$ for general $a$ (both negative or positive). While regret bounds can still be established by appropriately generalizing Lemma \ref{lemma:club_heart} and the corresponding Lemma for establishing the upper bound $M(T)$, the endeavor does not bring about any improvement compared to the current regret bounds (see Theorems \ref{thm:fw_brief}, \ref{thm:TGD}, \ref{thm:TMD}) in the manuscript. Nevertheless, we do not exclude the possibility of benefiting our algorihtmic framework with dynamic $Q$s under other OCO oracles. 

\textbf{Incorporating Recent Improvements. }While we incorporate UCRL2 \citep{JakschOA10} in our algorithmic framework, we note that recent works \citep{AgrawalJ17, FruitPLO18, FruitPL18} have brought about improvements on online Markov decision processes with scalar rewards. \citep{AgrawalJ17} shaves off a $\sqrt{S}$ factor in the regret bound by \citep{JakschOA10} under the same communicating MDP assumption. \citep{FruitPLO18} derive regret bounds that improves upon the dependence on $D$ under certain mild parametric assumptions, in the weakly-communicating MDP setting. \citep{FruitPL18} provide regret bounds under an even more general assumption of non-communicating MDPs, assuming that the agent starts at a recurrent state. While we focus on handling exploration-exploitation trade-off in face of a global reward function in Markovian environments, we envision that improved regret bounds can be achieved by incorporating the mentioned recent works. The incorporation of \citep{FruitPLO18} could require some new definitions of span in our setting where the scalarization of the objective changes every time step. The incorporation of \citep{FruitPL18} could require some additional ideas on how to distinguish recurrent states from transient states while maintaining both the balance amounts outcomes and near-optimality simultaneously.
 
\textbf{Understanding the Impact of Reachability Assumptions. }We note that the delicate trade-off in alternating among different actions is mainly due to the assumption of communicating MDPs, and the trade-off is absent when we impose a stronger assumption of unichain MDPs. Since the unichain assumption is quite commonly imposed in the literature, it is interesting to see if stronger regret bounds can be obtained under the unichain assumption. We believe that unichain and communicating MDPs set the boundary on the difficulty of MDPwGR. 

Under certain additional mixing time assumptions (see \citep{Ortner18,TarbouriechL19}), we envision a no-regret primal algorithm for MDPwGR for unichain MDPs, where the agent employs a policy constructed using the optimal solution of an estimated $(\primal_{\cal M})$. We remark that, in order to compare the performance of an algorithm directly $\text{opt}(\primal_{\cal M})$ (in contrast to our method that compares with the dual of $(\primal_{\cal M})$), certain mixing time assumptions are necessary, since $x^*$, an optimal solution of $(\primal_{\cal M})$, represents a stationary distribution. Reflecting upon the instance ${\cal M}_\text{MG}$, we believe that such an approach does not work in the communicating MDP case, where $x^*$ does not correspond to a stationary distribution in general, and stationary policies could be sub-optimal (see Claim \ref{claim:cont_small_eg}).

\textbf{Regret Lower Bounds. }Another important research question is to derive regret lower bounds. It is interesting to see if it is possible to ascertain a $\Omega(T^{-1/3})$ regret lower bound for the general concave reward case, by incorporating the regret lower bounds on learning MDPs \citep{JakschOA10} with our lower bound examples for balancing vectorial outcomes for MDPwGR, see Claims \ref{claim:small_eg}, \ref{claim:cont_small_eg}, \ref{claim:eg_Q0}.  In addition, it is interesting to investigate if the dependence on $\beta$ is tight in the case of $\beta$-smooth $g$, while we are not aware of any regret lower bound for the online Frank-Wolfe algorithm that involves $\beta$. 

\textbf{Refined Regret Bounds for Specific Applications. }We have highlighted the applications of MDPwGR on multi-objective optimization, maximum entropy exploration (MaxEnt) and constrained optimization (MDPwK), and we have provided the corresponding regret bounds under suitable applications of {\sc Toc-UCRL2}. It is plausible that these regret bounds can be refined in certain settings, as highlighted for the cases of MaxEnt in Appendix \ref{app:MaxEnt} and MDPwK in Appendix \ref{app:MDPwK}. 

We would like to elaborate further on MDPwK. The dual-based approach for MDPwK in general requires estimating the optimal value of the offline benchmark problem (see $\primal_\text{C}(b)$ in Appendix \ref{app:MDPwK}). While in the BwK it is possible to estimate such optimal value within a constant factor approximation in $O(K\log T)$ rounds by playing random actions, the approximation task in the case of MDPwK is much less straightforward, as the agent cannot freely alternate among actions. We envision that the large volume regime, which requires $b = \Omega(1)$, could be relaxed if we can accomplish such an approximation. 

\textbf{Practical Implementation. }The implementation of our gradient threshold procedure is independent of $S, A, K$, but only depends on the reward function $g$. We believe that the procedure has practical values, and can be incorporated with existing heuristics on reinforcement learning algorithms assuming scalar rewards for optimizing global objectives, even when the state or action space is large.

\bibliography{mdp_cr_arxiv_v1}


\newpage
\newpage
\appendix

\startcontents[appendices]
\printcontents[appendices]{l}{1}{\section*{Appendices}\setcounter{tocdepth}{3}}

\newpage


\section{Details about the MDPwGR Problem}
The Appendix section is organized as follows. First, we motivate the offline benchmark $\text{opt}(\primal_{\cal M})$ in Sections \ref{app:offlineremark} -- \ref{app:pflemmamc}. Then, we substantiate our claims in the discussion on the challenges in MDPwGR in Section  \ref{app:pfclaimsmalleg}. Finally, in Section \ref{app:applications_model}, we supplement the discussions on the applications of the MDPwGR problem model in Section \ref{sec:applications_model} in the main text.

\subsection{A remark on Theorem \ref{thm:benchmark}, and a dual of $(\primal_{\cal M})$}\label{app:offlineremark}
\thmbenchmark*
\textbf{A remark on the additive term.} Before we prove the Theorem, we first remark that an additive term to the offline benchmark $\text{opt}(\primal_{\cal M})$ in the Theorem is necessary, in the sense that there exists communicating MDPwGR instances for which $\mathbf{E}[g(\sum^T_{t=1}V_t(s_t, a_t)/T)] > \text{opt}(\primal_{\cal M})$. Essentially, the requirement of an additive term is due to the fact that we do not incorporate the initial state $s_1$ in the offline benchmark, rather than an artifact of our analysis. 

Consider the following deterministic instance ${\cal M_{\text{det}}}$ with scalar rewards, where the underlying MDP constitutes a directed cycle. In the instance, there are $D > 1$ states $1, \ldots, D$. Each state is associated with only one action (say denoted as action $a$), so the only feasible policy is to always take action $a$. The instance forms a directed cycle of length $D$, in the sense that $p(i + 1 | i, a) =1$ for $i = 1, \ldots, D-1$, and $p(1 | D, a) = 1$. When the agent takes action $a$ at state $1$, there is a scalar reward of 1, but when she takes action $a$ at any other state, there is scalar reward of $0$. Finally, we let $g(w) = w$, which means that we are maximizing the total scalar reward. 

On the one hand, we clearly have  $\text{opt}(\primal_{\cal M_{\text{det}}}) = 1/D$. On the other hand, supposing that the agent starts at state $s_1 = 1$, her global reward at time $T = (j-1)D + 1$ is equal to $j / [(j-1)D + 1]$, which is clearly larger than $1/D$, since $D > 1$. 

\textbf{Dual Formulation of $(\primal_{\cal M})$. }We prove the Theorem by considering a dual formulation to the primal problem $(\primal_{\cal M})$ which defines the offline benchmark. To formulate a dual, we first recall the 
Sion Minimax Theorem \citep{Sion58}:
\begin{proposition}{\citep{Sion58}}\label{prop:sion}
Let sets $\mathcal{X}\subset \mathbb{R}^m, \mathcal{Y} \subset\mathbb{R}^n$ be convex and compact. Consider function $f : \mathcal{X}\times \mathcal{Y}\rightarrow \mathbb{R}$, where $f(\cdot,  y)$ is a convex function of $ x\in \mathcal{X}$ for each fixed $y \in \mathcal{Y}$, and $f( x, \cdot)$ is a concave function of $ y\in \mathcal{Y}$ for each fixed $ x\in \mathcal{X}$. The following equality holds:
$$
\min_{x \in \mathcal{X}}\max_{y \in \mathcal{Y}}f(x,  y) =\max_{ y\in \mathcal{Y}} \min_{x\in \mathcal{X}}f( x, y).
$$
\end{proposition}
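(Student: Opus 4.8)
The plan is to establish the two inequalities separately. Throughout I treat $f$ as continuous on $\mathcal{X}\times\mathcal{Y}$, which is implicit in the statement's use of $\min$ and $\max$ (lower semicontinuity in $x$ together with upper semicontinuity in $y$ would suffice). The inequality $\max_{y}\min_{x}f(x,y)\le\min_{x}\max_{y}f(x,y)$ requires no structure: for every $x_0\in\mathcal{X}$, $y_0\in\mathcal{Y}$ one has $\min_{x}f(x,y_0)\le f(x_0,y_0)\le\max_{y}f(x_0,y)$, and minimizing the right side over $x_0$ and then maximizing the left side over $y_0$ gives it. So the whole content is the reverse inequality $\min_{x}\max_{y}f(x,y)\le\max_{y}\min_{x}f(x,y)$, which I would prove by contradiction.

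Suppose the reverse inequality fails, and fix $\alpha$ strictly between the two sides, so $\max_{y}\min_{x}f<\alpha<\min_{x}\max_{y}f$. First I would reduce $\mathcal{Y}$ to finitely many points. Since $\max_{y}f(x,y)>\alpha$ for every $x$, the open sets $\{x:f(x,y)>\alpha\}$ cover the compact set $\mathcal{X}$; extracting a finite subcover yields $y_1,\dots,y_n$ with $\max_{1\le i\le n}f(x,y_i)>\alpha$ for all $x$. As $x\mapsto\max_i f(x,y_i)$ is continuous on the compact set $\mathcal{X}$, its minimum is attained and strictly above $\alpha$, so $\min_{x}\max_{i}f(x,y_i)=\alpha_1>\alpha$. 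Symmetrically, since $\min_{x}f(x,y)<\alpha$ for every $y$, the open sets $\{y:f(x,y)<\alpha\}$ cover the compact $\mathcal{Y}$, and a finite subcover gives $x_1,\dots,x_m$ with $\max_{y}\min_{1\le j\le m}f(x_j,y)=\alpha_2<\alpha$.

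The remaining step converts this finite configuration into a matrix game. Set $A=(f(x_j,y_i))$ (rows indexed by $i$, columns by $j$) and let $v$ be the value guaranteed by von Neumann's minimax theorem for finite zero-sum games, $v=\min_{p}\max_{q}q^{\top}Ap=\max_{q}\min_{p}q^{\top}Ap$, where $p$ and $q$ range over the probability simplices on the $x_j$ and the $y_i$. Here the convex-concave hypotheses do the transfer: for a mixed strategy $p$, the point $x_p:=\sum_j p_j x_j$ lies in $\mathcal{X}$ by convexity, and convexity of $f(\cdot,y_i)$ gives $f(x_p,y_i)\le(Ap)_i$, hence $\max_{q}q^{\top}Ap\ge\max_i f(x_p,y_i)\ge\alpha_1$ and therefore $v\ge\alpha_1$; dually, for $q$ the point $y_q:=\sum_i q_i y_i\in\mathcal{Y}$ and concavity of $f(x_j,\cdot)$ gives $f(x_j,y_q)\ge(q^{\top}A)_j$, hence $\min_{p}q^{\top}Ap\le\min_j f(x_j,y_q)\le\alpha_2$ and therefore $v\le\alpha_2$. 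Then $\alpha_1\le v\le\alpha_2$ contradicts $\alpha_1>\alpha>\alpha_2$, completing the proof.

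I expect the main obstacle to be the appeal to von Neumann's finite minimax theorem in the last step: to make the argument fully self-contained one must prove it, which I would do via the separating-hyperplane theorem (or equivalently linear-programming duality) applied to the compact convex set $\{Ap:p\in\Delta_m\}$ and an appropriate halfspace. This is where the genuine content lives, the reductions above being compactness bookkeeping plus the two convex-concave transfer inequalities. The secondary point needing care is the continuity/semicontinuity of $f$, which is what simultaneously guarantees attainment of the outer extrema and openness of the level sets used in the covering arguments.
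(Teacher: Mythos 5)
Your argument is correct, but there is nothing in the paper to compare it against: the paper does not prove this proposition at all --- it is imported verbatim from \citep{Sion58} (and used only to pass from $(\primal_{\cal M})$ to the dual $(\dual_{\cal M})$ in Appendix \ref{app:offlineremark}). Judged on its own merits, your proof is the classical route to the convex--concave compact minimax theorem and every step checks out: weak duality is immediate; the open covers $\{x: f(x,y)>\alpha\}$ and $\{y: f(x,y)<\alpha\}$ plus compactness correctly reduce both sides to finitely many strategies with $\alpha_1=\min_x\max_i f(x,y_i)>\alpha$ and $\alpha_2=\max_y\min_j f(x_j,y)<\alpha$; and the transfer inequalities $f(x_p,y_i)\le (Ap)_i$ (convexity in $x$) and $f(x_j,y_q)\ge (q^{\top}A)_j$ (concavity in $y$) correctly sandwich the von Neumann value as $\alpha_1\le v\le\alpha_2$, a contradiction. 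You are also right on the two caveats you flag: the paper's statement silently omits the semicontinuity needed both for attainment of the extrema and for openness of your covering sets (lsc in $x$, usc in $y$ suffices, and this is harmless for the paper's application, where $f(x,\theta)=g^*(\theta)-\theta^{\top}x$ is continuous); and the only genuine black box is von Neumann's finite minimax theorem, which is standard and provable by the separating-hyperplane argument you sketch. One contextual remark: Sion's actual theorem is stronger than what you prove --- it needs only quasiconvexity/quasiconcavity with semicontinuity and compactness of just one of the two sets, and Sion's own argument proceeds via KKM/Helly-type intersection lemmas rather than reduction to matrix games --- but your proof fully covers the convex--concave, both-sets-compact statement as quoted in the paper.
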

Next, we provide the definition of the Fenchel dual $g^*$ of $g$, which is defined on the domain $B(L, \|\cdot\|_* )$ since $g$ is $L$-Lipschitz w.r.t the norm $\|\cdot\|$. The Fenchel dual $g^*$ is defined as
$$
g^*(\theta) := \max_{w\in [0, 1]^K} g(w) + \theta^\top w.
$$
Now, $g(w) = \min_{\theta \in B(L, \|\cdot\|^*)} g^*(\theta) - \theta^\top w$, and recall that $R(\primal_{\cal M})$ denotes the feasible region of $(\primal_{\cal M})$. Note that $R(\primal_{\cal M})$  is defined by a finite set of linear constraints. By applying Proposition \ref{prop:sion}, we can rephrase the primal problem $(\primal_{\cal M})$ as follows:
\begin{align}
\text{opt}(\primal_{\cal M})  = \max_{x\in {\cal P}} g(x) & = \max_{x\in R(\primal_{\cal M} }\min_{\theta\in B(L, \|\cdot\|^*)} \left\{ g^*(\theta) - \theta^\top x\right\}\nonumber\\
& = \min_{\theta\in B(L, \|\cdot\|^*)}\max_{x\in   R(\primal_{\cal M} }\left\{ g^*(\theta) - \theta^\top x \right\} \nonumber\\
& = \min_{\theta\in B(L, \|\cdot\|^*)}\left\{ g^*(\theta) - \max_{x\in R(\primal_{\cal M} } \theta^\top x\right\}\label{eq:inner_max}
\end{align}
Note that the inner maximization problem in (\ref{eq:inner_max}) is a linear program. By taking the dual of this linear program, we formulate the dual minimization problem $(\dual_{\cal M})$, which is the dual of $(\primal_{\cal M})$:
\begin{subequations}
\begin{alignat}{2}
(\dual_{\cal M})\text{: \;} & \min_{\theta, \phi, \gamma} ~ g^*(\theta) + \phi  \nonumber\\
\text{s.t. } &\phi + \gamma(s) \geq - \theta^\top v(s, a) + \sum_{s'\in\SSS} p(s' | s,a) \gamma(s') &\quad &\forall s \in \SSS, a\in \AAA_s \label{eq:D.1}\\
               &\theta \in B(L, \|\cdot\|_*), \qquad \phi \text{ free, } \qquad \gamma(s) \text{ free } &\quad & \forall s\in \SSS \label{eq:D.2}
\end{alignat}
\end{subequations}
Our proof of Theorem \ref{thm:benchmark} requires analyzing the optimal solution to the dual problem $(\dual_{\cal M})$. The analysis hinges on the follwoing property of a feasible solution to the dual problem:
\begin{lemma}\label{lemma:mc}
Let $(\theta, \phi,\gamma)$ be a feasible solution to the dual problem $(\dual_{\cal M})$, where the underlying MDPwGR instance is communicating with diameter $D$. We have $$\max_{s, s'\in \mathcal{S}} \left\{ \gamma(s) - \gamma(s') \right\} \leq (L \|\mathbf{1}_K\|  + \phi ) D.$$ 
\end{lemma}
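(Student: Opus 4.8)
The plan is to exploit dual feasibility to turn the bias vector $\gamma$ into a supermartingale along a well-chosen Markov chain, and then apply optional stopping over a travel time controlled by the diameter. Throughout, fix a state $s^+ \in \arg\max_{s}\gamma(s)$ and a state $s^- \in \arg\min_s \gamma(s)$; if $\gamma$ is constant the claimed bound holds trivially (its left side is $0$ and, as shown below, $\phi + L\|\mathbf{1}_K\| \geq 0$), so I may assume $\gamma(s^+) > \gamma(s^-)$. Rewriting constraint (\ref{eq:D.1}) for an action $a$ gives $\gamma(s) \geq \sum_{s'} p(s'|s,a)\gamma(s') - \theta^\top v(s,a) - \phi$, and I will use this form both to control the sign of $\phi$ and to build the supermartingale.

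First I would record the auxiliary bound $\phi \geq -L\|\mathbf{1}_K\|$. Applying (\ref{eq:D.1}) at $s = s^-$ with any $a \in \AAA_{s^-}$ and using $\gamma(s') \geq \gamma(s^-)$ for every $s'$ (so that $\sum_{s'}p(s'|s^-,a)\gamma(s') \geq \gamma(s^-)$) yields $\phi \geq -\theta^\top v(s^-,a)$. Since $\|\theta\|_* \leq L$ and $v(s^-,a) \in [0,1]^K$ gives $\|v(s^-,a)\| \leq \|\mathbf{1}_K\|$, the definition of the dual norm bounds $|\theta^\top v(s^-,a)| \leq \|\theta\|_*\|v(s^-,a)\| \leq L\|\mathbf{1}_K\|$, hence $\phi + L\|\mathbf{1}_K\| \geq 0$.

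Next, by Assumption \ref{ass:communicating} I would choose a stationary policy $\pi$ attaining $\EEE[\Lambda(s^+|\pi, s^-)] \leq D$, and run the Markov chain $s_1 = s^-$, $s_{t+1} \sim p(\cdot|s_t, \pi(s_t))$, stopped at $\tau := \min\{t : s_t = s^+\}$, so that $\tau - 1 = \Lambda(s^+|\pi, s^-)$ and $\EEE[\tau] \leq D+1 < \infty$. Define $M_t := \gamma(s_t) - \sum_{q=1}^{t-1}(\phi + \theta^\top v(s_q, \pi(s_q)))$. The rewritten constraint applied with $a = \pi(s_t)$ shows $\EEE[\gamma(s_{t+1}) \mid \FFF_t] \leq \gamma(s_t) + \phi + \theta^\top v(s_t, \pi(s_t))$, from which a direct computation gives $\EEE[M_{t+1}\mid \FFF_t] \leq M_t$, so $(M_t)$ is a supermartingale. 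Since $\gamma$ is bounded (finite $\SSS$) and the per-step increments of $M_t$ are uniformly bounded while $\EEE[\tau] < \infty$, optional stopping (applied to $\tau \wedge n$ and passed to the limit by dominated convergence, the dominating variable being a constant multiple of $\tau$) gives $\EEE[M_\tau] \leq M_1 = \gamma(s^-)$.

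Finally I would unpack $\EEE[M_\tau] \leq \gamma(s^-)$: because $s_\tau = s^+$ and $\gamma(s^+)$ is deterministic, this reads $\gamma(s^+) - \gamma(s^-) \leq \EEE\big[\sum_{q=1}^{\tau-1}(\phi + \theta^\top v(s_q, \pi(s_q)))\big]$. Bounding each summand termwise by $\phi + \theta^\top v(s_q,\pi(s_q)) \leq \phi + L\|\mathbf{1}_K\|$, factoring out the constant, and using $\tau - 1 = \Lambda(s^+|\pi, s^-)$ with $\EEE[\Lambda] \leq D$ together with the sign bound $\phi + L\|\mathbf{1}_K\| \geq 0$ yields $\gamma(s^+) - \gamma(s^-) \leq (\phi + L\|\mathbf{1}_K\|)\EEE[\Lambda] \leq (L\|\mathbf{1}_K\| + \phi)D$, which is the claim. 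The main obstacle is handling the two sign/direction issues simultaneously: one must travel from the $\arg\min$ state to the $\arg\max$ state (not the reverse) so that the supermartingale inequality points the correct way, and one must separately establish $\phi + L\|\mathbf{1}_K\| \geq 0$ so that replacing $\EEE[\Lambda]$ by its upper bound $D$ is legitimate; by contrast, the optional-stopping integrability check is routine given $\EEE[\tau] \le D+1$.
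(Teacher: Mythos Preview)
Your proof is correct and follows essentially the same idea as the paper's: run the Markov chain under a stationary policy that reaches the target state in expected time at most $D$, and accumulate the dual constraints~(\ref{eq:D.1}) along the trajectory. The paper carries this out by an explicit induction on $t$ that unrolls the recursion $\gamma(s) \geq (r^\pi(s) - \phi) + \sum_{s'} p^\pi(s'|s)\gamma(s')$ and then passes to the limit, whereas you package the same computation as a supermartingale plus optional stopping; these are two phrasings of the same argument. One point worth noting: you are careful to establish $\phi + L\|\mathbf{1}_K\| \geq 0$ before replacing $\EEE[\Lambda]$ by its upper bound $D$, while the paper's final limit step uses this sign fact implicitly without stating it.
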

We delay the proof of Lemma \ref{lemma:mc} to Section \ref{app:pflemmamc}.
\subsection{Proof of Theorem \ref{thm:benchmark}}\label{app:pfthmbenchmark}
Let $(\theta^*, \phi^*, \gamma^*)$ be an optimal solution to the dual problem $(\dual_{\cal M})$. By our formulation, we know that 
$$
\text{opt}(\primal_{\cal M}) = \text{opt}(\dual_{\cal M}) = g^*(\theta^*) + \phi^*.
$$
Now, consider an arbitrary non-anticipatory policy, and denote the (random) sequence of states and actions across the first $T$ time steps as $\{(s_t, a_t)\}^T_{t=1}$. The expected reward under the policy in the first $T$ time steps can be bounded as:
\begin{align}
&\mathbb{E}\left[g\left(\frac{1}{T}\sum^T_{t=1}V_t(s_t, a_t)\right)\right] \nonumber\\
= &\mathbb{E}\left[\min_{\theta\in B(L, \|\cdot\|_*)} g^*( \theta ) - \theta^\top  \left(\frac{1}{T}\sum^T_{t=1}V_t(s_t, a_t)\right) \right]\nonumber\\
\leq &g^*(\theta^*) + (- \theta^*)^\top \mathbb{E}\left[ \frac{1}{T}\sum^T_{t=1}V_t(s_t, a_t) \right]\nonumber\\
= &g^*( \theta^*) + (- \theta^*)^\top \mathbb{E}\left[ \frac{1}{T}\sum^T_{t=1} \mathbb{E}\left[ V_t(s_t, a_t) \mid (s_t, a_t) \right] \right]\nonumber\\
= & g^*(\theta^*) + (- \theta^*)^\top \mathbb{E}\left[ \frac{1}{T}\sum^T_{t=1} v(s_t, a_t) \right]\nonumber\\
\leq & g^*( \theta^*) + \frac{1}{T}\sum^T_{t=1}\mathbb{E}\left[\phi^* + \gamma^*(s_t) - \sum_{s\in \SSS}\gamma^*(s)p(s | s_t, a_t)\right]\nonumber\\
= & g^*(\theta^*) + \phi^* + \frac{1}{T}\mathbb{E}\left[\gamma^*(s_1) - \sum_{s\in \SSS}\gamma^*(s)p(s | s_T, a_T ) + \sum^{T-1}_{t=1}\left[\gamma^*(s_{t+1}) - \sum_{s\in \SSS}\gamma(s)p(s | s_t, a_t ) \right]\right] \nonumber\\
= & g^*( \theta^*) + \phi^* + \frac{1}{T}\mathbb{E}\left[ \gamma^*(s_1) - \sum_{s \in \SSS}\gamma^*(s)p(s | s_T, a_T )\right]\nonumber\\
\leq & g^*( \theta^*) + \phi^* + \frac{(L\|\mathbf{1}_K\| + \phi^* )D}{T}= \text{opt}(\primal_{\cal M}) + \frac{(L\|\mathbf{1}_K\| + \phi^* )D}{T}.\label{eq:last_ineq}
\end{align}
The last step (\ref{eq:last_ineq}) uses Lemma \ref{lemma:mc}. Finally, by our application of linear duality in formulating $(\dual_{\cal M})$,  $$\phi^* = \max_{x\in R(\primal_{\cal M})}\left\{\sum_{s, a}(-\theta^*)^\top v(s, a) x(s, a)\right\} \leq \max_{s, a} \left\{(-\theta^*)^\top v(s, a)\right\} \leq L\|\mathbf{1}_K\|.$$  Applying this upper bound on $\phi^*$, the Theorem is proved.$\hfill \Box$

\subsection{Proof of Lemma \ref{lemma:mc}}\label{app:pflemmamc}
Let's fix a pair of states $\bar{s}, \bar{s}'$. For a stationary $\pi$, recall the random variable $$\Lambda(s' | \pi, s) := \min\left\{ t : s_{t+1} = s' , s_1 = s, s_{\tau + 1} \sim p(\cdot | s_\tau, \pi(s_\tau)) \text{ $\forall\tau$}\right\}.$$ 
Now, let stationary policy $\pi$ satisfy $\mathbb{E}[\Lambda(\bar{s}' | \pi, \bar{s})]\leq D$. In the following, we first define some notations for the stationary policy $\pi$, and then proceed with an induction argument that proves the Lemma. 

For each state $s\in \SSS$, the stationary policy $\pi$ defines a probability distribution $\{ q^\pi(a | s)\}_{a\in \AAA_s} \in \Delta^{\AAA_s}$ over the action set $\AAA_s$, where $q^\pi(a | s )$ is the probability of taking action $a$ when the agent is at state $s$ and employs policy $\pi$. The stationary policy $\pi$ gives rise to a Markov Chain $\{s_t\}^\infty_{t= 1}$ with initial state $s_1 = \bar{s}$, time homogeneous reward $r^\pi$ and transition probability $p^\pi$:
\begin{align}
r^\pi(s ) & := \sum_{a\in \AAA_s  } (-\theta)^\top v(s, a) q^\pi(a | s)\qquad \text{ for each $s \in \SSS$},\nonumber\\
p^\pi(s' | s) & := \sum_{a\in \AAA_s  }  p(s' | s, a) q^\pi(a | s)\qquad \text{ for each $s, s' \in \SSS$}\nonumber.
\end{align}
Note that $r^\pi(s)\geq  -r_{\text{max}}$ for each $s\in \SSS$. Now, for each $s$, we take linear combination over the inequalities (\ref{eq:D.1}) indexed by $\{(s, a)\}_{a\in \AAA_s}$ with the probabilities $\{ q^\pi(a | s)\}_{a\in \AAA_s}$, and arrive at the inequality 
$$
\gamma(s) \geq (r^\pi(s) - \phi) + \sum_{s'\in \mathcal{S}} \gamma(s') p^\pi(s' | s) .
$$ 

After setting up the notations, we proceed to an induction argument for proving the Lemma. We claim that, for any positive integer $t$, the following inequality holds true:
\begin{align}
\gamma(\bar{s}) &\geq -(r_\text{max} + \phi)\sum^{t}_{\tau=1}\Pr[ \Lambda(\bar{s}' | \pi, \bar{s}) \geq \tau] + \gamma(\bar{s}')\sum^t_{\tau=1}\Pr[\Lambda(\bar{s}' | \pi, \bar{s}) = \tau] \nonumber\\
&\qquad \qquad \qquad \qquad \qquad \qquad \qquad + \sum_{s\in \SSS\setminus\{\bar{s}'\}}\gamma(s)\Pr[\Lambda(\bar{s}' | \pi, \bar{s}) > t, s_t = s] \label{eq:lemma_markov_ind_claim} .
\end{align}
We prove the inequality (\ref{eq:lemma_markov_ind_claim}) by an induction on $t$. Now, for $t=1$, we clearly have
\begin{align}
\gamma(\bar{s}) &\geq -(r_\text{max} + \phi) + \sum_{s\in \SSS}p^\pi(s | \bar{s}) \gamma(s)\nonumber\\
& = -(r_\text{max} + \phi) + \gamma(\bar{s}')p^\pi(\bar{s}' | \bar{s}) + \sum_{s\in\mathcal{S}\setminus \{\bar{s}'\}} \gamma(s) p^\pi(s | \bar{s}) \nonumber\\
& = -(r_\text{max} + \phi)\Pr\left[\Lambda(\bar{s}' | \pi, \bar{s}) \geq 1\right] + \gamma(\bar{s}')\Pr\left[\Lambda(\bar{s}' | \pi, \bar{s}) = 1\right] \nonumber\\
&\qquad \qquad \qquad \qquad \qquad \qquad \qquad + \sum_{s\in\mathcal{S}\setminus \{\bar{s}'\}}\gamma(s) \Pr\left[\Lambda(\bar{s}' | \pi, \bar{s}) > 1, s_1 = s\right]\nonumber,
\end{align}
proving the case when $t=1$. Now, assuming that the inequality (\ref{eq:lemma_markov_ind_claim}) is true for $t$, we prove the inequality for the case with $t+1$ by expanding the last term in (\ref{eq:lemma_markov_ind_claim}):
\begin{align}
&\sum_{s\in \SSS\setminus\{\bar{s}'\}}\gamma(s)\Pr[\Lambda(\bar{s}' | \pi, \bar{s}) > t, s_t = s]\nonumber\\
\geq & \sum_{s\in \SSS\setminus\{\bar{s}'\}}\left[(r^\pi(s) - \phi) + \sum_{s'\in \SSS}p^\pi(s' | s)\gamma(s')\right]\cdot\Pr[\Lambda(\bar{s}' | \pi, \bar{s}) > t, s_t = s]\nonumber\\
\geq & -(r_\text{max} + \phi)\sum_{s\in \SSS\setminus\{\bar{s}'\}}\Pr[\Lambda(\bar{s}' | \pi, \bar{s}) > t, s_t = s] \nonumber\\
&\qquad + \sum_{s' \in \SSS} \gamma(s') \sum_{s\in \SSS\setminus\{\bar{s}'\}}  \Pr[\Lambda(\bar{s}' | \pi, \bar{s}) > t, s_t= s]p^\pi(s' | s)\nonumber\\
=& -(r_\text{max} + \phi)\Pr\left[ \Lambda(\bar{s}' | \pi, \bar{s}) > t \right] + \gamma(\bar{s}')\sum_{s\in \SSS\setminus\{\bar{s}'\}}  \Pr[\Lambda(\bar{s}' | \pi, \bar{s}) > t, s_t = s]p^\pi(\bar{s}' | s)\nonumber\\
&\qquad + \sum_{s' \in \SSS\setminus\{\bar{s}'\}} \gamma(s') \sum_{s\in \SSS\setminus\{\bar{s}'\}}  \Pr[\Lambda(\bar{s}' | \pi, \bar{s}) > t, s_t= s]p^\pi(s' | s)\nonumber\\
= & -(r_\text{max} + \phi)\Pr\left[ \Lambda(\bar{s}' | \pi, \bar{s}) \geq t+1 \right] + \gamma(\bar{s}') \Pr\left[\Lambda(\bar{s}' | \pi, \bar{s}) = t+1\right] \nonumber\\
&\qquad + \sum_{s' \in \SSS\setminus\{\bar{s}'\}}\gamma(s') \Pr[\Lambda(\bar{s}' | \pi, \bar{s}) > t+1, S_{t+1} = s'].\label{eq:lemma_ind_claim_step_1}
\end{align}
By applying the bound in (\ref{eq:lemma_ind_claim_step_1}) to the last term in inequality (\ref{eq:lemma_markov_ind_claim}), the induction argument is established. Thus, the inequality (\ref{eq:lemma_markov_ind_claim}) is true for all $t\in \mathbb{N}$. 

Finally, we prove the required inequality in the Lemma by tending $t$ to the infinity, which gives
\begin{align}
\gamma(\bar{s}) &\geq \underset{t\rightarrow \infty}{\text{liminf}} \left\{ -(r_\text{max} + \phi)\sum^{t}_{\tau=1}\Pr[ \Lambda(\bar{s}' | \pi, \bar{s}) \geq \tau] + \gamma(\bar{s}') \sum^t_{\tau=1}\Pr[\Lambda(\bar{s}' | \pi, \bar{s}) = \tau]\right. \nonumber\\
&\qquad \qquad \qquad \qquad \qquad \qquad \qquad\left. + \sum_{s\in \SSS\setminus\{\bar{s}'\}}\gamma(s)\Pr[\Lambda(\bar{s}' | \pi, \bar{s}) > t, s_t = s]\right\} \label{eq:lemma_ind_claim_step_2} .
\end{align}
Thanks to Assumption \ref{ass:communicating}, the limit in fact exists, so we can replace the ``liminf'' by ``lim''. Indeed, we have $\sum^\infty_{\tau=1}\Pr[\Lambda(\bar{s}' | \pi, \bar{s}) = \tau] = \mathbf{E}[\Lambda(\bar{s}' | \pi, \bar{s})]\leq D$ by Assumption \ref{ass:communicating} and the definition of policy $\pi$. In addition, the Assumption implies that $\lim_{t\rightarrow \infty}\sum^t_{\tau=1}\Pr[\Lambda(\bar{s}' | \pi, \bar{s}) = \tau] = 1$, and equivalently $\Pr[\Lambda(\bar{s}' | \pi, \bar{s}) =\infty] = 0$. By putting these three limits into (\ref{eq:lemma_ind_claim_step_2}), we arrive at
\begin{equation*}
\gamma(\bar{s}) \geq -(r_\text{max} + \phi) D + \gamma(\bar{s}').
\end{equation*}
Finally, since the states $\bar{s}, \bar{s}'$ are arbitrary, the Lemma is proved. $\hfill \Box$

\subsection{Proof of Claim \ref{claim:small_eg}, and remarks on reachabiltiy assumptions}\label{app:pfclaimsmalleg}
In this Appendix section, we first prove Claim \ref{claim:small_eg}, and then elaborate on some profound implications of the underlying reachability assumption on the structure of the optimal policies. 
\claimsmalleg*
\begin{proof}
Consider the instance ${\cal M}_\text{c}$ under which instance ${\cal M}_\text{MG} = {\cal M}_\text{star}$ is a communicating star graph with $K \geq 2$ branches and diameter $D\geq 2$, where $D$ is even. The states $s^1, \ldots, s^K$ are the leaves nodes in the star graph, and it takes $D$ time steps to travel from a state $s^k$ to another $s^{k'} \neq s^k$. 
More precisely, the state space of ${\cal M}_\text{star}$ is $S_\text{star} = \{s_\text{c}\} \cup \bigcup^K_{k=1} \{s^k_1, s^k_2, \ldots, s^k_{D / 2 -1}\} \cup \bigcup^K_{k=1} \{s^k\}$. For convenient sake, we identify $s^k$ with $s^k_{D / 2}$ . Note that $s^k_{D / 2 - 1}$ is $s'^k$. 
For two distinct states $s, s'$, we say that there are two-way arcs between $s, s'$ if there is an arc from $s$ to $s'$ and an arc from $s'$ to $s'$. In instance ${\cal M}_\text{star}$, there are two way arcs between $s_\text{c}$, $s^k_1$ for each $1\leq k\leq K$, as well as between $s^k_d, s^k_{d+1}$ for each $1\leq k\leq K$, $1\leq d\leq D/2 - 1$. Lastly, as defined in Section \ref{sec:alg}, there is a self loop $ \CircArrowRight{}^k$ with each $s^k$.

For each $k$, we clearly have $V_{1:T, k} = \frac{1}{T}\sum^T_{t=1}\mathsf{1}( a_t =  \CircArrowRight{}^k )$. Consider the quantity $N_{\not \CircArrowRight{}} = \sum^T_{s=1}\mathsf{1}( a_t \not\in\{ \CircArrowRight{}^k\}^K_{k=1}) = T (1 - \sum^K_{k=1} V_{1:T, k})$. Clearly, $N_{\not \CircArrowRight{}} \geq N_\text{alt}$. 
If $N_\text{alt} \geq T^\alpha$, then we also have $N_{\not \CircArrowRight{}}\geq T^\alpha$. By triangle inequality, we have
$$
g(\bar{V}_{1:T}) \leq 1 - \frac{1}{2}\left|  \sum^K_{k=1} \left( V_{1:T, k} - \frac{1}{K}\right) \right| = 1 - \frac{N_{\not \CircArrowRight{}} }{2T}  \leq 1 - \frac{1}{2 \cdot T^{1-\alpha}}.
$$
Since $\text{opt}({\cal M}_{\text{MG}}) = 1$, we have $\text{Reg}(T) = \Omega(1 / T^{1-\alpha})$ as required. 

Now, suppose the otherwise, that is, $N_\text{alt} < T^\alpha$. If it is still the case that $N_{\not \CircArrowRight{}}\geq T^\alpha$, then we have $\text{Reg}(T) = \Omega(1 / T^{1-\alpha})$ as before. In the remaining proof, we assume that $N_\text{alt}, N_{\not \CircArrowRight{}} < T^\alpha$. In fact, these conditions imply a stronger bound $N_\text{alt} < T^\alpha / D$, since an alternation between distinct $ \CircArrowRight{}^k, \CircArrowRight{}^{k'}$ requires visiting $D$ arcs (which are equivalent to actions) incident to ${\cal M}_\text{c}$ but not in $\{  \CircArrowRight{}^k\}^K_{k=1}$. 

From now on, suppose $T$ is so large that $T / 2 > D\max\{T^\alpha, T^{1-\alpha}\}$. From time step $T/2$ to $T$, the agent visits $\{  \CircArrowRight{}^k \}^K_{k=1}$ at least $ T/2 - T^\alpha > 0$ times, but she only alternate among $\{  \CircArrowRight{}^k\}^K_{k=1}$ at most $T^\alpha / D$ times. By the pigeonhole principle, for some $k$ the agent visits $ \CircArrowRight{}^k $ for at least $ (T/2 - T^\alpha ) / (T^\alpha / D) = D(T^{1 - \alpha} / 2 - 1) $ consecutive time steps, let say starting from $\tau \in\{T/2 , \ldots, T - D(T^{1 - \alpha} / 2 - 1)\}$, which is non-empty since $T$ is sufficiently large. In particular, she continuously visits $ \CircArrowRight{}^k $ during time steps $\{\tau, \tau + 1, \ldots, \tau' = \tau + D \tau^{1 - \alpha}/2\}$ (Note that $\tau'\leq T$, since $D \tau^{1 - \alpha}/2 \leq \tau \leq T/2$).  We claim 
\begin{equation}\label{eq:claim_small_eg}
\max\left\{ \left| V_{1:\tau, k} - \frac{1}{K}  \right|,  \left| V_{1:\tau', k} - \frac{1}{K}  \right|  \right\}\geq\frac{D}{8\tau^\alpha}. 
\end{equation}
The claim inequality (\ref{eq:claim_small_eg}) immediately implies that $\max\{\text{Reg}(\tau), \text{Reg}(\tau')\} = \Omega(D / \tau^\alpha)$, which proves the second part of the Claim since $\tau' = \Theta(\tau)$. To prove (\ref{eq:claim_small_eg}), first denote $V_{1:\tau, k} = \frac{\bar{N}}{\tau}$. Then we know that $V_{1:\tau', k} = \frac{\bar{N}+ D \tau^{1 - \alpha}/2 }{\tau+ D \tau^{1 - \alpha}/2 }$. Consequently, 
\begin{align}
\left| V_{1:\tau, k} - \frac{1}{K}  \right| +  \left| V_{1:\tau', k} - \frac{1}{K}  \right| \geq V_{1:\tau', k} - V_{1:\tau, k} = \frac{\bar{N}+ D \tau^{1 - \alpha}/2 }{\tau+ D \tau^{1 - \alpha}/2 } - \frac{\bar{N}}{\tau} \geq \frac{D}{4\tau^\alpha}\nonumber,
\end{align}
hence (\ref{eq:claim_small_eg}) is proved and the Claim is established.
\end{proof}

\textbf{Unichain vs Communicating MDPs. } The underlying reachability assumption has a profound impact on the difficulty of MDPwGR. \citep{Altman99} shows that certain mutli-objective MDP can be asymptotically optimized by a randomized stationary policy under the unichain assumption.  This result is inapplicable for ${\cal M}_{\text{MG}}$, which only satisfies the communicating assumption but not the unichain assumption. In fact, MDPwGR under the communicating assumption should be optimized by a non-stationary policy:
\claimcontsmalleg*
From the reachability assumption perspective, MDPwGR is different from MDPwSR, where a deterministic stationary policy achieves $O(D / T)$ anytime regret under the unichain or the communicating assumption. In fact, it is also different various bandit settings with global rewards and global constraints \citep{BadanidiyuruKS13, AgrawalD14, AgrawalDL16, AgrawalD16}. In each of these settings, an $O(1/\sqrt{T})$ anytime regret (hiding the dependence on other model parameters) can be achieved by a randomized stationary policy.

\begin{proof}[Proof of Claim \ref{claim:cont_small_eg}]
Consider the star instance ${\cal M}_{\text{star}}$ defined in the proof of Claim \ref{claim:small_eg}, with $K\geq 2$ and $D\geq 2$.  
A stationary policy, be it deterministic or randomized, induces a time homogeneous Markov chain $(S_\text{star}, \mathfrak{p})$ on ${\cal M}_{\text{star}}$, where $\mathfrak{p}(s' | s)$ is the probability of transiting from $s\in S_\text{star}$ to $s'\in S_\text{star}$ in the Markov chain. Every state transition occurs along an arc in ${\cal M}_{\text{star}}$. 

We prove the claim by inspecting $(S_\text{star}, \mathfrak{p})$ under different cases on $\mathfrak{p}$. If $\mathfrak{p}(s^k | s^k) = 0$ for some $k\in \{1, 2\}$, then clearly $V_{1:t, k} =0$ for every $t$, leading to $\text{Reg}(T) = \Omega(1)$ for all $T$. Else, suppose that $\mathfrak{p}(s^k_1 | s_\text{c}) = 0$ or $\mathfrak{p}(s^k_{d} | s^k_{d-1}) = 0$ for some $1\leq d\leq D / 2$ and some $k$. This means that the agent cannot reach $s^k$ from $s_\text{c}$, which still leads to $\text{Reg}(T) = \Omega(1)$ for all $T$. Else, suppose that $\mathfrak{p}(s_\text{c} | s^k_1 ) = 0$ or $\mathfrak{p}(s^k_{d-1} | s^k_{d}) = 0$ for some $1\leq d\leq D / 2$ and some $k$. This means that the agent cannot leave the branch containing state $s^k$ once she reaches $s^k$. Therefore, under the stationary policy, she either never visits $s^k$, or she does visit $s^k$, but she will not be able to visit  $\{s^{k'}\}_{k' = k}$ forever. This means that $\text{Reg}(T) = \Omega(1)$ for sufficiently large $T$.

The remaining case is when $\mathfrak{p}(s | s') >0$ for all arcs from $s$ to $s'$ in ${\cal M}_{\text{star}}$. In this case, all states in $S_\text{star}$ forms a single recurrent class. By either the Perron-Frobenius Theorem or Theorem 1.7.5 in \citep{Norris98}, the stationary distribution $\{\lambda_s\}_{s\in \SSS_{\text{star}}}$ is entry-wise positive, and in particular $\lambda_{s_\text{c}} > 0$. This implies that $\lim_{T\rightarrow \infty }\mathbb{E}[\sum^T_{t=1}\mathsf{1}(a_t =  a_\text{c})] / T > 0$ for some $a_\text{c} \in \AAA_{s_\text{c}}$, and further implying that $\lim_{T \rightarrow \infty} \mathbb{E} [ \sum^K_{k=1} |  \frac{1}{T}\sum^T_{t=1}  \mathsf{1}(a_t =  \CircArrowRight{}^k ) - \frac{1}{K} | ] < 1$. These time averages exist since the Markov chain $(S_\text{star}, \mathfrak{p})$ is recurrent and aperiodic. As a result, we have $\lim_{T\rightarrow \infty } g(\bar{V}_{1:T}) < 1 = \text{opt}(\primal_{\cal M})$, which means that $\text{Reg}(T) = \Omega(1)$ for sufficiently large $T$.
\end{proof}

\subsection{Supplement to the Applications of MDPwGR}\label{app:applications_model}
We provide supplementary details on the discussions on the fairness objectives and MDPwK.
\subsubsection{Supplements to the discussions on Fairness Objectives}\label{app:fairness}
We demonstrate that function $g_{(\kappa)}$ is $\kappa$-Lipschitz w.r.t. $\|\cdot\| = \|\cdot\|_\infty$. Indeed, for any $u, w\in [0, 1]^K$, we know that
\begin{align}
g_{(\kappa)}(w) - g_{(\kappa)}(u) &= \sum^\kappa_{k=1}w_{\pi_w(k)} - \sum^\kappa_{k=1}u_{\pi_u(k)} = \sum^\kappa_{k=1} [ w_{\pi_u(k)} - u_{\pi_u(k)} ] + \sum^\kappa_{k=1} [w_{\pi_w(k)} -  w_{\pi_u (k)}]\nonumber\\
&\leq \sum^\kappa_{k=1} [ w_{\pi_u(k)} - u_{\pi_u(k)} ]\label{eq:moo_by_def_of_pi_u} \leq \kappa \|w - u\|_\infty.
\end{align}
Step (\ref{eq:moo_by_def_of_pi_u}) is by the definition of $\pi_w$, which implies $\sum^\kappa_{k=1}w_{\pi_w(k)} = \min_{I\subseteq \{1\ldots, K\} : |I| = \kappa}  \sum_{k\in I}w_k$. By the symmetry between $u, w$, the claim is established.

\subsubsection{Supplements to the discussions on MDPwK}\label{app:supp_MDPwK}
\textbf{Algorithm.} We solve the problem by implementing {\sc Toc-UCRL2} with the Tuned Mirror Descent oracle, namely oracle $\TMD(F_\infty, T)$ on a surrogate function $g_\text{C}$, see Algorithm \ref{alg:MDPwK}. Algorithm \ref{alg:MDPwK} involves interacting with Algorithm {\sc Toc-UCRL2}. Essentially, at the start of time $t$ in Algorithm \ref{alg:MDPwK}, it first run {\sc Toc-UCRL2}, which is also at the start of time $t$, until {\sc Toc-UCRL2} recommends action $a_t$ (Line \ref{alg:oco-ucrl2-action} in Algorithm \ref{alg:oco-ucrl2}). It is instructive to note that, for {\sc Toc-UCRL2}, time 1 start from Line \ref{alg:oco-ucrl2-tau}. For $t > 1$, {\sc Toc-UCRL2} starts from checking the \textbf{while} condition in Line \ref{alg:oco-ucrl2-while} in Algorithm \ref{alg:oco-ucrl2}. If the \textbf{while} condition is violated, {\sc Toc-UCRL2} runs Lines \ref{alg:oco-ucrl2-tau}--\ref{alg:oco-ucrl2-initial} , only then it runs Line \ref{alg:oco-ucrl2-action} and recommends an action. Otherwise, {\sc Toc-UCRL2} jumps straight to Line \ref{alg:oco-ucrl2-action}. After {\sc Toc-UCRL2} recommends action $a_t$, {\sc Toc-UCRL2} completes running for time step $t$ by running Lines \ref{alg:oco-ucrl2-receive}--\ref{alg:oco-ucrl2-end}
\begin{algorithm}[t]
\caption{Solving MDPwK by {\sc Toc-UCRL2}}\label{alg:MDPwK}
\begin{algorithmic}[1]
\State Initialize resource level $I_k = bT$ for each $k\in \{1, \ldots, K - 1\}$, $\tau = 0$.\;
\State Denote $\mathsf{A}$ as Algorithm {\sc Toc-UCRL2} with OCO oracle $\TMD(F_\infty, T)$ and $Q = (1 + 2/b)$ and objective function $g_\text{C}$.  \;
\Comment We continuously seek action recommendation from $\mathsf{A}$ and update $\mathsf{A}$.\;
\State \textbf{for} $t = 1\ldots, T$ \textbf{do}
\State \hspace{0.5cm} \textbf{if} $I_k \geq 0$ for all $k\in \{1, \ldots, K - 1\}$ \textbf{do}
\State \hspace{0.5cm} \hspace{0.5cm} Run $\mathsf{A}$, which is at the start of time $t$, until $\mathsf{A}$ recommends an action $a_t$. 
\State \hspace{0.5cm} \hspace{0.5cm} Perform action $a_t$. Observe $V_t(s_t, a_t) = (R_t(s_t, a_t), C_t(s_t, a_t))$ and the next state $s_{t+1}$.
\State \hspace{0.5cm} \hspace{0.5cm} Feedback $V_t(s_t, a_t), s_{t+1}$ to $\mathsf{A}$, and run $\mathsf{A}$ until its end of its time $t$.
\State \hspace{0.5cm} \hspace{0.5cm} Update $I_k \leftarrow I_k - C_{t, k}(s_t, a_t)$ for each $k\in \{1, \ldots, K - 1\}. $ \Comment{Update inventory levels}
\State \hspace{0.5cm} \hspace{0.5cm} Update $\tau \leftarrow \tau + 1$.
\State \hspace{0.5cm} \textbf{else}
\State \hspace{0.5cm} \hspace{0.5cm} Perform null action $a_0$.
\end{algorithmic}
\end{algorithm}

The mirror map $F_\infty$, defined in (\ref{eq:entropy}) in Appendix \ref{app:tmd_background}, is designed for Lipschitz continuous function w.r.t.  norm $\|\cdot\|_\infty$. The surrogate function $g_{\text{C}}$ takes input $(\bar{r}, \bar{c})\in [0, 1] \times [0, 1]^{K-1}$, and is defined as
$$g_{\text{C}}(\bar{r}, \bar{c} ) := \bar{r} - \frac{2}{b}\max_{k\in \{1, \ldots,  K  -1\} } \left\{ ( \bar{c}_k - b )^+  \right\},$$
where $w^+ := \max\{0, w\}$. The function $g_{\text{C}}$ penalizes the agent when any of the $K$ resource constraints is violated. The surrogate function $g_\text{C}$ is close in spirit to \citep{AgrawalD14,AgrawalDL16}. 

\textbf{Analysis. } We consider the following benchmark for MDPwK:
\begin{subequations}
\begin{alignat}{2}
(\primal_{\text{C}}(b))\text{: \;} & \max_x ~ \sum_{s\in \SSS, a\in \AAA_s} r(s, a) x(s, a) & \nonumber\\
\text{s.t. } & \sum_{a\in \AAA_s}c_k(s, a) x(s,a) \leq  b &\quad &\forall k\in \{1, \ldots, K\} \nonumber\\
				&\sum_{a \in \AAA_s} x(s, a) = \sum_{s'\in\SSS, a'\in \AAA_{s'}} p(s | s',a') x(s', a')&\quad &\forall s \in \SSS \nonumber\\
               &\sum_{s\in \SSS, a\in \AAA_s}x(s, a) = 1       &\quad & \nonumber\\
               &x(s, a)\geq 0      &\quad &\forall s\in \SSS, a\in \AAA_s \nonumber
\end{alignat}
\end{subequations}
\begin{claim}\label{claim:MDPwSR_constr}
Consider an instance of MDPwK, where the underlying MDP is communicating with diameter $D$.  
For any $b\geq 0$, the linear program $(\primal_{\text{C}}(b))$ has a non-empty feasible region. In addition, (i) for any non-anticipatory policy that satisfies the resource constraints, it holds that $$\mathbb{E}\left[\sum^\tau_{t=1}R_t(s_t, a_t) \right] \leq T\cdot \text{opt}(\primal_{\text{C}}(b)) + 2D .$$ (ii) For any $\eta > 0$, we have $$\frac{b}{b + \eta} \text{opt}(\primal_{\text{C}}(b + \eta)) \leq  \text{opt}(\primal_{\text{C}}(b)).$$
\end{claim}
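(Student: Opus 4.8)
All three assertions rest on the null action $a_0$, which at every state yields zero reward and zero resource consumption. For feasibility, note that the stationary policy that always plays $a_0$ induces a finite Markov chain on $\SSS$ with kernel $p(\cdot\mid s,a_0)$, which admits at least one stationary distribution $\lambda\in\Delta^\SSS$. Setting $x^0(s,a_0)=\lambda_s$ and $x^0(s,a)=0$ for $a\neq a_0$ gives a point satisfying flow balance (stationarity of $\lambda$), normalization ($\sum_s\lambda_s=1$), nonnegativity, and every resource row with slack ($\sum_{s,a}c_k(s,a)x^0(s,a)=0\le b$); hence $(\primal_{\text{C}}(b))$ is feasible for every $b\ge 0$. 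I would treat the remaining two parts in increasing order of difficulty.

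\textbf{Part (ii).} Let $x^\dagger$ be optimal for $(\primal_{\text{C}}(b+\eta))$, and set $x=\tfrac{b}{b+\eta}\,x^\dagger+\big(1-\tfrac{b}{b+\eta}\big)\,x^0$. The flow-balance rows are linear and homogeneous and the normalization row is affine, so $x$ inherits both, along with nonnegativity. Its resource consumption is $\tfrac{b}{b+\eta}\sum_{s,a}c_k(s,a)x^\dagger(s,a)\le\tfrac{b}{b+\eta}(b+\eta)=b$, so $x$ is feasible for $(\primal_{\text{C}}(b))$; and since $x^0$ carries zero reward, its objective equals $\tfrac{b}{b+\eta}\,\text{opt}(\primal_{\text{C}}(b+\eta))$. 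Feasibility of $x$ therefore gives $\text{opt}(\primal_{\text{C}}(b))\ge\tfrac{b}{b+\eta}\,\text{opt}(\primal_{\text{C}}(b+\eta))$.

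\textbf{Part (i).} This is the crux, and it parallels the proof of Theorem \ref{thm:benchmark}. I would dualize $(\primal_{\text{C}}(b))$, assigning multipliers $\mu_k\ge 0$ to the resource rows, free $\gamma(s)$ to the flow rows, and free $\phi$ to normalization. The dual $(\dual_{\text{C}}(b))$ minimizes $b\sum_k\mu_k+\phi$ subject to $\phi+\gamma(s)\ge \rho(s,a)+\sum_{s'}p(s'\mid s,a)\gamma(s')$ for all $s,a$, where $\rho(s,a):=r(s,a)-\sum_k\mu_k c_k(s,a)$; this is exactly $(\dual_{\cal M})$ with the scalarized reward $-\theta^\top v$ replaced by $\rho$. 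Taking an optimal $(\mu^*,\phi^*,\gamma^*)$ with $b\sum_k\mu_k^*+\phi^*=\text{opt}(\primal_{\text{C}}(b))$, the dual constraint gives $r(s_t,a_t)\le\phi^*+\sum_k\mu_k^* c_k(s_t,a_t)+\gamma^*(s_t)-\sum_{s'}p(s'\mid s_t,a_t)\gamma^*(s')$. Summing over $t\le\tau$ and taking expectations, I would control the three pieces by: $\mathbb{E}[\tau]\phi^*\le T\phi^*$, valid once $\phi^*\ge 0$; $\sum_k\mu_k^*\,\mathbb{E}[\sum_{t\le\tau}c_k(s_t,a_t)]\le bT\sum_k\mu_k^*$, using resource feasibility; and a telescoping of the $\gamma^*$ terms to $\mathbb{E}[\gamma^*(s_1)-\gamma^*(s_{\tau+1})]$. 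Here $\phi^*\ge 0$ holds because $\phi^*$ is the optimal gain of the average-reward MDP with reward $\rho$, and the null-action policy already attains gain $0$. The random stopping time $\tau\le T$ is a bounded stopping time of the natural filtration, so optional stopping gives $\mathbb{E}[\sum_{t\le\tau}(C_{t,k}-c_k(s_t,a_t))]=0$ and collapses the $\gamma^*$ telescope via $\sum_{s'}p(s'\mid s_t,a_t)\gamma^*(s')=\mathbb{E}[\gamma^*(s_{t+1})\mid\mathcal{F}_t]$. Collecting terms yields $\mathbb{E}[\sum_{t\le\tau}R_t(s_t,a_t)]\le T\,\text{opt}(\primal_{\text{C}}(b))+\max_{s,s'}\{\gamma^*(s)-\gamma^*(s')\}$.

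\textbf{Main obstacle.} The remaining step, and the delicate one, is to bound the bias span $\max_{s,s'}\{\gamma^*(s)-\gamma^*(s')\}$ by $2D$. This I would obtain from an analog of Lemma \ref{lemma:mc} applied to the feasible dual point $(\mu^*,\phi^*,\gamma^*)$: replaying that induction with reward $\rho$ gives $\max_{s,s'}\{\gamma^*(s)-\gamma^*(s')\}\le(\max_{s,a}(-\rho(s,a))+\phi^*)D$. Since $r\in[0,1]$ and $\mu^*\ge 0$ we have $-\rho(s,a)\le\sum_k\mu_k^*$, so the dangerous quantity is $\sum_k\mu_k^*$; bounding it through $b\sum_k\mu_k^*\le\text{opt}(\primal_{\text{C}}(b))\le 1$ together with $\phi^*\ge 0$ is exactly where the large-volume regime ($b=\Theta(1)$) is used to land on the stated additive constant $2D$. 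The optional-stopping bookkeeping around the random depletion time $\tau$ is the other place to be careful, but it is routine given the martingale structure; the span bound is the genuine obstacle.
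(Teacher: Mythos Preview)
Your argument coincides with the paper's: feasibility and part~(ii) are verbatim the paper's proof (null-action stationary distribution; convex combination with the null-action occupancy $x^0$), and for part~(i) the paper merely says it ``follows from the same argument as the proof of Theorem~\ref{thm:benchmark}'', which is exactly the LP-dual-plus-optional-stopping route you spell out. Your identification of the bias-span bound as the obstacle is apt: the Lemma~\ref{lemma:mc} analogue combined with $b\sum_k\mu_k^*+\phi^*=\text{opt}(\primal_{\text{C}}(b))\le 1$ and $\phi^*\ge 0$ actually gives $\max_{s,s'}\{\gamma^*(s)-\gamma^*(s')\}\le D/b$ rather than the literal $2D$ stated---this appears to be an imprecision in the claim, but since only an $O(D)$ additive term is needed downstream (Proposition~\ref{prop:MDPwK}) under the large-volume regime $b=\Theta(1)$, your bound suffices.
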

We postpone the proof of the Claim to the end of the Appendix. Now, we define the regret of an algorithm as
$$
\text{Reg}_{MDPwK}(T) = T \cdot \text{opt}(\primal_{\text{C}}(b))  -\sum^\tau_{t=1}R_t(s_t, a_t),
$$
where we recall that $\tau$ is the stopping time. 
We claim that Algorithm \ref{alg:MDPwK} is near optimal.
In the remaining, we prove Claim \ref{claim:MDPwSR_constr} and Proposition \ref{prop:MDPwK}.
 
\begin{proof}[Proof of Claim \ref{claim:MDPwSR_constr}]
To see that the feasible region of $(\primal_\text{C}(b))$ is non-empty, we consider the Markov chain induced by taking the null action in each state. Since $S$ is finite, there is at least one recurrent class in the Markov chain. Consequently, there exists a stationary distribution $\{\lambda_0(s)\}_{s\in \SSS}$ on the Markov chain. Now, define $x_0$ as: $x_0(s, a) = 0$ if $a\neq a_0$, and $x_0(s, a) = \lambda_0(s)$ otherwise. Clearly, $x_0$ is feasible, thus the non-emptiness is established.

Part (i) of the Claim follows from the same argument as the proof of Theorem \ref{thm:benchmark}, hence we omit the proof. For part (ii), suppose that $x^*_{b+\eta}$ be an optimal solution to $(\primal_\text{C}(b+\eta))$. Now, the solution  $ \frac{b}{b+\eta}  x^*_{b+\eta} + \frac{\eta}{b+\eta} x_0$ is feasible to $(\primal_\text{C}(b))$, and has objective value $\frac{b}{b+\eta}\text{opt}(\primal_\text{C}(b+\eta))$. Hence part (ii) is established.
\end{proof}

\begin{proof}[Proof of Proposition \ref{prop:MDPwK}]
Note that the function $g_{\text{C}}$ is $(1 + 2/b)$-Lipschitz w.r.t. $\|\cdot\|_\infty$ on $(\bar{r}, \bar{v})$, with the property that $\|\theta\|_1 = 1 + 2/b$ for all $\theta\in \partial g_\text{C}(\bar{r}, \bar{c})$ and for all $(\bar{r}, \bar{c})\in [0, 1]\times [0, 1]^{K-1}$. Now, denote the offline benchmark of the MDPwGR problem with reward function $g_\text{C}$ as $\text{opt}(\primal_\text{S-C})$. 
Now, let $\{V^\text{free}_t(s_t, a_t)\}^T_{t=1}$ be generated by just running $\mathsf{A}$ itself for $T$ steps. Clearly, we know that $\text{opt}(\primal_\text{S-C}) \geq \text{opt}(\primal_\text{C}(b))$. For brevity sake, let's denote 
$$
\eta^\text{free} := \max_{k\in \{1, \ldots,  K  -1\} } \left\{ \left( \sum^T_{t=1} C^\text{free}_{t, k}(s_t, a_t) - b T \right)^+  \right\} \Big / T . 
$$
By applying Theorem \ref{thm:TMD}, we know that with probability $1 - O(\delta)$, 
\begin{align}
 T\cdot \text{opt}(\primal_\text{C}(b)) - \sum^T_{t=1} R^\text{free}_t(s_t, a_t) + \frac{2}{b}  \cdot T \eta^\text{free} & \leq  \tilde{O}\left(D \cdot T^{2/3}\right) +  \tilde{O}\left(D \sqrt{\Gamma S A T} \right), \label{eq:MDPwSR_intermediate_1} 
\end{align}
where the $\tilde{O}(\cdot)$ notation hides logarithmic dependence on $K, S, A, T, 1/\delta$. 
Now, $\{V^\text{free}_t(s_t, a_t)\}^\tau_{t=1}$, $\{V_t(s_t, a_t)\}^\tau_{t=1}$ are identically distributed (The latter output by Algorithm \ref{alg:MDPwK}). 
To proceed, we show that the violation $T \eta^\text{free}$ in inventory constraints is bounded. Now, observe that
\begin{align}
\mathbb{E}\left[\sum^T_{t=1} R^\text{free}_t(s_t, a_t)\right] & \leq T\cdot \mathbb{E}\left[\text{opt}(\primal_{\text{C}}(b + \eta^\text{free})) \right] + 2 D \label{eq:MDPwK_by_claim_i}\\
& \leq T \cdot \text{opt}(\primal_{\text{C}}(b + \mathbb{E} [ \eta^\text{free}] ))  + 2 D\label{eq:MDPwK_by_claim_ii}\\
& \leq T\cdot \frac{b + \mathbb{E}\left[ \eta^\text{free}\right] }{b}\text{opt}(\primal_{\text{C}}(b))  + 2 D \label{eq:MDPwK_by_claim_ii_again}\\
& \leq T\cdot \text{opt}(\primal_{\text{C}}(b))  + T\cdot \frac{ \mathbb{E}\left[ \eta^\text{free}\right] }{b}  + 2 D\label{eq:MDPwK_by_trivail}.
\end{align}
Step (\ref{eq:MDPwK_by_claim_i}) is by (i) in Claim \ref{claim:MDPwSR_constr}. Step (\ref{eq:MDPwK_by_claim_ii}) is by the concavity of $\text{opt}(\primal_\text{C}(b))$ as a function of $b$, which is implied by (ii) in Claim \ref{claim:MDPwSR_constr}. Step (\ref{eq:MDPwK_by_claim_ii_again}) is by a direct application of (ii) in Claim \ref{claim:MDPwSR_constr}. Step (\ref{eq:MDPwK_by_trivail}) is evidently by $\text{opt}(\primal_\text{C}(b)) \leq 1$. 
Apply inequality (\ref{eq:MDPwK_by_trivail}) to the bound (\ref{eq:MDPwSR_intermediate_1}) provides 
\begin{align}
T\cdot \mathbb{E} [ \eta^\text{free} ] / b = \tilde{O}\left(D \cdot T^{2/3}\right) +  \tilde{O}\left(D \sqrt{\Gamma S A T} \right). 
\end{align}
By the Hoeffding inequality, we know that 
\begin{equation}\label{eq:MDPwSR_intermediate_2}
\Pr \left[ T\eta^\text{free} / b \leq \tilde{O}\left(D \cdot T^{2/3}\right) +  \tilde{O}\left(D \sqrt{\Gamma S A T} \right) \right] \geq  1 - O(\delta) .
\end{equation}
This implies that 
\begin{equation*}
\Pr\left[\text{Reg}_{\text{MDPwK}}(T) = \tilde{O}\left(K D \cdot T^{2/3}\right) +  \tilde{O}\left(K D \sqrt{\Gamma S A T} \right) \right] \geq 1 - O(\delta),
\end{equation*}
since for each unit of violation, the agent at most earns $K$ units of rewards. 
Hence, the regret bound follows.
\end{proof}

\section{Analysis of {\sc Toc-UCRL2}}
In this Appendix section, we first provide further details about {\sc Toc-UCRL2}, and then provide proofs for the results used in establishing the convergence of {\sc Toc-UCRL2}. In Section \ref{app:evi}, we state the EVI oracle $\textsf{EVI}$ by \citep{JakschOA10}. In Section \ref{app:pfclaimegQ0}, we demonstrate that running {\sc Toc-UCRL2} with $Q=0$ leads to  $\text{Reg}(T) = \Omega(1)$ for sufficiently large $T$. In Section \ref{app:aux_res}, we provide several auxiliary propositions from the literature for our analysis. In Section \ref{app:pflemmaconc}, we demonstrate that events ${\cal E}^v, {\cal E}^p$ hold with high probability. In Section \ref{app:pflemmadecompose}, we prove the decomposition lemma, Lemma \ref{lemma:decompose}. Finally, in Sections \ref{app:pfclubheart} -- \ref{app:pflemmaspade}, we establish bounds for the five error terms $(\clubsuit, \diamondsuit, \heartsuit, \spadesuit, \P)$. These bounds are conditional on $M(T)$, a deterministic upper bound on the number of episodes in $T$ time steps. Altogether, given an $M(T)$ for an OCO oracle, the framework provides a regret bound for {\sc Toc-UCRL2} under the OCO oracle. 
\subsection{EVI Oracle \citep{JakschOA10}}\label{app:evi}
\begin{algorithm}[t]
\caption{{\sc EVI}$(\tilde{r}, H^p; \epsilon)$, mostly extracted from \citep{JakschOA10}}\label{alg:evi}
\begin{algorithmic}[1]
\State Initialize VI record $u_0\in \mathbb{R}^{\SSS}$ as $u_0(s) = 0$ for all $s\in \SSS$.
\State \textbf{for} $i = 0, 1, \ldots$ \textbf{do}
\State \hspace{0.5cm} For each $s\in \SSS$, compute VI record $$u_{i+1}(s) = \max_{a\in \AAA_s}\tilde{\Upsilon}_i(s, a)\text{, where } \tilde{\Upsilon}_i(s, a) = \tilde{r}(s, a) + \max_{\bar{p}\in H^p(s, a)}\left\{\sum_{s'\in \SSS} u_i(s')\bar{p}(s')\right\}.$$
\State \hspace{0.5cm} \textbf{if} $\max_{s\in \SSS}\left\{ u_{i+1}(s) - u_i(s) \right\} - \min_{s\in \SSS}\left\{ u_{i+1}(s) - u_i(s) \right\} \leq \epsilon$ \textbf{do}\label{alg:evi_termination}
\State \hspace{0.5cm} \hspace{0.5cm} Break the \textbf{for} loop.
\State Define stationary policy $\tilde{\pi}:\SSS \rightarrow \AAA_s$ as $\tilde{\pi}(s) = \text{argmax}_{a\in \AAA_s}\tilde{\Upsilon}_i(s, a).$
\State Define an optimistic dual solution $\tilde{\phi} = \max_{s\in \SSS}\left\{ u_{i+1}(s) - u_i(s) \right\}$, $\tilde{\gamma} = u_i$.
\State Return policy $\tilde{\pi}$ and dual variables $(\tilde{\phi}, \tilde{\gamma})$.
\end{algorithmic}
\end{algorithm}
We present an EVI oracle from \citep{JakschOA10} in Algorithm \ref{alg:evi}. In the input, $\tilde{r}$ is an optimistic estimate of a certain latent scalar reward $r$ (which is $\{(-\theta_{\tau(m)})^\top v(s, a)\}_{s, a}$ in our setting), $H^p$ is a confidence region that contains the latent transition kernel $p$, and $\epsilon \in (0, 1)$ is an error parameter. The oracle is essentially a Value Iteration (VI) algorithm on an extended space of transition kernels.

\subsection{Proof of Claim \ref{claim:eg_Q0}: {\sc Toc-UCRL2} incurs $\text{Reg}(T) = \Omega(1)$ when $Q = 0$}\label{app:pfclaimegQ0}
In this subsection, we investigate the behavior of {\sc Toc-UCRL2} when $Q = 0$. 
We focus on $\FW$ bound, and the same conclusion holds for  $\TGD$. 

\begin{claim}\label{claim:eg_Q0}
There is an MDPwGR instance with a Lipschitz continuous and smooth $g$ w.r.t. $\|\cdot\|_2$,  such that {\sc Toc-UCRL2} with OCO oracle $\FW$ and $Q=0$ incurs $\text{Reg}(T) = \Omega(1)$ for all $T$. 
\end{claim}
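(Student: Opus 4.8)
The claim asserts that running {\sc Toc-UCRL2} with the Frank-Wolfe oracle and threshold $Q=0$ incurs constant regret on some instance. The intuition is transparent from the earlier discussion of ${\cal M}_\text{MG}$: setting $Q=0$ forces the episode to terminate as soon as $\Psi > 0$, i.e. as soon as the gradient moves at all from the reference $\theta^\refe$. Under $\FW$, after the very first observation in an episode the gradient $\theta_{t+1} = -\nabla g(\bar V_{1:t})$ generically differs from $\theta_{\tau(m)}$, so $\Psi$ becomes strictly positive and the episode ends after a single step (once a state-action pair has been observed). This means the threshold procedure never lets a stationary policy run long enough to carry the agent from one self-loop $\CircArrowRight{}^k$ to another, yet it also recomputes a fresh optimistic policy every step. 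I plan to show this degenerate behavior either traps the agent or makes it alternate in a pathologically damaging way.

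\textbf{Construction and approach.} First I would take the star instance ${\cal M}_\text{star}$ from the proof of Claim~\ref{claim:small_eg}, with $K\geq 2$ branches and an even diameter $D\geq 2$, together with the smooth reward $g(w)=1-\sum_{k=1}^K(w_k-1/K)^2/2$, which is Lipschitz and $\beta$-smooth w.r.t.\ $\|\cdot\|_2$ and has $\text{opt}(\primal_{\cal M})=1$. The key step is to trace the episode schedule when $Q=0$. I would argue that with $Q=0$ the condition $\Psi \le Q$ in the \textbf{while} loop (Line~\ref{alg:oco-ucrl2-while}) fails immediately after the first gradient update of each episode whenever that update changes $\theta$, so each episode lasts essentially one time step. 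Consequently every step begins a new episode, recomputes $\tilde\pi_m$ from a freshly linearized reward $\tilde r_m$, and the policy the agent actually follows is dictated by the single-step EVI recommendation for the current gradient. Because the episode never lasts more than one step, the agent can only ever take the first action of $\tilde\pi_m$ at its current state; it cannot commit to traversing the branch of the star toward a different leaf.

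\textbf{The regret lower bound.} The core obstacle is to convert "episodes are length one" into a genuine $\Omega(1)$ regret. I would analyze the limiting behavior of $\bar V_{1:t}$. One route: show that from any state the length-one episodes induce, in effect, a stationary (possibly randomized) rule on the state-action pairs, and then invoke the reachability/stationarity obstruction of Claim~\ref{claim:cont_small_eg} --- any stationary behavior on ${\cal M}_\text{star}$ incurs $\Omega(1)$ regret, because either some leaf is never reached, or the induced chain is recurrent with $\lambda_{s_\text{c}}>0$, forcing $\lim_T g(\bar V_{1:T})<1$. The delicate part is that the EVI recommendation changes with $\theta_t$, so the induced rule is not literally stationary; I would handle this by showing that the \emph{sign pattern} of the gradient $\theta_t = \bar V_{1:t}-\mathbf{1}_K/K$ stabilizes the recommended direction, and that the agent, unable to ever complete a branch traversal of length $D$ before the gradient flips again, ends up oscillating within a bounded region of ${\cal M}_\text{c}$ without ever balancing all $K$ outcomes.

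\textbf{Expected main difficulty.} The hard part will be ruling out the possibility that the rapidly-switching single-step policies conspire, by luck, to still balance the outcomes. I expect to control this by exploiting that any alternation among $\{\CircArrowRight{}^k\}_{k=1}^K$ costs $D$ steps of travel, while $Q=0$ gives the agent no mechanism to amortize that cost: the gradient threshold that would normally license a long committed stay at one loop is absent, so the agent is perpetually redirected before it can accumulate the $\Theta(T/K)$ visits to each loop that optimality demands. Formalizing "perpetually redirected" into a quantitative statement that $\liminf_T \sum_k |\bar V_{1:T,k}-1/K|$ is bounded away from zero is the crux, and I would lean on the same imbalance estimate (inequality~(\ref{eq:claim_small_eg})-style) used in Claim~\ref{claim:small_eg}, applied now with the travel cost $D$ forced by every attempted switch.
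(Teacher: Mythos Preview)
Your setup is right --- the star instance with a smooth quadratic reward and the observation that $Q=0$ forces single-step episodes --- but the failure mechanism you posit is the opposite of what actually happens, and your two proposed routes do not close the argument.

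You claim the agent ``ends up oscillating within a bounded region of ${\cal M}_\text{c}$ without ever balancing all $K$ outcomes,'' being ``unable to ever complete a branch traversal of length $D$ before the gradient flips.'' But during a traversal no loop action is taken, so the counts $N_k=\sum_{q\le t}\mathsf{1}(a_q=\CircArrowRight{}^k)$ are frozen and $\bar V_{1:t,k}=N_k/t$ all scale down together; the argmin over $k$ is unchanged. Hence the single-step EVI recommendation at every intermediate state $s^k_d$ continues to point toward the same target leaf, and the agent \emph{does} complete the traversal. The gradient only ``flips'' after the agent takes a loop action at a leaf, which bumps exactly one $N_k$ and changes the argmin. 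So the actual dynamics, once the model is learned, are: reach leaf $s^k$, take $\CircArrowRight{}^k$ once, then immediately travel $D$ steps to the leaf with the new smallest count, and repeat. This is the too-frequent-alternation regime of Claim~\ref{claim:small_eg}, not an oscillation trapped in the center.

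The paper's proof makes this precise in two moves you are missing. First, it argues that for $T$ large enough the confidence radii are already tiny (if not, the agent has failed to visit some pair sufficiently often, which already gives $\Omega(1)$ regret by Claim~\ref{claim:small_eg}); from that point the EVI output is effectively determined by the true model and the current gradient. Second, with the scalarized reward maximized at the leaf with smallest $\bar V_{1:t,k}$, one checks directly that the policy keeps $\max_k N_k-\min_k N_k\le 1$, forcing one loop action per $D$ travel steps and hence $N_{\not\CircArrowRight{}}\ge (D-1)T/D$. The paper also uses $g(w)=\sum_k w_k-\tfrac12(w_k-1/K)^2$ rather than your $g(w)=1-\tfrac12\sum_k(w_k-1/K)^2$; the added linear term $\sum_k w_k$ makes the regret bound immediate from $N_{\not\CircArrowRight{}}=\Omega(T)$ alone, without having to control the spread of the $\bar V_{1:T,k}$'s. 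Your stationarity route via Claim~\ref{claim:cont_small_eg} cannot be salvaged --- the induced rule is genuinely non-stationary --- and the imbalance estimate you allude to would have to be applied to the correct (leaf-alternation) dynamics, which you have not identified.
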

\begin{proof}
Before diving into the proof, we first observe that, by setting $Q = 0$, the \textbf{while} condition in Line \ref{alg:oco-ucrl2-while} becomes vacuous, and each episode consists of only a single time step. Equivalently, the notions of episodes and time steps coincide in this case. {\sc Toc-UCRL2} behaves similarly to the dual based algorithm by \citep{AgrawalD14}, where the linearized reward $\{(-\theta_t)^\top \tilde{v}_t(s, a)\}_{s, a}$ is used at time $t$ to compute a corresponding policy $\tilde{\pi}_t$. Thus, $\tilde{\pi}_t$ in general varies across time steps.

Let's revisit the example ${\cal M}_\text{star}$ again, with the reward function $g(w) =   \sum^K_{k=1} w_k - \frac{1}{2}(w_k - \frac{1}{K})^2$, with $K\geq 2$ and $D \geq 2$. Clearly, the function $g$ is $\sqrt{K}$-Lipschitz and $1$-smooth w.r.t. $\|\cdot\|_2$, and we still have $\text{opt}(\primal_{\cal M}) = 1$. 
We argue that, when $T$ is sufficiently large, the agent visit each $ \CircArrowRight{}^k$ once, and then transit to another $ \CircArrowRight{}^{k'}$ with $k' \neq k$. Consequently, we have $N_{\not \CircArrowRight{}}  \geq (D - 1) T / D$ for sufficiently large $T$, resulting in a $\Omega(1)$ regret by following the argument for the first ``if'' case in Claim \ref{claim:small_eg}.

Without loss of generality, assume that $T$ is so large that $\rad^v_{m, k}(s, a)< 1/(1000 K)$ for all $s\in \SSS_\text{star}, a\in \AAA_s$ , and that $\rad^p_m(s' | s, a) < 1/(1000 S_\text{star})$ for each $s, s'\in \SSS_\text{star}, a\in \AAA_s$. Indeed, if the agent fails to visit each state-action pairs for say $10^8 \max\{K, S_\text{star}\}$ times for all finite $T$, then clearly we have $\text{Reg}(T) = \Omega(1)$ for sufficiently large $T$, by virtue of Claim \ref{claim:small_eg}. Now, these confidence radii are so small that the agent is ``almost'' certain about the instance ${\cal M}_\text{star}$:
\begin{itemize}
\item For $s\in \SSS_\text{star}, a\in \AAA_s$ with $v(s, a) = \textbf{0}_K$, we have $\theta^\top \bar{v}(s, a) \in [-0.001, 0.001] $ for all $\theta\in B(\sqrt{K}, \|\cdot\|_2)$ and all $\bar{v}(s, a)\in H^v_T$.
\item For each $(s^k, \CircArrowRight{}^k)$, we have $v_k(s^k, \CircArrowRight{}^k) \in [0.999, 1]$, and $v_{k'}(s^k, \CircArrowRight{}^k) \in [0, 0.001]$ for all $k'\neq k$.
\item For $s, s'\in \SSS_\text{star}, a\in \AAA_s$ where $p(s' | s, a) = 1$, we have $\bar{p}(s' | s, a) \in [0.999, 1]$ for all $\bar{p} \in H^p_T(s, a)$.
\item For $s, s'\in \SSS_\text{star}, a\in \AAA_s$ where $p(s' | s, a) = 0$, we have $\bar{p}(s' | s, a) \in [0, 0.001]$ for all $\bar{p} \in H^p_T(s, a)$.
\end{itemize} 
These items ensure that the agent knows that $\{(s^k, \CircArrowRight{}^k)\}^K_{k=1}$ are precisely the state-action pairs with non-null feedback, and $\CircArrowRight{}^k$ contributes to the outcome in dimension $k$. In addition, when the agent decides that to follow the stationary policy that ``walks to state $s^k$, and then self-loops at $\CircArrowRight{}^k$ indefinitely'', the agent would know that she needs to travel to $s_\text{c}$ in the correct path first, and then follow the correct branch to reach $s^k$.

Recall that $V_{1:T, k} = \frac{1}{T}\sum^T_{s=1}\mathsf{1}(a_t  =  \CircArrowRight{}^k)$, and note that $\nabla g(w) = (1 + 1/K) \mathbf{1}_K - w$. Consequently, the scalarized reward $(-\theta_t)^\top \tilde{v}_t(s, a) = ((1 + 1/K) \mathbf{1}_K - V_{1:t})^\top \tilde{v}_t(s, a)$ is maximized at the $(s^k, \CircArrowRight{}^k)$ for which the corresponding $V_{1:T, k}$ is the smallest. Altogether, the algorithm always ensures that $\max_k\{\sum^T_{t=1}\mathsf{1}(a_t = \CircArrowRight{}^k)\} - \min_k\{\sum^T_{t=1}\mathsf{1}(a_t = \CircArrowRight{}^k)\} \leq 1$, leading to the claimed dynamics, and proving the regret lower bound.
\end{proof}

\subsection{Auxiliary results for analyzing {\sc Toc-UCRL2}}\label{app:aux_res}
First, we state two Theorems on concentration inequalities. Theorem \ref{thm:emp_berstein} is useful for analyzing events ${\cal E}^v, {\cal E}^p$, and Theorem \ref{thm:hoeffding} is useful for analyzing the dynamics of the online process.
\begin{theorem}[\cite{AudibertMS09}]\label{thm:emp_berstein}
Let random variables $Y_1, \ldots, Y_N\in [0, 1]$ be independently and identically distributed. Consider their sample mean $\hat{Y}_N$ and their sample variance $\hat{\sigma}^2_{Y, N}$: 
    $$\hat{Y}_N = \frac{1}{N}\sum^{N}_{i=1}Y_i\text{, }\quad \hat{\sigma}^2_{Y, N} = \frac{1}{N}\sum^{N}_{i=1}(Y_i - \hat{Y})^2.$$  For any $\delta \in (0, 1)$, the following inequality holds:
        \[
           \qquad \qquad \qquad  \Pr\left(\left|\hat{Y}_N - \mathbb{E}[Y_1] \right| \leq \sqrt{\frac{2\hat{\sigma}^2_{Y, N}\log(1/\delta)}{N}} + \frac{3\log(1/\delta)}{N}\right)\geq 1 - 3\delta.  \qquad \qquad \qquad \text{$\blacksquare$}
        \]
        
\end{theorem}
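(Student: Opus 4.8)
The statement is the empirical (self-normalizing) Bernstein inequality, and the plan is to obtain it from two classical ingredients together with a union bound; the appearance of $3\delta$ rather than $\delta$ is the tell-tale sign that three one-sided deviation events, each of probability at most $\delta$, are being combined. Write $\sigma^2 := \mathrm{Var}(Y_1)$ for the true variance. The first ingredient controls the sample mean in terms of $\sigma^2$, and the second replaces the unknown $\sigma$ by the observable $\hat\sigma_{Y,N}$.

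First I would apply the classical Bernstein inequality to the bounded i.i.d. variables $Y_1,\dots,Y_N \in [0,1]$. For a fixed $\delta$, each one-sided tail satisfies
$$ \Pr\!\left( \pm\big(\hat Y_N - \mathbb{E}[Y_1]\big) > \sqrt{\tfrac{2\sigma^2 \log(1/\delta)}{N}} + \tfrac{\log(1/\delta)}{3N} \right) \le \delta, $$
so on the intersection of the two complementary events, which has probability at least $1-2\delta$, we get the two-sided bound $|\hat Y_N - \mathbb{E}[Y_1]| \le \sqrt{2\sigma^2\log(1/\delta)/N} + \log(1/\delta)/(3N)$. These are the first two of the three events. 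For the third, I would control the deviation between the true standard deviation $\sigma$ and the empirical one $\hat\sigma_{Y,N}$. The function $(y_1,\dots,y_N)\mapsto \hat\sigma_{Y,N} = \tfrac{1}{\sqrt N}\,\|y - \bar y\,\mathbf{1}_N\|_2$ is the (rescaled) distance of $y$ to the diagonal, hence $1/\sqrt N$-Lipschitz in each coordinate; a bounded-differences (McDiarmid) argument, combined with the identity $\mathbb{E}[\hat\sigma_{Y,N}^2] = \tfrac{N-1}{N}\sigma^2$, yields that with probability at least $1-\delta$ one has $\sigma \le \hat\sigma_{Y,N} + \sqrt{2\log(1/\delta)/(N-1)}$ (this is the Maurer--Pontil form of the empirical-variance bound).

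Finally I would take a union bound over the three events (total failure probability $\le 3\delta$) and substitute the high-probability upper bound on $\sigma$ into the Bernstein estimate. Using $\sqrt{a+b}\le\sqrt a+\sqrt b$ to split $\sqrt{2\sigma^2\log(1/\delta)/N}$, the replacement of $\sigma$ by $\hat\sigma_{Y,N}$ costs an extra term of order $\sqrt{\log(1/\delta)/N}\cdot\sqrt{\log(1/\delta)/(N-1)} = O(\log(1/\delta)/N)$, which together with Bernstein's own $\log(1/\delta)/(3N)$ term is absorbed into the claimed $3\log(1/\delta)/N$ slack. The main obstacle is precisely this constant bookkeeping in the last step: one must check that replacing the true variance by the empirical variance inside the square root, and then folding all the lower-order $O(\log(1/\delta)/N)$ contributions together, really does fit under the stated coefficient $3$ rather than forcing a larger constant. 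Getting the \emph{direction} of the variance comparison right (upper-bounding the true $\sigma$ by $\hat\sigma_{Y,N}$, not the reverse Jensen inequality $\mathbb{E}\hat\sigma_{Y,N}\le\sigma$) is the subtle point that the bounded-differences step is designed to handle.
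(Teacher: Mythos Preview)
The paper does not prove this statement; it is simply quoted from \cite{AudibertMS09} as an auxiliary concentration inequality (the $\blacksquare$ at the end of the display marks it as a cited result with no proof supplied). So there is no ``paper's own proof'' to compare your proposal against.

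That said, your sketch is the standard route to the empirical Bernstein inequality: apply the classical (variance-dependent) Bernstein bound to control $|\hat Y_N - \mathbb{E}[Y_1]|$ in terms of the true $\sigma$, then use a Maurer--Pontil / bounded-differences argument to upper-bound $\sigma$ by $\hat\sigma_{Y,N}$ plus a term of order $\sqrt{\log(1/\delta)/N}$, and finally union-bound the three one-sided events to get the $1-3\delta$ confidence. This is exactly how the result is derived in the original reference and in subsequent treatments, and the only real work is the constant bookkeeping you already flagged.
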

\begin{theorem}[\cite{Hoeffding63}]\label{thm:hoeffding}
Let random variables $X_1, \ldots, X_T$ constitute a martingale difference sequence w.r.t. a filtration $\{\FFF_t\}^T_{t=1}$, that is, $\mathbb{E}[X_t | \FFF_{t-1}] = 0$ for all $1\leq t\leq T$. Also, suppose that $|X_t | \leq B$ almost surely for all $t$. Then the following inequality holds for any $0<\delta < 1$:
\[
 \qquad \qquad \qquad \qquad \quad \quad \; \ \Pr\left[\frac{1}{T}\sum^T_{t=1} X_t \leq B \sqrt{\frac{2\log(1/\delta)}{T}} \right]\geq 1 - \delta. \quad \qquad \qquad \qquad \qquad \quad \; \ \text{$\blacksquare$}
\]
\end{theorem}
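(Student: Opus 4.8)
The plan is to prove this one-sided Azuma--Hoeffding bound by the exponential moment (Chernoff) method, exploiting the martingale difference structure through the tower property of conditional expectation. Since the claimed inequality is the complement of $\Pr\left[\sum_{t=1}^T X_t > B\sqrt{2T\log(1/\delta)}\right] < \delta$, I would fix the deviation level $a := B\sqrt{2T\log(1/\delta)}$ and aim to bound $\Pr\left[\sum_{t=1}^T X_t > a\right]$ from above by $\delta$. Note $a$ is obtained from the threshold on $\frac1T\sum_t X_t$ by multiplying by $T$ and using $T\sqrt{2\log(1/\delta)/T} = \sqrt{2T\log(1/\delta)}$.

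The first and most delicate step is a conditional version of Hoeffding's lemma: for every $\lambda > 0$ and every $t$,
$$\mathbb{E}\left[e^{\lambda X_t} \mid \FFF_{t-1}\right] \leq e^{\lambda^2 B^2 / 2}.$$
To establish this, I would fix the conditioning and note that $X_t \in [-B, B]$ with conditional mean zero. By convexity of $x \mapsto e^{\lambda x}$ one has $e^{\lambda x} \leq \frac{B - x}{2B} e^{-\lambda B} + \frac{B + x}{2B} e^{\lambda B}$ for $x \in [-B, B]$; taking conditional expectation and using $\mathbb{E}[X_t \mid \FFF_{t-1}] = 0$ annihilates the term linear in $x$, leaving $\cosh(\lambda B)$, which the elementary estimate $\cosh(u) \leq e^{u^2/2}$ bounds by $e^{\lambda^2 B^2/2}$. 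This is where the almost-sure boundedness $|X_t| \leq B$ is used in full.

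Next I would control the moment generating function of the entire sum by iterating the tower property. Writing $S_t := \sum_{q=1}^t X_q$, the factor $e^{\lambda S_{T-1}}$ is $\FFF_{T-1}$-measurable, so
$$\mathbb{E}\left[e^{\lambda S_T}\right] = \mathbb{E}\left[e^{\lambda S_{T-1}} \, \mathbb{E}\left[e^{\lambda X_T} \mid \FFF_{T-1}\right]\right] \leq e^{\lambda^2 B^2/2} \, \mathbb{E}\left[e^{\lambda S_{T-1}}\right].$$
Applying this recursively $T$ times yields $\mathbb{E}\left[e^{\lambda S_T}\right] \leq e^{\lambda^2 B^2 T / 2}$. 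Markov's inequality applied to $e^{\lambda S_T}$ then gives, for every $\lambda > 0$, the bound $\Pr\left[S_T > a\right] \leq e^{-\lambda a} \, \mathbb{E}\left[e^{\lambda S_T}\right] \leq e^{-\lambda a + \lambda^2 B^2 T / 2}$.

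Finally I would optimize the exponent over $\lambda$. Minimizing $-\lambda a + \lambda^2 B^2 T/2$ at $\lambda = a/(B^2 T)$ gives $\Pr\left[S_T > a\right] \leq e^{-a^2/(2B^2 T)}$. Substituting $a = B\sqrt{2T\log(1/\delta)}$ makes the exponent exactly $-\log(1/\delta)$, so $\Pr\left[S_T > a\right] \leq \delta$, which is precisely the complement of the claimed event. The main obstacle is the conditional Hoeffding lemma; once that convexity estimate is secured, the remaining steps are routine Chernoff bookkeeping.
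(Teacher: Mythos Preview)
Your proof is correct and is the standard Chernoff-method derivation of the one-sided Azuma--Hoeffding inequality. The paper does not supply its own proof of this statement: it is quoted as an auxiliary result from \cite{Hoeffding63} and marked with a $\blacksquare$, so there is nothing to compare against beyond noting that your argument is the classical one.
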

Next, we present auxiliary results, mostly from \citep{JakschOA10}. Theorem \ref{thm:JOA} is useful for analyzing the EVI oracle $\textsf{EVI}$. Lemmas \ref{lemma:JOA2}, \ref{lemma:JOA} and Claim \ref{claim:inverse} are useful for proving the convergence of {\sc Toc-UCRL2}.
\begin{theorem}[\cite{JakschOA10}]\label{thm:JOA}
Consider applying $\textsf{EVI}$ (Algorithm \ref{alg:evi}) with input $(\tilde{r}, H^p; \epsilon)$, where the underlying transition kernel $p$ of lies in $H^p$, and the underlying instance is communicating with diameter $D$. Then (i)  $\textsf{EVI}(\tilde{r}, H^p; \epsilon)$ terminates in finite time, (ii) the output dual variables $\tilde{\gamma}$ satisfies $\max_{s\in \SSS} \tilde{\gamma}_s - \min_{s\in \SSS}\tilde{\gamma}_s \leq D \cdot \max_{s, \in \SSS, a\in \AAA} |\tilde{r}(s, a) |.$ \hfill  $\blacksquare$
\end{theorem}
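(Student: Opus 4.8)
The plan is to follow the standard Extended Value Iteration analysis of \citep{JakschOA10}, viewing the recursion in Algorithm \ref{alg:evi} as undiscounted value iteration on an \emph{extended MDP} $\tilde{\cal M}$. This extended MDP shares the state space $\SSS$; at each state $s$ its (compact) action set is $\{(a, \bar p) : a \in \AAA_s,\ \bar p \in H^p(s,a)\}$, with one-step reward $\tilde r(s,a)$ and transition law $\bar p$. The inner maximization $\max_{\bar p \in H^p(s,a)} \sum_{s'} u_i(s')\bar p(s')$ is a linear program over the polytope $H^p(s,a)$, attained at a vertex, so each iteration is computable in finite time. The first thing I would record is that $\tilde{\cal M}$ is communicating with diameter $\tilde D \le D$: since the true kernel $p$ lies in $H^p$ by hypothesis, every policy of the original instance is also available in $\tilde{\cal M}$, so reachability can only improve.

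For part (i), I would invoke the convergence theory of undiscounted value iteration on communicating MDPs (e.g.\ \citep{Puterman94}). Because $\tilde{\cal M}$ is communicating, its optimal gain is constant across states and the Bellman operator acts as a contraction in the span seminorm, after the routine aperiodicity transformation that replaces each transition law by a strict convex combination with the identity (this alters neither the gain nor the optimal policy). Consequently $\mathrm{span}(u_{i+1} - u_i) \to 0$, so the stopping test on Line \ref{alg:evi_termination} is triggered after finitely many iterations.

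For part (ii), the key observation is the finite-horizon interpretation: with $u_0 \equiv 0$, an induction on the recursion shows that $u_i(s)$ equals the maximal expected total reward collected over exactly $i$ steps of $\tilde{\cal M}$ starting from $s$. I would then bound the span of each iterate uniformly in $i$ by a ``navigate, then play optimally'' coupling. Fix the maximizing state $s^+$ and the minimizing state $s^-$. From $s^-$ one may first run a policy reaching $s^+$ in expected time $\sigma$ with $\mathbb{E}[\sigma] \le \tilde D \le D$, and thereafter follow the optimal $(i-\sigma)$-step policy from $s^+$. Comparing this admissible policy against the optimum from $s^-$, together with the elementary estimate $u_{i-\sigma}(s^+) \ge u_i(s^+) - \sigma\,\max_{s,a}|\tilde r(s,a)|$, gives $u_i(s^-) \ge u_i(s^+) - \mathbb{E}[\sigma]\cdot \max_{s,a}|\tilde r(s,a)|$; since the span of every $u_i$ is invariant under adding a constant to all rewards (every length-$i$ trajectory accrues exactly $i$ rewards), I may reduce to the case $\tilde r \ge 0$, in which the reward accrued while navigating is non-negative and can be dropped. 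This yields $\max_s u_i(s) - \min_s u_i(s) \le D\cdot\max_{s,a}|\tilde r(s,a)|$ for every $i$, hence in particular for the returned $\tilde\gamma = u_i$.

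The main obstacle is part (i): establishing convergence of undiscounted value iteration for a merely \emph{communicating} (as opposed to unichain and aperiodic) extended MDP, where naive value iteration can cycle; this is precisely what forces the aperiodicity transformation and the span-seminorm contraction argument rather than a simple monotone-convergence statement. The secondary subtlety, in part (ii), is controlling the random hitting time $\sigma$ and the horizon mismatch it creates, together with the sign of $\tilde r$, so that the diameter enters the bound with the clean constant.
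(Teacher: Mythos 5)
The paper never actually proves this statement---it is imported wholesale from \citep{JakschOA10} (hence the terminal $\blacksquare$ and the absence of any proof in the appendices)---so the relevant comparison is with the original UCRL2 argument, and your reconstruction follows exactly that route: the extended MDP $\tilde{\cal M}$ with compact transition-polytope actions (vertex attainment of the inner LP), the observation that $p\in H^p$ forces the extended diameter $\tilde D\le D$, Puterman-style convergence of undiscounted value iteration via the aperiodicity transformation for part (i), and the finite-horizon interpretation $u_i(s)=$ optimal $i$-step reward plus a navigate-then-play coupling for part (ii). Part (i) is fine as a sketch and matches the cited proof.

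Part (ii), however, has a genuine quantitative gap, located precisely at the step you flagged as the ``secondary subtlety.'' Your two ingredients are incompatible as combined: the truncation estimate $u_{i-\sigma}(s^+)\ge u_i(s^+)-\sigma\,r_{\max}$ with $r_{\max}:=\max_{s,a}|\tilde r(s,a)|$ is stated for the \emph{original} rewards, while dropping the rewards accrued during navigation requires the \emph{shifted} rewards $\tilde r' = \tilde r + c \ge 0$; but after shifting, the per-step truncation loss is bounded by $\max_{s,a}\tilde r'(s,a)=\max_{s,a}\tilde r(s,a)-\min_{s,a}\tilde r(s,a)$, which can equal $2r_{\max}$. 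So your argument (or the equivalent unshifted version, which pays $\mathbb{E}[\sigma]\cdot(-\min\tilde r)$ during travel and $\mathbb{E}[\sigma]\cdot\max\tilde r$ in truncation) proves
\begin{equation*}
\max_{s\in\SSS}\tilde\gamma_s-\min_{s\in\SSS}\tilde\gamma_s \;\le\; D\cdot\Bigl(\max_{s,a}\tilde r(s,a)-\min_{s,a}\tilde r(s,a)\Bigr)\;\le\; 2D\, r_{\max},
\end{equation*}
not the stated $D\, r_{\max}$. Moreover the clean constant cannot be recovered by any repair: take two states with $D=1$, a self-loop at $s^1$ with reward $+r_{\max}$, and every action at $s^2$ (self-loop and the move back) with reward $-r_{\max}$, with $H^p$ singletons; then $u_1=(r_{\max},-r_{\max})$ has span $2r_{\max}$, and EVI returns $\tilde\gamma=u_i$ with this span. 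The constant $D\,r_{\max}$ is correct in \citep{JakschOA10} only because there the rewards are nonnegative, so $\max\tilde r-\min\tilde r\le r_{\max}$; here $\tilde r_m(s,a)=\max_{\bar v}(-\theta)^\top\bar v(s,a)$ is signed, and the factor $2$ is unavoidable. (This is really a transcription slip in the theorem statement itself; it is harmless downstream, since Lemma \ref{lemma:spade} uses the span bound only up to constant factors inside $O(\cdot)$.) Your write-up should either state the bound with $\max\tilde r-\min\tilde r$, or with $2D\,r_{\max}$, rather than claim the constant as printed.
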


\begin{lemma}[Lemma 19 in \citep{JakschOA10}]\label{lemma:JOA2}
For any sequence of numbers $z_1 , \ldots, z_n$ with $0 \leq z_m \leq Z_{m-1} := \max\{1, \sum^{m-1}_{i = 1}z_i\}$, we have
\[
 \qquad \qquad \qquad \qquad \qquad \qquad \; \ \sum^n_{m=1}\frac{z_m}{\sqrt{Z_{m-1}}} \leq \left(\sqrt{2} + 1\right) \sqrt{Z_n}. \qquad \qquad \qquad \qquad \qquad \qquad \; \ \text{$\blacksquare$}
\]
\end{lemma}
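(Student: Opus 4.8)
The plan is to exploit the telescoping structure of $\sqrt{Z_m}$ after a careful split of the sum at the first threshold crossing. First I would record two elementary facts about the running maximum $Z_m = \max\{1, \sum_{i=1}^m z_i\}$: it is nondecreasing in $m$ (because $z_m \geq 0$), and it always satisfies $Z_m \leq Z_{m-1} + z_m$ (an equality once the raw sum has exceeded $1$, and a consequence of $\sum_{i=1}^m z_i < 1 + z_m$ when $Z_{m-1}=1$). Let $m^*$ denote the first index at which $\sum_{i=1}^m z_i \geq 1$. For $m \leq m^*$ the denominator is pinned, $\sqrt{Z_{m-1}} = 1$, whereas for $m > m^*$ one has the genuine recursion $Z_m = Z_{m-1} + z_m$ together with $z_m \leq Z_{m-1}$.

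The core step is a per-term inequality valid at every index $m$ for which $Z_m = Z_{m-1}+z_m$: namely $\frac{z_m}{\sqrt{Z_{m-1}}} \leq (\sqrt2 + 1)(\sqrt{Z_m} - \sqrt{Z_{m-1}})$. I would prove it by writing $\sqrt{Z_m} - \sqrt{Z_{m-1}} = z_m / (\sqrt{Z_{m-1}+z_m} + \sqrt{Z_{m-1}})$, cancelling the common factor $z_m$, and reducing the claim to $\sqrt{Z_{m-1}+z_m} \leq \sqrt2\,\sqrt{Z_{m-1}}$, i.e. to $z_m \leq Z_{m-1}$, which is exactly the hypothesis. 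Summing this inequality over $m = m^*+1, \dots, n$ telescopes to $(\sqrt2+1)(\sqrt{Z_n} - \sqrt{Z_{m^*}})$.

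For the pinned initial block $m \leq m^*$, since $Z_{m-1} = 1$ there the partial sum is simply $\sum_{m=1}^{m^*} z_m = Z_{m^*}$. The key estimate is $Z_{m^*} < 2$: indeed $Z_{m^*} = \sum_{i=1}^{m^*-1} z_i + z_{m^*} < 1 + Z_{m^*-1} = 2$, using $z_{m^*} \leq Z_{m^*-1} = 1$. Hence $Z_{m^*} = \sqrt{Z_{m^*}}\,\sqrt{Z_{m^*}} \leq \sqrt2\,\sqrt{Z_{m^*}}$. Adding the two blocks, the two occurrences of $\sqrt{Z_{m^*}}$ combine as $\sqrt2\,\sqrt{Z_{m^*}} + (\sqrt2+1)(\sqrt{Z_n} - \sqrt{Z_{m^*}}) = (\sqrt2+1)\sqrt{Z_n} - \sqrt{Z_{m^*}} \leq (\sqrt2+1)\sqrt{Z_n}$, which is the claim.

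The main obstacle, and the reason a naive induction on $n$ does not close, is precisely the truncation at the threshold $m^*$: there $Z_{m^*} \neq Z_{m^*-1} + z_{m^*}$ in general, so the clean telescoping inequality fails at that single index, and the early terms with $Z_{m-1}=1$ contribute an additive $O(1)$ that the running bound $(\sqrt2+1)\sqrt{Z_n}$ must absorb. Treating the pinned block as one lump and controlling it through $Z_{m^*} < 2$ is what resolves this. Finally I would dispatch the degenerate cases: if $\sum_{i=1}^n z_i < 1$ then $Z_m \equiv 1$ and the whole sum is $\sum_{m=1}^n z_m < 1 \leq (\sqrt2+1)\sqrt{Z_n}$; and if $m^* = n$ the telescoping block is empty, so only the initial estimate is needed.
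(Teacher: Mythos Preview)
Your proof is correct. The paper itself does not prove this lemma; it is quoted verbatim as Lemma~19 of \citep{JakschOA10} and closed with a $\blacksquare$, so there is no in-paper argument to compare against. Your telescoping approach --- isolating the threshold index $m^*$ where the raw sum first reaches $1$, bounding the pinned block by $Z_{m^*} \le \sqrt{2}\sqrt{Z_{m^*}}$ via $Z_{m^*} < 2$, and then telescoping $\frac{z_m}{\sqrt{Z_{m-1}}} \le (\sqrt{2}+1)(\sqrt{Z_m}-\sqrt{Z_{m-1}})$ on the remainder --- is clean and handles the edge cases you list. The only minor cosmetic point is that your chain ``$Z_{m^*} = \sum_{i=1}^{m^*-1} z_i + z_{m^*} < 1 + Z_{m^*-1} = 2$'' conflates the two separate bounds $\sum_{i<m^*} z_i < 1$ and $z_{m^*} \le Z_{m^*-1}=1$ in a slightly confusing way, but the conclusion $Z_{m^*} < 2$ is correct.
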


\begin{lemma}[\cite{JakschOA10}]\label{lemma:JOA}
The following inequality holds with certainty:
\[
 \qquad \qquad \qquad \qquad \quad \qquad \;  \sum^T_{t=1} \frac{1}{\sqrt{N^+_{m(t)}(s_t, a_t)}} \leq \left(\sqrt{2} + 1\right)\sqrt{SA T}.\quad \qquad \qquad \qquad \qquad \quad \;  \text{$\blacksquare$}
\]
\end{lemma}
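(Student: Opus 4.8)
The plan is to decompose the sum by state--action pairs, control each episode's contribution through the doubling criterion embedded in the \textbf{while} loop, and finish with the Cauchy--Schwarz inequality. First I would fix a state--action pair $(s,a)$ and regroup the sum over time into a sum over episodes. Because $N^+_{m(t)}(s,a)$ stays constant throughout episode $m(t)$ (it records the count accumulated strictly before the episode begins, per Eq.~(\ref{eq:Nm})), we have
\[
\sum_{t=1}^T \frac{\mathsf{1}(s_t = s, a_t = a)}{\sqrt{N^+_{m(t)}(s,a)}} = \sum_{m} \frac{\nu_m(s,a)}{\sqrt{N^+_m(s,a)}},
\]
where $\nu_m(s,a)$ is the number of visits to $(s,a)$ during episode $m$.

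The key step is the per-episode invariant $\nu_m(s,a) \le N^+_m(s,a)$. This follows from the exit condition of the \textbf{while} loop in Line~\ref{alg:oco-ucrl2-while}: an action $a = \tilde{\pi}_m(s)$ is played at state $s$ only when $\nu_m(s,a) < N^+_m(s,a)$ holds \emph{before} the visit, and $\nu_m(s,a)$ is incremented by exactly one afterwards; hence $\nu_m(s,a) \le N^+_m(s,a)$ persists for every $(s,a)$, whether the episode terminates because of the gradient threshold $\Psi > Q$ or because of the count criterion. With this invariant in hand, the sequence $z_m := \nu_m(s,a)$ satisfies $0 \le z_m \le Z_{m-1} = \max\{1, \sum_{i < m} z_i\} = N^+_m(s,a)$, so Lemma~\ref{lemma:JOA2} applies directly and yields
\[
\sum_m \frac{\nu_m(s,a)}{\sqrt{N^+_m(s,a)}} \le (\sqrt{2}+1)\sqrt{N(s,a)},
\]
where $N(s,a) := \sum_{t=1}^T \mathsf{1}(s_t = s, a_t = a)$ is the total number of visits to $(s,a)$ over the horizon.

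Finally I would sum over all $SA$ state--action pairs and invoke Cauchy--Schwarz, using the identity $\sum_{s,a} N(s,a) = T$:
\[
\sum_{s \in \SSS,\, a \in \AAA_s} (\sqrt{2}+1)\sqrt{N(s,a)} \le (\sqrt{2}+1)\sqrt{SA \sum_{s,a} N(s,a)} = (\sqrt{2}+1)\sqrt{SAT},
\]
which is exactly the claimed bound. The only non-mechanical point I anticipate is the verification of the invariant $\nu_m(s,a) \le N^+_m(s,a)$ under the modified episode schedule of {\sc Toc-UCRL2}: unlike in vanilla UCRL2, an episode here may also end through the gradient threshold, so one must confirm that this additional, possibly earlier, termination can only reduce $\nu_m(s,a)$ and hence cannot disturb the bound. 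Since the loop performs the strict-inequality check before each increment, the invariant is preserved independently of which termination condition fires, and the remainder of the argument is routine.
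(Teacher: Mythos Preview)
Your proof is correct and follows exactly the standard argument from \citep{JakschOA10}, which the paper cites for this lemma without reproducing a proof. The decomposition by state--action pair, the per-episode invariant $\nu_m(s,a)\le N^+_m(s,a)$ derived from the \textbf{while} condition, the application of Lemma~\ref{lemma:JOA2}, and the final Cauchy--Schwarz step are precisely the ingredients of the original proof; your remark that the additional gradient-threshold termination can only end an episode earlier, and hence cannot violate the invariant, is the one place where the {\sc Toc-UCRL2} schedule requires an extra word, and you have handled it correctly.
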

\begin{claim}\label{claim:inverse}
The following inequality holds with certainty:
$$
\sum^T_{t=1} \frac{1}{N^+_{m(t)}(s_t, a_t)} \leq SA \left(1 + 2\log T\right).
$$
\end{claim}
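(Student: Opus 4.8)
The plan is to adapt the standard UCRL2 accounting of \citep{JakschOA10}: group the sum over time steps first by episode and then by state-action pair, and control each group via the doubling stopping rule and a telescoping logarithmic estimate. Throughout, $\log$ denotes the natural logarithm.

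First I would reduce to a per-pair estimate. Fix $T$ and let $M := m(T)$ be the episode containing $T$. For an episode $m$ and a pair $(s,a)$, let $\nu^{(T)}_m(s,a)$ be the number of steps $t\le T$ with $m(t)=m$, $s_t=s$, $a_t=a$; this equals $\nu_m(s,a)$ for $m<M$ and its truncation to $t\le T$ when $m=M$. Since $N^+_{m(t)}(s_t,a_t)$ is constant across the steps of a single episode,
$$
\sum_{t=1}^T \frac{1}{N^+_{m(t)}(s_t, a_t)} = \sum_{s\in \SSS}\sum_{a\in \AAA_s}\sum_{m=1}^{M}\frac{\nu^{(T)}_m(s,a)}{N^+_m(s,a)}.
$$
It therefore suffices to prove that, for each fixed pair $(s,a)$, the inner double sum is at most $1+2\log T$; summing over the $SA$ pairs then gives the claim.

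Next I would read the doubling property off the stopping rule. The \textbf{while} condition in Line \ref{alg:oco-ucrl2-while} forces an episode to terminate as soon as the in-episode count of some pair would reach its pre-episode count, so with certainty $\nu_m(s,a)\le N^+_m(s,a)$ for every $m$ and every $(s,a)$. Combined with the identity $N_{m+1}(s,a)=N_m(s,a)+\nu_m(s,a)$ (and its truncated analogue at $m=M$), this yields $N_{m+1}(s,a)\le 2N_m(s,a)$ whenever $N_m(s,a)\ge 1$. Now fix $(s,a)$ and let $m^\star$ be the first episode with $\nu_{m^\star}(s,a)>0$; before it $N_{m^\star}(s,a)=0$, so $N^+_{m^\star}(s,a)=1$ and $\nu_{m^\star}(s,a)\le 1$, contributing at most $1$. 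For every later episode with $\nu_m(s,a)>0$ we have $N_m(s,a)\ge 1$, hence $N^+_m=N_m$, and writing $r_m:=N_{m+1}(s,a)/N_m(s,a)\in[1,2]$ the elementary inequality $r-1\le 2\log r$ on $[1,2]$ gives
$$
\frac{\nu_m(s,a)}{N^+_m(s,a)} = r_m-1 \le 2\log r_m = 2\bigl(\log N_{m+1}(s,a)-\log N_m(s,a)\bigr).
$$
Episodes with $\nu_m(s,a)=0$ leave $N_m$ unchanged and contribute $0$, so summing over $m=m^\star+1,\dots,M$ telescopes to $2\log\bigl(N(s,a)/N_{m^\star+1}(s,a)\bigr)\le 2\log N(s,a)\le 2\log T$, where $N(s,a)\le T$ is the total count of $(s,a)$ up to time $T$. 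Adding the $m^\star$-term yields the per-pair bound $1+2\log T$.

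The \textbf{main obstacle} is purely bookkeeping rather than conceptual: the three points needing care are (i) extracting $\nu_m(s,a)\le N^+_m(s,a)$ — and hence the doubling inequality — from the while-loop exit condition; (ii) treating the first episode $m^\star$ in which $(s,a)$ appears separately, since there $N_{m^\star}(s,a)=0$ so the log-ratio step does not apply; and (iii) truncating the final episode $M$ at time $T$ so that the telescoped endpoint is the genuine total count $N(s,a)\le T$ rather than the full (possibly incomplete) episode count. Once these are handled, the elementary inequality $r-1\le 2\log r$ on $[1,2]$ and the telescoping close the argument.
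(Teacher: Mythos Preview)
Your proof is correct and follows essentially the same route as the paper: decompose the sum by episode and by state--action pair, use the doubling property $\nu_m(s,a)\le N^+_m(s,a)$ coming from the \textbf{while}-loop exit criterion, and bound each per-pair sum by $1+2\log T$ via the elementary inequality $x\le 2\log(1+x)$ on $[0,1]$ (equivalently, $r-1\le 2\log r$ on $[1,2]$). The only cosmetic difference is that the paper packages the per-pair estimate as a general fact about sequences $z_1,\dots,z_n$ with $0\le z_m\le Z_{m-1}:=\max\{1,\sum_{i<m}z_i\}$ and proves it by induction, whereas you isolate the first episode $m^\star$ explicitly and telescope the remaining logarithms; both arguments are the same in content.
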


\begin{proof}[Proof of Claim \ref{claim:inverse}]
To start the proof, first denote $\nu'_{m(T)}(s, a) = \sum^T_{t = \tau(m(T))} \mathsf{1}((s_t, a_t) = (s, a))$. Essentially $\nu'_{m(T)}(s, a)$ is $\nu_{m(T)}(s, a)$ capped at the end of time step $T$. In addition, denote $N^{+'}_{m(T) + 1}(s, a) = \sum^T_{t = 1} \mathsf{1}((s_t, a_t) = (s, a))$. Similar to $\nu'_{m(T)}(s, a)$, $N^{+'}_{m(T) + 1}(s, a)$ denotes the version of $N^{+}_{m(T) + 1}(s, a)$ capped at the end of time step $T$. Now, 
\begin{equation*}
\sum^T_{t=1} \frac{1}{N^+_{m(t)}(s_t, a_t)} = \sum^{m(T) - 1}_{m = 1} \sum_{s\in \SSS}\sum_{a\in \AAA_s}\frac{\nu_m(s, a)}{N^+_m(s, a)} + \sum_{s\in \SSS}\sum_{a\in \AAA_s}\frac{\nu'_{m(T)}(s, a)}{N^+_{m(T)}(s, a)}.
\end{equation*}
Now, for every state-action pair $s, a$, we assert that
\begin{equation}\label{eq:inverse_each_s_a}
\sum^{m(T) - 1}_{m = 1} \frac{\nu_m(s, a)}{ N^+_m(s, a)} +  \frac{\nu'_{m(T)}(s, a)}{ N^+_{m(T)}(s, a)} \leq 1 + 2 \log \left( N^{+'}_{m(T) + 1}(s, a)\right).
\end{equation}
Indeed, the asserted inequality can be proved by drawing the following general fact: For any sequence of numbers $z_1, \ldots, z_n$ with $0\leq z_m \leq Z_{m-1} := \max\{1, \sum^{m-1}_{i=1} z_i \}$, we have
\begin{equation}\label{eq:general_fact}
\sum^n_{m=1}\frac{z_m}{Z_{m-1}}\leq 1 + 2\log Z_n.
\end{equation}
We prove the inequality (\ref{eq:general_fact}) by induction on $n$. The case for $n = 1$ is clearly true. Now, suppose the inequality is true for $n$. Then it is also true for $n+1$, since
$$
\sum^{n+1}_{m=1}\frac{z_m}{Z_{m-1}} \leq 1 + 2\log Z_{n} + \frac{z_{n+1}}{Z_{n}} \leq  1 + 2\log Z_{n} + 2\log \left(1 + \frac{z_{n+1}}{Z_{n}}\right) = 1 + 2\log Z_{n+1},
$$
where we use the fact that $x\leq 2\log (1 + x)$ for $x\in [0, 1]$. Hence, the induction is established and the (\ref{eq:general_fact}) is proved for general $n$.

Given (\ref{eq:general_fact}) for general $n$, we can readily establish (\ref{eq:inverse_each_s_a}) by applying (\ref{eq:general_fact}) with $n = m(T)$, $z_m = \nu_m(s, a)$ for $1\leq m\leq n-1$ and $z_n = \nu'_n(s, a)$. Altogether, noting that $N^{+'}_{m(T) + 1}(s, a) \leq T$, we achieve the required inequality.
\end{proof}

\subsection{Proof of Lemma \ref{lemma:conc} for events ${\cal E}^v, {\cal E}^p$}\label{app:pflemmaconc}
\lemmaconc*
The proof of the Lemma uses Theorem \ref{thm:emp_berstein} by \citep{AudibertMS09}.

\begin{proof}[Proof of Lemma \ref{lemma:conc}]
We first analyze event ${\cal E}^v$. Consider a fixed objective index $k$, a fixed state $s$ and a fixed action $a$. We assert that 
\begin{equation}\label{eq:required_v}
\mathbb{P}\left[\left|\hat{v}_{m, k}(s, a) - v_k(s, a)\right| \leq \rad^v_{m, k}(s, a) \text{ for all $m$}\right]\geq 1 - \frac{\delta}{2 K SA }. 
\end{equation}
Assuming inequality (\ref{eq:required_v}), the bound $\Pr[{\cal E}^v]\geq 1 - \delta/2$ is established by taking a union bound over $s\in \SSS$, $a\in \AAA_s$ and $k\in [K]$.

We establish inequality (\ref{eq:required_v}) by applying Theorem \ref{thm:emp_berstein} and the union bound. First, note that $\hat{v}_{m, k}(s, a)$ is the sample mean of $N_m(s, a)$ i.i.d. random variables, which are distributed as $V_k(s, a)$. Let $\hat{\sigma}^2_{v, m, k}$ be the sample variance of these $N_m(s, a)$ i.i.d random variables. To apply the union bounds, we also consider $\Upsilon^V_1, \ldots, \Upsilon^V_T$, which are $T$ i.i.d samples with the same distribution as $V_k(s, a)$. Denote $\hat{\Upsilon}^V_t$, $\hat{\sigma}^2_{\Upsilon^V, t}$ respectively as the sample mean and variance of $\Upsilon^V_1, \ldots, \Upsilon^V_t$. 
Let $\delta^v(t) = \delta / (12K  SA t^2) $. Now, 
\begin{align}
&\mathbb{P}\left[\left|\hat{v}_{m, k}( s, a) - v_k( s, a)\right| \leq \sqrt{\frac{2\hat{\sigma}^2_{v, m, k}\log(1/\delta^v(N^+_m(s, a)) )}{N^+_m(s, a)}} + \frac{3\log(1/\delta^v(N^+_m(s, a)) )}{N^+_m(s, a)} \forall\text{ $m$} \right]\nonumber\\
\geq & \mathbb{P}\left[\left|\hat{\Upsilon}^V_t - v_k( s, a)\right| \leq \sqrt{\frac{2\hat{\sigma}^2_{\Upsilon^V, t}\log(1/\delta^v(t) )}{t}} + \frac{3\log(1/\delta^v(t) )}{t} \text{ for all $t\in [T]$} \right]\label{eq:v_by_union_bound}\\
\geq & 1- 3 \sum^T_{t=1}\delta^v(t) = 1 - \frac{\delta}{4KSA} \sum^T_{t=1}\frac{1}{t^2} \geq   1 - \frac{\delta}{2KSA}.\label{eq:v_by_conc}
\end{align}
Step (\ref{eq:v_by_union_bound}) is by applying a union bound over all possible values of $N^+_m(s, a)$s. Step (\ref{eq:v_by_conc}) is by applying Theorem \ref{thm:emp_berstein}. 
Finally, note that $\hat{\sigma}^2_{v, m, k} \leq \hat{v}_{m, k}(s, a)$, since $V(s_t, a_t)\in [0, 1]$. Putting in the definition of $\delta^v(t)$ yields 
$$
\rad^v_{m, k}(s, a) \geq \sqrt{\frac{2\hat{\sigma}^2_{v, m, k}\log(1/\delta^v(N^+_m(s, a)) )}{N^+_m(s, a)}} + \frac{3\log(1/\delta^v(N^+_m(s, a)) )}{N^+_m(s, a)}.
$$
Altogether, the required inequality for ${\cal E}^v$ is shown.

Next, we analyze the event ${\cal E}^p$ by in a similar way. Consider fixed states $s', s\in \SSS$ and a fixed action $a\in \AAA_s$. We assert that 
\begin{equation}\label{eq:required_p}
\mathbb{P}\left[\left|\hat{p}_m(s' | s, a) - p(s' | s, a)\right| \leq \rad^p_m(s' | s, a)  \text{ for all $m$} \right]\geq 1 - \frac{\delta}{2 S^2 A}.
\end{equation}
Assuming inequality (\ref{eq:required_p}), the bound $\Pr[{\cal E}^p]\geq 1 - \delta/2$ is established by taking a union bound over $s', s\in \SSS$ and $a\in \AAA_s$. 
Let $\Upsilon^p_1, \ldots, \Upsilon^p_T$ be $T$ i.i.d. Bernoulli random variables with the common mean $p(s' | s, a)$. For each $t\in [T]$, denote $\hat{\Upsilon}^p_t, \hat{\sigma}^2_{\Upsilon^p, t}$ respectively as the sample mean and sample variance of $\Upsilon^p_1, \ldots \Upsilon^p_t$.
In addition, let $\delta^p(t) = \delta/ (12S^2 A t^2)$. We have
\begin{align*}
& \mathbb{P}\left[\left|\hat{p}_m(s' | s, a) - p(s' | s, a)\right| \leq \rad^p_m(s' | s, a)  \text{ for all $m$} \right] \\
\geq & \mathbb{P}\left[\left|\hat{\Upsilon}^p_t - p(s' | s, a)\right| \leq \sqrt{\frac{2\hat{\sigma}^2_{\Upsilon^p, t}\log(1/\delta^p(t) )}{t}} + \frac{3\log(1/\delta^p(t) )}{t} \text{ for all $t\in [T]$} \right]\\
\geq & 1- 3 \sum^T_{t=1}\delta^p(t) = 1 - \frac{\delta}{4S^2 A} \sum^T_{t=1}\frac{1}{t^2} \geq   1 - \frac{\delta}{2S^2 A}.
\end{align*}
Hence, the Lemma is proved.
\end{proof}

\subsection{Proof of Lemma \ref{lemma:decompose}, which decomposes the regret}\label{app:pflemmadecompose}
\lemmadecompose*
\begin{proof}[Proof of Lemma \ref{lemma:decompose}]
First, by the definitions of $(\clubsuit_t), (\diamondsuit_t)$, it is clear that 
$$
(-\theta_t)^\top V_t(s_t, a_t) = \tilde{r}_m(s_t, a_t) - \left[ (\clubsuit_t) + (\diamondsuit_t) \right].
$$
Thus, it suffices to show that 
\begin{equation}\label{eq:decompose_required_ineq}
\tilde{r}_m(s_t, a_t) \geq (-\theta_t)^\top \sum_{s\in \SSS, a\in \AAA_s} v(s, a)x^*(s, a) -\left[ (\heartsuit_t) + (\spadesuit_t) + (\P_t)\right].
\end{equation}
To prove (\ref{eq:decompose_required_ineq}), we first focus on the application of the EVI oracle for episode $m$. By  Assumption \ref{ass:communicating} and by assuming the event ${\cal E}^p$, we know that the oracle terminates in finite time, by virtue of item (i) in Theorem \ref{thm:JOA}. Thus, the output policy $\tilde{\pi}_m$ and the output dual variables $(\tilde{\phi}_m, \tilde{\gamma}_m)$ from Eq. (\ref{eq:evi_m}) are well-defined. Now, we assert that 
\begin{equation}\label{eq:decompose_step_1}
\tilde{r}_m(s_t, a_t)\geq \tilde{\phi}_m -\left[(\spadesuit_t) + (\P_t)\right].
\end{equation}
To show (\ref{eq:decompose_step_1}), we let $\tilde{u}_{\iota +1}, \tilde{u}_{\iota}\in \mathbb{R}^\SSS$ respectively be the terminating and the penultimate VI records, when $\text{EVI}(\tilde{r}_m, H^m_p, 1/\sqrt{\tau(m)})$ is applied. Now, we have
\begin{align}
\tilde{\phi}_m - (\P_t) &= \max_{s\in \SSS}\left\{\tilde{u}_{\iota + 1}(s) - \tilde{u}_{\iota}(s)\right\}- \frac{1}{\sqrt{\tau(m)}} \label{eq:decompose_step_2}\\
&\leq \min_{s\in \SSS}\left\{\tilde{u}_{\iota + 1}(s) - \tilde{u}_{\iota}(s)\right\}\label{eq:decompose_step_3}\\
& \leq \tilde{u}_{\iota + 1}(s_t) - \tilde{u}_{\iota}(s_t)\nonumber\\
& = \max_{a\in \AAA_{s_t}}\left\{ \tilde{r}_m(s_t, a) + \max_{\bar{p}\in H^p_m(s_t, a)}\left\{\sum_{s'\in \SSS} \tilde{u}_{\iota}(s')\bar{p}(s')\right\}\right\} - \tilde{u}_{\iota}(s_t)\nonumber\\
& = \tilde{r}_m(s_t, a_t) + \max_{\bar{p}\in H^p_m(s_t, a_t)}\left\{\sum_{s'\in \SSS} \tilde{u}_{\iota}(s')\bar{p}(s')\right\} - \tilde{u}_{\iota}(s_t) \label{eq:decompose_step_4}\\
&= \tilde{r}_m(s_t, a_t) + (\spadesuit_t) \label{eq:decompose_step_5},
\end{align} 
where step (\ref{eq:decompose_step_2}) is by the definition of $\tilde{\phi}_m$, step (\ref{eq:decompose_step_3}) is by the terminating condition of EVI, and step (\ref{eq:decompose_step_4}) is by the definition of $\tilde{\pi}_m$, and step (\ref{eq:decompose_step_5}) is by the definition of $\tilde{\gamma}_m$.

In order to prove the inequality (\ref{eq:decompose_required_ineq}) and complete the proof of the Lemma, it suffices to show
\begin{equation}\label{eq:decompose_step_6}
\tilde{\phi}_m \geq (-\theta_t)^\top \sum_{s\in \SSS, a\in \AAA_s} v(s, a)x^*(s, a) - (\heartsuit_t).
\end{equation}
To this end, we first claim that the output dual variables $(\tilde{\phi}_m, \tilde{\gamma}_m)$ are feasible to the following linear program $(\lineardual_m)$:
\begin{subequations}
\begin{alignat}{2}
(\lineardual_m) \text{:\; min }    & \phi & \nonumber\\
\text{s.t. }   &\phi + \gamma(s) \geq \tilde{r}_m(s, a) + \sum_{s'\in\SSS} p(s' | s,a) \gamma(s')      &\quad &\forall s \in \SSS, a\in \AAA_s \nonumber\\
               &\phi, \gamma(s)\text{ free}      &\quad & \forall s\in \SSS. \nonumber
\end{alignat}
\end{subequations}
Indeed, for any $s\in \SSS$, $a\in \AAA_s$, we have
\begin{align}
 \tilde{\phi}_m + \tilde{\gamma}_m(s) & \geq  \tilde{u}_{\iota + 1}(s) - \tilde{u}_\iota(s) + \tilde{u}_\iota(s) = \tilde{u}_{\iota + 1}(s)\nonumber\\
& \geq \tilde{r}_m(s, a) + \max_{\bar{p}\in H^p_m(s, a)}\left\{\sum_{s'\in \SSS} \tilde{u}_{\iota}(s')\bar{p}(s')\right\} \label{eq:decompose_step_7}\\
& \geq \tilde{r}_m(s, a) + \sum_{s'\in \SSS} \tilde{u}_{\iota}(s')p(s' | s, a) = \tilde{r}_m(s, a) + \sum_{s'\in \SSS} \tilde{\gamma}_m(s')p(s' | s, a)\nonumber,
\end{align}
where step (\ref{eq:decompose_step_7}) is by the assumption that $p \in H^p_m$, since we condition on the event ${\cal E}^p$. Therefore, we have $\tilde{\phi}_m \geq \text{opt}(\lineardual_m) = \text{opt}(\linearprimal_m)$, where the linear program 
\begin{subequations}
\begin{alignat}{2}
(\linearprimal_m) \text{:\; max }    & \sum_{s\in \SSS, a\in \AAA_s}\tilde{r}_m(s, a) x(s, a) & \nonumber\\
\text{s.t. }   &\sum_{a \in \AAA_s} x(s, a) = \sum_{s'\in\SSS, a'\in \AAA_{s'}} P(s | s',a')x(s', a')      &\quad &\forall s \in \SSS \nonumber\\
               &\sum_{s\in \SSS, a\in \AAA_s}x(s, a) = 1       &\quad & \nonumber\\
               &x(s, a)\geq 0      &\quad &\forall s\in \SSS, a\in \AAA_s \nonumber
\end{alignat}
\end{subequations}
is a dual formulation of $(\lineardual_m)$. The optimal solution $x^*$ of the offline benchmark problem $(\primal_{\cal M})$ is feasible to the problem $(\linearprimal_m)$, since both $(\primal_{\cal M})$, $(\linearprimal_m)$ have the same feasible region. 

Finally, we prove the inequality (\ref{eq:decompose_step_6}), and hence completing the proof of the Lemma. In the following derivation, we denote $\tilde{v}_m(s, a)$ as an optimal solution to the optimization problem (\ref{eq:opt_reward}) for computing the optimistic reward $\tilde{r}_m(s,a)$:
\begin{align}
\tilde{\phi}_m &\geq \sum_{s\in \SSS, a\in \AAA_s} \tilde{r}_m(s, a) x^*(s, a) \nonumber\\
&= (-\theta_{\tau(m)})^\top \sum_{s\in \SSS, a\in \AAA_s} \tilde{v}_m(s, a) x^*(s, a)\nonumber\\
&= \sum_{s\in \SSS, a\in \AAA_s} x^*(s, a)\left[(-\theta_{\tau(m)})^\top  \tilde{v}_m(s, a) - (-\theta_{\tau(m)})^\top v(s, a)\right] \nonumber\\
&\qquad + \left[ \theta_{t} -\theta_{\tau (m)}  \right]^\top \sum_{s\in \SSS, a\in\AAA_s} v(s, a) x^*(s, a) + (-\theta_{t})^\top \sum_{s\in \SSS, a\in\AAA_s} v(s, a) x^*(s, a) \label{eq:decompose_step_8}\\
& \geq - (\heartsuit_t) + (-\theta_{t})^\top \sum_{s\in \SSS, a\in\AAA_s} v(s, a) x^*(s, a)\nonumber,
\end{align}
where step (\ref{eq:decompose_step_8}) holds, since we condition on the event ${\cal E}^v$, which ensures that 
$(-\theta_{\tau(m)})^\top  \tilde{v}_m(s, a) - (-\theta_{\tau(m)})^\top v(s, a) \geq 0$ for each $s\in \SSS, a\in \AAA_s$. Therefore, the first sum in (\ref{eq:decompose_step_8}) is non-negative, hence the step is justified. Altogether, inequality (\ref{eq:decompose_step_6}) is shown, and the Lemma is proved.
\end{proof}

\subsection{Proof of Lemma \ref{lemma:club_heart}, which bounds $(\clubsuit,\heartsuit)$}\label{app:pfclubheart}
\lemmaclubheart*
\begin{proof}[Proof of Lemma \ref{lemma:club_heart}]
Now,
\begin{align}
\sum^T_{t=1} (\clubsuit_t) = & \sum^{m(T)-1}_{m = 1} \sum^{\tau(m+1) - 1}_{t = \tau(m)} \left[ \theta_t -\theta_{\tau(m)}\right]^\top V_t(s_t, a_t) + \sum^{T}_{t = \tau(m(T))} \left[ \theta_t - \theta_{\tau(m)}\right]^\top V_t(s_t, a_t)\nonumber\\
\leq & \sum^{m(T)-1}_{m = 1} \sum^{\tau(m+1) - 1}_{t = \tau(m)} \left\| \theta_t - \theta_{\tau(m)}\right\|_* \| V_t(s_t, a_t)\| + \sum^{T}_{t = \tau(m(T))} \left\| \theta_t -  \theta_{\tau(m)} \right\|_* \| V_t(s_t, a_t)\|\label{eq:fw_cs}\\
\leq & \sum^{m(T)-1}_{m = 1} Q \max_{t\in \{1, \ldots, T\}} \|V_t(s_t, a_t)\|   +  Q \max_{t\in \{1, \ldots, T\}} \|V_t(s_t, a_t)\|  \quad \text{ w.p. 1} \label{eq:fw_R_club_bd}\\
= & ~ Q \max_{t\in \{1, \ldots, T\}} \|V_t(s_t, a_t)\|  \cdot M(T) ~ \leq ~ Q \|\mathbf{1}_K\| \cdot M(T).  \label{eq:club_refined}
\end{align}
Step (\ref{eq:fw_cs}) is by the triangle inqeuality and the Cauchy-Scwhartz inequality. 
Step (\ref{eq:fw_R_club_bd}) is by our terminating criteria, which require $\Psi \leq Q$ for each episode. 
Similar to the above, we also have: 
\begin{align}
&\sum^T_{t=1} (\heartsuit_t) =  \left\{\sum^{m(T)-1}_{m = 1} \sum^{\tau(m+1) - 1}_{t = \tau(m)}\left[ \theta_{\tau(m)} - \theta_t \right] + \sum^{T}_{t = \tau(m(T))} \left[ \theta_{\tau(m)} - \theta_t  \right]\right\}^\top \sum_{s\in \SSS, a\in\AAA_s} v(s, a) x^*(s, a)   \nonumber\\
\leq & \left\{\sum^{m(T)-1}_{m = 1} \sum^{\tau(m+1) - 1}_{t = \tau(m)}\left\| \theta_{\tau(m)} - \theta_t  \right\|_* + \sum^{T}_{t = \tau(m(T))} \left\| \theta_{\tau(m)} - \theta_t  \right\|_* \right\} \left\| \sum_{s\in \SSS, a\in\AAA_s} v(s, a) x^*(s, a) \right\| \nonumber\\
\leq & \sum^{m(T)-1}_{m = 1} Q \max_{s\in \SSS, a\in \AAA_s} \|v(s, a)\|   +  Q \max_{s\in \SSS, a\in \AAA_s} \|v(s, a)\|  \quad \text{ w.p. 1} \nonumber\\
= & ~ Q \max_{s\in\SSS, a\in \AAA_s} \|v(s, a)\|  \cdot M(T) ~ \leq ~ Q \|\mathbf{1}_K\| \cdot M(T).  \label{eq:heart_refined}
\end{align}
Altogether, the Lemma is proved.
\end{proof}

\subsection{Proof of Claim \ref{claim:P}, which bounds $(\P)$}\label{app:pfclaimP}
\claimp*
\begin{proof}[Proof of Claim \ref{claim:P}]
The proof uses Lemma \ref{lemma:JOA2}. Let's apply $n = m(T)$, as well as 
\begin{equation*}
z_m = 
  \begin{cases} 
   \tau(m + 1) - \tau(m) & \text{if } 1\leq m < m(T) \\
   T - \tau(m)       & \text{if } m = m(T)
  \end{cases},
\end{equation*}
where we set $\tau(0) = 0$. 
Now, $Z_0 = 1$, $Z_m = \tau(m)$ for $1\leq m < m(T)$, and $Z_{m(T)} = T$. Therefore, 
\begin{align*}
\sum^T_{t=1}(\P_t) & = \sum^{m(T) -1}_{m  =1}\sum^{\tau(m+ 1) - 1}_{t = \tau(m)}\frac{1}{\sqrt{\tau(m)}} + \sum^T_{t = \tau(m(T))}\frac{1}{\sqrt{\tau(m(T))}}\nonumber\\
& = \sum^n_{m=1}\frac{z_m}{\sqrt{Z_{m-1}}} \leq \left(\sqrt{2} + 1\right) \sqrt{Z_{m(T)}} = \left(\sqrt{2} + 1\right) \sqrt{T}. 
\end{align*}
Hence the claim is proved.
\end{proof}

\subsection{Proof of Lemma \ref{lemma:diamond}, which bounds $(\diamondsuit)$}\label{app:pflemmadiamond}
\lemmadiamond*
The proof of the Lemma uses the Azuma-Hoeffding inequality in Theorem \ref{thm:hoeffding}, as well as Lemma \ref{lemma:JOA} by \citep{JakschOA10} and Claim \ref{claim:inverse}.

\begin{proof}[Proof of Lemma \ref{lemma:diamond}]
To start the proof, we define $\tilde{v}_m(s, a)$ and $m(t)$. We express $\tilde{r}_m(s, a) = (-\theta_{\tau(m)})^\top \tilde{v}_m(s, a)$, where $\tilde{v}_m(s, a)$ is an optimal solution to the optimization problem (\ref{eq:opt_reward}). For each $t$, we define $m(t)$ to be the episode index such that $\tau(m(t)) \leq t < \tau(m(t) + 1) - 1$. 
We first decompose $\sum^T_{t=1} (\diamondsuit_t) $ as follows:
\begin{align*}
\sum^T_{t=1} (\diamondsuit_t) & \leq \sum^T_{t=1}\tilde{r}_{m(t)}(s_t, a_t) - (-\theta_{\tau(m(t))})^\top V_t(s_t, a_t) \nonumber\\
& = \underbrace{\sum^T_{t=1}(-\theta_{\tau(m(t))})^\top \left[ \tilde{v}_{m(t)}(s_t, a_t) - v(s_t, a_t)\right]}_{(\dagger_v)}  + \underbrace{\sum^T_{t=1} (-\theta_{\tau(m(t))})^\top\left[v(s_t, a_t) -  V_t(s_t, a_t)\right]}_{(\ddagger_v)}.
\end{align*}
We bound the sums $(\dagger_v, \ddagger_v)$ as follows:

\textbf{Bounding $(\dagger_v)$. }We bound this term by invoking the confidence bounds asserted by the event ${\cal E}^v$. Define the notation $(\text{log-$v$}) := \log (12KSAT^2 / \delta )$. We have
\begin{align}
(\dagger_v) &= \sum^T_{t=1}(-\theta_{\tau(m(t))})^\top \left[ \tilde{v}_{m(t)}(s_t, a_t) - \hat{v}_{m(t)}(s_t, a_t) +  \hat{v}_{m(t)}(s_t, a_t) - v(s_t, a_t)\right] \nonumber\\
&\leq \sum^T_{t=1} \left\|-\theta_{\tau(m(t))}\right\|_* \left[\left\|\tilde{v}_{m(t)}(s_t, a_t) - \hat{v}_{m(t)}(s_t, a_t) \right\| + \left\|\hat{v}_{m(t)}(s_t, a_t) - v(s_t, a_t)\right\| \right]\label{eq:dagger_v_cs}\\
&\leq 2 L \sum^T_{t=1} \left\|(\rad^v_{m(t), k}(s_t, a_t ))^K_{k=1}\right\| \label{eq:dagger_v_Ev}\\
& \leq 4 L \left\| \mathbf{1}_K\right\| \left[\sqrt{(\text{log-$v$})}\cdot \sum^T_{t=1} \frac{1}{\sqrt{N^+_{m(t)}(s_t, a_t)}} + 3\cdot (\text{log-$v$})\cdot \sum^T_{t=1}\frac{1}{N^+_{m(t)}(s_t, a_t)}\right]\label{eq:diamond_step_1}\\
&\leq 4 L \left\| \mathbf{1}_K\right\| \left[\left(\sqrt{2} + 1\right)\sqrt{SA T \cdot (\text{log-$v$})} +   3\cdot (\text{log-$v$})\cdot SA \left(1 + 2\log T\right)\right]
\nonumber.
\end{align}
Step (\ref{eq:dagger_v_cs}) is by the Cauchy-Schwartz inequality, step (\ref{eq:dagger_v_Ev}) is by the assumption that the event ${\cal E}^v$ holds. Step (\ref{eq:diamond_step_1}) is by Lemma \ref{lemma:JOA} and Claim \ref{claim:inverse}, as well as $(\text{log-$v$})\geq (\text{log-$v$})_m$ for all $m$.

\textbf{Bounding $(\ddagger_v)$. } Consider random variable $X_t = (-\theta_{\tau(m(t))})^\top\left[v(s_t, a_t) -  V_t(s_t, a_t)\right]$ and filtration ${\cal F}_t = \sigma(\{ s_t, a_t, V_t(s_t, a_t), \theta_{t+1} \}^t_{\tau = 1}).$ 
Now, $|X_t| \leq L\|\mathbf{1}_K\|$, $X_t$ is ${\cal F}_t$-measurable with $\mathbf{E}[X_t | {\cal F}_{t-1} ]=0$. Thus, we apply Theorem \ref{thm:hoeffding} to conclude that, with probability $ \geq 1-\delta$,
$$
(\ddagger_v) \leq L\|\mathbf{1}_K\| \sqrt{2T \log(1/\delta)}.
$$

Altogether, we have, with probability at least $1-\delta$, 
\begin{align}
\sum^T_{t=1} (\diamondsuit_t) &\leq L \left\| \mathbf{1}_K\right\| \left[\left(5 \sqrt{2} + 4\right)\sqrt{\sum_{s\in \SSS}|\AAA_s| T \cdot(\text{log-$v$}) } +   12\cdot (\text{log-$v$}) \cdot \sum_{s\in \SSS} |\AAA_s| \log T \right] \label{eq:diamond_precise_bound}\\
& = O\left(L \left\| \mathbf{1}_K\right\| \sqrt{ SAT \log \frac{KSAT}{\delta}}  + L \left\| \mathbf{1}_K\right\| SA \log^2\frac{KSAT}{ \delta} \right). 
\nonumber
\end{align}
Hence, the Lemma is proved.
\end{proof}

\subsection{Proof of Lemma \ref{lemma:spade}, which bounds $(\spadesuit)$}\label{app:pflemmaspade}
\lemmaspade*
\begin{proof}[Proof of Lemma \ref{lemma:spade}]
First, observe that $$\max_{s\in \SSS}\{\tilde{\gamma}_m(s)\} - \min_{s\in \SSS}\{ \tilde{\gamma}_m(s)\}\leq L \max_{s, a}\max_{w\in H^v_m(s, a) } \|w\| \cdot D \leq L \|\mathbf{1}_K\| \cdot D$$ for all $m$. 
Indeed, conditioned on ${\cal E}^p$, we have $p \in H^p_m$ for all $m$. In addition, $\max_{s, a}\tilde{r}_m(s, a) \leq L \max_{s, a}\max_{w\in H^v_m(s, a) } \|w\| \leq L\|\mathbf{1}_K\|$. The desired inequality follows from item (ii) in Theorem \ref{thm:JOA}.    
For each episode $m$ and state $s$, consider replacing $\tilde{\gamma}_m(s)$ by $\gamma_m(s) := \tilde{\gamma}_m(s) - \min_{s'\in \SSS}\{ \tilde{\gamma}_m(s')\}$. Now, $0 \leq \max_{m, s}\{\gamma_m(s)\}\leq L\|\mathbf{1}_K\| \cdot D$, and the value of each $(\spadesuit_t)$ is preserved: 
\begin{align*}
(\spadesuit_t) &= \max_{\bar{p}\in H^p_{m(t)}(s_t, a_t)}\left\{\sum_{s'\in \SSS}\tilde{\gamma}_{m(t)}(s') \bar{p}(s')\right\} - \tilde{\gamma}_{m(t)}(s_t) \nonumber\\
&=  \max_{\bar{p}\in H^p_{m(t)}(s_t, a_t)}\left\{\sum_{s'\in \SSS} \gamma_{m(t)}(s') \bar{p}(s')\right\} - \gamma_{m(t)}(s_t) ,
\end{align*}
where $m(t)$ is the episode index such that $\tau(m(t)) \leq t <\tau(m(t)+1)$. 

Consider the following decomposition:
\begin{align*}
\sum^T_{t=1} (\spadesuit_t) &= \underbrace{\sum^T_{t=1} \left[ \max_{\bar{p}\in H^p_m(s_t, a_t)}\left\{\sum_{s'\in \SSS} \gamma_{m(t)}(s') \bar{p}(s')\right\} - \sum_{s\in \SSS}\gamma_{m(t)}(s)p(s | s_t, a_t) \right]}_{(\dagger_p)}  \nonumber\\
&\qquad + \underbrace{\sum^T_{t=1} \left[\sum_{s\in \SSS}\gamma_{m(t)}(s)p(s | s_t, a_t) - \gamma_{m(t)}(s_t)\right]}_{(\ddagger_p)}.
\end{align*}

\textbf{Bounding $(\dagger_p)$.} We proceed by unraveling $H^p_m$. Now, denote $(\text{log-$p$}) := \log (12S^2AT^2 / \delta )$. 
\begin{align}
(\ddagger_p) &\leq \sum^T_{t=1} \left[ \max_{\bar{p}\in H^p_{m(t)}(s_t, a_t)}\left\{\sum_{s\in \SSS} \gamma_{m(t)}(s) \bar{p}(s)\right\} - \min_{\bar{p}\in H^p_{m(t)}(s_t, a_t)}\left\{\sum_{s\in \SSS} \gamma_{m(t)}(s) \bar{p}(s)\right\} \right] \nonumber\\
&\leq 2 \sum^T_{t=1}\sum_{s\in \SSS} \gamma_{m(t)}(s)\rad^p_{m(t)}(s | s_t, a_t) \nonumber\\
&\leq 2 L\|\mathbf{1}_K\| \cdot D \sum^T_{t=1}\sum_{s\in \SSS}  \left[\sqrt{\frac{2\hat{p}_{m(t)}(s' | s, a) \cdot (\text{log-$p$}) }{N^+_{m(t)}(s, a)}} + \frac{3(\text{log-$p$})}{N^+_{m(t)}(s, a)}\right]\nonumber\\
&\leq  2 L\|\mathbf{1}_K\| \cdot D \sum^T_{t=1}\left[\sqrt{\frac{2\Gamma\cdot (\text{log-$p$}) }{N^+_{m(t)}(s, a)}} + \frac{3\cdot S(\text{log-$p$})}{N^+_{m(t)}(s, a)}\right]\label{eq:spade_step_1}\\
&\leq 2 (\sqrt{2} + 1) L \|\mathbf{1}_K\|\cdot  D\sqrt{2\Gamma SAT\cdot (\text{log-$p$})} + 6L \|\mathbf{1}_K\| \cdot D S^2 A (1 + 2\log T)(\text{log-$p$}) \label{eq:spade_step_2}.
\end{align}
We justify step (\ref{eq:spade_step_1}) as follows. Now, recall $\Gamma = \max_{s\in \SSS ,a\in \AAA_s} \| p(\cdot | s, a) \|_0$. With certainty, we have $\| \hat{p}_m(\cdot | s, a) \|_0 \leq \| p(\cdot | s, a) \|_0 \leq \Gamma$. Indeed, for each $s'\in \SSS$, $p(s' | s, a) = 0$ implies that $\hat{p}_m(s' | s, a) = 0$ with certainty. By the Cauchy-Schwartz inequality,
\begin{align}
\sum_{s'\in \SSS} \sqrt{\hat{p}_m(s' | s, a)} & = \sum_{s'\in \SSS} \sqrt{\hat{p}_m(s' | s, a)\cdot\mathsf{1}(p(s' | s, a) > 0)}\nonumber\\
\leq & \sqrt{\left[\sum_{s'\in \SSS} \hat{p}_m(s' | s, a)\right]\left[\sum_{s'\in \SSS}\mathsf{1}(p(s' | s, a) > 0)\right]}= \sqrt{ \| p(\cdot | s, a) \|_0} = \sqrt{\Gamma}.\nonumber
\end{align}
Step (\ref{eq:spade_step_2}) is by Proposition \ref{lemma:JOA} and Claim \ref{claim:inverse}.

\textbf{Bounding $(\ddagger_p)$.} We analyze the term by accounting for the number of episodes:
\begin{align}
(\ddagger_p) &= \left[\gamma_{m(T+1)}(s_{T+1})  - \gamma_{m(1)}(s_1) \right] + \sum^T_{t=1} \left[\sum_{s\in \SSS}\gamma_{m(t)}(s)p(s | s_t, a_t) - \gamma_{m(t+1)}(s_{t+1})\right] \nonumber\\
& = \left[\gamma_{m(T+1)}(s_{T+1})  - \gamma_{m(1)}(s_1) \right] + \sum^T_{t=1} \left[\gamma_{m(t)}(s_{t+1}) - \gamma_{m(t+1)}(s_{t+1})\right] \nonumber\\
& \qquad + \sum^T_{t=1} \left[\sum_{s\in \SSS}\gamma_{m(t)}(s)p(s | s_t, a_t) - \gamma_{m(t)}(s_{t+1}) \right] \quad \text{ w.p. $1$} \label{eq:spade_step_2.5} \\
& \leq \max_{t, s}\{\gamma_{m(t)}(s)\}  (M(T) + 1)  +\sum^T_{t=1} \left[\sum_{s\in \SSS}\gamma_{m(t)}(s)p(s | s_t, a_t) - \gamma_{m(t)}(s_{t+1}) \right] \label{eq:spade_step_3} \\
& \leq \max_{t, s}\{\gamma_{m(t)}(s)\} (M(T) + 1)  + \max_{t, s}\{\gamma_{m(t)}(s)\} \sqrt{2T\log(1/\delta)}\nonumber \quad \text{ w.p. $1-\delta$}\\
& \leq L \|\mathbf{1}_K\|\cdot  D (M(T) + 1)  + L \|\mathbf{1}_K\|\cdot D\sqrt{2T\log(1/\delta)}\nonumber .
\end{align}
Step (\ref{eq:spade_step_3}) is shown by analyzing the second summation in (\ref{eq:spade_step_2.5}), which is $\sum^T_{t=1} \gamma_{m(t)}(s_{t+1}) - \gamma_{m(t+1)}(s_{t+1})$. In the summation, at most $m(T) \leq M(T)$ summands are non-zero, and each non-zero summand is less than or equal to $\max_{t, s}\{\gamma_{m(t)}(s)\} \leq L \|\mathbf{1}_K\|\cdot  D$. 

Combining the bounds for $(\dagger_p, \ddagger_p)$, with probability at least $1-\delta$ we have 
\begin{align}
& \sum^T_{t=1} (\spadesuit_t) \leq  (2\sqrt{2} + 3) L  \|\mathbf{1}_K\| \cdot D\sqrt{2\Gamma SAT\cdot (\text{log-$p$})} + L  \|\mathbf{1}_K\| \cdot D(M(T) + 1)\label{eq:explicit_bd_spade}\\
&\qquad\qquad \qquad  + 6L \|\mathbf{1}_K\| \cdot D S^2 A (1 + 2\log T)\cdot(\text{log-$p$})\nonumber\\
 =  & O \left(L \|\mathbf{1}_K\| D\cdot M(T)  \right) + O\left(L \|\mathbf{1}_K\| D\sqrt{\Gamma SAT\log\frac{SAT}{\delta}} +  L \|\mathbf{1}_K\| DS^2 A \log^2\frac{SAT}{\delta}\right)\nonumber.
\end{align}
Altogether, the Lemma is proved.
\end{proof}

\section{OCO Oracles and Their Analyses}
In this Appendix section, we analyze the performance of {\sc Toc-UCRL2} under several specific OCO oracles, which are designed for different classes of reward functions. En route, we provide the full $O(\cdot)$ expressions for the regret bounds under those OCO oracles. In Appendix \ref{app:fw}, we provide supplementary details on the analysis of {\sc Toc-UCRL2} with OCO oracle $\FW$. In Appendix \ref{app:tgd}, we analyze {\sc Toc-UCRL2} with OCO oracle $\TGD$. In Appendix \ref{app:tmd}, we provide an anytime implementation of {\sc Toc-UCRL2} with OCO oracle $\TMD$, as well as its analysis.

To start, we provide the full version of Proposition \ref{prop:Err_bd} in $O(\cdot)$ notation, by summarizing the full $O(\cdot)$ bounds for $(\clubsuit, \diamondsuit, \heartsuit, \spadesuit, \P)$:
\begin{proposition}\label{prop:Err_bd_full}[Prop. \ref{prop:Err_bd} in full]
Consider an execution of {\sc Toc-UCRL2} with a general OCO oracle, over a communicating MDPwGR instance with diameter $D$. For each $T \in \mathbb{N}$, suppose that there is a deterministic constant $M(T)$ s.t. $\Pr[m(T)\leq M(T)] = 1$. Conditioned on events ${\cal E}^v, {\cal E}^p$, with probability at least $1-O(\delta)$
\begin{align*}
\sum^T_{t=1} (-\theta_t)^\top  [ v^* - V_t(s_t, a_t) ] &= O\left( (L D +  Q) \|\mathbf{1}_K\|  M^\FW(T) + L \| \mathbf{1}_K \| D\sqrt{\Gamma SAT\log\frac{KSAT}{\delta}} \right.\nonumber\\
& \qquad \qquad \qquad \qquad \qquad \qquad \qquad \left. + L \| \mathbf{1}_K \| DS^2A\log^2\frac{KS AT}{\delta} \right). \text{$\blacksquare$}
\end{align*}
\end{proposition}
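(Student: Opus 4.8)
The plan is to establish the full $O(\cdot)$ bound by assembling the five per-step error bounds already isolated in the excerpt, while retaining the explicit logarithmic factors that the $\tilde{O}(\cdot)$ notation suppresses. First I would invoke the decomposition of Lemma \ref{lemma:decompose}: conditioned on $\mathcal{E}^v$ and $\mathcal{E}^p$, for each $t$ with episode index $m = m(t)$ we have $(-\theta_t)^\top[v^* - V_t(s_t, a_t)] \leq (\clubsuit_t) + (\diamondsuit_t) + (\heartsuit_t) + (\spadesuit_t) + (\P_t)$. Summing over $1 \leq t \leq T$ reduces the claim to bounding the five partial sums, so the task becomes collecting the constituent estimates and tracking constants.

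Next I would substitute the individual bounds. By Lemma \ref{lemma:club_heart}, both $\sum_t (\clubsuit_t)$ and $\sum_t (\heartsuit_t)$ are at most $Q\|\mathbf{1}_K\| M(T)$ with certainty, using the gradient-threshold stopping rule $\Psi \leq Q$ together with the hypothesis $\Pr[m(T) \leq M(T)] = 1$. Claim \ref{claim:P} gives $\sum_t (\P_t) \leq (\sqrt{2}+1)\sqrt{T}$ deterministically. For the two learning terms I would use the explicit pre-$\tilde{O}$ estimates established inside the proofs of Lemmas \ref{lemma:diamond} and \ref{lemma:spade} rather than their abbreviations: bound (\ref{eq:diamond_precise_bound}) gives $\sum_t (\diamondsuit_t) = O\!\left(L\|\mathbf{1}_K\|\sqrt{SAT\log\frac{KSAT}{\delta}} + L\|\mathbf{1}_K\| SA\log^2\frac{KSAT}{\delta}\right)$ with probability $\geq 1-\delta$ conditioned on $\mathcal{E}^v$, and bound (\ref{eq:explicit_bd_spade}) gives $\sum_t (\spadesuit_t) = O\!\left(L\|\mathbf{1}_K\| D\, M(T) + L\|\mathbf{1}_K\| D\sqrt{\Gamma SAT\log\frac{SAT}{\delta}} + L\|\mathbf{1}_K\| DS^2 A\log^2\frac{SAT}{\delta}\right)$ with probability $\geq 1-\delta$ conditioned on $\mathcal{E}^p$, where the latter invokes the span bound of Theorem \ref{thm:JOA}.

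I would then add the five bounds and collect terms. The $(\clubsuit),(\heartsuit)$ contributions $Q\|\mathbf{1}_K\| M(T)$ combine with the episode-counting piece $L\|\mathbf{1}_K\| D\, M(T)$ of $(\spadesuit)$ to form the leading $(LD+Q)\|\mathbf{1}_K\| M(T)$ term. Since $D\sqrt{\Gamma}\geq 1$, the $(\spadesuit)$ contribution $L\|\mathbf{1}_K\| D\sqrt{\Gamma SAT}$ dominates the $L\|\mathbf{1}_K\|\sqrt{SAT}$ from $(\diamondsuit)$, and monotonicity $\log\frac{SAT}{\delta}\leq\log\frac{KSAT}{\delta}$ lets me unify the log argument into the middle term $L\|\mathbf{1}_K\| D\sqrt{\Gamma SAT\log\frac{KSAT}{\delta}}$; likewise the $S^2A\log^2$ piece absorbs the $SA\log^2$ piece to give the last term, and the $\sqrt{T}$ from $(\P_t)$ is dominated. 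A union bound over the two $(1-\delta)$-probability events of Lemmas \ref{lemma:diamond} and \ref{lemma:spade} (the remaining bounds holding with certainty, all under $\mathcal{E}^v\cap\mathcal{E}^p$) yields probability $1-O(\delta)$.

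The main obstacle is not the assembly, which is routine once the five lemmas are in hand, but the bookkeeping: I must carry the explicit logarithmic forms (\ref{eq:diamond_precise_bound}) and (\ref{eq:explicit_bd_spade}) so the final expression retains the correct $\log\frac{KSAT}{\delta}$ and $\log^2\frac{KSAT}{\delta}$ factors, and I must verify that the probability budget is spent only on the two martingale/empirical-Bernstein steps inside $(\diamondsuit)$ and $(\spadesuit)$, while everything tied to the gradient threshold and to $(\P_t)$ holds deterministically. The genuinely difficult content lives upstream — the dual relation behind Lemma \ref{lemma:decompose}, the diameter-based span control for $(\spadesuit)$, and the threshold argument bounding $(\clubsuit),(\heartsuit)$ — but all of these are taken as given at this stage.
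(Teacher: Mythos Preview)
Your proposal is correct and follows essentially the same route as the paper: invoke the decomposition of Lemma \ref{lemma:decompose}, bound the five sums via Lemma \ref{lemma:club_heart}, Claim \ref{claim:P}, and the explicit estimates (\ref{eq:diamond_precise_bound}), (\ref{eq:explicit_bd_spade}) from Lemmas \ref{lemma:diamond} and \ref{lemma:spade}, then add, absorb dominated terms, and take a union bound. Your bookkeeping on which terms dominate and how the logarithmic arguments unify into $\log\frac{KSAT}{\delta}$ is exactly what the paper does when it says the full proposition follows ``by summarizing the full $O(\cdot)$ bounds for $(\clubsuit, \diamondsuit, \heartsuit, \spadesuit, \P)$.''
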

In addition, we provide an auxiliary claim that is useful for establishing an upper bound $M(T)$ on $m(T)$ under an OCO oracle.
\begin{claim}\label{claim:aux_oco}
Let $\alpha \in [0, 1)$ and $C > 0$. Suppose the sequence $\{\rho_j\}^\infty_{j=1}$ satisfies $\rho_1 \geq C^{\frac{1}{1-\alpha}}$, and $\rho_{j+1} \geq \rho_j + C \rho_j^\alpha$. Then for all $j\geq 1$ we have 
\begin{equation}\label{eq:aux_oco}
\rho_j \geq 2^{-\frac{\alpha}{(1-\alpha)^2}} \left[ C \cdot (1-\alpha )\cdot (j-1) \right]^\frac{1}{1-\alpha} .
\end{equation} 
\end{claim}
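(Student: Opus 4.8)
The plan is to establish the bound (\ref{eq:aux_oco}) by induction on $j$, after first extracting a cleaner recursive lower bound that decouples the increments from the full history. The statement is a discrete analogue of the differential inequality $\rho' \geq C\rho^\alpha$, whose solution grows like $[(1-\alpha)C t]^{1/(1-\alpha)}$, so the target exponent $\frac{1}{1-\alpha}$ and the constant factor involving $(1-\alpha)$ are exactly what one expects; the only subtlety is controlling the constant losses incurred when passing from the continuous heuristic to the discrete recursion.

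First I would set $\beta := \frac{1}{1-\alpha} > 1$, so that $\alpha\beta = \beta - 1$ and the claimed bound reads $\rho_j \geq 2^{-\alpha\beta^2}\,[C(1-\alpha)(j-1)]^\beta$. The base case $j=1$ is immediate since the right-hand side vanishes. For the inductive step, the natural approach is to work with the function $x \mapsto x^{1/\beta} = x^{1-\alpha}$, which is concave, and to track the quantity $\rho_j^{1-\alpha}$ rather than $\rho_j$ itself. Concretely, from $\rho_{j+1} \geq \rho_j + C\rho_j^\alpha$ and concavity of $t\mapsto t^{1-\alpha}$ one obtains a lower bound of the form
\begin{equation*}
\rho_{j+1}^{1-\alpha} \geq \rho_j^{1-\alpha} + (1-\alpha)\,C\,\rho_j^\alpha \cdot \rho_j^{-\alpha}\cdot(\text{correction}),
\end{equation*}
where the correction term is what must be bounded below by a constant. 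The tangent-line inequality $(a+b)^{1-\alpha} \geq a^{1-\alpha} + (1-\alpha) b\, (a+b)^{-\alpha}$ is the right tool here: applied with $a=\rho_j$ and $b = C\rho_j^\alpha$, and using $a+b \leq 2\max\{a,b\}$ together with the hypothesis $\rho_1 \geq C^{1/(1-\alpha)}$ (which guarantees $\rho_j \geq C\rho_j^\alpha$ for all $j$, so the $\rho_j$ term dominates), one shows that each step increases $\rho_j^{1-\alpha}$ by at least a fixed fraction of $(1-\alpha)C$. Summing these increments over the first $j-1$ steps yields $\rho_j^{1-\alpha} \geq \text{const}\cdot(1-\alpha)C(j-1)$, and raising both sides to the power $\beta = 1/(1-\alpha)$ gives the claim.

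The main obstacle will be bookkeeping the constant $2^{-\alpha/(1-\alpha)^2}$ precisely, rather than merely obtaining \emph{some} constant. The factor of $2$ enters through the estimate $a+b \leq 2a$ (valid once $\rho_j \geq C\rho_j^\alpha$), and since the tangent-line bound contributes a factor $(a+b)^{-\alpha}$ at each of the $\Theta(j)$ steps, one must check that this per-step loss of at most $2^\alpha$ aggregates correctly after taking the final $\beta$-th power: the exponent $\frac{\alpha}{(1-\alpha)^2} = \alpha\beta^2$ is precisely $\alpha\beta$ (the per-step exponent loss) times $\beta$ (the final power), which confirms the stated constant is the honest one. I would therefore first verify the auxiliary monotonicity fact $\rho_j \geq C^{1/(1-\alpha)}$ for all $j$ (immediate from $\rho_1 \geq C^{1/(1-\alpha)}$ and the increasing recursion), then carry out the one-step estimate on $\rho_j^{1-\alpha}$, and finally close the induction, keeping the factor of $2^{-\alpha\beta}$ in reserve at each step so that it appears as $2^{-\alpha\beta\cdot\beta}$ after exponentiation.
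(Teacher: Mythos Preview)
Your approach is correct and genuinely different from the paper's. The paper runs a direct induction on the target inequality $\rho_j \geq [(C/\upsilon)(j-1)]^{1/(1-\alpha)}$ with $\upsilon = 2^{\alpha/(1-\alpha)}/(1-\alpha)$, and closes the inductive step via the mean value theorem applied to $j\mapsto j^{1/(1-\alpha)}$, using $(j-\xi)/(j-1)\le 2$ for $j\ge 2$. You instead linearize by tracking $\sigma_j := \rho_j^{1-\alpha}$: the concavity/MVT inequality $(a+b)^{1-\alpha}\ge a^{1-\alpha}+(1-\alpha)b\,(a+b)^{-\alpha}$, combined with $a+b\le 2a$ (valid since $\rho_j\ge C^{1/(1-\alpha)}$ propagates), yields the clean additive recursion $\sigma_{j+1}\ge \sigma_j + 2^{-\alpha}(1-\alpha)C$, hence $\sigma_j\ge 2^{-\alpha}(1-\alpha)C(j-1)$, and raising to the power $\beta=1/(1-\alpha)$ finishes. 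Both proofs hinge on the same concavity idea but on dual functions: the paper on $j\mapsto j^{\beta}$, you on $x\mapsto x^{1/\beta}$.

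One remark on your constant bookkeeping: your final paragraph overcounts the loss. The factor $2^{-\alpha}$ enters \emph{additively} in each increment of $\sigma_j$, so after summing it remains a single factor $2^{-\alpha}$ on $\sigma_j$, and raising to the $\beta$-th power gives $2^{-\alpha\beta}=2^{-\alpha/(1-\alpha)}$, not $2^{-\alpha\beta^2}$. This is actually a \emph{tighter} constant than the paper's $2^{-\alpha/(1-\alpha)^2}$, so your argument proves strictly more than the claim; you should just correct the heuristic in that paragraph.
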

\begin{proof}[Proof of Claim \ref{claim:aux_oco}]
We prove the required inequality (\ref{eq:aux_oco}) by induction on $j$. Inequality (\ref{eq:aux_oco}) is clearly true for $j = 1, 2$. Now, suppose that inequality (\ref{eq:aux_oco}) is true for some $j \geq 2$, we aim to show that it is also true for $j + 1$. To ease the notation, let's denote $\upsilon  := 2^\frac{\alpha}{1- \alpha} / (1 - \alpha)$, so that the lower bound in (\ref{eq:aux_oco}) is equal to $[(C/\upsilon) \cdot (j - 1)]^\frac{1}{1-\alpha}$. Now, 
\begin{align}
\rho_{j+1} &\geq (C/\upsilon)^\frac{1}{1 - \alpha} (j-1)^\frac{1}{1 - \alpha} + C \cdot (C / \upsilon)^\frac{\alpha}{1 - \alpha} (j-1)^\frac{\alpha}{1 - \alpha}\nonumber\\
& = (C/\upsilon)^\frac{1}{1 - \alpha} \cdot\left[ (j-1)^\frac{1}{1 - \alpha} + \upsilon \cdot  (j - 1)^\frac{\alpha}{1 - \alpha} \right] \nonumber\\
& \geq (C/\upsilon)^\frac{1}{1 - \alpha} \cdot j^\frac{1}{1 - \alpha}. \label{eq:by_mvt}
\end{align} 
Step (\ref{eq:by_mvt}) is justified by considering function $\varphi(j) = j^\frac{1}{1-\alpha}$. By the mean value Theorem, for $j \geq 2$, 
\begin{align}
j^\frac{1}{1 - \alpha} - (j - 1)^\frac{1}{1 - \alpha} &= \varphi'(j - \xi) \cdot (j - (j - 1)) \qquad \text{for some $\xi\in (0, 1)$} \nonumber\\
& = \frac{1}{1 - \alpha} (j - \xi)^\frac{\alpha}{1-\alpha} \leq  \frac{2^\frac{\alpha}{1-\alpha} }{1 - \alpha} (j - 1)^\frac{\alpha}{1-\alpha} = \nu \cdot (j - 1)^\frac{1}{1 - \alpha}\nonumber.
\end{align}
The last inequality holds since $(j - \xi)/(j - 1) \leq 2$, as  $j\geq 2$. Altogether, the claim is proved.
\end{proof}

\subsection{Details for the Frank-Wolfe Oracle $\FW$}\label{app:fw}
Combining the explicit upper bound $M^\FW(T)$ in the proof of Lemma \ref{lemma:fw_bd_M_brief} with Proposition \ref{prop:Err_bd_full} and inequality (\ref{eq:fw_intermediate_bound4}),  we arrive at an $O(\cdot)$ regret bound under OCO oracle $\FW$, for the case of $\beta$-smooth reward functions. With probability at least $1 - 3\delta$,
\begin{align}
\text{Reg}(T) & = O \left(  
\left[  \sqrt{\beta} \left( \frac{LD}{\sqrt{Q}} + \sqrt{Q} \right) \|\mathbf{1}_K\|^{3/2} + L \| \mathbf{1}_K\| D \sqrt{ \Gamma S A \log\frac{KSAT}{\delta}}\right] \cdot \frac{1}{\sqrt{T}}
 \right.\nonumber\\
&\qquad\qquad   + \left. \left[  \beta \|\mathbf{1}_K\|^2 \log T + 
Q \|\mathbf{1}_K\| S A  \log T + L \| \mathbf{1}_K \| D S^2 A \log^2 \frac{KSAT}{\delta}\right] \cdot \frac{1}{T} \right).\label{eq:fw_bd_full}
\end{align}


\subsection{Analysis for the Tuned Gradient Descent Oracle $\TGD$}\label{app:tgd}
In this Appendix Section, we analyze the Tuned Gradient Oracle $\TGD$. In Section \ref{app:pfthmTGD}, we prove Theorem \ref{thm:TGD}. In Section \ref{app:pftgdbdM}, we prove the following Lemma that establishes the upper bound $M^\TGD(T)$ on $m^\TGD(T)$, which is the index of the episode that contains time step $T$:
\begin{restatable}{lemma}{lemmafgdbdM}\label{lemma:fgd_bd_M}
Consider an execution of Algorithm {\sc Toc-UCRL2} with the OCO oracle $\TGD$ and gradient threshold $Q>0$. With certainty, we have
$$m^\TGD(T) \leq M^\TGD(T) :=   1 + \left( \frac{Q}{2L}\right)^{3/4} +  \sqrt{\frac{9L}{Q}} \cdot T^{2/3} +  SA ( 1+ \log_2 T ). $$
\end{restatable}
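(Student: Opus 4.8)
The plan is to mirror the structure of the proof of Lemma \ref{lemma:fw_bd_M_brief}, splitting the episode indices $\{1,\dots,m^{\TGD}(T)\}$ into the set ${\cal M}^{\TGD}_\Psi(T)$ of episodes terminated by the $\Psi \geq Q$ threshold and the set ${\cal M}^{\TGD}_\nu(T)$ of episodes terminated by the state-action visit count overflow. The second set is handled exactly as in the proof of inequality (\ref{eq:fw_bd_M_nu}): the argument there only uses the exiting criterion $\nu_m(s,a)\geq N^+_m(s,a)$ of the \textbf{while} loop in Line \ref{alg:oco-ucrl2-while} and is agnostic to the OCO oracle, so it gives $|{\cal M}^{\TGD}_\nu(T)| \leq SA(1+\log_2 T)$ unchanged. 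Thus the real work is bounding $n_\Psi := |{\cal M}^{\TGD}_\Psi(T)|$, and I expect this to be the main obstacle.

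For the $\Psi$-terminated episodes, the key is to replace the smoothness estimate (\ref{eq:fw_M_step_1}) by an estimate tailored to $\TGD$. The gradient update is $\theta_{t+1} = \text{Proj}_L(\theta_t - \eta^{\TGD}_t[\nabla g^*(\theta_t) - V_t(s_t,a_t)])$ with $\eta^{\TGD}_t = L/(\|\mathbf{1}_K\|\,t^{2/3})$. First I would bound the one-step increment $\|\theta_{t+1}-\theta_t\|_*$: since projection onto $B(L,\|\cdot\|_*)$ is non-expansive and both $\nabla g^*(\theta_t)$ and $V_t(s_t,a_t)$ lie in $[0,1]^K$ (so their difference has norm $O(\|\mathbf{1}_K\|)$), one gets $\|\theta_{t+1}-\theta_t\|_* \leq \eta^{\TGD}_t\cdot\|\mathbf{1}_K\| = L/t^{2/3}$. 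Summing by the triangle inequality, for $t$ in episode $m_j$ we obtain $\|\theta_t - \theta_{\tau(m_j)}\|_* \leq \sum_{q=\tau(m_j)}^{t-1} L/q^{2/3} \leq L(t-\tau(m_j))/\tau(m_j)^{2/3}$, playing the role of (\ref{eq:fw_M_step_1}). Summing this over the episode and using the termination condition $\sum_{t=\tau(m_j)}^{\tau(m_j+1)}\|\theta_t-\theta_{\tau(m_j)}\|_* > Q$, I would derive, analogously to the $\FW$ case, a quadratic-in-episode-length lower bound yielding a recursion of the form $\tau(m_j+1) \geq \tau(m_j) + C\,\tau(m_j)^{1/3}$ with $C = \Theta(\sqrt{Q/L})$.

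The final step is to feed this recursion into Claim \ref{claim:aux_oco} with exponent $\alpha = 1/3$ (rather than $\alpha=1/2$ as in the $\FW$ proof). With $\alpha=1/3$, the claim gives $\rho_j \geq 2^{-\alpha/(1-\alpha)^2}[C(1-\alpha)(j-1)]^{1/(1-\alpha)} = \Omega\bigl((C\,(j-1))^{3/2}\bigr)$, so that $\tau$ of the $j$-th $\Psi$-episode grows like $(j-1)^{3/2}$. Setting this quantity $\leq T$ and solving for the index inverts to $n_\Psi = O((T/C^{3/2})^{2/3}) = O(\sqrt{L/Q}\cdot T^{2/3})$, which after tracking the additive constants (the $\lceil C^{1/(1-\alpha)}\rceil = (Q/2L)^{3/4}$ offset and the $+1$) produces exactly $M^{\TGD}_\Psi(T) = 1 + (Q/2L)^{3/4} + \sqrt{9L/Q}\,T^{2/3}$. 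Adding $M^{\TGD}_\nu(T) = SA(1+\log_2 T)$ gives the stated $M^{\TGD}(T)$. The main obstacle is getting the constants right through the summation $\sum q^{-2/3}$ and the quadratic-to-recursion step so that the numerical factors match; the exponent bookkeeping ($\alpha=1/3 \Rightarrow$ power $3/2 \Rightarrow$ inverse power $2/3$) is the conceptual heart, and everything else is careful constant-chasing.
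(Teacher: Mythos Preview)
Your proposal is correct and follows essentially the same approach as the paper's proof: the same $\Psi/\nu$ split, the same one-step increment bound via non-expansiveness of the projection and the $\eta^{\TGD}_t$ learning rate, the resulting recursion $\tau(m_j+1)\geq\tau(m_j)+\sqrt{Q/(2L)}\,\tau(m_j)^{1/3}$, and the same application of Claim~\ref{claim:aux_oco} with $\alpha=1/3$. The only cosmetic difference is that the paper bounds $\|\nabla g^*(\theta_q)-V_q\|_2$ by $2\|\mathbf{1}_K\|_2$ via the triangle inequality whereas you bound it by $\|\mathbf{1}_K\|_2$ directly; this affects an intermediate constant but not the final statement.
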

To start, let's recall Theorem \ref{thm:TGD}:
\thmtgd* 
\subsubsection{Proof of Theorem \ref{thm:TGD} for the regret bound under $\TGD$}\label{app:pfthmTGD}
The proof of Theorem \ref{thm:TGD} requires the following Theorem on online gradient descent by \citep{Zinkevich03}:
\begin{theorem}[\cite{Zinkevich03}]\label{thm:zinkevich}
Let $f_1, \ldots, f_T$ be a sequence of convex and $R'$-Lipschitz function w.r.t. the Euclidean norm $\|\cdot\|_2$ on domain $B(L, \|\cdot\|_2)$. In addition, let $\{\eta_t\}^\infty_{t=1}$ be a sequence of non-increasing learning rates. Consider the OGD algorithm: For each $t = 1, 2, \ldots, $ perform
$$
\theta_{t+1}\leftarrow \text{Proj}_L\left(\theta_t - \eta_t \nabla f_t(\theta_t)   \right).
$$
The following inequality holds:
\[
\qquad \qquad \qquad
\frac{1}{T}\sum^T_{t=1}f_t(\theta_t) - \min_{\theta\in B(L, \|\cdot\|_2)} \left\{\frac{1}{T}\sum^T_{t=1}f_t(\theta)\right\}\leq \frac{L^2}{2T\eta_T} + \frac{R'^2 \sum^T_{t=1}\eta_t}{2T}. \qquad \qquad \qquad \blacksquare
\]
\end{theorem}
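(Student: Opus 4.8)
The plan is to prove the bound by the classical potential-function argument for online gradient descent, tracking the squared Euclidean distance $D_t := \|\theta_t - \theta^*\|_2^2$ between the iterates and a fixed optimal comparator. First I would fix $\theta^* \in \text{argmin}_{\theta \in B(L, \|\cdot\|_2)} \sum_{t=1}^T f_t(\theta)$, which exists since the objective is continuous and $B(L, \|\cdot\|_2)$ is compact. Because each $f_t$ is convex and differentiable, the first-order condition gives $f_t(\theta_t) - f_t(\theta^*) \le \nabla f_t(\theta_t)^\top (\theta_t - \theta^*)$, so it suffices to upper bound the linearized regret $\sum_{t=1}^T \nabla f_t(\theta_t)^\top (\theta_t - \theta^*)$. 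Writing $g_t := \nabla f_t(\theta_t)$, the $R'$-Lipschitz continuity of $f_t$ w.r.t. $\|\cdot\|_2$ yields $\|g_t\|_2 \le R'$.

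The second step is the per-round inequality. Writing the update as $\theta_{t+1} = \text{Proj}_L(\theta_t - \eta_t g_t)$ and invoking the non-expansiveness of the Euclidean projection onto the convex set $B(L, \|\cdot\|_2)$ — namely $\|\text{Proj}_L(x) - \theta^*\|_2 \le \|x - \theta^*\|_2$ for any $\theta^*$ in the set — with $x = \theta_t - \eta_t g_t$, I get $D_{t+1} \le \|\theta_t - \eta_t g_t - \theta^*\|_2^2$. Expanding the square and rearranging produces the key bound
\[
g_t^\top(\theta_t - \theta^*) \le \frac{D_t - D_{t+1}}{2\eta_t} + \frac{\eta_t}{2}\|g_t\|_2^2 \le \frac{D_t - D_{t+1}}{2\eta_t} + \frac{\eta_t R'^2}{2}.
\]
Summing over $t = 1, \ldots, T$ immediately delivers the second term $\tfrac{R'^2}{2}\sum_{t=1}^T \eta_t$ of the desired bound.

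The third step — which I expect to be the main obstacle — is to control $\sum_{t=1}^T \frac{D_t - D_{t+1}}{2\eta_t}$ in the presence of time-varying learning rates, where naive telescoping fails. Here I would apply summation by parts (Abel's identity) to rewrite this sum as $\frac{D_1}{2\eta_1} + \sum_{t=2}^T D_t\big(\frac{1}{2\eta_t} - \frac{1}{2\eta_{t-1}}\big) - \frac{D_{T+1}}{2\eta_T}$. Since $\{\eta_t\}$ is non-increasing, each coefficient $\frac{1}{2\eta_t} - \frac{1}{2\eta_{t-1}}$ is non-negative; dropping the non-positive final term and bounding every $D_t$ by the squared diameter of $B(L, \|\cdot\|_2)$, the non-negative coefficients telescope to $\frac{1}{2\eta_T}$, producing the first term $\frac{L^2}{2\eta_T}$ (after absorbing the immaterial absolute constant fixed by the radius-versus-diameter convention). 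Combining the two contributions and dividing by $T$ yields the claimed inequality. The only non-elementary inputs are the projection non-expansiveness — a standard property of Euclidean projection onto a closed convex set — and the monotonicity of $\eta_t$, which is precisely what makes the Abel-summation coefficients sign-definite and the telescoping valid.
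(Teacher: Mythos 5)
Your proposal is correct in structure, and there is in fact nothing in the paper to compare it against: Theorem \ref{thm:zinkevich} is imported from \citep{Zinkevich03} with adapted notation and stated with a terminal $\blacksquare$, i.e., without proof. What you wrote is essentially Zinkevich's original potential argument: non-expansiveness of the Euclidean projection, the per-round inequality $g_t^\top(\theta_t-\theta^*)\le \frac{D_t-D_{t+1}}{2\eta_t}+\frac{\eta_t R'^2}{2}$, and Abel summation, whose coefficients $\frac{1}{2\eta_t}-\frac{1}{2\eta_{t-1}}$ are sign-definite precisely because $\{\eta_t\}$ is non-increasing; dropping $-D_{T+1}/(2\eta_T)$ is valid since $D_{T+1}\ge 0$. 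All of these steps check out.

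The one place where you do not literally reach the stated inequality is the constant, and it deserves to be named rather than absorbed: bounding each $D_t$ by the squared diameter of $B(L,\|\cdot\|_2)$ gives $4L^2$, so your first term is $\frac{2L^2}{T\eta_T}$, a factor of $4$ larger than the claimed $\frac{L^2}{2T\eta_T}$. This gap lies in the paper's restatement, not in your argument: Zinkevich's theorem is stated with the squared \emph{diameter} of the feasible set, and no variant of this proof can do better when $\theta_1$ and $\theta^*$ may be antipodal in the radius-$L$ ball. Since the paper invokes the theorem only to produce the $O(L\|\mathbf{1}_K\|/T^{1/3})$ term in step (\ref{eq:fgd_by_zinkevich}) of the proof of Theorem \ref{thm:TGD}, the factor of $4$ is immaterial to every downstream bound, exactly as you suspected. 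One last minor point: the statement writes $\nabla f_t$, so differentiability is implicitly assumed; your linearization step and the bound $\|g_t\|_2\le R'$ (from $R'$-Lipschitzness of a convex function) go through verbatim with subgradients if one wishes to drop that assumption.
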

We now prove Theorem \ref{thm:TGD}. Recall the notations $v^* = \sum_{s\in \SSS, a\in \AAA_s}v(s,a)x^*(s, a)$, and define the notation $(\text{Err}_t) := (-\theta_t)^\top [v^* - V_t(s_t, a_t) ]$. Conditioned on events ${\cal E}^v, {\cal E}^p$ (see eqns (\ref{eq:event_v}, \ref{eq:event_p})),
\begin{align}
g\left(\frac{1}{T}\sum^T_{t=1}V_t(s_t, a_t)  \right)= & \min_{\theta \in B(L, \|\cdot\|_*)}\left\{ g^*(\theta) - \theta^\top\left[ \frac{1}{T}\sum^T_{t=1}V_t(s_t, a_t) \right] \right\} \nonumber\\
\geq & \frac{1}{T}\left[\sum^T_{t=1}g^*(\theta_t) - \theta_t^\top V_t(s_t, a_t)\right] - \left[\frac{L \|\mathbf{1}_K\| }{2 T^{1/3}} + \frac{2L \|\mathbf{1}_K\| }{T}\sum^{T}_{t=1} \frac{1}{t^{2/3}}\right] \label{eq:fgd_by_zinkevich}.\\
\geq & \frac{1}{T}\left[\sum^T_{t=1}g^*(\theta_t) - \theta_t^\top V_t(s_t, a_t)\right] - \frac{3 L \|\mathbf{1}_K\| }{ T^{1/3}} \nonumber\\
= & \frac{1}{T}\left[\sum^T_{t=1}g^*(\theta_t) - \theta_t^\top v^* \right] - \frac{1}{T}\sum^T_{t=1}(\text{Err}_t) - \frac{3L \|\mathbf{1}_K\| }{ T^{1/3}} \nonumber\\
\geq & \min_{\theta\in B(L, \|\cdot\|_*)}\left\{g^*(\theta) -\theta^\top v^* \right\}  - \frac{1}{T}\sum^T_{t=1}(\text{Err}_t) - \frac{3 L \|\mathbf{1}_K\| }{ T^{1/3}} \nonumber\\
= & \text{opt}(\primal_{\cal M})  - \frac{1}{T}\sum^T_{t=1}(\text{Err}_t) - \frac{3 L \|\mathbf{1}_K\| }{ T^{1/3}}.\label{eq:tgd_by_duality}
\end{align}
Step (\ref{eq:fgd_by_zinkevich}) is by applying Theorem \ref{thm:zinkevich} with $f_t(\theta) = g^*(\theta) - \theta^\top V_t(s_t, a_t)$, and  $\eta_t = L / (\left\| \mathbf{1}_K \right\|  t^{2/3})$. The function $g^*(\theta) - \theta^\top V_t(s_t, a_t)$ is $2\|\mathbf{1}_K\| $-Lipschitz w.r.t. the norm $\|\cdot\|_*$, since 
\begin{align}
&\left\| \nabla [ g^*(\theta) - \theta^\top V_t(s_t, a_t) ] \right\|_* \leq \left\|\underset{w\in [0, 1]^K}{\text{argmax}}\left\{\theta^\top w + g(w)\right\}\right\| + \left\|V_t(s_t, a_t)\right\|\leq 2\|\mathbf{1}_K\| \label{eq:dual_lip} .
\end{align}
The domain of each $f_t$ is $B(L, \|\cdot\|_*) = B(L, \|\cdot\|)$. Altogether, we can put $L' = L$, $R' = 2\|\mathbf{1}_K\| $ for applying the Theorem, which achieves the step. By Proposition \ref{prop:Err_bd_full}, with probability at least $1 - 3\delta$ we have
\begin{align}
&\sum^T_{t=1} (\text{Err}_t) = O\left( (L D +  Q) \|\mathbf{1}_K\|  M^\TGD(T)+ L \| \mathbf{1}_K \| D\sqrt{\Gamma SAT\log\frac{KSAT}{\delta}} \right.\nonumber\\
&\qquad\qquad \qquad \qquad \qquad \qquad \qquad \qquad\qquad \qquad \qquad \qquad \quad \left.  + L \| \mathbf{1}_K \| DS^2A\log^2\frac{KS AT}{\delta} \right).\label{eq:detailed_symbols_bd_tgd}
\end{align}
Finally, we combine the bound (\ref{eq:detailed_symbols_bd_tgd}) with the bound (\ref{eq:tgd_by_duality}), along with the definition of $M^\TGD(T)$ in Lemma \ref{lemma:fgd_bd_M}, yielding the final regret bound 
\begin{align}
\text{Reg}(T) & = O \left(
\left[ \frac{L^{3/2}D}{\sqrt{Q}} + \sqrt{LQ} \right] \cdot \|\mathbf{1}_K\|  \cdot \frac{1}{T^{1/3}} +  L \| \mathbf{1}_K\| D \sqrt{ \Gamma S A \log\frac{KSAT}{\delta}} \cdot \frac{1}{\sqrt{T}}
 \right.\nonumber\\
&\qquad\qquad \qquad\qquad  + \left. \left[
Q \|\mathbf{1}_K\| S A  \log T + L \| \mathbf{1}_K \| D S^2 A \log^2 \frac{KSAT}{\delta}\right] \cdot \frac{1}{T}
\right)\label{eq:tgd_bd_full}
\end{align}
that holds with probability at least $1-3\delta$, hence proving the Theorem \ref{thm:TGD}. \hfill $\blacksquare$

\subsubsection{Proof of Lemma \ref{lemma:fgd_bd_M} on $M^\TGD(T)$}\label{app:pftgdbdM}
\begin{proof}[Proof of Lemma \ref{lemma:fgd_bd_M}]
We consider the following two sets of episode indexes:
\begin{align}
{\cal M}^\TGD_\Psi(T) &:= \left\{m\in \mathbb{N}: \tau(m) \leq T\text{, episode $m+1$ is started due to } \Psi \geq Q\right\}, \nonumber\\
{\cal M}^\TGD_\nu(T) & := \left\{m\in \mathbb{N}: \tau(m) \leq T\text{, episode $m+1$ is started due to } \right. \nonumber\\
&\qquad\qquad\quad\;\, \left. \nu_m(s_t, \tilde{\pi}_m(s_t)) \geq N^+_m(s_t, \tilde{\pi}_m(s_t))\text{ for some $t\geq \tau(m)$}\right\} \nonumber. 
\end{align}
To ease the notation, we define the shorthand $n_\Psi :=| {\cal M}^\TGD_\Psi(T) |$. To prove the Lemma, it suffices to show that
\begin{align}
n_\Psi = | {\cal M}^\TGD_\Psi(T) | & \leq  M^\TGD_\Psi(T) := 1 + \left( \frac{Q}{2L}\right)^{3/4} +  \sqrt{\frac{9L}{Q}} \cdot T^{2/3} , \label{eq:fgd_bd_M_Psi}\\
| {\cal M}^\TGD_\nu(T) | & \leq M^\TGD_\nu(T) := SA (1 + \log_2 T) , \label{eq:fgd_bd_M_nu}
\end{align} 
hold with certainty. The proof of inequality (\ref{eq:fgd_bd_M_nu}) is identical to the proof of inequality (\ref{eq:fw_bd_M_nu}) in the proof of Lemma \ref{lemma:fw_bd_M}. Thus, in the remaining, we focus on proving inequality (\ref{eq:fgd_bd_M_Psi}).

Let's express ${\cal M}^\TGD_\Psi(T) = \{m_1, m_2, \ldots, m_{n_\Psi}\}$, where $m_1 < m_2 < \ldots < m_{n_\Psi}$, and define $m_0 = 0$. We focus on a fixed episode index $m_j$ with $j\geq 1$, and consider for each time step $t\in \{\tau(m_j)+1, \ldots, \tau(m_j+1)\}$ the difference: 
\begin{align}
\left\| \theta_t - \theta_{\tau(m_j)}\right\|_2 & \leq \sum^{t-1}_{q = \tau(m_j)} \left\| \theta_{q + 1} - \theta_q\right\|_2 \nonumber\\
& = \sum^{t-1}_{q = \tau(m_j)} \left\| \text{Proj}_L \left( \theta_q  - \frac{L}{\|\mathbf{1}_K\| q^{2/3}}\left[\nabla_\theta g^*(\theta_q) - V_q(s_q, a_q) \right] \right) - \theta_q\right\|_2 \nonumber\\
& \leq \sum^{t-1}_{q = \tau(m_j)} \left\|\theta_q - \frac{L}{\|\mathbf{1}_K\| q^{2/3}}\left[\nabla_\theta g^*(\theta_q) - V_q(s_q, a_q) \right] - \theta_q \right\|_2 \nonumber\\
& \leq 2 L \sum^{t-1}_{q = \tau(m_j)}  \frac{1}{q^{2/3}} \leq 2L \cdot \frac{t -\tau(m_j)}{\tau(m_j)^{2/3}}.\label{eq:fgd_bd_step_1}
\end{align}
By the fact that $m_j\in {\cal M}^\TGD_\Psi(T)$, we know that $\sum^{\tau(m_j +1) }_{t = \tau(m_j )} \| \theta_t - \theta_{\tau(m_j )}  \|  > Q$. By applying the upper bound (\ref{eq:fgd_bd_step_1})
, we have
\begin{equation*}
2 L\cdot  \frac{(\tau(m_j +1) - \tau(m_j ))^2}{\tau(m_j )^{2/3}} \geq 2L  \sum^{\tau(m_j +1)}_{t = \tau(m_j )} \frac{t -\tau(m_j)}{\tau(m_j)^{2/3}}  \geq Q,
\end{equation*}
which implies that
\begin{align}
\tau(m_j + 1) & \geq \tau(m_j) + \sqrt{\frac{Q}{2L} } \cdot \tau(m_j)^{1/3}  \geq \tau(m_{j-1} + 1) + \sqrt{ \frac{Q}{2L} }\cdot \tau(m_{j-1} + 1)^{1/3} \label{eq:fgd_bd_step_2},
\end{align}
since $\tau(m) \geq \tau(m')$ for $m \geq m'$, and clearly $m_j \geq m_{j - 1} + 1$. 

Now, we apply Claim \ref{claim:aux_oco} with $C = \sqrt{Q / (2 L )}$, $\alpha  = 1/3$, and $\rho_j =  \tau(m_{\lceil C^{3/2}\rceil +j} + 1)$ for $j = 1, 2, \ldots$. The application is valid, since $\rho_1 = \tau(m_{\lceil C^{3/2}\rceil +1} + 1) \geq C^{3/2} = C^{1 / (1-\alpha) }$, and we are equipped with the recursive inequality (\ref{eq:fgd_bd_step_2}). Consequently, we arrive at
\begin{equation}\label{eq:fgd_bd_step_3}
\tau\left( m_{\lceil C^{3/2} \rceil +j} + 1 \right)\geq \left(\frac{Q}{9L}\right)^{3/4}(j - 1)^{3/2}.
\end{equation}
Finally, if $n_\Psi \leq (Q / (2L))^{3/4}$, then clearly (\ref{eq:fgd_bd_M_Psi}) is established. Otherwise, we put $j = n_\Psi - \lceil (Q / (2L))^{3/4} \rceil - 1$ in inequality (\ref{eq:fgd_bd_step_3}), which gives
\begin{equation}\label{eq:fgd_bd_step_4}
T \geq \tau(m_{n_\Psi })\geq\tau\left(m_{\lceil  C^{3/2} \rceil + [n_\Psi -\lceil  C^{3/2} \rceil - 1] } + 1\right) \geq \left(\frac{Q}{9L}\right)^{3/4}\cdot \left(n_\Psi -  \left( \frac{Q}{2L }\right)^{3/4} -  1\right)^{3/2}.
\end{equation}
Finally, unraveling the bound (\ref{eq:fgd_bd_step_4}) gives the upper bound (\ref{eq:fgd_bd_M_Psi}), and proves the Lemma.
\end{proof}

\subsection{Analysis for the Tuned Mirror Descent Oracle $\TMD$}\label{app:tmd}
In this Appendix Section, we analyze the Tuned Gradient Oracle $\TMD(F, T)$. In Section \ref{app:tmd_background}, we provide a brief review on online mirror descents, and provide some examples of mirror maps. In Section \ref{app:pfthmTMD}, we provide an anytime implementation of {\sc Toc-UCRL2} with $\TMD(F, T)$, as well as a proof of Theorem \ref{thm:TMD}. In Section \ref{app:pflemmatmdbdM}, we establish an upper bound $M^\TMD(T)$ on $m^\TMD(T)$ for each $T$.
\subsubsection{A Brief Background on Mirror Descent}\label{app:tmd_background}
We briefly review the Online Mirror Descent (OMD) algorithm and state the necessary results in the literature for our analysis. 
Throughout the discussion, we follow the presentation in Chapter 2 in \citep{Shalev-Shwartz12}, who surveys OMD. We first recall the definition of strong convexity, which is required for a mirror map $F$. The forthcoming discussions on OMD assume the variable $\theta$ and the norm $\|\cdot\|_*$, since the $\TMD(F, T)$ oracle is applied on the dual space. Recall the notation $\text{dom}(F) = \{\theta\in B(L, \|\cdot\|_*): F(\theta) < \infty\}$.
\begin{definition}
Let $\alpha \geq 0$. 
A function $F: B(L, \|\cdot\|_*)\rightarrow (-\infty ,  \infty]$ is $\alpha$-strongly convex w.r.t. the norm $\|\cdot\|_*$, if the following inequality holds for any $\theta, \vartheta \in  \text{dom}(F)\subseteq B(L, \|\cdot\|_*)$ and $w \in \partial F(\vartheta)$:
$$
F(\theta) \geq F(\vartheta) + 	w^\top (\theta - \vartheta) + \frac{\alpha}{2}\|\theta - \vartheta\|^2_*. $$
\end{definition}
We provide the definition and the performance guarantee of the OMD algorithm, by slightly altering the presentation in \citep{Shalev-Shwartz12} and focusing on the duals.
\begin{theorem}[\cite{NemirovskiY83,Shalev-Shwartz12}]\label{thm:omd}
Let $f_1(\theta), \ldots, f_T(\theta)$ be a sequence of convex and $R'$-Lipschitz function w.r.t. the norm $\|\cdot\|_*$ on $B(L, \|\cdot\|_*)$. In addition, let $F(\theta)$ be a 1-strongly convex function over $B(L, \|\cdot\|_*)$ w.r.t. norm $\|\cdot\|_*$. Consider the following OMD algorithm with learning rate $\eta > 0$: For each $t = 1, 2, \ldots, T$, perform
\begin{equation}\label{eq:omd_looklikedual}
\theta_t  \leftarrow \text{argmax}_{\theta \in \text{dom}(F)}\left\{\theta^\top \left[- \eta\sum^{t-1}_{q = 1} w_q\right] - F(\theta)\right\},
\end{equation}
where $w_q \in \partial f_q(\theta_q)$. 
Let $L'^2 := \max_{\theta\in \text{dom}(F)}F(\theta) - \min_{\theta\in \text{dom}(F) } F(\theta)$. Then
\[
\qquad \qquad \qquad \qquad
\frac{1}{T}\sum^T_{t=1}f_t(\theta_t) - \min_{\theta\in \text{dom}(F) } \left\{\frac{1}{T}\sum^T_{t=1}f_t(\theta)\right\}\leq \frac{L'^2}{T\eta} + \eta R'^2. \qquad \qquad \qquad \qquad \blacksquare
\]
\end{theorem}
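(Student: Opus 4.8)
The plan is to recognize the update (\ref{eq:omd_looklikedual}) as a potential/dual-averaging scheme and to run the classical ``linearized regret plus smooth conjugate'' argument. First I would introduce the cumulative (negatively signed) subgradient $\zeta_t := -\eta\sum_{q=1}^{t-1} w_q$, so that $\zeta_1 = \mathbf{0}_K$ and $\zeta_{t+1} = \zeta_t - \eta w_t$, and observe that the update is exactly $\theta_t = \nabla F^*(\zeta_t)$, where $F^*(\zeta) := \max_{\theta\in\text{dom}(F)}\{\zeta^\top\theta - F(\theta)\}$ is the Fenchel conjugate of the mirror map. Since $\text{dom}(F)\subseteq B(L,\|\cdot\|_*)$ is bounded, $F^*$ is finite everywhere, and since $F$ is $1$-strongly convex w.r.t. $\|\cdot\|_*$, the conjugate $F^*$ is differentiable with $\nabla F^*(\zeta) = \text{argmax}_\theta\{\zeta^\top\theta - F(\theta)\}$ and is $1$-smooth w.r.t. the dual norm $\|\cdot\|$; this strong-convexity/smoothness duality is the key analytic input, and I would cite it from \citep{Shalev-Shwartz12}. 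I would also record the elementary fact that $R'$-Lipschitzness of each $f_t$ w.r.t. $\|\cdot\|_*$ forces $\|w_t\|\le R'$ for every $w_t\in\partial f_t(\theta_t)$, because $w_t^\top(\theta'-\theta_t)\le f_t(\theta')-f_t(\theta_t)\le R'\|\theta'-\theta_t\|_*$.

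Next I would run a telescoping potential argument on $F^*(\zeta_t)$. Applying the $1$-smoothness inequality with the increment $\zeta_{t+1}-\zeta_t = -\eta w_t$ gives
\begin{equation*}
F^*(\zeta_{t+1}) \le F^*(\zeta_t) + \nabla F^*(\zeta_t)^\top(-\eta w_t) + \tfrac12\|\eta w_t\|^2 = F^*(\zeta_t) - \eta\,\theta_t^\top w_t + \tfrac{\eta^2}{2}\|w_t\|^2.
\end{equation*}
Summing over $t=1,\ldots,T$ and telescoping yields $F^*(\zeta_{T+1}) - F^*(\zeta_1) \le -\eta\sum_t \theta_t^\top w_t + \tfrac{\eta^2}{2}\sum_t\|w_t\|^2$. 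Now for any fixed comparator $\theta^*\in\text{dom}(F)$, the definition of the conjugate gives $F^*(\zeta_{T+1}) \ge \zeta_{T+1}^\top\theta^* - F(\theta^*) = -\eta\sum_t w_t^\top\theta^* - F(\theta^*)$, while $F^*(\zeta_1) = F^*(\mathbf{0}_K) = -\min_{\theta\in\text{dom}(F)}F(\theta)$. Substituting these two bounds, rearranging, and dividing by $\eta$ produces the linearized regret estimate
\begin{equation*}
\sum_{t=1}^T w_t^\top(\theta_t - \theta^*) \le \frac{F(\theta^*) - \min_{\theta\in\text{dom}(F)}F(\theta)}{\eta} + \frac{\eta}{2}\sum_{t=1}^T\|w_t\|^2 \le \frac{L'^2}{\eta} + \frac{\eta R'^2 T}{2},
\end{equation*}
using $F(\theta^*)-\min F \le L'^2$ and $\|w_t\|\le R'$.

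Finally I would convert this to a bound on function values: convexity of $f_t$ with $w_t\in\partial f_t(\theta_t)$ gives $f_t(\theta_t) - f_t(\theta^*) \le w_t^\top(\theta_t-\theta^*)$, so the same right-hand side bounds $\sum_t [f_t(\theta_t)-f_t(\theta^*)]$ for every $\theta^*\in\text{dom}(F)$; taking the infimum over $\theta^*$ and dividing by $T$ gives $\frac1T\sum_t f_t(\theta_t) - \min_{\theta}\frac1T\sum_t f_t(\theta) \le \frac{L'^2}{T\eta} + \frac{\eta R'^2}{2}$, which is even sharper than the stated bound $\frac{L'^2}{T\eta} + \eta R'^2$ (the factor-$2$ slack in the second term is harmless). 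I expect the only genuinely nontrivial step to be the strong-convexity/smoothness duality asserting that $F^*$ is $1$-smooth w.r.t. $\|\cdot\|$; everything else is a mechanical telescope. Since the paper already invokes \citep{Shalev-Shwartz12} for the OMD framework, I would treat that duality as a cited lemma rather than reprove it, but I would flag it explicitly as the crux and double-check the norm pairing (the conjugate variables $\zeta_t$ live in the primal space, so their increments are measured in $\|\cdot\|$, the dual of the norm $\|\cdot\|_*$ in which $F$ is strongly convex).
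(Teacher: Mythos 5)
The paper never proves Theorem \ref{thm:omd}: it is imported from \citep{NemirovskiY83,Shalev-Shwartz12} (note the $\blacksquare$ terminating the statement), so there is no internal proof to compare against, and your blind attempt should be judged on its own. It is correct, and it is the standard dual-averaging argument; moreover it is essentially self-contained given tools already on the page. The identification $\theta_t = \nabla F^*(\zeta_t)$ that launches your analysis is exactly Eq.\ (\ref{eq:omd_equi}), and the $1$-smoothness of $F^*$ w.r.t.\ $\|\cdot\|$ that you correctly flag as the crux is precisely Proposition \ref{prop:omd_dual} (stated in the paper with a citation to \citep{Shalev-Shwartz07} and used in the proof of Lemma \ref{lemma:tmd_bd_M}), so citing it rather than reproving it is exactly what the paper itself does. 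Your telescoping of the potential $F^*(\zeta_t)$, the conjugacy lower bound $F^*(\zeta_{T+1})\ge \zeta_{T+1}^\top\theta^* - F(\theta^*)$ combined with $F^*(\mathbf{0}_K) = -\min_{\theta\in\text{dom}(F)}F(\theta)$, the estimate $F(\theta^*)-\min F\le L'^2$, and the linearization $f_t(\theta_t)-f_t(\theta^*)\le w_t^\top(\theta_t-\theta^*)$ are all sound, and the norm bookkeeping is right: the increments $\zeta_{t+1}-\zeta_t=-\eta w_t$ live in the space dual to the $\theta$-space and are measured in $\|\cdot\|$, matching the smoothness norm of $F^*$. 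You even land on the sharper constant $\eta R'^2/2$, which implies the stated $\eta R'^2$. One piece of fine print deserves a sentence in a polished writeup: the inference $\|w_t\|\le R'$ from $R'$-Lipschitzness is immediate when $\theta_t$ lies in the interior of the domain, but at boundary points $\partial f_t(\theta_t)$ also contains normal-cone translates of arbitrarily large norm, so one should either select a minimal-norm subgradient (Lipschitzness guarantees one with $\|w_t\|\le R'$ exists) or observe that in the paper's actual application $f_t(\theta)=g^*(\theta)-\theta^\top V_t(s_t,a_t)$ has an explicit (sub)gradient bounded by $2\|\mathbf{1}_K\|$, as computed in (\ref{eq:dual_lip}), so the issue never arises.
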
 
As pointed out in \citep{Shalev-Shwartz12}, the OMD step (\ref{eq:omd_looklikedual}) can be viewed in terms of the Fenchel dual $F^*: [0,1]^K\rightarrow \mathbb{R}$ of $F$; the former is defined as $F^*(w) := \max_{\theta\in \text{dom}(F)} w^\top \theta - F(\theta)$:
\begin{equation}\label{eq:omd_equi}
 \nabla F^*\left(- \eta\sum^{t-1}_{q = 1} w_q\right) = \text{argmax}_{\theta \in \text{dom}(F) }\left\{\theta^\top \left[- \eta\sum^{t-1}_{q = 1} w_q\right] - F(\theta)\right\}.
\end{equation} 
To upper bound the total number of episodes, it is useful to recall the following folklore that relates the notions of strong convexity and smoothness of convex functions. The following fact is established for example in Lemma 15 in \citep{Shalev-Shwartz07}:
\begin{proposition}\label{prop:omd_dual}
Suppose function $F$ is $\alpha$-strongly convex over $B(L, \|\cdot\|_*)$ w.r.t the norm $\|\cdot\|_*$. Then its Fenchel dual $F^*$, is convex and $(1/\alpha)$-smooth w.r.t. $\|\cdot\|$: $F^*$ is differentiable on $[0, 1]^K$, and for any $w_1, w_2\in [0, 1]^K$ ,
$$
\left\|\nabla F^*( w_1) - \nabla F^*( w_2)\right\|_* \leq (1/\alpha ) \left\|w_1 - w_2\right\|.
$$
\end{proposition}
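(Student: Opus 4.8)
The plan is to establish this classical strong-convexity/smoothness duality by a short co-coercivity argument, combining the first-order optimality conditions of the two maximization problems that define $\nabla F^*(w_1)$ and $\nabla F^*(w_2)$. First I would pin down the differentiability of $F^*$ together with an explicit formula for its gradient. Since $\text{dom}(F)\subseteq B(L,\|\cdot\|_*)$ is compact and $F$ is strongly convex, hence strictly convex (and closed/lower semicontinuous), for each $w\in[0,1]^K$ the maximizer in $F^*(w)=\max_{\theta\in\text{dom}(F)}\{w^\top\theta-F(\theta)\}$ is attained and unique; call it $\theta(w)$. A standard envelope (Danskin) argument then yields that $F^*$ is differentiable with $\nabla F^*(w)=\theta(w)$, and the first-order optimality condition for the inner maximization records the dual relation $w\in\partial F(\theta(w))$.

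The core of the argument is then purely algebraic. Fix $w_1,w_2\in[0,1]^K$ and write $\theta_i:=\nabla F^*(w_i)=\theta(w_i)$, so that $w_i\in\partial F(\theta_i)$ for $i=1,2$. I would apply the $\alpha$-strong-convexity inequality of the Definition twice: once with base point $\vartheta=\theta_1$ and subgradient $w_1$, evaluated at $\theta=\theta_2$, and once with base point $\vartheta=\theta_2$ and subgradient $w_2$, evaluated at $\theta=\theta_1$. Adding the two inequalities, the function values $F(\theta_1),F(\theta_2)$ cancel and the quadratic terms combine, giving
$$(w_1-w_2)^\top(\theta_1-\theta_2)\ \geq\ \alpha\,\|\theta_1-\theta_2\|_*^2.$$

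Finally I would conclude using duality between the two norms. Since $\theta^\top w\leq\|\theta\|_*\,\|w\|$ for all $\theta,w$, the left-hand side above is at most $\|w_1-w_2\|\,\|\theta_1-\theta_2\|_*$; combining this with the displayed inequality and dividing by $\|\theta_1-\theta_2\|_*$ (the case $\theta_1=\theta_2$ being trivial) gives $\alpha\,\|\theta_1-\theta_2\|_*\leq\|w_1-w_2\|$, which is exactly the claimed $(1/\alpha)$-smoothness $\|\nabla F^*(w_1)-\nabla F^*(w_2)\|_*\leq(1/\alpha)\,\|w_1-w_2\|$; convexity of $F^*$ is automatic, being a pointwise supremum of affine functions of $w$. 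I expect the main obstacle to be the first step, namely rigorously justifying the differentiability of $F^*$ and the envelope identity $\nabla F^*(w)=\theta(w)$, since this is where compactness of $\text{dom}(F)$, closedness of $F$, and uniqueness of the maximizer (which is precisely what strong convexity buys) must all be invoked; the co-coercivity and H\"older steps that follow are routine. Alternatively, one may simply cite Lemma 15 of \citep{Shalev-Shwartz07} for this duality, as the excerpt already does.
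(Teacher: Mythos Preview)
Your proof is correct and is the standard co-coercivity argument for this classical duality. The paper itself does not prove the proposition at all: it is stated as a known fact with the attribution ``established for example in Lemma 15 in \citep{Shalev-Shwartz07}'', which you yourself note as an alternative at the end of your proposal.
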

Now, we provide the following Lemma, which provides an upper bound $M^\TMD(T)$ on random variable $m^\TMD(T)$, the index of the episode that contains time step $T$.
\begin{restatable}{lemma}{lemmatmdbdM}\label{lemma:tmd_bd_M}
Consider an execution of Algorithm {\sc Toc-UCRL2} with the OCO oracle $\TMD(F, T)$, over $T$ time steps, with $F$ a mirror map for $(g, \|\cdot\|)$ and gradient threshold $Q>0$ . 
With certainty, 
$$m^\TMD(T) \leq M^\TMD(T) :=  1 + \sqrt{L' / Q} \cdot T^{2/3} +  SA ( 1+ \log_2 T ). $$
\end{restatable}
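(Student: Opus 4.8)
The plan is to follow the template of Lemmas \ref{lemma:fw_bd_M_brief} and \ref{lemma:fgd_bd_M}: partition $\{1,\dots,m^\TMD(T)\}$ into the set ${\cal M}^\TMD_\Psi(T)$ of episodes terminated because the cumulative gradient distance $\Psi$ exceeds $Q$, and the set ${\cal M}^\TMD_\nu(T)$ of episodes terminated because some state-action count doubles. The bound $| {\cal M}^\TMD_\nu(T) | \leq SA(1+\log_2 T)$ is handled verbatim as inequality (\ref{eq:fw_bd_M_nu}), via the doubling argument from \citep{JakschOA10}, since the $\nu$-termination rule is identical across all OCO oracles. The crux is therefore to bound $n_\Psi := | {\cal M}^\TMD_\Psi(T) |$ by $1 + \sqrt{L'/Q}\,T^{2/3}$.

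The key structural fact, which distinguishes $\TMD$ from $\FW$ and $\TGD$, is that the learning rate $\eta^\TMD_T = L'/(\|\mathbf{1}_K\| T^{2/3})$ is \emph{constant} over the whole run. First I would rewrite the mirror step in its Fenchel-dual form (\ref{eq:omd_equi}), so that $\theta_{t+1} = \nabla F^*\!\big(-\eta^\TMD_T \sum_{q=1}^{t} w_q\big)$ with $w_q = \nabla g^*(\theta_q) - V_q(s_q,a_q)$; since $\nabla g^*(\theta_q), V_q(s_q,a_q)\in[0,1]^K$ we have $\|w_q\|\leq 2\|\mathbf{1}_K\|$ (exactly as in (\ref{eq:dual_lip})). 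Because $F$ is $1$-strongly convex w.r.t. $\|\cdot\|_*$, Proposition \ref{prop:omd_dual} gives that $F^*$ is $1$-smooth w.r.t. $\|\cdot\|$, whence for any time $t$ inside an episode $m_j\in{\cal M}^\TMD_\Psi(T)$,
\[
\left\| \theta_t - \theta_{\tau(m_j)}\right\|_* \;\leq\; \eta^\TMD_T\left\|\sum_{q=\tau(m_j)}^{t-1} w_q\right\| \;\leq\; \frac{2L'}{T^{2/3}}\,\big(t-\tau(m_j)\big).
\]
The essential point is that the coefficient $2L'/T^{2/3}$ does not grow with $\tau(m_j)$, so the same per-step increment bound holds in every episode.

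Next I would invoke the termination rule. For $m_j\in{\cal M}^\TMD_\Psi(T)$ we have $\sum_{t=\tau(m_j)}^{\tau(m_j+1)} \|\theta_t-\theta_{\tau(m_j)}\|_* > Q$; substituting the display above and summing the arithmetic progression (using $n(n+1)/2\le n^2$) yields $\frac{2L'}{T^{2/3}}\big(\tau(m_j+1)-\tau(m_j)\big)^2 > Q$, so every $\Psi$-episode has length $\Omega\big(\sqrt{Q/L'}\,T^{1/3}\big)$. Because this lower bound is uniform across episodes, the $\Psi$-episodes are disjoint sub-intervals of $\{1,\dots,T\}$ of that minimum length, so I can simply divide to get $n_\Psi = O\big(\sqrt{L'/Q}\,T^{2/3}\big)$, with the additive $+1$ accounting for the final incomplete episode. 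Adding $| {\cal M}^\TMD_\nu(T) |$ then gives $M^\TMD(T)$.

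I expect the conceptual heart of the argument to be the uniform per-episode length bound, and this is precisely where $\TMD$ is \emph{easier} than $\FW$/$\TGD$: there the per-step increment decayed like $\tau(m_j)^{-2/3}$, forcing the recursive compounding via Claim \ref{claim:aux_oco}, whereas here the constant learning rate makes a single division suffice and Claim \ref{claim:aux_oco} is not needed. The only genuinely fiddly step is matching the stated constant $\sqrt{L'/Q}$ rather than the $\sqrt{2L'/Q}$ that the crude $\|w_q\|\le 2\|\mathbf{1}_K\|$ estimate produces; this is resolved either by the sharper bound $\|w_q\|\le\|\mathbf{1}_K\|$ (valid when the norm is monotone on $[-1,1]^K$) or by absorbing the factor into the leading constant, and in any case does not affect the $\Theta(T^{2/3})$ order that drives the $\tilde O(1/T^{1/3})$ regret of Theorem \ref{thm:TMD}.
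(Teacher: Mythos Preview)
Your proposal is correct and matches the paper's proof essentially step for step: the same $\Psi/\nu$ partition, the same use of the $1$-smoothness of $F^*$ (Proposition \ref{prop:omd_dual}) together with the constant learning rate to get a uniform per-step increment bound $\|\theta_t-\theta_{\tau(m_j)}\|_*\le 2L'(t-\tau(m_j))/T^{2/3}$, and then a simple division (no recourse to Claim \ref{claim:aux_oco}). The only cosmetic difference is that the paper first telescopes $\|\theta_t-\theta_{\tau(m_j)}\|_*\le\sum_q\|\theta_{q+1}-\theta_q\|_*$ and applies smoothness to consecutive pairs, whereas you apply smoothness directly to the endpoints; both yield the same bound, and your remark about the $\sqrt{2}$ constant mismatch is apt (the paper absorbs it via $n(n+1)/2\le n^2$-type arithmetic).
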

The Lemma is proved in Appendix \ref{app:pflemmatmdbdM}, after we prove Theorem \ref{thm:TMD} assuming Lemma \ref{lemma:tmd_bd_M} in Appendix \ref{app:pfthmTMD}. Before we dive into the analysis, we briefly review some mirror maps for typical applications. For ${\cal D}\subseteq \mathbb{R}^K$, define the function $I_{{\cal D}}$ as $I_{{\cal D}}(\theta) = 0$ if $\theta \in {\cal D}$, and $I_{{\cal D}}(\theta) = \infty$ if $\theta \not\in {\cal D}$.

\textbf{TGD with Lazy Projection for $\|\cdot\|_2$-Lipschitz Continuity. }Consider the case when $g$ is $L$-Lipschitz continuous w.r.t. the Euclidean norm $\|\cdot\| = \|\cdot\|_2$, which requires a mirror map $F$ for $(g, \|\cdot\|_2)$. An eligible candidate is $F_2(\theta) =  \theta^\top \theta / 2 + I_{B(L, \|\cdot\|_*)}(\theta)$, which has $L' = L / \sqrt{2}$. Now, recall that $\eta^\TMD_T = L' / (\|\mathbf{1}_K\|_2 T^{2/3})$, and the resulting OCO oracle is
\begin{equation}\label{eq:lazy_update}
\theta_{t+1} \leftarrow \text{Proj}_L \left( \frac{L}{\sqrt{2K} T^{2/3} } \left(\sum^{t}_{q = 1} \nabla [g^*(\theta_q)] - V_q(s_q, a_q)\right)\right).
\end{equation}
Although the OCO oracle in (\ref{eq:lazy_update}) is markedly different from $\TGD$, both oracles yield the same regret order bound. Nevertheless, oracle $\TGD$, which does not require the doubling trick, should be empirically better. 

\textbf{Multiplicative Weight Update for $\|\cdot\|_\infty$-Lipschitz Continuity.} In certain applications, such as MOO with fairness among objectives and MDPwK (see Appendix \ref{app:applications_model}), the underlying reward function $g$ is $L$-Lipschitz w.r.t. $\|\cdot\| = \|\cdot\|_\infty$. Additionally, for all $w\in [0, 1]^K$, we have $\partial g(w)\subseteq S_{\geq 0 }(L, \|\cdot\|_* ):= \{\theta: \|\theta\|_* = L, \theta\geq 0\}$, where $\|\cdot\|_* = \|\cdot\|_1.$ In such applications, it suffices to define a mirror map $F$ for $(g, \|\cdot\|_\infty)$ with $\text{dom}(F) \supseteq S_{\geq 0 }(L, \|\cdot\|_1 )$. An eligible candidate is the negative entropy function
\begin{equation}\label{eq:entropy}
F_\infty(\theta) = L \sum^K_{k=1}\theta_k\log\theta_k + I_{S_{\geq 0 }(L, \|\cdot\|_1 )}(\theta),
\end{equation}
where $0\log 0 := 0$. Consequently, $L' = \Theta(L\sqrt{\log K})$, and the resulting OCO oracle is a multiplicative weight update procedure with rate $\Theta(1/T^{2/3})$:
\begin{equation*}
\theta_{t+1} \leftarrow  \frac{(L e^{w_{t, 1} }, \ldots, L e^{w_{t, K}}) }{\sum^K_{k=1}e^{w_{t, k}}}, \text{ where } w_{t, k} := -\frac{\sqrt{\log K}}{ T^{2/3} } \left(\sum^{t}_{q = 1} \nabla [g^*(\theta_q)] - V_q(s_q, a_q)\right).
\end{equation*}

\subsubsection{An anytime implementation with $\TMD(F, T)$, and Proof of Theorem \ref{thm:TMD} for the regret bound}\label{app:pfthmTMD}
When Algorithm {\sc Toc-UCRL2} is applied with the OCO oracle $\TMD(F, T)$, the doubling trick can be applied to guess $T$, which is required for tuning the learning rate $\eta^\TMD_T$. We provide an implementation in Algorithm {\sc Toc-UCRL2-Anytime-$\TMD$}, as displayed in Algorithm \ref{alg:tmd_doubling}. In fact, Algorithm \ref{alg:tmd_doubling} can be empirically improved by allowing a mega-episode to inherit the confidence regions on $v, p$ from the previous one, while retaining the same theoretical guarantee. Nevertheless, we omit the discussion for brevity sake.
\begin{algorithm}[t]
\caption{Algorithm {\sc Toc-UCRL2-Anytime-$\TMD$}}\label{alg:tmd_doubling}
\begin{algorithmic}[1]
\State Input: Parameter $\delta$, mirror map $F$, gradient threshold $Q$, starting state $s_1$.\;
\State Compute starting gradient $\theta_\textsf{start}= \min_{\theta\in \text{dom}(F)}F(\theta)$.
;
\State Initialize $t=1$, mega-episode index $h = 1$, starting state $s_\textsf{start} = s_1$.\;
\State \textbf{for} mega-episode $h = 1, 2, \ldots$ \text{do}
\State \hspace{0.5cm} Set mega-episode length as $2^h$.
\State \hspace{0.5cm} Run {\sc Toc-UCRL2} for $2^h$ time steps with: 
\begin{itemize}
\item Inputs: parameter $\delta_\textsf{start}$, gradient $\theta_\textsf{start}$, gradient threshold $Q$, initial state $s_\textsf{start} $,
\item Input oracles: OCO oracle $\TMD(F, H)$, EVI oracle from Algorithm \ref{alg:evi},
\end{itemize}
\State \hspace{0.5cm} Update $t\leftarrow t + 2^h$.
\State \hspace{0.5cm} Update $s_\textsf{start} \leftarrow s_t$.
\State \hspace{0.5cm} Update $\delta_\textsf{start} \leftarrow \delta/ (2^{h+1})$.
\end{algorithmic}
\end{algorithm}

\thmtmd*
\begin{proof}[Proof of Theorem \ref{thm:TMD}] We start by defining notations for using the doubling trick. Let $H$ be the mega-episode index during time step $T$. For $h = \{1, \ldots H\},$ let events ${\cal E}^v_h, {\cal E}^p_h$ be the events ${\cal E}^v, {\cal E}^p$ on the run of {\sc Toc-UCRL2} during mega-episode $h$. In addition, denote $\text{l}(h)$ as the length of mega-episode $h$ in the first $T$ time steps, and denote $\text{i}(h)$ as the first time step index in mega-episode $h$. Consequently, we have $\text{i}(1) = 1$, and $\text{i}(h + 1) - \text{i}(h ) = \text{l}(h) = 2^h$ for $1\leq h < H$, and $\text{l}(H) = T - (2^H - 2) \leq 2^H$. In addition, note that $2^H -2\leq T\leq 2^{H+1}-2$. 

Consider the error term $(-\theta_t)^\top [ v^* - V_t(s_t, a_t)]$ at time $t$ in Algorithm {\sc Toc-UCRL2-Anytime-$\TMD$}. We analyze each of these error terms by mega-episode. For each mega-episode index $1\leq h \leq H$ and each $t\in \{ 1, \ldots, \text{l}(h)  \}$, we define $(\text{Err}_{h, t}) := (-\theta_q)^\top [ v^* - V_q(s_q, a_q)]$, where $q = i(h) + t - 1$. Supposing that we count time steps starting from time $\text{i}(h)$ , $(\text{Err}_{h, t}) $ is the error term at the time step $t$.  Finally, recall the notation $v^* = \sum_{s\in \SSS, a\in \AAA_s}v(s,a)x^*(s, a)$. Conditioned on events $\{{\cal E}^v_h, {\cal E}^p_h\}^H_{h=1}$, which simultaneously hold with probability $1 - O(\delta)$, we have
\begin{align}
&g\left(\frac{1}{T}\sum^T_{t=1}V_t(s_t, a_t)  \right) = \min_{\theta \in B(L, \|\cdot\|_*)}\left\{ g^*(\theta) - \theta^\top\left[ \frac{1}{T}\sum^T_{t=1}V_t(s_t, a_t) \right] \right\} \nonumber\\
 = &\min_{\theta \in \text{dom}(F)}\left\{ g^*(\theta) - \theta^\top\left[ \frac{1}{T}\sum^T_{t=1}V_t(s_t, a_t) \right] \right\} \label{eq:tmd_by_property_2}\\
 \geq & \frac{1}{T} \sum^H_{h=1} \min_{\theta \in \text{dom}(F)}\left\{ \text{l}(h)\cdot g^*(\theta) - \theta^\top\left[ \sum^{\text{i}(h) + \text{l}(h) - 1}_{t=\text{i}(h)}V_t(s_t, a_t) \right] \right\} \nonumber\\
\geq &  \frac{1}{T} \sum^H_{h=1} \sum^{\text{i}(h) + \text{l}(h) - 1}_{t=\text{i}(h)} \left[ g^*(\theta_t) - \theta_t^\top V_t(s_t, a_t)  \right] - \frac{1}{T}\sum^H_{h=1}\left[\frac{L'^2 }{ \eta^\TMD_{2^h}} + 4\cdot 2^h \cdot \eta^\TMD_{2^h} \|\mathbf{1}_K\|^2 \right]\label{eq:tmd_by_omd}\\ 
\geq & \frac{1}{T} \sum^H_{h=1} \sum^{\text{i}(h) + \text{l}(h) - 1}_{t=\text{i}(h)} \left[ g^*(\theta_t) - \theta_t^\top V_t(s_t, a_t)  \right] - \frac{5}{ 2^{2/3} - 1}\cdot  L' \|\mathbf{1}_K\| \cdot \frac{2^{2(H+1) / 3}}{T} \label{eq:tmd_by_lr}\\
\geq & \frac{1}{T}\sum^H_{h=1} \sum^{\text{i}(h) + \text{l}(h) - 1}_{t=\text{i}(h)} \left[ g^*(\theta_t) - \theta_t^\top v^*  \right] - \frac{1}{T}\sum^H_{h=1} \sum^{ \text{l}(h)}_{t=1} (\text{Err}_{h, t})- \frac{10 L' \|\mathbf{1}_K\| }{ T^{1/3}} \text{ w.p. $\geq 1-\delta$}\label{eq:tmd_by_decompose}\\ 
\geq & \min_{\theta\in \text{dom}(F)}\left\{g^*(\theta) -\theta^\top v^* \right\}  - \frac{1}{T}\sum^H_{h=1} \sum^{\text{l}(h)}_{t=1} (\text{Err}_{h, t})- \frac{10 L' \|\mathbf{1}_K\| }{ T^{1/3}} \nonumber\\
= & \text{opt}(\primal_{\cal M})  - \frac{1}{T}\sum^H_{h=1} \sum^{\text{l}(h)}_{t= 1} (\text{Err}_{h, t}) - \frac{ 10 L' \|\mathbf{1}_K\| }{ T^{1/3}}.\label{eq:tmd_by_duality}
\end{align}
Step (\ref{eq:tmd_by_property_2}) is by Property (2) of a mirror map $F$.
Step (\ref{eq:tmd_by_omd}) is by applying Theorem \ref{thm:omd} for each mega-episode. We apply the Theorem on the sequence of functions $\{f_t\}^{\text{i}(h) + 2^h - 1}_{t = \text{i}(h)}$ defined as $f_t(\theta) = g^*(\theta) - \theta^\top V_t(s_t, a_t)$, and learning rate $\eta = \eta^\TMD_{2^h}$. As established in  Appendix \ref{app:tgd}, each $f_t$ is $2\|\mathbf{1}_K\| $-Lipschitz w.r.t. $\|\cdot\|_*$, resulting in (\ref{eq:tmd_by_omd}). Step (\ref{eq:tmd_by_lr}) is by $\eta^\TMD_q = L' / (\|\mathbf{1}_K\| q^{2/3})$. Step (\ref{eq:tmd_by_decompose}) is by Lemma \ref{lemma:decompose}. Step (\ref{eq:tmd_by_duality}) is by duality and by Property (2) of a mirror map $F$.

By applying Proposition \ref{prop:Err_bd_full}, with probability $1 - O(\delta)$, for each $1\leq h\leq H$ we have
\begin{align}
&\sum^{\text{l}(h)}_{t= 1} (\text{Err}_{h, t}) = O\left( (L D +  Q) \|\mathbf{1}_K\|  M^\TMD(2^h) + L \| \mathbf{1}_K \| D\sqrt{\Gamma SA\cdot 2^h \log\frac{KSAT}{\delta}} \right.\nonumber\\
& \qquad \qquad \qquad \qquad \qquad \left. + L \| \mathbf{1}_K \| DS^2A\log^2\frac{KS AT}{\delta} \right).\nonumber
\end{align} 
Applying the definition of $M^\TMD(T)$ and summing across $1\leq h\leq H$ give the final regret bound that holds with probability $1 - O(\delta)$
\begin{align}
\text{Reg}(T) & = O \left(\left[ L' + \frac{\sqrt{L'}LD}{\sqrt{Q}} + \sqrt{L' Q} \right] \cdot
\|\mathbf{1}_K\| \cdot \frac{1}{T^{1/3}} +  L \| \mathbf{1}_K\| D \sqrt{ \Gamma S A \log\frac{KSAT}{\delta}} \cdot \frac{1}{\sqrt{T}}
 \right.\nonumber\\
&\qquad\qquad \qquad\qquad  + \left. \left[ Q
\|\mathbf{1}_K\| S A  \log T + L \| \mathbf{1}_K \| D S^2 A \log^2 \frac{KSAT}{\delta}\right] \cdot \frac{1}{T}
\right),\label{eq:tmd_bd_full}
\end{align}
hence proving the Theorem \ref{thm:TMD}. 
\end{proof}

\subsubsection{Proof of Lemma \ref{lemma:tmd_bd_M} on $M^\TMD(T)$}\label{app:pflemmatmdbdM}
The proof framework follows the proofs of Lemmas \ref{lemma:fw_bd_M_brief}, \ref{lemma:fgd_bd_M}. We consider the following two sets:
\begin{align}
{\cal M}^\TMD_\Psi(T) &:= \left\{m\in \mathbb{N}: \tau(m) \leq T\text{, episode $m+1$ is started due to } \Psi \geq Q\right\}, \nonumber\\
{\cal M}^\TMD_\nu(T) & := \left\{m\in \mathbb{N}: \tau(m) \leq T\text{, episode $m+1$ is started due to } \right. \nonumber\\
&\qquad\qquad\quad\;\, \left. \nu_m(s_t, \tilde{\pi}_m(s_t)) \geq N^+_m(s_t, \tilde{\pi}_m(s_t))\text{ for some $t\geq \tau(m)$}\right\} \nonumber. 
\end{align}
To ease the notation, we define the shorthand $n_\Psi :=| {\cal M}^\TMD_\Psi(T) |$. To demonstrate the required inequality in the Lemma, it suffices to show that the inequalities
\begin{align}
n_\Psi = | {\cal M}^\TMD_\Psi(T) | & \leq  M^\TMD_\Psi(T) := 1 + \sqrt{ L' / Q }\cdot  T^{2/3} , \label{eq:tmd_bd_M_Psi}\\
| {\cal M}^\TMD_\nu(T) | & \leq M^\TMD_\nu(T) := SA (1 + \log_2 T) , \label{eq:tmd_bd_M_nu}
\end{align} 
hold with certainty. The proof of inequality (\ref{eq:tmd_bd_M_nu}) is identical to the proof of inequality (\ref{eq:fw_bd_M_nu}) in the proof of Lemma \ref{lemma:fw_bd_M}. Thus, in the remaining, we focus on proving inequality (\ref{eq:tmd_bd_M_Psi}). 

Let's express ${\cal M}^\TMD_\Psi(T) = \{m_1, m_2, \ldots, m_{n_\Psi}\}$, where $m_1 < m_2 < \ldots < m_{n_\Psi}$. Let's denote $z_q = \nabla [g^*(\theta_q)] - V_q(s_q, a_q)$, which is the gradient increment at time $q$. We focus on a fixed episode index $m_j$ with $j\geq 1$, and consider for each time step $t\in \{\tau(m_j)+1, \ldots, \tau(m_j+1)\}$ the difference:
\begin{align}
\left\|\theta_t - \theta_{\tau(m_j)}\right\|_* &\leq \sum^{t-1}_{q = \tau(m_j)} \left\|\theta_{q+1} - \theta_q\right\|_* \nonumber\\
& = \sum^{t-1}_{q = \tau(m_j)} \left\|\nabla F^*\left(- \eta^\TMD \sum^{q}_{\ell = 1} z_\ell\right) - \nabla F^*\left(- \eta^\TMD \sum^{q-1}_{\ell = 1} z_\ell\right)\right\|_* \nonumber\\ 
&\leq \sum^{t-1}_{q = \tau(m_j)}\left\| - \eta^\TMD \sum^{q}_{\ell = 1} z_\ell  - \left(- \eta^\TMD \sum^{q-1}_{\ell = 1} z_\ell\right)\right\| \label{eq:by_dual_prop}\\
& = \eta^\TMD \sum^{t-1}_{q = \tau(m_j)} \left\| z_q\right\| \nonumber\\
& \leq 2 (t - \tau(m_j))\cdot  \eta^\TMD \|\mathbf{1}_K\| = \frac{2 L' (t - \tau(m_j)) }{T^{2/3}}.\label{eq:tmd_by_dual_lip}
\end{align}
Step (\ref{eq:by_dual_prop}) is by an application of Proposition \ref{prop:omd_dual} on the mirror map $F$, which is 1-strongly convex w.r.t. $\|\cdot\|$. Step (\ref{eq:tmd_by_dual_lip}) is by the fact that $\| z_q\| = \| \nabla [g^*(\theta_q)] - V_q(s_q, a_q)\| \leq 2\|\mathbf{1}_K\|$.  

Since $m_j\in {\cal M}^\TMD_\Psi(T)$, we have $\sum^{\tau(m_j +1) }_{t = \tau(m_j )} \| \theta_t - \theta_{\tau(m_j )}  \|_*  > Q$. Applying (\ref{eq:tmd_by_dual_lip}), we then have
\begin{align}
\frac{ L' (\tau (m_{j + 1} ) - \tau(m_j) )^2 }{T^{2/3}} \geq &\frac{ L' (\tau (m_j + 1) - \tau(m_j)  )^2 }{T^{2/3}} \nonumber\\
\geq &\frac{2 L' }{T^{2/3}} \sum^{\tau(m_j +1) }_{t = \tau(m_j )} (t - \tau(m_j)) \geq \sum^{\tau(m_j +1) }_{t = \tau(m_j )} \| \theta_t - \theta_{\tau(m_j )}  \|_*  \geq Q, \nonumber
\end{align}
leading to
\begin{equation*}
\tau (m_{j + 1} ) - \tau(m_j ) \geq \sqrt{Q / L'  } \cdot T^{1/3} ~ \Rightarrow ~ \tau (m_j ) \geq (j - 1)\sqrt{ Q / L'  }\cdot T^{1/3} +1
\end{equation*}
for each $j \geq 1$. Consequently, putting $j = n_\Psi$ gives
$$
T \geq \tau(m_{n_\Psi}) \geq (n_\Psi - 1)\sqrt{ Q / L' }\cdot T^{1/3}.
$$
Unraveling for $n_\Psi$ gives the bound (\ref{eq:tmd_bd_M_Psi}), and proves the Lemma.\hfill $\blacksquare$


\end{document}